\documentclass[11pt]{article}
\usepackage[margin=1in]{geometry}
\usepackage{times}

\usepackage{amsmath,amsfonts,bm}

\def\eqref#1{equation~\ref{#1}}

\def\1{\bm{1}}

\DeclareMathAlphabet{\mathsfit}{\encodingdefault}{\sfdefault}{m}{sl}
\SetMathAlphabet{\mathsfit}{bold}{\encodingdefault}{\sfdefault}{bx}{n}

\newcommand{\E}{\mathbb{E}}

\usepackage{amssymb,amsthm}
\newtheorem{theorem}{Theorem}[section]
\newtheorem{lemma}[theorem]{Lemma}
\newtheorem{proposition}[theorem]{Proposition}
\newtheorem{corollary}[theorem]{Corollary}
\theoremstyle{definition}
\newtheorem{definition}[theorem]{Definition}
\newtheorem{assumption}[theorem]{Assumption}
\theoremstyle{remark}

\renewcommand{\eqref}[1]{\textup{(\ref{#1})}}

\usepackage[round,authoryear]{natbib}
\usepackage{microtype}
\usepackage{xcolor}
\usepackage[colorlinks=true,citecolor=blue,linkcolor=blue,urlcolor=blue]{hyperref}
\usepackage{url}
\hypersetup{
  pdftitle={Flow Matching under Noisy Latent Structure: Beyond Exact Low-Dimensional Support},
  pdfauthor={Lifeng Hao and Shaolin Ji}
}

\title{Flow Matching under Noisy Latent Structure: Beyond Exact Low-Dimensional Support}

\author{
Lifeng Hao\thanks{
Zhongtai Securities Institute for Financial Studies,
Shandong University, Jinan 250100, Shandong, China.
\newline
Emails: \texttt{201911961@mail.sdu.edu.cn} (Lifeng Hao);
\texttt{jsl@sdu.edu.cn} (Shaolin Ji).
}
\qquad
Shaolin Ji\footnotemark[1]
}

\date{}

\begin{document}

\maketitle

\begin{abstract}
    Flow Matching (FM) learns a velocity field whose ODE transports a simple
    source distribution to a target law. Existing finite-sample theory largely
    treats ambient-space regularity or data supported exactly on low-dimensional
    sets. We study linear FM under a noisy latent-generator model, where a
    low-dimensional H\"older map is perturbed by nondegenerate ambient Gaussian
    noise, so the target law is full-dimensional despite its latent structure.
    We construct a spatially regular ReLU velocity class and establish
    non-asymptotic high-probability approximation and estimation bounds whose
    leading sample-size exponent is governed by the latent dimension rather than
    the ambient dimension, with ambient and noise dependence kept explicit.
    Fixed positive target noise keeps the interpolation nondegenerate over the
    full time interval. The same spatial regularity propagates the learned
    velocity error through the transport ODE, yielding a corresponding
    Wasserstein convergence guarantee. These results show that exact
    low-dimensional support is not necessary for Flow Matching to retain
    latent-dimensional statistical behavior.
\end{abstract}

\noindent\textbf{Keywords:} Flow Matching; Generative Modeling; Statistical Learning Theory; Low-Dimensional Structure; Latent Variable Models; Wasserstein Convergence

\section{Introduction}
\label{sec:introduction}

Flow Matching (FM) has emerged as a simulation-free framework for continuous-time generative modeling, learning a time-dependent velocity field through regression and generating samples by integrating the resulting ordinary differential equation \citep{lipman2023flowmatching,albergo2023building,liu2023flow}. Its simple training objective and deterministic sampling dynamics have led to successful applications in high-dimensional domains including image, audio, video, and molecular generation \citep{esser2024scaling,le2023voicebox,bose2024se3}. A central theoretical question is therefore how accurately the population velocity can be learned from finitely many observations and how this statistical error propagates to the generated distribution.

Recent work has established increasingly complete statistical guarantees for FM. Stability analyses relate velocity-field error to distributional error of the induced flow \citep{benton2024error,gao2024gaussian}, while finite-sample analyses derive neural velocity-estimation and Wasserstein convergence rates under regularity assumptions on full-dimensional target distributions or ambient velocity fields \citep{zhou2025error,fukumizu2025almost,kunkel2026distribution}. In these formulations, the statistical complexity is primarily described through regularity in the observed state space.

A complementary line of work asks whether generative models can exploit lower-dimensional structure. For diffusion models, intrinsic-dimensional guarantees have been established under linear-subspace, singular-support, and smooth-manifold assumptions \citep{chen2023score,debortoli2022convergence,tang2024adaptivity,azangulov2024convergence}. Recent FM theory similarly establishes intrinsic-dimensional behavior for targets supported on low-dimensional subspaces or smooth manifolds \citep{kunkeltrabs2025minimax,kumar2026adaptive,pi2026learning,roy2026lowdimensional}. Thus, the question is no longer whether FM can exploit low-dimensional structure, but how far the exact-support assumptions underlying these guarantees can be relaxed.

Low-dimensional structure need not imply that the observed distribution is itself 
supported on a low-dimensional set. Motivated by noisy-manifold and 
latent-variable models \citep{genovese2012manifold,fefferman2018fitting,
tipping1999probabilistic,lawrence2005probabilistic}, we consider
$$
    X_1=g^*(U)+\sigma\xi,
    \qquad
    U\sim\operatorname{Unif}([0,1]^d),
    \qquad
    \xi\sim\mathcal N(0,I_D),
    \qquad d\ll D,
$$
where \(g^*\) is H\"older regular and \(0<\sigma<1\) is fixed. 
The target law is full-dimensional, while its systematic variation is 
parameterized by a \(d\)-dimensional latent variable; the image of 
\(g^*\) need not form a regular embedded manifold. 
Under the same noisy latent-generator model, \citet{yakovlev2025generalization} established latent-dimensional approximation and generalization guarantees for denoising score matching. We investigate whether the same statistical benefit persists for Flow Matching. For fixed \(\sigma>0\), the interpolation variance remains uniformly positive on \(t\in[0,1]\), so the present analysis naturally covers the full time interval.

Our results show that FM retains latent-dimensional statistical behavior 
in this full-dimensional noisy setting. Using the latent posterior 
structure, we construct a spatially regular ReLU velocity class and establish 
a non-asymptotic high-probability bound for its empirical risk minimizer, with 
leading sample-size factor \(n^{-2\beta/(2\beta+d)}\). The remaining dependence 
on the ambient dimension \(D\), the H\"older radius \(H\), the noise level 
\(\sigma\), the confidence parameter, and logarithmic factors is kept explicit in our main theorem. We then propagate this velocity-estimation error through the learned ODE and obtain a corresponding non-asymptotic \(W_2\) bound with leading sample-size factor \(n^{-\beta/(2\beta+d)}\). Thus, the algebraic dependence on the sample size is governed by the latent dimension \(d\), even though the observed target distribution has full ambient support.

\subsection{Main Contributions}
\label{subsec:contributions}

Our main contributions are summarized as follows.

\begin{itemize}

\item
\textbf{Latent-dimensional velocity approximation.}
Under the noisy H\"older latent-generator model, we construct a spatially regular ReLU approximation of the population FM velocity whose leading resolution complexity scales as \(\varepsilon^{-d}\), even though the target distribution has full ambient support.

\item
\textbf{Non-asymptotic velocity estimation.}
For empirical risk minimization over the corresponding spatially regular velocity class, we establish a high-probability excess-risk bound with leading sample-size factor \(n^{-2\beta/(2\beta+d)}\), while keeping the dependence on \(D\), \(H\), \(\sigma\), and the confidence level explicit.

\item
\textbf{Wasserstein recovery.}
The same spatial regularity propagates the velocity-estimation guarantee through the continuous learned flow, yielding a non-asymptotic \(W_2\) bound with leading sample-size factor \(n^{-\beta/(2\beta+d)}\). For fixed positive target noise, the continuous-flow analysis applies on the full interval \(t\in[0,1]\).

\end{itemize}

\subsection{Other Related Work}
\label{subsec:related-work}
\paragraph{Generative learning with noisy observations.}
A related line of work studies generative modeling when training observations are corrupted or noisy. Ambient Diffusion and subsequent developments learn an underlying clean distribution from corrupted observations using diffusion-based objectives \citep{daras2023ambient,daras2024consistent}; related approaches consider alternative estimators and training procedures under observation corruption \citep{kawar2024gsure,bai2024em}. These works treat corruption as an observation mechanism to be removed or inverted, whereas in our setting the nondegenerate ambient perturbation is part of the target distribution itself.
Noise smoothing also appears in continuous generative modeling through dequantization, where discrete observations are perturbed to obtain a continuous density suitable for flow-based models \citep{ho2019flowpp,theis2016note,nielsen2020closing}. This provides a separate algorithmic precedent for working with full-dimensional perturbed observations, rather than a statistical assumption underlying our analysis. Flow Matching has also been extended to prescribed non-Euclidean state spaces through Riemannian and geometric constructions \citep{chen2024geometry}; our setting remains Euclidean and instead concerns statistical complexity induced by a low-dimensional latent generating mechanism.
















\section{Problem Setup and Preliminaries}
\label{sec:preliminaries}

\subsection{Notation and conventions}
For vectors, $\|\cdot\|$ and $\|\cdot\|_\infty$ denote the Euclidean
and coordinatewise maximum norms, respectively. For a vector-valued
function $f$, write
$\|f\|_{L^\infty(\Omega)}:=\sup_{u\in\Omega}\|f(u)\|$.
We use $a\vee b:=\max\{a,b\}$ and $a\wedge b:=\min\{a,b\}$.
For $\beta,H>0$, $\mathcal H^\beta(\Omega,\mathbb R^m,H)$ denotes
the coordinatewise H\"older class used throughout the paper; its precise
multi-index definition is given in Appendix~\ref{app:setup-definitions}.
For nonnegative quantities, $A\lesssim_\Theta B$ means
$A\le C_\Theta B$ for a constant depending only on the parameters in
$\Theta$; when the subscript is omitted, the constant is universal.
We write $A\asymp_\Theta B$ when both
$A\lesssim_\Theta B$ and $B\lesssim_\Theta A$ hold.
Throughout, $\log$ denotes the natural logarithm and $e$ Euler's number.

\subsection{Noisy latent Flow Matching}
\label{subsec:noisy-latent-fm}

\paragraph{Target model and interpolation.}
Let $d$ and $D$ denote the latent and ambient dimensions, respectively.
We consider the target model
\begin{equation}
    X_1=g^*(U)+\sigma\xi,
    \qquad
    U\sim\operatorname{Unif}([0,1]^d),
    \qquad
    \xi\sim\mathcal N(0,I_D),
    \label{eq:target-model}
\end{equation}
where $0<\sigma<1$ is fixed and
\begin{equation}
    g^*\in\mathcal H^\beta([0,1]^d,\mathbb R^D,H),
    \qquad
    \|g^*\|_{L^\infty([0,1]^d)}\le1.
    \label{eq:generator-assumption}
\end{equation}
The variables $U$ and $\xi$ are independent. For every fixed
$\sigma>0$, the target law $\pi_1$ is full-dimensional; the
low-dimensional structure enters through the latent generator $g^*$,
rather than through exact low-dimensional support.
Let $X_0\sim\mathcal N(0,I_D)$ be independent of $(U,\xi)$ and consider
the linear interpolation
\begin{equation}
    X_t=(1-t)X_0+tX_1,
    \qquad t\in[0,1].
    \label{eq:linear-interpolation}
\end{equation}
We write $\pi_t=\operatorname{Law}(X_t)$ and
$\|h(\cdot,t)\|_{L^2(\pi_t)}^2
:=\mathbb E\|h(X_t,t)\|^2$.
Conditional on $U=u$,
$
    X_t\mid U=u
    \sim
    \mathcal N\!\left(tg^*(u),q_t I_D\right),
    q_t:=(1-t)^2+\sigma^2t^2.
$
Since
$q_t\ge \sigma^2/(1+\sigma^2)>0$,
the interpolation remains nondegenerate throughout $t\in[0,1]$.
\paragraph{Population objective.}
The population Flow Matching risk is
\[
    \mathcal L(v)
    :=
    \int_0^1
    \mathbb E_{X_0,X_1}\!\left[
        \|X_1-X_0-v(X_t,t)\|^2
    \right]dt,
\]
whose population velocity is
\[
    v^*(x,t)
    =
    \mathbb E[X_1-X_0\mid X_t=x].
\]
Our statistical experiment observes
$X_{1,1},\ldots,X_{1,n}\stackrel{\mathrm{iid}}{\sim}\pi_1$.
The Gaussian source and interpolation time are auxiliary variables and
are integrated exactly in the idealized empirical loss introduced in
Section~\ref{sec:generalization}.

\paragraph{Posterior representation.}
Define
\[
    a_t:=\frac{t\sigma^2-(1-t)}{q_t},
    \qquad
    b_t:=\frac{1-t}{q_t},
    \qquad
    m_t(x):=\mathbb E[g^*(U)\mid X_t=x].
\]
The following representation reduces the nonlinear part of the
population velocity to the latent posterior mean $m_t$.

\begin{lemma}[Posterior representation of the population velocity]
\label{lem:posterior-velocity}
Under the preceding model, the conditional means admit versions such
that, for every $(x,t)\in\mathbb R^D\times[0,1]$,
\[
    v^*(x,t)=a_tx+b_tm_t(x),
    \qquad
    \|m_t(x)\|\le1,
\]
where
\[
    m_t(x)
    =
    \frac{
        \displaystyle
        \int_{[0,1]^d}
        g^*(u)
        \exp\!\left(
            -\frac{\|x-tg^*(u)\|^2}{2q_t}
        \right)du
    }{
        \displaystyle
        \int_{[0,1]^d}
        \exp\!\left(
            -\frac{\|x-tg^*(u)\|^2}{2q_t}
        \right)du
    }.
\]
\end{lemma}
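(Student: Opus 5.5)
Write $X_1 = G + \sigma\xi$ with $G := g^*(U)$. The plan is to reduce $\mathbb E[X_1-X_0\mid X_t]$ to a Gaussian conditional expectation given $(U,X_t)$, and then integrate out $U$ using Bayes' formula. Conditional on $U=u$, the pair $(X_0,\xi)$ is standard Gaussian in $\mathbb R^{2D}$ and independent of $U$. Also
\[
X_t - tg^*(u) = (1-t)X_0 + t\sigma\xi =: Z_t,
\]
which is centered Gaussian with covariance $q_tI_D$.

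\textbf{Step 1 (Gaussian regression given $U$).} Jointly Gaussian linear regression gives
\[
\mathbb E[X_0\mid Z_t]=\frac{1-t}{q_t}Z_t,
\qquad
\mathbb E[\xi\mid Z_t]=\frac{t\sigma}{q_t}Z_t.
\]
We then compute
\[
\mathbb E[X_1-X_0\mid U,X_t]
= g^*(U) + \frac{t\sigma^2-(1-t)}{q_t}\bigl(X_t-tg^*(U)\bigr).
\]
Expanding the coefficient of $g^*(U)$ gives
\[
1-\frac{t(t\sigma^2-(1-t))}{q_t}
=\frac{(1-t)^2+t(1-t)}{q_t}
=\frac{1-t}{q_t}=b_t.
\]
Hence $\mathbb E[X_1-X_0\mid U,X_t]=a_tX_t+b_tg^*(U)$. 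This identity holds as a version in which one simply plugs in the formula; I will make this rigorous by checking the defining property of conditional expectation directly against test functions $\phi(U,X_t)$, using independence of $U$ from $(X_0,\xi)$.

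\textbf{Step 2 (tower property).} Conditioning on $X_t$ gives
\[
v^*(X_t,t)=a_tX_t+b_t\,\mathbb E[g^*(U)\mid X_t].
\]

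\textbf{Step 3 (explicit posterior).} The joint law of $(U,X_t)$ has density $(2\pi q_t)^{-D/2}\exp(-\|x-tg^*(u)\|^2/2q_t)$ with respect to $du\,dx$ on $[0,1]^d\times\mathbb R^D$. Bayes' formula then yields the stated ratio, since the normalizing constants cancel. This ratio is defined for every $x$ and $t$: the denominator is strictly positive because the integrand is continuous and positive, and $q_t>0$ by the bound $q_t\ge\sigma^2/(1+\sigma^2)$. Taking this ratio as the version of $m_t$ and defining $v^*$ by $a_tx+b_tm_t(x)$ gives versions valid everywhere. The bound $\|m_t(x)\|\le1$ follows because $m_t(x)$ is an average of $g^*(u)$ against a probability density on $[0,1]^d$, and $\|g^*\|_{L^\infty}\le1$ by \eqref{eq:generator-assumption}.

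\textbf{Main obstacle.} The only delicate point is measure-theoretic: we must choose everywhere-defined versions and justify Step 1 rigorously rather than formally. I would handle it by verifying $\mathbb E[(X_1-X_0)\phi(X_t)]=\mathbb E[(a_tX_t+b_tm_t(X_t))\phi(X_t)]$ for bounded measurable $\phi$. This is done by Fubini over $u$, using the explicit Gaussian identity $\mathbb E[(X_0,\xi)\mid Z_t]$ for each fixed $u$. Integrability is ensured since $g^*$ is bounded and the Gaussian variables have finite moments.
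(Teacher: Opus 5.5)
Your proposal is correct and follows essentially the same route as the paper: Gaussian conditioning given $(U,X_t)$ yields $a_tX_t+b_tg^*(U)$, the tower property reduces this to $m_t$, and Bayes' formula with a strictly positive denominator gives the everywhere-defined version and the bound $\|m_t(x)\|\le1$. The only cosmetic difference is that the paper regresses the combined noise $W=\sigma\xi-X_0$ on $\eta_t$ in a single step (using $q_t-tc_t=1-t$), whereas you regress $X_0$ and $\xi$ separately.
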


The proof is given in Appendix~\ref{app:population-identities}.
The decomposition separates the known affine component
$a_tx$ from the posterior component $b_tm_t(x)$ that is approximated
by the neural network in Section~\ref{sec:approximation}.

\subsection{Neural network class}
Throughout the approximation and statistical analysis, we use sparse
feed-forward ReLU networks. Let $\rho(z)=\max\{0,z\}$ act componentwise.
For $L,W,S\in\mathbb N$ and $B\ge1$, denoting depth, maximal width,
sparsity, and the parameter-magnitude bound, respectively, define
\[
\mathrm{NN}(L,W,S,B)
:=
\left\{
\begin{array}{l}
(A_L\rho(\cdot)+b_L)\circ\cdots\circ
(A_2\rho(\cdot)+b_2)\circ(A_1x+b_1):\\[1mm]
A_\ell\in\mathbb R^{W_\ell\times W_{\ell-1}},\
b_\ell\in\mathbb R^{W_\ell},\
\max_{0\le\ell\le L}W_\ell\le W,\\
\sum_{\ell=1}^L(\|A_\ell\|_0+\|b_\ell\|_0)\le S,\quad
\max_{1\le\ell\le L}
(\|A_\ell\|_\infty\vee\|b_\ell\|_\infty)\le B
\end{array}
\right\}.
\]
Here $\theta=\{(A_\ell,b_\ell)\}_{\ell=1}^L$ denotes the trainable
parameters, $\|\cdot\|_0$ counts nonzero entries, and
$\|\cdot\|_\infty$ is the largest absolute entry. For the velocity
network, $W_0=D+1$ and $W_L=D$; auxiliary networks use their stated
dimensions.

\section{Neural Approximation of the Population Velocity}
\label{sec:approximation}

This section establishes a neural approximation theorem for the population
FM velocity under the noisy latent-generator model. By the posterior
representation in Section~\ref{sec:preliminaries},
$
    v^*(x,t)=a_tx+b_tm_t(x),
$
the nonlinear approximation problem reduces to the posterior mean $m_t$.
The main difficulty is that $m_t$ is defined on the unbounded ambient state
space, while the exploitable structure is inherited from the
$d$-dimensional latent generator; the approximation must also retain the
spatial regularity required for the subsequent transport analysis.

Let $\operatorname{clip}_2$ be Euclidean projection onto $\mathcal B(0,2)$.
For a deterministic spatial envelope $\gamma_x:[0,1]\to[0,\infty)$, define
\begin{equation}
\begin{aligned}
\mathcal V(L,W,S,B,\gamma_x):=\bigl\{v_\theta:\;&
 v_\theta(x,t)=a_tx+b_t\operatorname{clip}_2(f_\theta(x,t)),
 \quad f_\theta\in\mathrm{NN}(L,W,S,B),\\
 &\operatorname{Lip}_x(v_\theta(\cdot,t))\le\gamma_x(t)
 \quad(t\in[0,1])\bigr\}.
\end{aligned}
\label{eq:structured-velocity-class}
\end{equation}
The network approximates the nonlinear posterior component, while $a_t$ and
$b_t$ remain the exact coefficients of the linear path. No separate temporal
regularity constraint is imposed on this class.
Let $\varepsilon\in(0,1/2]$ satisfy the scale conditions
\begin{gather}
 \varepsilon^{-2\beta}>D,\qquad
 H\sqrt D\,\varepsilon^{\beta\wedge1}\le c_{d,\beta}\sigma,
 \label{eq:approximation-scale-basic}\\
 \varepsilon\!\left[1+\frac{1+\sigma^2}{\sigma^2}D(H\vee1)
 \left\{2+\sqrt D+16\left(
 \sqrt{D\log(\varepsilon^{-2\beta}/D)}
 \vee\log(\varepsilon^{-2\beta}/D)\right)\right\}\right]
 \le c_{d,\beta},
 \label{eq:approximation-scale-localization}
\end{gather}
where $c_{d,\beta}>0$ is sufficiently small.

\begin{theorem}[Approximation of the population velocity]
\label{thm:velocity-approximation}
Assume \eqref{eq:target-model}--\eqref{eq:generator-assumption}, with
$0<\sigma<1$, and let $\varepsilon\in(0,1/2]$ satisfy
\eqref{eq:approximation-scale-basic}--
\eqref{eq:approximation-scale-localization}. Set
$ \Gamma_\varepsilon:=1+D+\log(1/\varepsilon)
 +\log(H\vee e)+\log(1/\sigma).
$
For a sufficiently large fixed constant $C$ depending only on $(d,\beta)$,
choose
$
 \gamma_x(t):=C\left[|a_t|+
 |b_t|\left(1+\sigma^{-1}+\frac{t}{q_t}\right)\right].
$
Fix one finite exponent
$c_{\rm arch}=c_{\rm arch}(d,\beta)$, large enough to dominate the
first-order realization and the covering-number calculation below.
Then network parameters can be chosen so that
\[
 L,\log B\lesssim_{d,\beta}
 \Gamma_\varepsilon^{c_{\rm arch}},
 \qquad W,S\lesssim_{d,\beta}D^2\varepsilon^{-d}
 \Gamma_\varepsilon^{c_{\rm arch}},
\]
and $\mathcal V(L,W,S,B,\gamma_x)$ contains a velocity field $v_\theta$
satisfying
\begin{equation}
 \int_0^1
 \|v_\theta(\cdot,t)-v^*(\cdot,t)\|_{L^2(\pi_t)}^2\,dt
 \lesssim_{d,\beta}
 \frac{D(1+H^2)}{\sigma}\,\varepsilon^{2\beta}.
 \label{eq:velocity-approximation-rate}
\end{equation}
Moreover,
$\|\gamma_x\|_{L^1(0,1)}\lesssim_{d,\beta}1+\sigma^{-2}$ and
$\|\gamma_x\|_{L^\infty(0,1)}\lesssim_{d,\beta}1+\sigma^{-3}$.
\end{theorem}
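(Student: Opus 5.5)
By Lemma~\ref{lem:posterior-velocity}, the plan is to write $v^*(x,t)=a_tx+b_tm_t(x)$ with $\|m_t\|\le1$, so that for any $v_\theta$ in the class \eqref{eq:structured-velocity-class}
\[
\|v_\theta(\cdot,t)-v^*(\cdot,t)\|_{L^2(\pi_t)}^2=b_t^2\,\|\operatorname{clip}_2(f_\theta(\cdot,t))-m_t\|_{L^2(\pi_t)}^2 .
\]
The problem therefore reduces to approximating the posterior mean $m_t$ in $L^2(\pi_t)$. Since $\operatorname{clip}_2$ is $1$-Lipschitz and $m_t\in\mathcal B(0,1)$, clipping only helps. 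The factor $b_t^2=(1-t)^2/q_t^2$ is bounded by $\lesssim 1+\sigma^{-2}$ in the sense needed after integration, because $q_t\ge(1-t)^2$ and $q_t\gtrsim\sigma^2$. I expect this to be one source of the $1/\sigma$ in \eqref{eq:velocity-approximation-rate}, since $\int_0^1 b_t^2\,dt\lesssim\int_0^1 \bigl((1-t)^2+\sigma^2\bigr)^{-1}dt\lesssim\sigma^{-1}$.

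\emph{Step 1 (latent discretization).} Cover $[0,1]^d$ by a grid of mesh $\varepsilon$, which has $\asymp\varepsilon^{-d}$ cells. On each cell, replace $g^*$ by its local Taylor polynomial $\tilde g$ of order $\lfloor\beta\rfloor$, so that $\|g^*-\tilde g\|_\infty\lesssim\sqrt D H\varepsilon^\beta$. Replacing $g^*$ by $\tilde g$ in both numerator and denominator of the Gibbs formula perturbs $m_t$. Differentiating along the segment between the two generators gives a perturbation bounded by
\[
\frac{t}{q_t}\,\|g^*-\tilde g\|_\infty\bigl(\|x-tg\|+1\bigr),
\]
which is controlled in $L^2(\pi_t)$ by Gaussian moments of $\xi$. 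The second condition in \eqref{eq:approximation-scale-basic} keeps the exponent perturbation $O(1)$.

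\emph{Step 2 (localization and quadrature).} Truncate to $\|x\|\le R$ with $R\asymp 2+\sqrt D+\sqrt{\log(\varepsilon^{-2\beta}/D)}$; the tail mass of $\pi_t$ beyond $R$ is $\le D\varepsilon^{2\beta}$, and this is exactly the role of \eqref{eq:approximation-scale-localization}. Replace the integrals over $u$ by weighted sums over the grid points $u_j$ of exponentials of quadratics in $(x,t)$, i.e.\ softmax-type weights. Realize these with ReLU subnetworks. Each $\|x-t\tilde g(u_j)\|^2/(2q_t)$ is approximated by products costing $O(D)$ multiplications, each with polylogarithmic depth, on the bounded box. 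The exponential and the division are approximated by standard ReLU constructions after subtracting the maximal exponent, so that the denominator is bounded below by a polynomial in $\varepsilon$ inside the localized region. This yields width and sparsity $\lesssim D^2\varepsilon^{-d}\Gamma_\varepsilon^{c}$ and depth $\lesssim\Gamma_\varepsilon^{c}$, where $D$ comes from coordinates and another $D$ from output dimensions.

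\emph{Step 3 (spatial Lipschitz envelope), the main obstacle.} Membership in $\mathcal V$ requires $\operatorname{Lip}_x(v_\theta(\cdot,t))\le\gamma_x(t)$ for every $t$, including far from the truncation box. For the true $m_t$, the Jacobian is $(t/q_t)$ times a posterior covariance of $g^*(U)$, bounded by $t/q_t$. The network, however, must inherit this \emph{globally}, not only in $L^2$.

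My first attempt would construct $f_\theta$ so that its Jacobian approximates that of the Gibbs formula uniformly on the box. I would do this by building the approximation from smooth compositions whose derivatives are approximated, in a first-order ReLU realization, with Lipschitz constants tracked layer by layer. Outside the box I would precompose with $\operatorname{clip}$ of $x$ onto a ball, which is $1$-Lipschitz. The extra $\sigma^{-1}$ term in $\gamma_x$ absorbs the derivative of the localized softmax normalization, whose exponent slopes scale like $\|x\|/q_t\lesssim\sigma^{-1}$ times the mesh. The constant $C$ absorbs the remaining factors.

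Finally, the envelope integrals follow directly from the formulas for $a_t$ and $b_t$. We have $|a_t|,|b_t|\lesssim 1/\sqrt{q_t}$ and $t/q_t\lesssim 1/q_t$, together with $q_t\ge\sigma^2/(1+\sigma^2)$ and $\int_0^1 q_t^{-1}\,dt\lesssim\sigma^{-1}$. This gives $\|\gamma_x\|_{L^1}\lesssim1+\sigma^{-2}$ and $\|\gamma_x\|_{L^\infty}\lesssim1+\sigma^{-3}$.
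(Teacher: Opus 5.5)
Your reduction to the posterior mean, the Gaussian truncation, and the envelope integrals are fine. The two steps that produce the rate, however, do not give the stated bound.

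\emph{Step 1 (generator-to-velocity transfer).} A pointwise perturbation bound for $m_t$ integrates badly against $b_t^2$. Keep only the dominant part $\|x-tg\|\approx\sqrt{q_t}\,\|Z\|$ of your bound. Averaging under $\pi_t$, your method then gives $\|m_t-\tilde m_t\|_{L^2(\pi_t)}^2\lesssim\frac{t^2D}{q_t}\|g^*-\tilde g\|_\infty^2$, and
\[
\int_0^1 b_t^2\,\frac{t^2}{q_t}\,dt=\int_0^\infty\frac{r^2\,dr}{(1+\sigma^2r^2)^3}=\frac{\pi}{16\sigma^3}.
\]
So the transfer costs $D\sigma^{-3}\|g^*-\tilde g\|_\infty^2\asymp D^2H^2\sigma^{-3}\varepsilon^{2\beta}$. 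This exceeds \eqref{eq:velocity-approximation-rate} by a factor of order $D\sigma^{-2}$ (for $H\ge1$). That factor does not depend on $\varepsilon$, so no scale condition on $\varepsilon$ can absorb it.

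The paper never bounds $m_t-m_t^\circ$ pointwise. Proposition~\ref{prop:generator-velocity-stability} combines three ingredients:
\begin{itemize}
\item the bound $K(t)=D_{\mathrm{KL}}(p_t^*\,\|\,p_t^\circ)\le\frac{t^2}{2q_t}\|g^*-g^\circ\|_\infty^2$, from data processing through $(U,X_t)$;
\item the identity $\int\|v^*-v^\circ\|^2\,dp_t^*=\frac{1-t}{t}K'(t)$, from the score form of the velocity and the two continuity equations;
\item an integration by parts in $t$.
\end{itemize}
Together these give exactly $\frac{\pi}{4\sigma}\|g^*-g^\circ\|_\infty^2$.

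\emph{Step 2 (quadrature).} Suppose your nodes are the mesh-$\varepsilon$ grid points, i.e.\ the Taylor anchors. Then your sums only see $\tilde g(u_j)=g^*(u_j)$, so the Taylor surrogate is discarded. In effect you replace the uniform latent law by a discrete one. A Riemann-type sum of the non-periodic integrand $u\mapsto e^{-\|x-tg^*(u)\|^2/(2q_t)}$ is only first-order accurate in the mesh. For $\beta>1$ the posterior error is therefore $O(\varepsilon)$, not $O(\varepsilon^\beta)$. Reaching $\varepsilon^\beta$ this way needs about $\varepsilon^{-\beta d}$ nodes, which destroys the $\varepsilon^{-d}$ complexity. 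Higher-order composite rules do not rescue it: the integrand is only $C^\beta$ and its $u$-derivatives are of size $\sigma^{-1}\sqrt D H\|x-tg\|/\sqrt{q_t}$, so they bring extra powers of $D$, $H$, $\sigma^{-1}$.

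The paper instead puts all of the $\varepsilon^\beta$ error into the surrogate. It then approximates the surrogate posterior to polynomially small accuracy at polylogarithmic cost. On each cell, after factoring out $e^{-V_{\mathbf j,0}}$, the exact cell integral of the polynomial surrogate depends on $(x,t)$ only through $\binom{d+r}{d}$ bounded scalar coordinates ($V_{\mathbf j,\mathbf k}$ and $V$). It is therefore a fixed-dimensional analytic map, handled by Lemma~\ref{lem:low-dimensional-integral-map} for values and Lemma~\ref{lem:app-fixed-box-holomorphic} at first order. A workable variant of your idea would use $\mathrm{polylog}(1/\varepsilon)$ Gauss nodes per cell for the polynomial-surrogate integrand, but it would still need the relative-entropy transfer.

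Step 3 is left as a plan. Note that in the paper the $\sigma^{-1}$ term in $\gamma_x$ is the Lipschitz constant $(c_0\sigma)^{-1}$ of the tube gate $\chi$ (Lemma~\ref{lem:app-moving-tube}), applied through a zero-preserving multiplier. It is not a softmax-normalization effect.
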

The proof is given in Appendix~\ref{app:approximation-main}.
The leading dependence on the approximation scale is
$\varepsilon^{-d}$, inherited from the $d$-dimensional latent
parameterization. The ambient dimension $D$ remains in the explicit
polynomial factors and in $\Gamma_\varepsilon$. The same comparator also
satisfies the spatial profile $\gamma_x$ used later in the transport
analysis.

\paragraph{Low-dimensional approximation.}
The approximation complexity of FM depends strongly on the structural
description of the target distribution. Existing results treat
bounded-support assumptions \citep{zhou2025error} and B-spline approximation
under Besov regularity of full-dimensional target densities
\citep{fukumizu2025almost}. Intrinsic-dimensional guarantees have also been
obtained from exact low-dimensional support, including smooth manifolds
\citep{kumar2026adaptive,kunkeltrabs2025minimax} and linear subspaces
\citep{pi2026learning}. Related diffusion analyses exploit manifold geometry
and regularity
\citep{tang2024adaptivity,azangulov2024convergence}, while
\citet{yakovlev2025generalization} study a noisy latent generator and
approximate its Gaussian-mixture posterior through local approximation on
the latent domain.
Under the present model, the FM posterior mean is defined on the ambient
state space but is generated by the $d$-dimensional map $g^*$.

\paragraph{Spatially regular approximation.}
Spatial regularity is a separate requirement because the learned velocity is
later used as the drift of a deterministic ODE. Neural approximation with
regularity control has been developed through norm constraints on network
parameters
\citep{jiao2023normconstrained} and through approximation results that control
derivatives together with function values
\citep{guehring2020sobolev,opschoor2022holomorphic}. In FM, such regularity
is directly tied to the stability of the learned ODE and is incorporated into
the velocity classes considered in
\citet{zhou2025error,kunkel2026distribution}.
In the present setting, the spatial control follows from the same posterior
representation used for value approximation. Gaussian smoothing gives an
explicit bound on the spatial Jacobian of the posterior mean, and the neural
realization is refined to control the corresponding first derivatives.
The resulting comparator satisfies both the integrated velocity
approximation bound and the profile $\gamma_x$.

\paragraph{Proof sketch.}
Let $N=\lceil\varepsilon^{-1}\rceil$ and partition $[0,1]^d$ into
$N^d$ latent cells. On each cell, approximate $g^*$ by its Taylor polynomial
of degree $r_\beta:=\lceil\beta\rceil-1$. The $\beta$-H\"older regularity of $g^*$
gives
\[
    \|g^*-g^\circ\|_{L^\infty([0,1]^d)}
    \lesssim_{d,\beta}
    H\sqrt D\,\varepsilon^\beta.
\]
Let $v^\circ$ denote the population FM velocity induced by the surrogate
target $g^\circ(U)+\sigma\xi$. The following proposition transfers this
latent approximation to the corresponding velocity error.

\begin{proposition}[Generator-to-velocity stability]
\label{prop:generator-velocity-stability}
Let $g^*,g^\circ:[0,1]^d\to\mathbb R^D$ be bounded and measurable, and
let $v^*,v^\circ$ be the population velocities generated by the targets
$g^*(U)+\sigma\xi$ and $g^\circ(U)+\sigma\xi$, respectively, under the
same linear interpolation with $\sigma>0$. Then
\[
    \int_0^1
    \|v^*(\cdot,t)-v^\circ(\cdot,t)\|_{L^2(\pi_t)}^2\,dt
    \le
    \frac{\pi}{4\sigma}
    \|g^*-g^\circ\|_{L^\infty([0,1]^d)}^2,
\]
where $\pi_t$ is the marginal law of the true interpolation.
\end{proposition}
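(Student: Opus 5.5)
The plan is to use Lemma~\ref{lem:posterior-velocity} for both targets. The affine coefficients $a_t,b_t$ depend only on $t$ and $\sigma$, not on the generator, so they are identical for $g^*$ and $g^\circ$. Hence, pointwise in $(x,t)$,
\[
 v^*(x,t)-v^\circ(x,t)=b_t\bigl(m_t(x)-m_t^\circ(x)\bigr),
\]
where $m_t^\circ$ is the posterior mean from Lemma~\ref{lem:posterior-velocity} with $g^*$ replaced by $g^\circ$ in both the numerator and the Gaussian weights. Writing $\delta:=\|g^*-g^\circ\|_{L^\infty([0,1]^d)}$, the statement then reduces to two separate facts:
\[
 \text{(i)}\quad \|m_t-m_t^\circ\|_{L^2(\pi_t)}\le\delta\ \ \text{for every } t\in[0,1],
 \qquad
 \text{(ii)}\quad \int_0^1 b_t^2\,dt\le\frac{\pi}{4\sigma}.
\]

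For (ii), I would substitute $s=(1-t)/t$ on $(0,1]$. This gives $q_t=t^2(s^2+\sigma^2)$ and $dt=t^2\,ds$, so
\[
 b_t^2\,dt=\frac{s^2}{t^2(s^2+\sigma^2)^2}\,ds=\frac{s^2(1+s)^2}{(s^2+\sigma^2)^2}\,ds .
\]
The factor $(1+s)^2$ is not harmless at large $s$, so the substitution has to be organized more carefully. I would either split $[0,1]$ at $t=1/2$ or bound $(1-t)^2\le (1-t)^2+\sigma^2t^2$ near $t=0$. The target computation is the comparison with $\int_0^\infty s^2/(s^2+\sigma^2)^2\,ds=\pi/(4\sigma)$, which is exactly the constant in the statement. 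I expect this step to be elementary but fussy; if the sharp constant fails I would accept a constant multiple of it.

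For (i), the main obstacle, I would split
\[
 m_t(x)-m_t^\circ(x)=\mathbb E[g^*(U)-g^\circ(U)\mid X_t=x]+\bigl(\mathbb E[g^\circ(U)\mid X_t=x]-\mathbb E^\circ[g^\circ(U)\mid X_t^\circ=x]\bigr).
\]
The first term is bounded by $\delta$ pointwise. The second term compares the posteriors on $U$ under the two Gaussian likelihoods $\mathcal N(tg^*(u),q_tI_D)$ and $\mathcal N(tg^\circ(u),q_tI_D)$. The crude route averages over $x\sim\pi_t$ and uses the chain rule for KL: the expected posterior KL is at most the joint KL, which equals $t^2\delta^2/(2q_t)$. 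However, turning this into a bound on posterior means with a $t$-independent constant is the delicate point, since multiplying an extra $t^2/q_t$ by $b_t^2$ would give $\sigma^{-3}$ rather than $\sigma^{-1}$.

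So my first attempt would instead be a coupling/projection argument. The posterior mean $m_t$ is the $L^2$ projection of $g^*(U)$ onto $\sigma(X_t)$. I would realize $X_t^\circ=X_t-t(g^*-g^\circ)(U)$ on the same probability space and compare $\mathbb E[g^*(U)\mid X_t]$ with $\mathbb E[g^\circ(U)\mid X_t^\circ]$ via the contraction property of conditional expectation. Any residual change-of-measure term, coming from evaluating $m_t^\circ$ at $X_t$ rather than at $X_t^\circ$, would then be controlled by the Lipschitz/Jacobian bound on $m_t^\circ$ from Gaussian smoothing, which is of order $t/q_t$ times the posterior spread. This is where I expect most of the work to lie.
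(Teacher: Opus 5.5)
Your reduction $v^*-v^\circ=b_t(m_t-m_t^\circ)$ is correct. Step (ii) is in fact an identity: with $s=(1-t)/t$ one has $b_t^2=s^2/(t^2(s^2+\sigma^2)^2)$ and $dt=t^2\,ds$, so $b_t^2\,dt=s^2(s^2+\sigma^2)^{-2}\,ds$ and $\int_0^1b_t^2\,dt=\pi/(4\sigma)$ exactly. The factor $(1+s)^2$ came from dropping that $t^2$.

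The genuine gap is (i). It is false as a pointwise-in-$t$ statement, so no argument can prove it. Take $D=d=1$, $g^*=\pm1$ on the two halves of $[0,1]$, and $g^\circ=(1-\delta)g^*$. Put $y=tx/q_t$ and $\lambda=t^2/q_t$. Then $m_t=\tanh y$ and $m_t^\circ=(1-\delta)\tanh((1-\delta)y)$, so $m_t-m_t^\circ=\delta f(y)+O(\delta^2(1+y^2))$ with $f(y)=\tanh y+y\operatorname{sech}^2y$. Under $\pi_t$, the variable $Y=tX_t/q_t$ has density $e^{-\lambda/2}\cosh(y)\phi_\lambda(y)$, where $\phi_\lambda$ is the $\mathcal N(0,\lambda)$ density. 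Hence $\sqrt{2\pi\lambda}\,e^{\lambda/2}\bigl(\mathbb E f(Y)^2-1\bigr)\to\int_{\mathbb R}(f^2-1)\cosh y\,dy$, which is about $3.9>0$, as $\lambda\to\infty$. Numerically, $\mathbb E f(Y)^2\approx1.08$ already at $\lambda=2$, which is available once $\sigma\le1/\sqrt2$. So for small $\delta$ you get $\|m_t-m_t^\circ\|_{L^2(\pi_t)}>\delta$ at such times.

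The proposition survives only as a $b_t^2$-weighted time average. In this example, the deficit $\mathbb E f(Y)^2<1$ at small $\lambda$ pays for the excess at large $\lambda$.

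The tools you suggest for (i) also lose the $\sigma$-scaling, even up to constants. The posterior-KL route costs a factor $t^2/q_t$, which is your $\sigma^{-3}$. In the coupling route, $\mathbb E[g^*(U)\mid X_t]$ and $\mathbb E[g^\circ(U)\mid X_t^\circ]$ are projections onto different $\sigma$-fields, so contraction gives no control. The residual $\|m_t^\circ(X_t)-m_t^\circ(X_t^\circ)\|\lesssim(t/q_t)\,t\delta$ again costs $t^2/q_t\le\sigma^{-2}$ pointwise, which becomes $\sigma^{-5}$ after weighting by $b_t^2$.

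What is missing is the idea of never splitting off the time integral. By Lemma~\ref{lem:posterior-velocity} and Tweedie's formula, $v^*=x/t+\frac{1-t}{t}\nabla\log p_t^*$, and likewise for $v^\circ$. Let $K(t)=D_{\mathrm{KL}}(p_t^*\,\|\,p_t^\circ)$. Differentiating $K$ along the two continuity equations gives $\int\|v^*-v^\circ\|^2p_t^*\,dx=\frac{1-t}{t}K'(t)$. Integrating by parts, with boundary terms vanishing because $K(t)=O(t^2)$, turns the left-hand side of the proposition into $\int_0^1K(t)\,t^{-2}\,dt$. You already wrote down the joint-law bound $K(t)\le t^2\delta^2/(2q_t)$. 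Combined with $\int_0^1q_t^{-1}\,dt=\pi/(2\sigma)$, it yields exactly $\pi\delta^2/(4\sigma)$. The KL bound has to be used through its time derivative, where the factor $t^2/q_t$ is cancelled by the $t^{-2}$ from integration by parts. Converting it pointwise in $t$ into a bound on posterior means loses that cancellation.
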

The proof is given in
Appendix~\ref{app:generator-velocity-stability} and is based on a
Gaussian-mixture relative-entropy comparison.
It remains to approximate $v^\circ$. Since
$
    v^\circ(x,t)=a_tx+b_tm_t^\circ(x),
$
the nonlinear part reduces to the surrogate posterior mean $m_t^\circ$.
Gaussian concentration localizes the approximation to a high-probability
spatial region, on which the cellwise polynomial structure of $g^\circ$
reduces the posterior numerator and denominator to low-dimensional integral
terms. This concentration-based localization also avoids imposing a bounded
ambient support assumption.
Lemma~\ref{lem:app-surrogate-posterior-network} realizes these terms and
their ratio while controlling the spatial Jacobian. Combining this posterior
realization with
Proposition~\ref{prop:generator-velocity-stability} gives the approximation
rate, network complexity, and spatial profile stated in
Theorem~\ref{thm:velocity-approximation}.

\section{Generalization}
\label{sec:generalization}
Section~\ref{sec:approximation} shows that the structured velocity class
contains a comparator with small population approximation error. We now
study whether this accuracy can be recovered from finitely many target
observations. The starting point is the following population identity,
which identifies the statistical target with the integrated velocity error.
\begin{lemma}[Population risk identity]
\label{lem:fm-risk-identity}
For every measurable $v$ with finite population risk,
\begin{equation}
    \mathcal L(v)-\mathcal L(v^*)
    =
    \int_0^1
    \|v(\cdot,t)-v^*(\cdot,t)\|_{L^2(\pi_t)}^2\,dt.
    \label{eq:fm-risk-identity}
\end{equation}
\end{lemma}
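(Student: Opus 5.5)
The identity will follow from an orthogonal (Pythagorean) decomposition, applied for each fixed $t$ and then integrated over $[0,1]$. Write $Y:=X_1-X_0$. By the definition of the population velocity, $v^*(X_t,t)=\mathbb E[Y\mid X_t]$ almost surely, and Lemma~\ref{lem:posterior-velocity} supplies a version that is jointly measurable in $(x,t)$. For each $t$, expand
\[
\|Y-v(X_t,t)\|^2=\|Y-v^*(X_t,t)\|^2+\|v^*(X_t,t)-v(X_t,t)\|^2+2\bigl\langle Y-v^*(X_t,t),\,v^*(X_t,t)-v(X_t,t)\bigr\rangle .
\]
The main step is to show that the cross term has zero expectation. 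Condition on $X_t$ and use the tower property. Since $v^*(X_t,t)-v(X_t,t)$ is $\sigma(X_t)$-measurable, the conditional expectation of the inner product factors through $\mathbb E[Y-v^*(X_t,t)\mid X_t]=0$.

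This conditioning step needs integrability, and that is the only real obstacle. The steps are:
\begin{itemize}
\item $Y$ is square integrable, since $\mathbb E\|Y\|^2\le 2(\mathbb E\|X_1\|^2+\mathbb E\|X_0\|^2)\le 2(2+2\sigma^2D+D)<\infty$ by \eqref{eq:generator-assumption}.
\item $v^*(X_t,t)$ is then in $L^2$, by conditional Jensen.
\item For the comparator $v$, finite population risk means $\int_0^1\mathbb E\|Y-v(X_t,t)\|^2dt<\infty$. So for almost every $t$ the quantity $\mathbb E\|Y-v(X_t,t)\|^2$ is finite, and hence $v(X_t,t)\in L^2$ by the triangle inequality.
\item On this full-measure set of times, every term in the expansion is integrable (the cross term by Cauchy--Schwarz), and the conditioning argument is valid.
\end{itemize}
For those $t$ this gives
\[
\mathbb E\|Y-v(X_t,t)\|^2=\mathbb E\|Y-v^*(X_t,t)\|^2+\|v(\cdot,t)-v^*(\cdot,t)\|^2_{L^2(\pi_t)} .
\]

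Finally, integrate over $t\in[0,1]$, using Tonelli for the nonnegative integrands. The first term integrates to $\mathcal L(v^*)$, which is finite: it is bounded by $\int_0^1\mathbb E\|Y\|^2dt$. Subtracting this finite quantity from both sides yields \eqref{eq:fm-risk-identity}.

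As a side remark, the argument never uses the specific form of the model beyond square integrability of $X_0$ and $X_1$. It therefore holds for any coupling with finite second moments. The measurability of $(x,t)\mapsto v^*(x,t)$ needed to make sense of the $t$-integral comes from the explicit formula in Lemma~\ref{lem:posterior-velocity}.
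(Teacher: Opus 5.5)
Your proposal is correct and follows the same route as the paper: for each fixed $t$, expand the square around $v^*(X_t,t)=\mathbb E[X_1-X_0\mid X_t]$, use the tower property to make the cross term vanish, and integrate over $t$. The only difference is that you make explicit the integrability the paper uses tacitly: square integrability of $X_1-X_0$, finiteness of $\mathbb E\|Y-v(X_t,t)\|^2$ for almost every $t$, and finiteness of $\mathcal L(v^*)$ before subtracting it.
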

Thus, controlling the excess Flow Matching risk is equivalent to controlling
the time-integrated $L^2(\pi_t)$ error of the learned velocity field. The
proof is given in Appendix~\ref{app:generalization-localization}.

\paragraph{Empirical evaluation.}
We observe
$X_{1,1},\ldots,X_{1,n}\stackrel{\mathrm{iid}}{\sim}\pi_1$.
For $x_1\in\mathbb R^D$ and a measurable velocity field $v$, define
\begin{equation}
    \ell(x_1,v)
    :=
    \int_0^1
    \mathbb E_{X_0}
    \bigl\|
        x_1-X_0-v((1-t)X_0+tx_1,t)
    \bigr\|^2\,dt,
    \qquad
    \widehat{\mathcal L}_n(v)
    :=
    \frac1n\sum_{i=1}^n\ell(X_{1,i},v).
    \label{eq:sample-fm-loss}
\end{equation}
Then $\mathcal L(v)=P\ell(\cdot,v)$, where $P$ denotes expectation with respect to $X_1\sim\pi_1$. As in \citet{fukumizu2025almost}, the auxiliary Gaussian source and the time variable are integrated exactly inside $\ell$, so the finite-sample randomness comes only from the target observations $X_{1,1:n}$. Finite auxiliary Monte Carlo sampling would introduce an additional error term; see, e.g., \citet{zhou2025error}.
For fixed $L,W,S,B$, define the empirical risk minimizer
\begin{equation}
    \widehat v
    \in
    \arg\min_{v\in\mathcal V(L,W,S,B,\gamma_x)}
    \widehat{\mathcal L}_n(v).
    \label{eq:fm-erm}
\end{equation}

\subsection{Error Decomposition}

The excess risk of $\widehat v$ decomposes as
\begin{equation}
\begin{aligned}
    \mathcal L(\widehat v)-\mathcal L(v^*)
    &=
    \underbrace{
        \mathcal L(\widehat v)
        -
        \inf_{v\in\mathcal V(L,W,S,B,\gamma_x)}
        \mathcal L(v)
    }_{\mathrm{estimation}}
    +
    \underbrace{
        \inf_{v\in\mathcal V(L,W,S,B,\gamma_x)}
        \bigl\{\mathcal L(v)-\mathcal L(v^*)\bigr\}
    }_{\mathrm{approximation}}.
\end{aligned}
\label{eq:estimation-approximation-decomposition}
\end{equation}
Section~\ref{sec:approximation} controls the approximation term. To analyze the estimation term, define the signed excess loss $\Delta_v(x_1):=\ell(x_1,v)-\ell(x_1,v^*)$, and write $Ph:=\mathbb E[h(X_1)]$ and $P_nh:=n^{-1}\sum_{i=1}^n h(X_{1,i})$. By Lemma~\ref{lem:fm-risk-identity}, $P\Delta_v=\mathcal L(v)-\mathcal L(v^*)\ge0$, although $\Delta_v(x_1)$ need not be pointwise nonnegative.
The squared Flow Matching loss is unbounded. For a real random variable $Z$, write $\|Z\|_{\psi_1}:=\inf\{c>0:\mathbb E\exp(|Z|/c)\le2\}$. The following lemma gives the localized moment and tail bounds needed for an unbounded Bernstein inequality.

\begin{lemma}[Localized Flow Matching excess loss]
\label{lem:fm-excess-localization}
For every $v\in\mathcal V(L,W,S,B,\gamma_x)$,
\begin{equation}
    P\Delta_v^2
    \lesssim
    \frac{1}{\sigma^2}P\Delta_v,
    \qquad
    \|\Delta_v-P\Delta_v\|_{\psi_1}
    \lesssim
    \frac1{\sigma}+\sqrt D,
    \label{eq:fm-excess-localization}
\end{equation}
where the hidden constants are universal.
\end{lemma}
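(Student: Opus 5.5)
We write $v=v^*+h$ with $h(x,t):=b_t\{\operatorname{clip}_2(f_\theta(x,t))-m_t(x)\}$, so that $\|h(x,t)\|\le 3|b_t|$ uniformly, by Lemma~\ref{lem:posterior-velocity} and the clipping. We also set $Y:=X_1-X_0$. Expanding the square pointwise gives
\[
\Delta_v(x_1)=\int_0^1\mathbb E_{X_0}\Bigl[\|h(X_t,t)\|^2-2\bigl\langle x_1-X_0-v^*(X_t,t),\,h(X_t,t)\bigr\rangle\Bigr]dt ,
\]
where $X_t=(1-t)X_0+tx_1$. The first task is to bound the residual $R_t:=x_1-X_0-v^*(X_t,t)$. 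Using $v^*=a_tx+b_tm_t$ and collecting terms in $x_1$ and $X_0$, we get
\[
R_t=(1-ta_t)x_1-(1+(1-t)a_t)X_0-b_tm_t(X_t).
\]
A direct computation gives $1-ta_t=(1-t)b_t$ and $1+(1-t)a_t=t\sigma^2/q_t$. Hence $R_t$ is affine in $(x_1,X_0)$ up to a bounded term, with coefficients of order $b_t$ and $t\sigma^2/q_t$.

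\textbf{Second moment.} By Cauchy--Schwarz, applied first over $X_0$ and then over $t$,
\[
\Delta_v(x_1)^2\lesssim \Bigl(\int_0^1\mathbb E_{X_0}\|h\|^2dt\Bigr)\Bigl(\int_0^1\mathbb E_{X_0}\bigl[\|h\|^2+\|R_t\|^2\bigr]dt\Bigr).
\]
Integrating naively against $\pi_1$ leaves a factor $\|x_1\|^2\sim D$ multiplying $P\Delta_v$, which is too weak because the claimed constant is dimension-free. I would therefore split differently: write $\langle R_t,h\rangle=\sum$ of three terms and use the uniform bound $\|h\|\le 3|b_t|$ to absorb the unbounded factors against $b_t$. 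Since $x_1=g^*(u)+\sigma\xi$, the relevant quantity is the projection $\langle\xi,h\rangle$. Conditionally on everything except the scalar Gaussian direction, this projection has variance $\|h\|^2$, so $\mathbb E\langle\xi,h\rangle^2$ can be bounded by $\|h\|^2$ plus a correction, obtained by Gaussian integration by parts (Stein's identity). The correction is controlled using the spatial Jacobian bound for $h$ implied by $\gamma_x$ together with the Lipschitz bound for $m_t$ from the proof of Theorem~\ref{thm:velocity-approximation}. The same argument handles the $X_0$ term. The constants $b_t$ and $t/q_t$, integrated over $t$, are bounded by $q_t\ge\sigma^2/(1+\sigma^2)$, and this produces the factor $1/\sigma^2$. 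Finally, $P$ applied to the time integral of $\|h\|^2$ equals $P\Delta_v$ by Lemma~\ref{lem:fm-risk-identity}, which closes the variance bound.

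\textbf{Orlicz norm.} Pointwise we have
\[
|\Delta_v(x_1)|\lesssim \int_0^1 |b_t|\,\mathbb E_{X_0}\|R_t\|\,dt+\int_0^1 b_t^2\,dt .
\]
Since $\|R_t\|\lesssim |b_t|(1+\|x_1\|)+(t\sigma^2/q_t)\sqrt D$, this gives
\[
|\Delta_v(x_1)|\lesssim \Bigl(\int_0^1 b_t^2\,dt\Bigr)(1+\|x_1\|)+\sqrt D\int_0^1\frac{|b_t|t\sigma^2}{q_t}\,dt .
\]
Here $\int_0^1 b_t^2\,dt=\int_0^1(1-t)^2/q_t^2\,dt\lesssim 1/\sigma$, as one sees by substituting $s=1-t$ near $t=1$, where $q_t\approx s^2+\sigma^2$; the second integral is $O(1)$. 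Moreover $\|x_1\|\le 1+\sigma\|\xi\|$, and $\|\xi\|-\mathbb E\|\xi\|$ is $1$-subgaussian. The centered variable is then a $\psi_1$ (in fact $\psi_2$) variable. A crude bound gives a scale of order $\sigma^{-1}(1+\sigma\sqrt D)\lesssim \sigma^{-1}+\sqrt D$. The $\sqrt D$ term needs care, because the mean of $\|\xi\|$ is removed by centering but its fluctuation is $O(1)$. I would instead keep the term $\sigma\langle\xi,\cdot\rangle$ linear, so that it is exactly Gaussian given $X_0$.

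\textbf{Main obstacle.} The main obstacle is the dimension-free variance bound $P\Delta_v^2\lesssim\sigma^{-2}P\Delta_v$. A naive Cauchy--Schwarz argument introduces factors of $D$ from $\|X_0\|^2$ and $\|\xi\|^2$. My first attempt would be the Stein/projection argument described above, relying on the $\gamma_x$ Lipschitz control of $v$. If that fails, I would fall back on a conditional-expectation argument: since $\mathbb E[R_t\mid X_t]=0$, the cross term vanishes after integrating against $\pi_1$ jointly. The fluctuations of $\langle R_t,h\rangle$ would then be bounded through the conditional covariance of $Y$ given $X_t$. That covariance is $(q_t^{-1})$-scaled times the identity plus the posterior covariance of $b_tg^*$, whose operator norm is at most $O(1+\sigma^{-2})$ independently of $D$.
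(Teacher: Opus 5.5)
Your treatment of the variance bound $P\Delta_v^2\lesssim\sigma^{-2}P\Delta_v$ has a genuine gap. Your primary route cannot work. Bounding the $\xi$-, $X_0$- and $m_t$-pieces of $\langle R_t,h\rangle$ separately cannot give a dimension-free constant, however the $\xi$-projection is handled. Under the joint law, $\mathbb E[\xi\mid X_t]=\frac{t\sigma}{q_t}\bigl(X_t-tm_t(X_t)\bigr)$ has norm of order $t\sigma\sqrt{D/q_t}$, so
\[
\mathbb E\bigl[\langle\xi,h\rangle^2\mid X_t\bigr]\ge\bigl\langle\mathbb E[\xi\mid X_t],h\bigr\rangle^2 .
\]
This is of order $D\|h\|^2$ whenever $h$ is aligned with $X_t-tm_t(X_t)$, and neither the clip nor $\gamma_x$ rules such alignment out. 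The $X_0$-piece behaves the same way. Because this is a lower bound on the exact conditional moment, no Stein correction through $\nabla_xh$ can repair it. Routing constants through $\gamma_x$ would in any case contradict the universal constant, since $\sup_t\gamma_x(t)$ is of order $\sigma^{-3}$ with a $(d,\beta)$-dependent prefactor; in fact the lemma needs no spatial regularity at all. Your Cauchy--Schwarz product is also a dead end: its second factor grows like $\|x_1\|^2$, so it cannot be pulled out of $P$.

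Your fallback is the right idea and is exactly the paper's proof, but you are missing the step that makes conditioning on $X_t$ legitimate. Apply Jensen for the probability measure $dt\otimes P_{X_0}$ with $\ell_t:=\|h\|^2-2\langle R_t,h\rangle$. This gives $\Delta_v(x_1)^2\le\int_0^1\mathbb E_{X_0}[\ell_t^2]\,dt$, hence $P\Delta_v^2\le\int_0^1\mathbb E[\ell_t^2]\,dt$ under the joint law of $(X_0,X_1)$. Since $R_t=Y-\mathbb E[Y\mid X_t]$ and $h$ is $X_t$-measurable,
\[
\mathbb E[\ell_t^2\mid X_t]=\|h\|^4+4h^\top\operatorname{Cov}(Y\mid X_t)h,\qquad
\operatorname{Cov}(Y\mid X_t)=\frac{\sigma^2}{q_t}I_D+b_t^2\operatorname{Cov}(g^*(U)\mid X_t)\preceq\frac{4}{\sigma^2}I_D .
\]
Together with $\|h\|^2\le 9b_t^2\le 18\sigma^{-2}$, integrating in $t$ and applying Lemma~\ref{lem:fm-risk-identity} closes the bound. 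In your three-term language, the fix is to regroup the $\xi$- and $X_0$-pieces as $b_t\sigma\xi-\frac{t\sigma^2}{q_t}X_0$. This combination is independent of $(X_t,U)$ with covariance $\frac{\sigma^2}{q_t}I_D$, and the $\sqrt D$-size conditional means cancel only in it.

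Your $\psi_1$ argument is correct and coincides with the paper's (Lemma~\ref{lem:fm-excess-tail}). The integrals $\int_0^1 b_t^2\,dt=\pi/(4\sigma)$ and $\int_0^1 b_t\,t\sigma^2q_t^{-1}\,dt=1/2$ give the envelope $|\Delta_v(x_1)|\lesssim\sigma^{-1}(1+\|x_1\|)+\sqrt D$, and $\|X_1\|\le1+\sigma\|\xi\|$ then finishes. Two small points. The identity is $1-ta_t=b_t$, not $(1-t)b_t$; this is harmless, since you only use the bound $b_t$. And the crude bound already yields $\sigma^{-1}+\sqrt D$, so the extra care you propose for the $\sqrt D$ term is unnecessary.
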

Consequently, for every fixed deterministic $v\in\mathcal V(L,W,S,B,\gamma_x)$ and every $\delta\in(0,1)$, with probability at least $1-\delta$,
\begin{equation}
    |(P-P_n)\Delta_v|
    \lesssim
    \sqrt{
        \frac{P\Delta_v\,\log(4/\delta)}
             {\sigma^2 n}
    }
    +
    \frac{
        (\sigma^{-1}+\sqrt D)\log(en)\log(4/\delta)
    }{n}.
    \label{eq:main-fm-fixed-bernstein}
\end{equation}

\subsection{Covering and Generalization}
\begin{definition}[Covering number]
    \label{def:covering-number}
    Let $(\mathcal M,d)$ be a pseudo-metric space and let $\mathcal F\subseteq\mathcal M$. For $\eta>0$, a set $\mathcal A\subseteq\mathcal M$ is called an $\eta$-cover of $\mathcal F$ if for every $f\in\mathcal F$ there exists $\widetilde f\in\mathcal A$ such that $d(f,\widetilde f)\le\eta$. The covering number $\mathcal N(\eta,\mathcal F,d)$ is the minimum cardinality of an $\eta$-cover of $\mathcal F$.
\end{definition}
To make the fixed-field bound uniform over the excess-loss class, we first control the complexity of the underlying neural networks. By the standard sparse-ReLU covering estimate of \citet[Lemma~3]{suzuki2019adaptivity}, adapted to $D$-dimensional outputs, for every $R\ge1$, $\eta\in(0,1)$, and $W\ge D+1$,
\begin{equation}
\log\mathcal N\!\left(
\eta,\mathrm{NN}(L,W,S,B),
\|\cdot\|_{L^\infty(\mathcal B(0,R)\times[0,1];\ell_2)}
\right)
\lesssim
SL\log\!\left(
\frac{\sqrt D\,L(W+1)(B\vee1)(R+1)}{\eta}
\right).
\label{eq:main-relu-covering}
\end{equation}

The network covering estimate is local to a bounded state--time domain. Gaussian concentration and the local stability of the conditional Flow Matching loss allow the same network representatives to control both the population and empirical excess losses.

\begin{lemma}[Finite reduction of the Flow Matching excess-loss class]
\label{lem:main-fm-finite-reduction}
For every $\tau,\delta\in(0,1)$, there exists a deterministic family $v_1,\ldots,v_{M_{\tau,\delta}}\in\mathcal V(L,W,S,B,\gamma_x)$ such that, with probability at least $1-\delta$, every $v\in\mathcal V(L,W,S,B,\gamma_x)$ admits a representative $v_j$ satisfying
\begin{equation}
 |P(\Delta_v-\Delta_{v_j})|
 +|P_n(\Delta_v-\Delta_{v_j})|\le\tau.
 \label{eq:main-fm-finite-reduction}
\end{equation}
Moreover,
\begin{equation}
\log M_{\tau,\delta}
\lesssim
SL\log\!\left[
\frac{\sqrt D\,L(W+1)(B\vee1)}{\tau}
\left(
1+\frac1\sigma+\sqrt D
+\sqrt{\log\frac{en}{\delta}}
+\sqrt{\log\frac e\tau}
\right)^2
\right].
\label{eq:main-fm-finite-reduction-cardinality}
\end{equation}
\end{lemma}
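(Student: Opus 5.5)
Our plan is to cover the network class on a bounded state--time domain via \eqref{eq:main-relu-covering}, and to show that the unbounded region contributes at most $\tau$ to both the population and the empirical excess losses. Fix a radius $R\ge1$, to be chosen below. Let $\{f_1,\ldots,f_M\}$ be an $\eta$-cover of $\mathrm{NN}(L,W,S,B)$ in $L^\infty(\mathcal B(0,R)\times[0,1];\ell_2)$. For each index $j$ whose $\eta$-ball meets the set of $f_\theta$ realizing elements of $\mathcal V(L,W,S,B,\gamma_x)$, pick one such element $v_j$. This is the usual halving trick: the representatives then lie in $\mathcal V$ and form a $2\eta$-cover. Since $\operatorname{clip}_2$ is $1$-Lipschitz and $|b_t|\le(1-t)/q_t\lesssim 1+\sigma^{-2}$, any $v$ and its representative $v_j$ satisfy
\[
\|v(x,t)-v_j(x,t)\|\le 2|b_t|\eta \quad\text{for } \|x\|\le R,
\qquad
\|v(x,t)-v_j(x,t)\|\le 4|b_t| \quad\text{for all } x .
\]
The key point is that the clipped outputs are bounded globally, so the difference of velocities is bounded everywhere. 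The growth $a_tx$ cancels in $v-v_j$.

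Next we expand $\Delta_v-\Delta_{v_j}=\ell(\cdot,v)-\ell(\cdot,v_j)$. Writing $w=v-v_j$, $Y_t=(1-t)X_0+tx_1$ and $r=x_1-X_0-v_j(Y_t,t)$, we get
\[
\ell(x_1,v)-\ell(x_1,v_j)=\int_0^1\E_{X_0}\bigl[\|w(Y_t,t)\|^2-2\langle r,w(Y_t,t)\rangle\bigr]\,dt .
\]
Using $\|v_j(y,t)\|\le|a_t|\|y\|+2|b_t|$, Cauchy--Schwarz, and the dichotomy above, we bound this by two terms. The inside contribution is $\lesssim \eta\,(1+\sigma^{-2})(1+\|x_1\|+\sqrt D)$, up to the time weights. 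The outside contribution carries $\mathbb P_{X_0}(\|Y_t\|>R)^{1/2}$ times polynomial factors in $\|x_1\|,\sqrt D,\sigma^{-1}$. The coefficients $|a_t|,|b_t|$ are integrable and bounded by powers of $1+\sigma^{-1}$, because $q_t\ge\sigma^2/(1+\sigma^2)$. For $\|x_1\|\le R/2$ and $R\gtrsim\sqrt D+\sqrt{\log(e/\tau)}$, Gaussian concentration of $X_0$ makes the outside probability at most $e^{-cR^2}$.

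Then we treat the population and empirical parts separately. For $P$, we integrate the pointwise bound against $\pi_1$, using $\|X_1\|\le 1+\sigma\|\xi\|$ so that $\E\|X_1\|^k$ and the tail mass $\mathbb P(\|X_1\|>R/2)$ are controlled by Gaussian concentration. For $P_n$, we invoke a high-probability event: by Gaussian concentration and a union bound, $\max_i\|X_{1,i}\|\le 2+\sqrt D+2\sqrt{\log(en/\delta)}$ with probability at least $1-\delta$. On this event every sample lies in $\mathcal B(0,R/2)$ once
\[
R\asymp 1+\sigma^{-1}+\sqrt D+\sqrt{\log(en/\delta)}+\sqrt{\log(e/\tau)} .
\]
Choosing $\eta\asymp\tau/\mathrm{poly}(R,1+\sigma^{-1})$ makes both parts at most $\tau/2$. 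The family $\{v_j\}$ is deterministic, since it depends only on $(\tau,\delta,n)$ through $R$ and $\eta$, and not on the data.

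Finally, substituting $\eta$ and $R$ into \eqref{eq:main-relu-covering} gives $\log M\lesssim SL\log(\sqrt D L(W+1)(B\vee1)(R+1)/\eta)$. Since $\eta^{-1}\lesssim\tau^{-1}\mathrm{poly}(R)$, the logarithm collapses to the displayed form \eqref{eq:main-fm-finite-reduction-cardinality}, with powers absorbed into the universal constant in front of the log. I expect the main obstacle to be the outside-ball cross term. It requires tracking the time weights $|a_t|,|b_t|$ near $t=1$ together with the moment factors, so that the polynomial prefactor stays polynomial in $(1+\sigma^{-1}+\sqrt D+R)$ and the exponent of the bracket can genuinely be taken as $2$ after adjusting constants. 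I will first verify that $|b_t|$ and $|a_t|$ are bounded by $1+\sigma^{-2}$ uniformly on $[0,1]$, so that no singular time integrals arise.
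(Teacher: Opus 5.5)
Your proposal is correct and follows essentially the same route as the paper. Both arguments use a bounded-domain sparse-ReLU cover with constrained representatives forming a $2\eta$-cover, Gaussian truncation of $X_0$, and a single data-independent event $\{\max_i\|X_{1,i}\|\text{ small}\}$ of probability at least $1-\delta$. On top of that comes a pointwise loss-stability bound on the ball, a tail bound for $P$, and substitution of $\eta\asymp\tau/\mathrm{poly}$ into the covering estimate.

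The only difference is bookkeeping. The paper uses the affine cancellation $x_1-X_0-v_j(X_t,t)=b_t\bigl(x_1-\operatorname{clip}_2(f_j)\bigr)-(\sigma^2t/q_t)X_0$ together with the exact integrals $\int_0^1b_t^2\,dt=\pi/(4\sigma)$ and $\int_0^1b_t\sigma^2t/q_t\,dt=1/2$ (Lemma~\ref{lem:fm-local-loss-stability}) to get sharp prefactors $1/\sigma+\sqrt D$. Your cruder uniform bounds $|a_t|,|b_t|\le1+\sigma^{-2}$ give larger polynomial prefactors. These enter only through $\log(1/\eta)$ and are absorbed into the universal constant, so the singular time weights near $t=1$ that you were worried about never arise.
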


The proof is given in Appendix~\ref{app:generalization-finite-reduction}. Applying \eqref{eq:main-fm-fixed-bernstein} to the finite representatives, taking a union bound, and using the ERM comparison yield, with probability at least $1-\delta$,
\begin{equation}
\begin{aligned}
P\Delta_{\widehat v}
\lesssim{}&
\inf_{v\in\mathcal V(L,W,S,B,\gamma_x)}
P\Delta_v
+\tau\\
&+
\left[
\frac1{\sigma^2}
+
\left(
\frac1\sigma+\sqrt D
\right)\log(en)
\right]
\frac{\log(8M_{\tau,\delta/2}/\delta)}{n}.
\end{aligned}
\label{eq:main-fm-oracle}
\end{equation}

Combining \eqref{eq:main-fm-oracle} with Theorem~\ref{thm:velocity-approximation} yields the finite-sample rate below.

\begin{theorem}[Finite-sample velocity learning]
    \label{thm:main-generalization}
    Assume \eqref{eq:target-model}, \eqref{eq:generator-assumption}, and \eqref{eq:linear-interpolation}, with $0<\sigma<1$. Given $X_{1,1},\ldots,X_{1,n}\stackrel{\mathrm{iid}}{\sim}\pi_1$, set $\varepsilon:=n^{-1/(2\beta+d)}$ and suppose that $\varepsilon$ satisfies \eqref{eq:approximation-scale-basic}--\eqref{eq:approximation-scale-localization}. Choose $\mathcal V(L,W,S,B,\gamma_x)$ according to Theorem~\ref{thm:velocity-approximation} at resolution $\varepsilon$, and let $\widehat v$ be the ERM in \eqref{eq:fm-erm}. Then, for every $\delta\in(0,1)$, with probability at least $1-\delta$ over the target observations,
    \begin{equation}
        \begin{aligned}
        \int_0^1\|\widehat v(\cdot,t)-v^*(\cdot,t)\|_{L^2(\pi_t)}^2\,dt
        \lesssim_{d,\beta}&
        n^{-\frac{2\beta}{2\beta+d}}
        \Bigg[
        \frac{D(1+H^2)}{\sigma}\\
        &+
        D^2\bigl[\sigma^{-2}
        +(\sigma^{-1}+\sqrt D)\log(en)\bigr]
        \bigl[\Gamma_{\varepsilon}+\log(e/\delta)\bigr]^{c_{\rm arch}}
        \Bigg].
        \end{aligned}
        \label{eq:fm-main-generalization-rate}
    \end{equation}
    
\end{theorem}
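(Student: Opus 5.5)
The plan is to combine the oracle inequality \eqref{eq:main-fm-oracle} with Theorem~\ref{thm:velocity-approximation} at $\varepsilon=n^{-1/(2\beta+d)}$, and to track the polynomial and logarithmic factors. By Lemma~\ref{lem:fm-risk-identity}, the left-hand side of \eqref{eq:fm-main-generalization-rate} equals $P\Delta_{\widehat v}=\mathcal L(\widehat v)-\mathcal L(v^*)$, so it suffices to bound the right-hand side of \eqref{eq:main-fm-oracle}. The assumed scale conditions make Theorem~\ref{thm:velocity-approximation} applicable. It supplies a comparator $v_\theta\in\mathcal V(L,W,S,B,\gamma_x)$, and applying Lemma~\ref{lem:fm-risk-identity} again gives
\[
\inf_{v\in\mathcal V}P\Delta_v\le P\Delta_{v_\theta}\lesssim_{d,\beta}\frac{D(1+H^2)}{\sigma}\varepsilon^{2\beta}=\frac{D(1+H^2)}{\sigma}n^{-\frac{2\beta}{2\beta+d}}.
\]
This is the first term in the bracket.

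Next I choose $\tau:=n^{-1}$. Then $\tau\le n^{-2\beta/(2\beta+d)}$, and this term is absorbed into the stochastic term, whose bracket is at least $1$ up to constants because $\sigma<1$ gives $\sigma^{-2}\ge1$. The remaining work is to bound $\log(8M_{\tau,\delta/2}/\delta)$ through \eqref{eq:main-fm-finite-reduction-cardinality}. Using $SL\lesssim_{d,\beta}D^2\varepsilon^{-d}\Gamma_\varepsilon^{2c_{\rm arch}}$ and $\varepsilon^{-d}=n^{d/(2\beta+d)}$, dividing by $n$ gives
\[
\frac{SL}{n}\lesssim D^2 n^{-\frac{2\beta}{2\beta+d}}\Gamma_\varepsilon^{2c_{\rm arch}}.
\]
The logarithm in \eqref{eq:main-fm-finite-reduction-cardinality} has to be controlled by a power of $\Gamma_\varepsilon+\log(e/\delta)$. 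Its terms are handled as follows:
\begin{itemize}
\item $\log L$ and $\log B$ are bounded by $O(\Gamma_\varepsilon^{c_{\rm arch}})$.
\item $\log W$ is bounded by $\log D+d\log(1/\varepsilon)+O(\log\Gamma_\varepsilon)$.
\item $\log(1/\tau)=\log n=(2\beta+d)\log(1/\varepsilon)\lesssim_{d,\beta}\Gamma_\varepsilon$.
\item $\log(1+\sigma^{-1}+\sqrt D+\sqrt{\log(2en/\delta)}+\sqrt{\log(en)})\lesssim \Gamma_\varepsilon+\log\log(e/\delta)$.
\end{itemize}
Hence $\log M_{\tau,\delta/2}+\log(8/\delta)\lesssim_{d,\beta}D^2\varepsilon^{-d}[\Gamma_\varepsilon+\log(e/\delta)]^{c'}$ for some finite exponent $c'$. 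This is where I use the freedom in Theorem~\ref{thm:velocity-approximation} to take $c_{\rm arch}$ large enough to ``dominate the covering-number calculation'': I enlarge $c_{\rm arch}$ so that $c'\le c_{\rm arch}$. The products of powers of $\Gamma_\varepsilon$ are absorbed into a single power because $\Gamma_\varepsilon\ge1$. The standalone $\log(8/\delta)$ is absorbed into the bracket power as well.

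Multiplying by the prefactor $[\sigma^{-2}+(\sigma^{-1}+\sqrt D)\log(en)]/n$ then yields the second term of \eqref{eq:fm-main-generalization-rate}. I expect the main obstacle to be this bookkeeping step. Specifically, I must verify that every logarithmic quantity, namely $\log n$, $\log B$, $\log W$, and the $\sqrt{\log(en/\delta)}$ term inside the logarithm, is genuinely bounded by $\Gamma_\varepsilon+\log(e/\delta)$ raised to a fixed power. This works because $\log n$ is proportional to $\log(1/\varepsilon)$. I also need the failure probabilities to combine correctly: \eqref{eq:main-fm-oracle} already holds with probability $1-\delta$, because the finite reduction is applied at level $\delta/2$ and the union bound over representatives uses the rest. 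No further union bound is needed, since the approximation step is deterministic.
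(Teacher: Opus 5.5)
Your route matches the paper's. You use Lemma~\ref{lem:fm-risk-identity} together with the oracle inequality \eqref{eq:main-fm-oracle}, take the comparator from Theorem~\ref{thm:velocity-approximation}, substitute the architecture into the finite-reduction entropy, and balance via $\varepsilon^{2\beta}=\varepsilon^{-d}/n$. Choosing $\tau=1/n$ instead of the paper's $\tau=\varepsilon^{2\beta}$ is harmless, since in both cases $\log(1/\tau)\lesssim_{d,\beta}\log(1/\varepsilon)\lesssim\Gamma_\varepsilon$ and the $\tau$ term is absorbed.

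One step needs repair: as written, the exponent bookkeeping is circular. You control $S$, $L$ and $\log B$ only through the coarsened envelope $\Gamma_\varepsilon^{c_{\rm arch}}$. Then $SL$ alone contributes $\Gamma_\varepsilon^{2c_{\rm arch}}$ to the entropy term, and roughly $\Gamma_\varepsilon^{3c_{\rm arch}}$ once the $\log B$ inside the logarithm is included. So your $c'$ grows with $c_{\rm arch}$, and ``enlarging $c_{\rm arch}$ until $c'\le c_{\rm arch}$'' can never succeed.

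The fix is what the paper means by choosing the exponent ``after adding the individual, uncoarsened exponents''. The construction behind Theorem~\ref{thm:velocity-approximation} fixes the architecture with exponents $c_L,c_B,c_W,c_S$ that depend only on $(d,\beta)$, not on $c_{\rm arch}$. The covering calculation then yields an exponent of order $c_S+c_L+(c_B\vee1)+1$, and you choose $c_{\rm arch}$ at least this large and at least each individual exponent. The architecture bounds of Theorem~\ref{thm:velocity-approximation} remain valid for the larger exponent because $\Gamma_\varepsilon\ge1$. With that distinction made, the rest of your argument, including the probability accounting, goes through as you describe.
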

    
The algebraic sample-size exponent $2\beta/(2\beta+d)$ is determined by the latent dimension $d$ and the Hölder smoothness $\beta$ of the generator. The ambient dimension $D$, noise level $\sigma$, Hölder radius $H$, and confidence level $\delta$ remain explicit in the prefactor and logarithmic complexity term.
\paragraph{Proof sketch.}
For a fixed velocity field, the conditional Flow Matching excess loss is unbounded, but its squared-loss structure yields a localized moment bound. The conditional covariance of the Flow Matching label, together with the discrepancy bound obtained from affine cancellation and radius-two clipping, gives $P\Delta_v^2\lesssim\sigma^{-2}P\Delta_v$, while the Gaussian tail of the target observation gives a uniform sub-exponential envelope for $\Delta_v-P\Delta_v$. These estimates yield the fixed-field Bernstein inequality \eqref{eq:main-fm-fixed-bernstein}.

To pass from a fixed field to the data-dependent ERM, we reduce the velocity class to finitely many deterministic representatives. The sparse-ReLU covering estimate is available on bounded state--time domains; Gaussian localization and the local stability of the conditional loss allow the same representatives to control both $P$ and $P_n$. A union bound and the ERM comparison then give \eqref{eq:main-fm-oracle}. Finally, the Section~\ref{sec:approximation} comparator has approximation scale $\varepsilon^{2\beta}$, while the corresponding entropy contribution has scale $\varepsilon^{-d}/n$, which yields the stated choice of $\varepsilon$ and the final rate. The complete proof is given in Appendix~\ref{app:generalization-main}.

\section{Distributional Convergence of the Learned Flow}
\label{sec:sampling}

We now translate this statistical error into
distributional error of the induced flow. For a measurable map $T$, let
$T_\#\mu$ denote the pushforward of $\mu$, and let $W_2$ denote the
$2$-Wasserstein distance.
\subsection{Continuous-time transport}
\label{subsec:continuous-transport}

Let $\Phi_t$ denote the flow generated by the population velocity,
$\partial_t\Phi_t(x)=v^*(\Phi_t(x),t)$ with $\Phi_0(x)=x$.

\begin{lemma}[Population transport]
\label{lem:population-flow-regularity}
The population flow is globally well posed and satisfies
$(\Phi_t)_\#\pi_0=\pi_t$ for every $t\in[0,1]$.
\end{lemma}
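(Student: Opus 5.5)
Our plan has three parts. First we show that $v^*$ is globally Lipschitz in $x$, uniformly in $t\in[0,1]$, and continuous in $(x,t)$. Cauchy--Lipschitz then gives global well-posedness. Second, we show that $v^*$ solves the continuity equation for $(\pi_t)$. Third, uniqueness for that equation in the Lipschitz class gives $(\Phi_t)_\#\pi_0=\pi_t$.

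\textbf{Step 1 (regularity).} By Lemma~\ref{lem:posterior-velocity}, $v^*(x,t)=a_tx+b_tm_t(x)$. Since $q_t\ge\sigma^2/(1+\sigma^2)>0$, the coefficients $a_t$ and $b_t$ are continuous and bounded on $[0,1]$. The explicit Gibbs formula for $m_t$ has a smooth, strictly positive denominator, so we may differentiate under the integral. This gives
\[
\nabla_x m_t(x)=\frac{t}{q_t}\operatorname{Cov}\bigl(g^*(U)\mid X_t=x\bigr),
\]
whose operator norm is at most $t\|g^*\|_\infty^2/q_t\le (1+\sigma^2)/\sigma^2$ by \eqref{eq:generator-assumption}. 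Hence
\[
\operatorname{Lip}_x v^*(\cdot,t)\le |a_t|+|b_t|\,\frac{t}{q_t}\le C_\sigma
\]
uniformly in $t$. The same formula, with dominated convergence, shows that $m_t(x)$ is jointly continuous in $(x,t)$. Also $\|v^*(x,t)\|\le C_\sigma(1+\|x\|)$, so solutions have linear growth and cannot blow up. It follows that $\Phi_t$ exists for all $t\in[0,1]$, is unique, and is a bi-Lipschitz homeomorphism.

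\textbf{Step 2 (continuity equation).} We use the standard interpolation argument. Fix a test function $\varphi\in C_c^\infty(\mathbb R^D)$. Then
\[
\frac{d}{dt}\E\varphi(X_t)=\E[\nabla\varphi(X_t)\cdot(X_1-X_0)]=\E[\nabla\varphi(X_t)\cdot v^*(X_t,t)]
\]
by the tower property. To justify differentiating under the expectation we use that $X_1$ and $X_0$ have Gaussian moments and that $\nabla\varphi$ is bounded. Thus $\partial_t\pi_t+\nabla\cdot(v^*\pi_t)=0$ holds weakly, and $t\mapsto\pi_t$ is narrowly continuous.

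\textbf{Step 3 (identification).} The curve $\mu_t:=(\Phi_t)_\#\pi_0$ also solves the same continuity equation with $\mu_0=\pi_0$; this follows by the chain rule along characteristics. Because $v^*$ is globally Lipschitz in $x$ uniformly in $t$, with linear growth, the continuity equation has a unique narrowly continuous solution among curves with finite second moments. Both $\mu_t$ and $\pi_t$ are Gaussian-type and so have such moments. This uniqueness can be cited from Ambrosio--Gigli--Savaré (Prop.~8.1.7/8.1.8), or obtained by a duality argument: solve the backward transport equation $\partial_s\psi+v^*\cdot\nabla\psi=0$ with terminal data $\varphi$, using the Lipschitz flow. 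Hence $\mu_t=\pi_t$.

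The main obstacle is Step 1: we need the \emph{uniform} Jacobian bound, which relies on $q_t$ staying away from $0$ at $t=1$ because $\sigma>0$. Once that bound is established, Steps 2 and 3 are standard; the only remaining care is the moment and integrability justifications on the unbounded domain.
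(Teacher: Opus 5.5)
Your proposal is correct and follows essentially the same route as the paper: the covariance identity $\nabla_x m_t(x)=\frac{t}{q_t}\operatorname{Cov}(g^*(U)\mid X_t=x)$ together with $q_t\ge\sigma^2/(1+\sigma^2)$ gives a uniform spatial Lipschitz bound and linear growth. The test-function computation with the tower property then shows that $(\pi_t)$ solves the continuity equation, and uniqueness for Lipschitz velocity fields identifies $\pi_t$ with $(\Phi_t)_\#\pi_0$. You additionally spell out justifications that the paper leaves implicit: joint continuity in $t$, integrability for differentiating under the expectation, finite moments of both curves, and the Ambrosio--Gigli--Savar\'e uniqueness reference.
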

\begin{proof}
Let $G=g^*(U)$. Differentiating the posterior mean gives
\[
    \nabla_xm_t(x)
    =
    \frac{t}{q_t}
    \operatorname{Cov}(G\mid X_t=x),
\]
and hence
\[
    \nabla_xv^*(x,t)
    =
    a_tI_D+
    \frac{t(1-t)}{q_t^2}
    \operatorname{Cov}(G\mid X_t=x).
\]
Since $\|G\|\le1$, the conditional covariance has operator norm at most
one. Therefore
$\operatorname{Lip}_x(v^*(\cdot,t))
\le |a_t|+t(1-t)/q_t^2$, while the posterior representation also gives
$\|v^*(x,t)\|\le |a_t|\|x\|+|b_t|$.
Because $q_t$ is uniformly bounded away from zero on $[0,1]$, these
bounds yield a finite spatial Lipschitz profile and uniform linear growth.
The characteristic ODE therefore admits a unique global flow.

For every $\varphi\in C_c^\infty(\mathbb R^D)$,
\[
    \frac{d}{dt}\mathbb E\varphi(X_t)
    =
    \mathbb E\!\left[
        \nabla\varphi(X_t)^\top(X_1-X_0)
    \right]
    =
    \mathbb E\!\left[
        \nabla\varphi(X_t)^\top v^*(X_t,t)
    \right].
\]
Thus $(\pi_t)_{t\in[0,1]}$ solves the continuity equation associated with
$v^*$. Since $v^*$ is globally Lipschitz in space with an integrable
Lipschitz profile, the characteristic solution is unique, and hence
$(\Phi_t)_\#\pi_0=\pi_t$ for all $t\in[0,1]$.
\end{proof}

Couple the population and learned flows through the common initial
condition $Z_0=\widehat Z_0\sim\pi_0$, with
$\dot Z_t=v^*(Z_t,t)$ and
$\dot{\widehat Z}_t=\widehat v(\widehat Z_t,t)$, and write
$\widehat\pi_t:=\operatorname{Law}(\widehat Z_t)$.
By Lemma~\ref{lem:population-flow-regularity}, $Z_t\sim\pi_t$.

\begin{theorem}[Continuous-flow Wasserstein convergence]
\label{thm:continuous-sampling}
Under the conditions of Theorem~\ref{thm:main-generalization}, the learned
ODE is globally well posed and
\begin{equation}
    W_2^2(\widehat\pi_1,\pi_1)
    \le
    \exp\!\left\{
        1+2\|\gamma_x\|_{L^1(0,1)}
    \right\}
    \int_0^1
    \|\widehat v(\cdot,t)-v^*(\cdot,t)\|_{L^2(\pi_t)}^2\,dt.
    \label{eq:continuous-w2-stability}
\end{equation}
Consequently, with probability at least $1-\delta$ over the target
observations, the resulting nonasymptotic $W_2$ bound has leading
sample-size factor $n^{-\beta/(2\beta+d)}$.
\end{theorem}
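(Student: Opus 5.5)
We argue by the standard synchronous-coupling (Grönwall) argument, using that the learned field $\widehat v$ belongs to $\mathcal V(L,W,S,B,\gamma_x)$ and therefore obeys $\operatorname{Lip}_x(\widehat v(\cdot,t))\le\gamma_x(t)$.

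\textbf{Well-posedness.} First we check that the learned ODE is globally well posed. Write $\widehat v(x,t)=a_tx+b_t\operatorname{clip}_2(f_\theta(x,t))$. The field $\widehat v$ is Borel in $(x,t)$ and continuous in $x$. It is Lipschitz in $x$ with profile $\gamma_x$, which is bounded on $[0,1]$ by Theorem~\ref{thm:velocity-approximation}. It has linear growth $\|\widehat v(x,t)\|\le|a_t|\|x\|+2|b_t|$, and $a_t,b_t$ are bounded because $q_t\ge\sigma^2/(1+\sigma^2)$. Carathéodory existence and uniqueness therefore gives a unique global flow $\widehat\Phi_t$.

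\textbf{Coupling estimate.} Next, with the coupling $Z_0=\widehat Z_0\sim\pi_0$, set $e_t:=\widehat Z_t-Z_t$. This is absolutely continuous, and
\[
\frac{d}{dt}\|e_t\|^2=2e_t^\top\bigl(\widehat v(\widehat Z_t,t)-\widehat v(Z_t,t)\bigr)+2e_t^\top\bigl(\widehat v(Z_t,t)-v^*(Z_t,t)\bigr).
\]
The first term is at most $2\gamma_x(t)\|e_t\|^2$ by the Lipschitz constraint. For the second, Young's inequality gives at most $\|e_t\|^2+\|\widehat v(Z_t,t)-v^*(Z_t,t)\|^2$. Taking expectations and using $Z_t\sim\pi_t$ (Lemma~\ref{lem:population-flow-regularity}), $u(t):=\mathbb E\|e_t\|^2$ satisfies
\[
u'(t)\le(1+2\gamma_x(t))u(t)+\|\widehat v(\cdot,t)-v^*(\cdot,t)\|_{L^2(\pi_t)}^2,\qquad u(0)=0.
\]
Grönwall's inequality in integral form then yields
\[
u(1)\le\exp\{1+2\|\gamma_x\|_{L^1(0,1)}\}\int_0^1\|\widehat v-v^*\|_{L^2(\pi_t)}^2\,dt.
\]
Since $(\widehat Z_1,Z_1)$ is a coupling of $(\widehat\pi_1,\pi_1)$, we have $W_2^2(\widehat\pi_1,\pi_1)\le u(1)$, which proves \eqref{eq:continuous-w2-stability}.

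\textbf{Integrability.} The one technical point is justifying differentiation under the expectation and the finiteness of $u$. Both follow from linear growth. $\mathbb E\|Z_t\|^2$ and $\mathbb E\|\widehat Z_t\|^2$ are finite by a Grönwall bound on second moments, and the velocity discrepancy is square integrable because the error integral is finite. To avoid the issue altogether, one can work with the integral form $\|e_t\|^2=\int_0^t\frac{d}{ds}\|e_s\|^2\,ds$ and apply Fubini.

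\textbf{Rate.} Finally, Theorem~\ref{thm:velocity-approximation} gives $\|\gamma_x\|_{L^1}\lesssim_{d,\beta}1+\sigma^{-2}$, so the prefactor is $\exp\{O_{d,\beta}(1+\sigma^{-2})\}$, a constant independent of $n$. Inserting the high-probability bound of Theorem~\ref{thm:main-generalization} and taking square roots gives, with probability at least $1-\delta$,
\[
W_2(\widehat\pi_1,\pi_1)\lesssim e^{O(1+\sigma^{-2})}\,n^{-\beta/(2\beta+d)}\times[\text{prefactor of }\eqref{eq:fm-main-generalization-rate}]^{1/2}.
\]
This has leading sample-size factor $n^{-\beta/(2\beta+d)}$, as claimed.
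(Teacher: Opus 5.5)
Your proposal is correct and follows essentially the same route as the paper's proof. Both use a synchronous coupling from a common initial point, split $\widehat v(\widehat Z_t,t)-v^*(Z_t,t)$ through $\widehat v(Z_t,t)$, and bound one piece by the class constraint $\operatorname{Lip}_x(\widehat v(\cdot,t))\le\gamma_x(t)$ and the other by Young's inequality together with $Z_t\sim\pi_t$. Both then close with Gr\"onwall to obtain the factor $\exp\{1+2\|\gamma_x\|_{L^1(0,1)}\}$ before inserting Theorem~\ref{thm:main-generalization}. Your remarks on finite second moments and on using the integral (Fubini) form of the differential inequality simply make explicit a justification that the paper leaves implicit.
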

\begin{proof}
Condition on the target observations. Since
$\widehat v\in\mathcal V(L,W,S,B,\gamma_x)$,
the spatial class constraint and
Lemma~\ref{lem:sampling-linear-growth} give global spatial Lipschitz
continuity and uniform linear growth. Hence the learned ODE admits a
unique global solution.

Set $R_t:=\mathbb E\|Z_t-\widehat Z_t\|^2$, where the expectation is
with respect to the common initial condition. Since
$(Z_t,\widehat Z_t)$ is a coupling of $(\pi_t,\widehat\pi_t)$,
$W_2^2(\widehat\pi_t,\pi_t)\le R_t$.
For almost every $t$,
\[
\begin{aligned}
    R_t'
    &=
    2\mathbb E
    \left\langle
        Z_t-\widehat Z_t,\,
        v^*(Z_t,t)-\widehat v(\widehat Z_t,t)
    \right\rangle  \\
    &=
    2\mathbb E
    \left\langle
        Z_t-\widehat Z_t,\,
        v^*(Z_t,t)-\widehat v(Z_t,t)
    \right\rangle \\
    &\quad+
    2\mathbb E
    \left\langle
        Z_t-\widehat Z_t,\,
        \widehat v(Z_t,t)-\widehat v(\widehat Z_t,t)
    \right\rangle .
\end{aligned}
\]
Using $2\langle a,b\rangle\le\|a\|^2+\|b\|^2$ for the first term,
$Z_t\sim\pi_t$, and
$\operatorname{Lip}_x(\widehat v(\cdot,t))\le\gamma_x(t)$ for the second
gives
\[
    R_t'
    \le
    \{1+2\gamma_x(t)\}R_t
    +
    \|\widehat v(\cdot,t)-v^*(\cdot,t)\|_{L^2(\pi_t)}^2,
    \qquad R_0=0.
\]
The time-dependent Gr\"onwall inequality therefore yields, for every
$t\in[0,1]$,
\[
    R_t
    \le
    \int_0^t
    \exp\!\left\{
        (t-s)+2\int_s^t\gamma_x(u)\,du
    \right\}
    \|\widehat v(\cdot,s)-v^*(\cdot,s)\|_{L^2(\pi_s)}^2\,ds.
\]
Setting $t=1$ and using
$W_2^2(\widehat\pi_1,\pi_1)\le R_1$
proves \eqref{eq:continuous-w2-stability}.

Finally, Theorem~\ref{thm:velocity-approximation} gives
$\|\gamma_x\|_{L^1(0,1)}
\lesssim_{d,\beta}1+\sigma^{-2}$.
Combining \eqref{eq:continuous-w2-stability} with
Theorem~\ref{thm:main-generalization} and taking square roots yields the
stated nonasymptotic Wasserstein consequence.
\end{proof}

Because the fixed positive target noise keeps $q_t$ uniformly bounded
away from zero, the continuous-flow bound applies directly at the target
endpoint $t=1$.

\subsection{Euler discretization}
\label{subsec:euler-discretization}

The continuous-flow result above uses the spatial regularity built into the
learning class. To control the additional numerical error introduced by a
time discretization, we impose the following temporal regularity condition
only in this subsection.

\begin{assumption}[Temporal regularity of the learned velocity]
\label{ass:learned-temporal-regularity}
There exists a deterministic constant $\gamma_t<\infty$, independent of $n$,
such that, almost surely with respect to the target observations,
\[
    \|\widehat v(x,t)-\widehat v(x,s)\|
    \le
    \gamma_t(1+\|x\|)|t-s|,
    \qquad
    x\in\mathbb R^D,\quad s,t\in[0,1].
\]
\end{assumption}

Let
$
    0=t_0<t_1<\cdots<t_M=1,
    h:=\max_{0\le k<M}(t_{k+1}-t_k),
$
be a deterministic time grid. Starting from
$\widetilde Z_{t_0}\sim\pi_0$, define the explicit Euler scheme
\begin{equation}
    \widetilde Z_{t_{k+1}}
    =
    \widetilde Z_{t_k}
    +(t_{k+1}-t_k)
    \widehat v(\widetilde Z_{t_k},t_k),
    0\le k<M,
    \label{eq:euler-sampling-scheme}
\end{equation}
and let
$\widetilde\pi_1:=\operatorname{Law}(\widetilde Z_{t_M})$.
Set
$
    \bar\gamma_x:=\|\gamma_x\|_{L^\infty(0,1)},
    K_\sigma^{\mathrm{sam}}
    :=
    \max\left\{
        \sup_{t\in[0,1]}|a_t|,
        2\sup_{t\in[0,1]}|b_t|
    \right\}.
$

\begin{proposition}[Euler discretization error]
\label{prop:euler-sampling-error}
Under the conditions of Theorem~\ref{thm:main-generalization} and
Assumption~\ref{ass:learned-temporal-regularity}, with probability at least
$1-\delta$ over the target observations,
\begin{equation}
\begin{aligned}
    W_2(\widetilde\pi_1,\pi_1)
    \le{}&
    \frac12(1+\sqrt D)
    e^{K_\sigma^{\mathrm{sam}}+\bar\gamma_x}
    \bigl(
        \bar\gamma_xK_\sigma^{\mathrm{sam}}+\gamma_t
    \bigr)h
    \\
    &+
    \exp\!\left\{
        \frac12+\|\gamma_x\|_{L^1(0,1)}
    \right\}
    \left(
        \int_0^1
        \|\widehat v(\cdot,t)-v^*(\cdot,t)\|_{L^2(\pi_t)}^2
        \,dt
    \right)^{1/2}.
\end{aligned}
\label{eq:euler-sampling-bound}
\end{equation}
\end{proposition}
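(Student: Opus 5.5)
We split with the triangle inequality, $W_2(\widetilde\pi_1,\pi_1)\le W_2(\widetilde\pi_1,\widehat\pi_1)+W_2(\widehat\pi_1,\pi_1)$. Here $\widehat\pi_1$ is the law of the continuous learned flow $\widehat Z_1$, started from the same initial condition $\widetilde Z_{t_0}=\widehat Z_0\sim\pi_0$. The second term is handled entirely by Theorem~\ref{thm:continuous-sampling}: taking square roots in \eqref{eq:continuous-w2-stability} gives exactly the factor $\exp\{\frac12+\|\gamma_x\|_{L^1(0,1)}\}$ times the square root of the integrated velocity error. This holds on the probability-$(1-\delta)$ event of Theorem~\ref{thm:main-generalization}, but in fact it holds deterministically given the data, since only membership of $\widehat v$ in the class is used. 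The remaining task is a deterministic Euler bound for the ODE driven by $\widehat v$. For it we use the synchronous coupling $E_k:=\bigl(\mathbb E\|\widetilde Z_{t_k}-\widehat Z_{t_k}\|^2\bigr)^{1/2}$, which upper-bounds $W_2$ at the final step.

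\emph{Step 1: a priori moment bound.} Since $\|\operatorname{clip}_2(\cdot)\|\le2$, every $v\in\mathcal V$ satisfies $\|\widehat v(x,t)\|\le|a_t|\|x\|+2|b_t|\le K_\sigma^{\mathrm{sam}}(1+\|x\|)$; this is presumably the content of Lemma~\ref{lem:sampling-linear-growth}. Gr\"onwall applied to $1+\|\widehat Z_t\|$ then gives $1+\|\widehat Z_t\|\le e^{K_\sigma^{\mathrm{sam}}t}(1+\|\widehat Z_0\|)$. Using $\mathbb E\|\widehat Z_0\|^2=D$, Minkowski's inequality gives
\[
\bigl(\mathbb E(1+\|\widehat Z_t\|)^2\bigr)^{1/2}\le e^{K_\sigma^{\mathrm{sam}}}(1+\sqrt D).
\]

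\emph{Step 2: local truncation.} On $[t_k,t_{k+1}]$ with $h_k=t_{k+1}-t_k$, write $\widehat Z_{t_{k+1}}=\widehat Z_{t_k}+h_k\widehat v(\widehat Z_{t_k},t_k)+\rho_k$, where
\[
\rho_k=\int_{t_k}^{t_{k+1}}\bigl[\widehat v(\widehat Z_s,s)-\widehat v(\widehat Z_{t_k},t_k)\bigr]ds .
\]
We bound the integrand in two parts. The spatial part is at most $\gamma_x(s)\|\widehat Z_s-\widehat Z_{t_k}\|\le\bar\gamma_x\int_{t_k}^sK_\sigma^{\mathrm{sam}}(1+\|\widehat Z_r\|)dr$. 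The temporal part, by Assumption~\ref{ass:learned-temporal-regularity}, is at most $\gamma_t(1+\|\widehat Z_{t_k}\|)(s-t_k)$. Taking $L^2(\Omega)$ norms with Step 1 and integrating in $s$ gives
\[
\|\rho_k\|_{L^2}\le\tfrac12h_k^2(1+\sqrt D)e^{K_\sigma^{\mathrm{sam}}}(\bar\gamma_xK_\sigma^{\mathrm{sam}}+\gamma_t).
\]

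\emph{Step 3: error recursion.} Subtracting the Euler step and using the spatial Lipschitz profile at $t_k$ yields $E_{k+1}\le(1+h_k\bar\gamma_x)E_k+\|\rho_k\|_{L^2}$ with $E_0=0$. Unrolling, and using $\prod_j(1+h_j\bar\gamma_x)\le e^{\bar\gamma_x}$ and $\sum_kh_k^2\le h$, gives $E_M\le e^{\bar\gamma_x}\cdot\frac12(1+\sqrt D)e^{K_\sigma^{\mathrm{sam}}}(\bar\gamma_xK_\sigma^{\mathrm{sam}}+\gamma_t)h$. This is the first term of \eqref{eq:euler-sampling-bound}, which completes the proof.

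\emph{Expected obstacle.} The main difficulty is bookkeeping, namely getting exactly the constant $\frac12(1+\sqrt D)e^{K_\sigma^{\mathrm{sam}}+\bar\gamma_x}$ with no extra factors. In particular, the spatial-drift part of $\rho_k$ must be bounded using the moment of $\widehat Z_r$ rather than of $\widehat Z_{t_k}$, and the constant $\tfrac12$ comes from $\int_{t_k}^{t_{k+1}}(s-t_k)\,ds=h_k^2/2$. One technical point also needs care: $\gamma_x$ is defined only as a profile, so Step 3 should use $\gamma_x(t_k)\le\bar\gamma_x$ pointwise at the grid times. If the supremum is not attained at grid points, we use the essential supremum together with the continuity of $t\mapsto\gamma_x(t)$, which is evident from its explicit formula.
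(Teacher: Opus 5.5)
Your proposal is correct and follows the paper's proof essentially step for step. It uses the same triangle-inequality split, with Theorem~\ref{thm:continuous-sampling} handling the statistical term, the same linear-growth/Gr\"onwall moment bound, the same spatial--temporal splitting of the local truncation error $\rho_k$, and the same $(1+\bar\gamma_x h_k)$ recursion with $\sum_k h_k^2\le h$. The only cosmetic difference is that the paper runs the recursion pathwise, carrying the random factor $(1+\|\widehat Z_0\|)$ and taking its $L^2$ norm (bounded by $1+\sqrt D$) only at the end, whereas you apply Minkowski at each step; the constants come out identical.
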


The first term in \eqref{eq:euler-sampling-bound} is the numerical
discretization error, whereas the second is the continuous-flow statistical
error controlled by Theorem~\ref{thm:main-generalization}. Consequently, the
leading algebraic contributions in the time step and sample size are
$h$ and $n^{-\beta/(2\beta+d)}$, respectively. For a uniform grid,
$h=M^{-1}$, so choosing $M$ at least of order
$n^{\beta/(2\beta+d)}$ balances these two leading scales.
The proof of Proposition~\ref{prop:euler-sampling-error} is given in
Appendix~\ref{app:sampling-euler}.

\section{Discussion and Conclusion}
\label{sec:discussion}

We studied linear Flow Matching when high-dimensional observations are generated by a low-dimensional H\"older mechanism with nondegenerate ambient noise. Although the resulting target distribution has full-dimensional support, its population velocity retains a latent posterior structure that can be approximated and learned with a leading complexity governed by the latent dimension. This yields a nonasymptotic high-probability velocity bound with leading sample-size factor $n^{-2\beta/(2\beta+d)}$, while the dependence on the ambient dimension and noise level remains explicit. By carrying the required spatial regularity through the same learning class, the velocity guarantee further propagates to the learned continuous flow, giving a $W_2$ bound with leading sample-size factor $n^{-\beta/(2\beta+d)}$. For fixed $\sigma>0$, the interpolation remains nondegenerate on the full interval $[0,1]$. Overall, the results show that exact low-dimensional support is not necessary for Flow Matching to retain latent-dimensional statistical behavior.

The analysis also identifies several boundaries of this conclusion. The ambient dimension $D$ remains in the prefactors and network complexity, and the constants deteriorate as $\sigma$ decreases, so the present theory does not provide a uniform passage to the exact-support limit. Moreover, the established rates are upper bounds rather than a minimax characterization: proving minimax optimality would require matching lower bounds for the same noisy latent-generator class and integrated velocity risk, and a separate lower-bound analysis is needed for the induced Wasserstein error. Extending the theory to vanishing noise, unknown latent dimension or smoothness, and finite auxiliary sampling remains an important direction for future work. These questions would further clarify when the statistical complexity of Flow Matching is determined by the latent generating mechanism rather than by the ambient observation space.

\subsection*{Acknowledgments}

This work was supported by the National Key R\&D Program of China
(No.~2023YFA1008701) and the Key Project of the National Natural Science
Foundation of China (No.~12431017).

\bibliography{references}
\bibliographystyle{plainnat}

\appendix
\section{Population Identities and Structural Properties}
\label{app:population-identities}

\begin{proof}[Proof of Lemma~\ref{lem:posterior-velocity}]
Write $G=g^*(U)$, $W=\sigma\xi-X_0$ and
$\eta_t=(1-t)X_0+t\sigma\xi$, so $X_t=tG+\eta_t$ and
$X_1-X_0=G+W$. The pair $(W,\eta_t)$ is centered Gaussian,
independent of $U$, with
$\operatorname{Var}(\eta_t)=q_tI_D$ and
$\operatorname{Cov}(W,\eta_t)=c_tI_D$, where
$c_t=t\sigma^2-(1-t)$. Since $q_t>0$, Gaussian conditioning gives
\begin{align*}
 \mathbb E[X_1-X_0\mid X_t,U]
 &=G+\frac{c_t}{q_t}(X_t-tG)\\
 &=a_tX_t+b_tG,
\end{align*}
using $q_t-tc_t=1-t$. Taking conditional expectation given $X_t$
proves the velocity representation almost surely.

Conditional on $U=u$, $X_t$ has Gaussian density
$(2\pi q_t)^{-D/2}\exp(-\|x-tg^*(u)\|^2/(2q_t))$.
Bayes' formula with the uniform latent prior gives the displayed
posterior representation of $m_t$. Its denominator is positive for every
$x$ and $t$, so the ratio defines a version everywhere; we take $v^*$ to be the version given in
Lemma~\ref{lem:posterior-velocity}.
The posterior is a probability measure and $\|g^*(u)\|\le1$, whence
$\|m_t(x)\|\le1$ by the triangle inequality.
At $t=0$, the posterior equals the prior and
$v^*(x,0)=\mathbb E G-x$; at $t=1$, $q_1=\sigma^2$, $a_1=1$,
$b_1=0$, and $v^*(x,1)=x$. Both agree with the defining conditional
expectations. No division by $t$ or $1-t$ is used.
\end{proof}

The same Gaussian calculation applies to any bounded surrogate generator
$g^\circ$: the posterior bound then becomes
$\|m_t^\circ(x)\|\le\|g^\circ\|_{L^\infty}$.

\subsection{Smoothness and network conventions}
\label{app:setup-definitions}

We use the global notation introduced in
Section~\ref{sec:preliminaries}. This subsection records the
multi-index, smoothness, and network-architecture conventions used in
the proofs.

Let $\mathbb Z_+:=\{0,1,2,\ldots\}$. For
$\mathbf k=(k_1,\ldots,k_p)\in\mathbb Z_+^p$, define
\[
|\mathbf k|:=\sum_{j=1}^p k_j,
\qquad
\mathbf k!:=\prod_{j=1}^p k_j!,
\qquad
z^{\mathbf k}:=\prod_{j=1}^p z_j^{k_j},
\]
and
\[
\partial^{\mathbf k}
:=
\partial_1^{k_1}\cdots\partial_p^{k_p},
\qquad
\partial^{\mathbf 0}f:=f.
\]

Let $\Omega\subseteq\mathbb R^p$. For $\beta,H>0$, the scalar
H\"older class $\mathcal H^\beta(\Omega,\mathbb R,H)$ consists of
functions $f:\Omega\to\mathbb R$ satisfying
\[
\max_{1\le|\mathbf k|\le\lfloor\beta\rfloor}
\|\partial^{\mathbf k}f\|_{L^\infty(\Omega)}
\le H,
\qquad
\max_{|\mathbf k|=\lfloor\beta\rfloor}
\sup_{\substack{x,y\in\Omega\\x\ne y}}
\frac{
|\partial^{\mathbf k}f(x)-\partial^{\mathbf k}f(y)|
}{
\min\{1,\|x-y\|_\infty\}^{\beta-\lfloor\beta\rfloor}
}
\le H.
\]
The first condition is understood as vacuous when
$\lfloor\beta\rfloor=0$. A vector-valued function
$f=(f_1,\ldots,f_m)^\top$ belongs to
$\mathcal H^\beta(\Omega,\mathbb R^m,H)$ if each coordinate
$f_j$ belongs to $\mathcal H^\beta(\Omega,\mathbb R,H)$.
The Euclidean supremum bound of a vector-valued function is imposed
separately when required, as in
\eqref{eq:generator-assumption}.

For an architecture vector
$\mathbf W=(W_0,\ldots,W_L)\in\mathbb N^{L+1}$, write
\[
\|\mathbf W\|_\infty
:=
\max_{0\le\ell\le L}W_\ell.
\]
For fixed input and output dimensions, let
$\mathrm{NN}(L,\mathbf W,S,B)$ denote the subclass of
$\mathrm{NN}(L,W,S,B)$ whose layer widths are exactly
$\mathbf W$. Equivalently,
\[
\mathrm{NN}(L,W,S,B)
=
\bigcup_{\substack{
\mathbf W=(W_0,\ldots,W_L)\\
\|\mathbf W\|_\infty\le W
}}
\mathrm{NN}(L,\mathbf W,S,B),
\]
where $W_0$ and $W_L$ are held fixed by context. For velocity networks,
$W_0=D+1$ and $W_L=D$; auxiliary networks use the input and output
dimensions stated locally.

Throughout, $L$ counts affine layers, and no activation is applied
after the final affine layer. Width padding is performed by adding zero
neurons, which preserves the realized function and does not increase
the sparsity or parameter-magnitude bounds. When networks of different
depths must be combined, depth alignment is carried out using exact
identity padding as in Lemma~\ref{lem:nn-identity-padding}, with the
corresponding complexity cost accounted for there.

\section{Proofs for Neural Approximation}
\label{app:approximation}

\subsection{Consequences of the scale conditions}
The conditions
\eqref{eq:approximation-scale-basic}--
\eqref{eq:approximation-scale-localization} are the scale assumptions
used throughout this proof. For every fixed
$(d,\beta,D,H,\sigma)$ with $\sigma>0$, the corresponding range is nonempty:
the first two left-hand sides have the required limits as
$\varepsilon\downarrow0$, while the last follows from
$\varepsilon\log(1/\varepsilon)\to0$ and
$\varepsilon\sqrt{\log(1/\varepsilon)}\to0$.
The same conditions imply
$H\sqrt D\,N^{-(\beta\wedge1)}\lesssim_{d,\beta}\sigma$ for
$N=\lceil\varepsilon^{-1}\rceil$ and provide the small-cell bound used
in the posterior construction. All conclusions remain pointwise in the
fixed noise level $\sigma>0$.

\subsection{Proof of the main approximation theorem}
\label{app:approximation-main}

\begin{proof}[Proof of Theorem~\ref{thm:velocity-approximation}]
\medskip\noindent\textbf{Step 1: Local polynomial surrogate.}
The structural smoothness lies in the latent generator, rather than in
a generic function on $\mathbb R^D$. We therefore begin in latent space:
partition $[0,1]^d$ into $N^d$ cells and replace $g^*$ on each cell by
its local Taylor polynomial. Set $r_\beta:=\lceil\beta\rceil-1$ and
$N=\lceil\varepsilon^{-1}\rceil$. For $j=1,\ldots,N-1$, let
$I_j=[(j-1)/N,j/N)$ and let $I_N=[(N-1)/N,1]$. For
$\mathbf j=(j_1,\ldots,j_d)\in\{1,\ldots,N\}^d$, define
\begin{equation}
 \mathcal U_{\mathbf j}:=\prod_{r=1}^d I_{j_r},
 \qquad u_{\mathbf j}:=\frac{\mathbf j}{N}.
 \label{eq:latent-partition}
\end{equation}
These $N^d$ half-open cells form a disjoint partition of $[0,1]^d$. On
$\overline{\mathcal U_{\mathbf j}}$, let
\begin{equation}
 g_{\mathbf j}^\circ(u):=
 \sum_{|\mathbf k|\le r_\beta}
 \frac{\partial^{\mathbf k}g^*(u_{\mathbf j})}{\mathbf k!}
 (u-u_{\mathbf j})^{\mathbf k},
 \qquad
 g^\circ(u):=\sum_{\mathbf j}g_{\mathbf j}^\circ(u)
 \mathbf 1_{\{u\in\mathcal U_{\mathbf j}\}}.
 \label{eq:g-circ-definition}
\end{equation}
The partition is the source of the factor
$N^d\lesssim_d\varepsilon^{-d}$ in the network size.
Lemma~\ref{lem:app-generator-surrogate} gives
$\|g^*-g^\circ\|_{L^\infty}
\lesssim_{d,\beta}H\sqrt D\,\varepsilon^\beta$ and
$\|g^\circ\|_{L^\infty}\le2$. The next step transfers this latent
error to the FM velocity.

\medskip\noindent\textbf{Step 2: Transfer generator error to velocity error.}
Proposition~\ref{prop:generator-velocity-stability} controls the induced
velocities directly in the integrated norm under the
true path law.
Let $X_1^\circ=g^\circ(U)+\sigma\xi$, and denote its posterior mean and
population velocity by $m_t^\circ$ and
$v^\circ(x,t)=a_tx+b_tm_t^\circ(x)$. For any candidate $v_\theta$,
\[
 v^*-v_\theta=(v^*-v^\circ)+(v^\circ-v_\theta).
\]
The first difference is the change in the conditional velocity caused by
replacing $g^*$ with $g^\circ$. Proposition~
\ref{prop:generator-velocity-stability} and
Lemma~\ref{lem:app-generator-surrogate} give
\begin{equation}
 \int_0^1
 \|v^*(\cdot,t)-v^\circ(\cdot,t)\|_{L^2(\pi_t)}^2dt
 \lesssim_{d,\beta}
 \frac{DH^2}{\sigma}\,\varepsilon^{2\beta}.
\label{eq:app-main-surrogate-velocity}
\end{equation}
It remains to realize $v^\circ$, equivalently its nonlinear component
$m_t^\circ$.

\medskip\noindent\textbf{Step 3: Localize the surrogate posterior.}
The state variable $x$ is unbounded, whereas the posterior circuit is
constructed uniformly on a controlled state region. We therefore split
the velocity error into uniform posterior approximation on a
time-dependent region $\mathcal K_t$ and a Gaussian-tail contribution
outside that region.
The surrogate posterior has the explicit representation
\begin{equation}
 m_t^\circ(x)=
 \frac{\int_{[0,1]^d}g^\circ(u)
 \exp\{-\|x-tg^\circ(u)\|^2/(2q_t)\}\,du}
 {\int_{[0,1]^d}
 \exp\{-\|x-tg^\circ(u)\|^2/(2q_t)\}\,du}.
 \label{eq:surrogate-posterior-explicit}
\end{equation}
For a measurable $f$, put
$v_f(x,t)=a_tx+b_t\operatorname{clip}_2(f(x,t))$. Since
$\|m_t^\circ(x)\|\le2$,
\[
 \|v^\circ(x,t)-v_f(x,t)\|^2
 \le b_t^2\bigl(16\wedge\|m_t^\circ(x)-f(x,t)\|^2\bigr).
\]
Let
$L_\varepsilon=\log(\varepsilon^{-2\beta}/D)$,
\[
 R_t:=\sqrt{q_tD}+16\sqrt{q_t}
 \bigl(\sqrt{DL_\varepsilon}\vee L_\varepsilon\bigr),\qquad
 \mathcal K_t:=\left\{x:
 \min_{u\in[0,1]^d}\|x-tg^*(u)\|\le R_t\right\},
\]
and
$\mathcal C_{[0,1]}^*:=\{(x,t):x\in\mathcal K_t,\ t\in[0,1]\}$.
Splitting the integral over $\mathcal K_t$ and its complement isolates the
compact-domain approximation from the Gaussian tail.
Lemma~\ref{lem:app-compact-localization} makes this decomposition
quantitative: a uniform value approximation on
$\mathcal C_{[0,1]}^*$ is sufficient, while the complement contributes
at most order $D\varepsilon^{2\beta}/\sigma$ to the velocity risk.

\medskip\noindent\textbf{Step 4: Neural realization of the surrogate posterior.}
On the localized region, the posterior is a ratio of sums over the same
latent cells used in Step 1. We first describe the value realization;
the spatial constraint is then obtained from first-order realizations of
its scalar modules.
The remaining problem is uniform approximation of the ratio in
\eqref{eq:surrogate-posterior-explicit} on
$\mathcal C_{[0,1]}^*$. Using the same cells and Taylor polynomials as in
\eqref{eq:latent-partition}--\eqref{eq:g-circ-definition}, write
\begin{equation}
 m_t^\circ(x)=\frac{P^\circ(x,t)}{Q^\circ(x,t)},\quad
 P^\circ:=\sum_{\mathbf j}P_{\mathbf j}^\circ,\quad
 Q^\circ:=\sum_{\mathbf j}Q_{\mathbf j}^\circ,
 \label{eq:posterior-cell-decomposition}
\end{equation}
where
\[
 P_{\mathbf j}^\circ
 :=\int_{\mathcal U_{\mathbf j}}g_{\mathbf j}^\circ(u)
 e^{-\|x-tg_{\mathbf j}^\circ(u)\|^2/(2q_t)}du,\qquad
 Q_{\mathbf j}^\circ
 :=\int_{\mathcal U_{\mathbf j}}
 e^{-\|x-tg_{\mathbf j}^\circ(u)\|^2/(2q_t)}du.
\]
For a fixed cell, set
$h_{\mathbf j}(u)=g_{\mathbf j}^\circ(u)-g^*(u_{\mathbf j})$.
Expanding the Gaussian exponent gives
\[
 \frac{\|x-tg_{\mathbf j}^\circ(u)\|^2}{2q_t}
 =\frac{\|x-tg^*(u_{\mathbf j})\|^2}{2q_t}
 +\frac{t^2}{2q_t}\|h_{\mathbf j}(u)\|^2
 -\frac{t}{q_t}(x-tg^*(u_{\mathbf j}))^\top h_{\mathbf j}(u).
\]
After the change of variables from $\mathcal U_{\mathbf j}$ to the unit
cube, $h_{\mathbf j}$ is a polynomial in $d$ variables with coefficients
given by the derivatives of $g^*$ at $u_{\mathbf j}$. Each cell integral
therefore depends on $(x,t)$ through finitely many scalar coefficients:
the local Taylor coefficients paired with
$x-tg^*(u_{\mathbf j})$, their quadratic products, and
$1/q_t,t/q_t,t^2/q_t$. This is the low-dimensional
exponential-integral map approximated by the network: its nonlinear
input dimension depends on $(d,\beta)$, even though $x\in\mathbb R^D$.

The network is assembled in the same order as this formula. Subnetworks first
approximate the time coefficients and scalar coordinates, followed by the
cellwise exponential integrals. Their outputs are summed over the $N^d$
cells. A uniform lower bound for $Q^\circ$ permits reciprocal and
multiplication networks to form $P^\circ/Q^\circ$, and the $D$ scalar
coordinates are parallelized.

\emph{Compatibility with the velocity class.}
The class in Theorem~\ref{thm:velocity-approximation} also requires a
spatial Lipschitz envelope. Gaussian smoothing makes $m_t^\circ$ smooth
in $x$, and first-order realizations of the scalar cell modules preserve
the value approximation. Lemma~\ref{lem:app-surrogate-posterior-network}
provides the resulting network, its global spatial bound, and its
architecture.

\medskip\noindent\textbf{Step 5: Assemble the comparator.}
Steps 1--4 provide the latent surrogate error, its velocity transfer,
the localization reduction, and a spatially regular posterior network.
Insert that network into the structured velocity formula and decompose
the final error as
\[
 v^*-v_\theta=(v^*-v^\circ)+(v^\circ-v_\theta).
\]
Set $v_\theta(x,t)=a_tx+b_t\operatorname{clip}_2(f_\theta(x,t))$.
The factor $\sqrt D$ in \eqref{eq:app-main-posterior-error} converts the
coordinatewise approximation into Euclidean output error; the coefficients
$a_t,b_t$ remain exact. Since Euclidean projection is nonexpansive,
Lemmas~\ref{lem:app-compact-localization} and
\ref{lem:app-surrogate-posterior-network} yield
\begin{equation}
 \int_0^1
 \|v^\circ(\cdot,t)-v_\theta(\cdot,t)\|_{L^2(\pi_t)}^2dt
 \lesssim_{d,\beta}\frac{D}{\sigma}\varepsilon^{2\beta}.
 \label{eq:app-main-network-velocity}
\end{equation}
Combining this with \eqref{eq:app-main-surrogate-velocity} gives
\[
 \int_0^1
 \|v^*(\cdot,t)-v_\theta(\cdot,t)\|_{L^2(\pi_t)}^2dt
 \lesssim_{d,\beta}
 \frac{D(1+H^2)}{\sigma}\,\varepsilon^{2\beta},
\]
which is \eqref{eq:velocity-approximation-rate}. The construction above has
$N^d\asymp_d\varepsilon^{-d}$ parallel cell modules, which determine the
algebraic dependence on the approximation scale. Time-coefficient networks, scalar
exponential-integral primitives, stable division, the first-order
refinement, and the global gate satisfy the explicit
$\Gamma_\varepsilon$ bounds in
Lemma~\ref{lem:app-surrogate-posterior-network}. Euclidean clipping is
nonexpansive, so the same field also obeys
\[
 \operatorname{Lip}_x(v_\theta(\cdot,t))
 \le |a_t|+|b_t|\operatorname{Lip}_x(f_\theta(\cdot,t))
 \le\gamma_x(t).
\]
Taking the fixed constant in the definition of $\gamma_x$ sufficiently
large, depending only on $(d,\beta)$, makes the right-hand side no larger
than $\gamma_x(t)$.
The elementary bounds $|a_t|,|b_t|\le2/((1-t)+\sigma t)$ and
$q_t\ge((1-t)+\sigma t)^2/2$ give the supremum and integral estimates
stated in Theorem~\ref{thm:velocity-approximation}; hence the risk, profile,
and architecture bounds all hold for this one
$v_\theta\in\mathcal V(L,W,S,B,\gamma_x)$. Thus the spatially
regular comparator retains the same latent-dimensional leading
approximation complexity.
\end{proof}

\subsection{Latent surrogate and localization}

The results below supply the latent approximation, its FM stability
transfer, and the reduction from the unbounded state space to
$\mathcal C_{[0,1]}^*$.

\subsubsection{Latent Taylor approximation}

\begin{lemma}[Latent Taylor surrogate]
\label{lem:app-generator-surrogate}
The surrogate defined above satisfies
\begin{equation}
 \|g^*-g^\circ\|_{L^\infty([0,1]^d)}
 \lesssim_{d,\beta}H\sqrt D\,\varepsilon^\beta.
 \label{eq:app-main-generator-surrogate}
\end{equation}
Under the scale assumptions of Theorem~\ref{thm:velocity-approximation},
$\|g^\circ\|_{L^\infty([0,1]^d)}\le2$.
\end{lemma}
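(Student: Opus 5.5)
The plan is to prove the bound coordinatewise and then pass to the Euclidean norm, which is where the factor $\sqrt D$ comes from. Fix a coordinate $i\in\{1,\ldots,D\}$ and a cell $\mathbf j$. For $u\in\mathcal U_{\mathbf j}$, apply Taylor's theorem to the scalar function $g_i^*$ on the segment from $u_{\mathbf j}$ to $u$. This segment lies in $\overline{\mathcal U_{\mathbf j}}\subseteq[0,1]^d$ by convexity, and $\|u-u_{\mathbf j}\|_\infty\le 1/N\le\varepsilon$.

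\emph{Case $\beta\le1$.} Here $r_\beta=0$, so $g^\circ_{\mathbf j}\equiv g^*(u_{\mathbf j})$. The H\"older condition with $\lfloor\beta\rfloor=0$ (or, when $\beta=1$, with $\lfloor\beta\rfloor=1$ and the bounded first derivatives) gives $|g_i^*(u)-g_i^*(u_{\mathbf j})|\lesssim_d H\|u-u_{\mathbf j}\|_\infty^\beta\le H\varepsilon^\beta$.

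\emph{Case $\beta>1$.} Write $s=\lfloor\beta\rfloor$ and note that $r_\beta=\lceil\beta\rceil-1$, which equals $s$ for non-integer $\beta$ and $s-1$ for integer $\beta$.

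For non-integer $\beta$, use the Taylor expansion of order $s-1$ with Lagrange remainder at an intermediate point $\tilde u$. Then
\[
g_i^*(u)-g^\circ_{\mathbf j,i}(u)=\sum_{|\mathbf k|=s}\frac{\partial^{\mathbf k}g_i^*(\tilde u)-\partial^{\mathbf k}g_i^*(u_{\mathbf j})}{\mathbf k!}(u-u_{\mathbf j})^{\mathbf k}.
\]
The H\"older seminorm bound on the order-$s$ derivatives makes this $\lesssim_{d,\beta}H\varepsilon^{\beta-s}\varepsilon^{s}=H\varepsilon^\beta$. This uses $\min\{1,\|\tilde u-u_{\mathbf j}\|_\infty\}\le\varepsilon$.

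For integer $\beta$, the order-$(s-1)$ Lagrange remainder involves derivatives of order $s$. These are bounded by $H$, giving $\lesssim_{d,\beta}H\varepsilon^{s}=H\varepsilon^\beta$.

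In both cases the number of multi-indices depends only on $(d,\beta)$. Hence $\sup_u|g_i^*(u)-g_i^\circ(u)|\lesssim_{d,\beta}H\varepsilon^\beta$ for every $i$. Summing the squares over the $D$ coordinates gives \eqref{eq:app-main-generator-surrogate}.

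The only care needed here is the bookkeeping between $\lceil\beta\rceil-1$ and $\lfloor\beta\rfloor$ at integer $\beta$, and checking that the half-open cell convention still places the segment inside the closed cell where $g^*$ is defined. Both are routine.

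For the sup bound, use the triangle inequality:
\[
\|g^\circ\|_{L^\infty}\le\|g^*\|_{L^\infty}+\|g^*-g^\circ\|_{L^\infty}\le1+C_{d,\beta}H\sqrt D\,\varepsilon^\beta.
\]
Because $\varepsilon\le1/2<1$, we have $\varepsilon^\beta\le\varepsilon^{\beta\wedge1}$. The second scale condition in \eqref{eq:approximation-scale-basic} then gives $C_{d,\beta}H\sqrt D\,\varepsilon^\beta\le C_{d,\beta}c_{d,\beta}\sigma<C_{d,\beta}c_{d,\beta}$. Choosing $c_{d,\beta}\le 1/C_{d,\beta}$, which is allowed since $c_{d,\beta}$ is only required to be sufficiently small, makes this at most $1$. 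Therefore $\|g^\circ\|_{L^\infty}\le2$.

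The main point to get right is that the constant $C_{d,\beta}$ from the Taylor step depends only on $(d,\beta)$, so that it can be absorbed into the choice of $c_{d,\beta}$. I do not expect any genuine obstacle.
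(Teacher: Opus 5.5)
Your proposal is correct and follows essentially the same route as the paper: a coordinatewise Taylor remainder bound on each cell, split into the non-integer case (Hölder seminorm of the order-$\lfloor\beta\rfloor$ derivatives) and the integer case (bounded order-$\beta$ derivatives), then a factor $\sqrt D$ from converting to the Euclidean norm, and finally the triangle inequality combined with $\varepsilon^\beta\le\varepsilon^{\beta\wedge1}$ and the second scale condition to get $\|g^\circ\|_{L^\infty}\le 2$. The only difference is cosmetic: you use the Lagrange form of the remainder, whereas the paper uses the one-dimensional integral form along $\varphi(s)=g_l^*(u_{\mathbf j}+sh)$.
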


\begin{proof}
Fix a cell and write $h=u-u_{\mathbf j}$, so
$\|h\|_\infty\le N^{-1}$. We first bound one coordinate $g_l^*$.
If $\beta\notin\mathbb N$, write $\beta=r_\beta+\alpha$ with
$\alpha\in(0,1)$. For $r_\beta=0$, the defining H\"older condition gives
$|g_l^*(u)-g_l^*(u_{\mathbf j})|\le H\|h\|_\infty^\beta$.
For $r_\beta\ge1$, apply the one-dimensional integral Taylor formula to
$\varphi(s)=g_l^*(u_{\mathbf j}+sh)$. After subtracting the order
$r_\beta$ Taylor polynomial, the remainder is
\[
 \frac{1}{(r_\beta-1)!}\int_0^1(1-s)^{r_\beta-1}
 \{\varphi^{(r_\beta)}(s)-\varphi^{(r_\beta)}(0)\}\,ds.
\]
The coordinatewise H\"older condition and the multinomial formula imply
that its absolute value is at most
$C_{d,\beta}H\|h\|_\infty^\beta$.

If $\beta=m\in\mathbb N$, then $r_\beta=m-1$. The integral remainder
after the degree-$(m-1)$ Taylor polynomial is
\[
 \frac{1}{(m-1)!}\int_0^1(1-s)^{m-1}\varphi^{(m)}(s)\,ds.
\]
All order-$m$ partial derivatives are bounded by $H$ under the stated
H\"older-class convention, so the multinomial formula bounds this remainder
by $C_{d,m}H\|h\|_\infty^m$. Thus, in both cases,
\[
 \sup_{u\in\overline{\mathcal U_{\mathbf j}}}
 |g_l^*(u)-g_{\mathbf j,l}^\circ(u)|
 \le C_{d,\beta}HN^{-\beta},
 \qquad 1\le l\le D.
\]
Converting these coordinatewise bounds once to the Euclidean norm yields
\begin{equation}
 \sup_{u\in\overline{\mathcal U_{\mathbf j}}}
 \|g^*(u)-g_{\mathbf j}^\circ(u)\|
 \le C_{d,\beta}H\sqrt D\,N^{-\beta}.
 \label{eq:g-local-cell-error}
\end{equation}
The half-open cells are disjoint and cover $[0,1]^d$, while all anchors
$u_{\mathbf j}$ belong to this cube. Hence
\begin{equation}
 \|g^*-g^\circ\|_{L^\infty([0,1]^d)}
 \le C_{d,\beta}H\sqrt D\,N^{-\beta}
 \le C_{d,\beta}H\sqrt D\,\varepsilon^\beta.
 \label{eq:g-local-poly-error}
\end{equation}
The second condition in \eqref{eq:approximation-scale-basic}, decreased
through $c_{d,\beta}$ if necessary, makes this error at most one because
$\varepsilon^\beta\le\varepsilon^{\beta\wedge1}$. Together with
$\|g^*\|_{L^\infty}\le1$, this gives
$\|g^\circ\|_{L^\infty}\le2$.
\end{proof}

\subsubsection{Generator-to-velocity stability}
\label{app:generator-velocity-stability}
\begin{proof}[Proof of Proposition~\ref{prop:generator-velocity-stability}]
This proposition connects latent-generator approximation to the integrated
FM velocity risk. The score representation used below has an apparent
singularity at $t=0$, while relative entropy supplies the compensating
quadratic behavior in time.

The true and surrogate interpolants have the Gaussian-mixture
representations
\[
    X_t\overset{d}{=}tg^*(U)+\sqrt{q_t}\,Z,
    \qquad
    X_t^\circ\overset{d}{=}tg^\circ(U)+\sqrt{q_t}\,Z,
    \qquad Z\sim\mathcal N(0,I_D).
\]
The lower bound on $q_t$ from Section~\ref{sec:preliminaries} implies that
their densities $p_t^*$ and $p_t^\circ$ are smooth and positive throughout
$[0,1]$. Let $K(t):=D_{\mathrm{KL}}(p_t^*\,\|\,p_t^\circ)$. Introduce the
joint laws $\mathsf R_t^*$ and $\mathsf R_t^\circ$ of $(U,X_t)$ under
$g^*$ and $g^\circ$, respectively. Data processing under the projection
$(U,X_t)\mapsto X_t$, followed by conditioning on the common uniform
variable $U$, gives
\begin{equation}
\begin{aligned}
    K(t)
    &\le D_{\mathrm{KL}}(\mathsf R_t^*\,\|\,\mathsf R_t^\circ) \\
    &=
    \mathbb E_U D_{\mathrm{KL}}\!\left(
      \mathcal N(tg^*(U),q_tI_D)\,\middle\|\,
      \mathcal N(tg^\circ(U),q_tI_D)
    \right) \\
    &\le \frac{t^2}{2q_t}
    \|g^*-g^\circ\|_{L^\infty([0,1]^d)}^2.
\end{aligned}
\label{eq:fixed-time-kl-bound}
\end{equation}
In particular, $K(0)=0$ and $K(t)=O(t^2)$ as $t\downarrow0$.

For $t\in(0,1)$, write
$s_t^*=\nabla\log p_t^*$ and $s_t^\circ=\nabla\log p_t^\circ$.
Gaussian conditioning and Lemma~\ref{lem:posterior-velocity} give
\[
    v^*(x,t)=\frac{x}{t}+\frac{1-t}{t}s_t^*(x),
    \qquad
    v^\circ(x,t)=\frac{x}{t}+\frac{1-t}{t}s_t^\circ(x),
\]
and hence
\begin{equation}
    v^*(x,t)-v^\circ(x,t)
    =\frac{1-t}{t}\bigl(s_t^*(x)-s_t^\circ(x)\bigr).
\label{eq:velocity-score-difference}
\end{equation}
The two marginal paths satisfy their continuity equations with velocities
$v^*$ and $v^\circ$. Bounded generators and the uniform positive lower bound
on $q_t$ give smooth positive Gaussian-mixture densities with Gaussian
envelopes sufficient for differentiation under the integral and spatial
integration by parts. Differentiating $K(t)$ along the two continuity
equations first gives
\[
    K'(t)=\int_{\mathbb R^D}p_t^*(x)
    \bigl(v^*(x,t)-v^\circ(x,t)\bigr)\cdot
    \bigl(s_t^*(x)-s_t^\circ(x)\bigr)\,dx.
\]
Substituting \eqref{eq:velocity-score-difference} yields
\[
    K'(t)=\frac{1-t}{t}
    \int_{\mathbb R^D}\|s_t^*(x)-s_t^\circ(x)\|^2p_t^*(x)\,dx
\]
and therefore
\begin{equation*}
    \int_{\mathbb R^D}\|v^*(x,t)-v^\circ(x,t)\|^2p_t^*(x)\,dx
    =\frac{1-t}{t}K'(t).
\end{equation*}
For $\delta\in(0,1)$, integration by parts gives
\[
 \int_\delta^1\frac{1-t}{t}K'(t)\,dt
 =-\frac{1-\delta}{\delta}K(\delta)
   +\int_\delta^1\frac{K(t)}{t^2}\,dt.
\]
The boundary term vanishes as $\delta\downarrow0$ because
$K(\delta)=O(\delta^2)$. Combining the resulting identity with
that bound gives
\[
\begin{aligned}
    \int_0^1\!\int_{\mathbb R^D}
    \|v^*(x,t)-v^\circ(x,t)\|^2p_t^*(x)\,dx\,dt
    &\le \frac12\|g^*-g^\circ\|_{L^\infty([0,1]^d)}^2
      \int_0^1\frac{dt}{q_t} \\
    &=\frac{\pi}{4\sigma}
      \|g^*-g^\circ\|_{L^\infty([0,1]^d)}^2,
\end{aligned}
\]
where
$\int_0^1((1-t)^2+\sigma^2t^2)^{-1}dt=\pi/(2\sigma)$.
Thus the $O(t^2)$ relative-entropy behavior exactly compensates for the
source-endpoint weight. The target endpoint is harmless because the weight
$(1-t)/t$ vanishes there.
\end{proof}

\subsubsection{Localization of the surrogate posterior}

\begin{lemma}[Compact localization of the surrogate posterior]
\label{lem:app-compact-localization}
The set $\mathcal C_{[0,1]}^*$ defined above is compact and, for every
measurable $f:\mathbb R^D\times[0,1]\to\mathbb R^D$,
\begin{equation}
\begin{aligned}
 \int_0^1\|v^\circ-v_f\|_{L^2(\pi_t)}^2dt
 &\lesssim \frac{D\varepsilon^{2\beta}}{\sigma}\\
 &\quad+\int_0^1\int_{\mathcal K_t}
 b_t^2\|m_t^\circ(x)-f(x,t)\|^2\,d\pi_t(x)\,dt.
\end{aligned}
\label{eq:app-main-compact-reduction}
\end{equation}
\end{lemma}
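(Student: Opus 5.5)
Compactness of $\mathcal C_{[0,1]}^*$ comes first, and it is elementary. Since $g^*$ is continuous on the compact cube, the map $(x,t)\mapsto \min_{u}\|x-tg^*(u)\|-R_t$ is continuous on $\mathbb R^D\times[0,1]$; here $R_t$ is continuous because $q_t$ is. So $\mathcal C_{[0,1]}^*$ is closed. It is also bounded, because $\|x\|\le t\|g^*\|_{L^\infty}+R_t\le 1+\sup_t R_t<\infty$.

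For the risk bound, fix $t$ and split $\mathbb R^D=\mathcal K_t\cup\mathcal K_t^c$. On $\mathcal K_t$ we use the pointwise inequality stated in Step 3, $\|v^\circ-v_f\|^2\le b_t^2\|m^\circ_t-f\|^2$. This holds because $v^\circ-v_f=b_t(m^\circ_t-\operatorname{clip}_2 f)$, and $\|m_t^\circ\|\le 2$ places $m_t^\circ$ in $\mathcal B(0,2)$, where the projection $\operatorname{clip}_2$ is nonexpansive. Integrating over $\mathcal K_t$ against $\pi_t$ and then over $t$ gives the second term in \eqref{eq:app-main-compact-reduction}. On $\mathcal K_t^c$ we instead use the crude bound $\|v^\circ-v_f\|^2\le 16b_t^2$. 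This reduces the claim to
\[
\int_0^1 16\,b_t^2\,\pi_t(\mathcal K_t^c)\,dt\lesssim \frac{D\varepsilon^{2\beta}}{\sigma}.
\]

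The tail estimate is the main step. Under the true law, $X_t=tg^*(U)+\sqrt{q_t}Z$, so $X_t\notin\mathcal K_t$ forces $\sqrt{q_t}\|Z\|>R_t$, that is, $\|Z\|>\sqrt D+16(\sqrt{DL_\varepsilon}\vee L_\varepsilon)$. I would apply the Laurent--Massart (or Lipschitz Gaussian concentration) bound $\mathbb P(\|Z\|\ge\sqrt D+s)\le e^{-s^2/2}$ with $s=16(\sqrt{DL_\varepsilon}\vee L_\varepsilon)$. Since $L_\varepsilon>0$ by $\varepsilon^{-2\beta}>D$, we get $s^2/2\ge 128\,L_\varepsilon\ge L_\varepsilon$. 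Hence $\pi_t(\mathcal K_t^c)\le e^{-L_\varepsilon}=D\varepsilon^{2\beta}$, uniformly in $t$.

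It remains to integrate $b_t^2$. From $q_t\ge((1-t)+\sigma t)^2/2$ we get $b_t^2=(1-t)^2/q_t^2\le 4(1-t)^2/((1-t)+\sigma t)^4$. Alternatively, the simpler route $b_t^2\le (1-t)^2/q_t\cdot 1/q_t\le 1/q_t$ works, since $(1-t)^2\le q_t$. Then $\int_0^1 b_t^2\,dt\le\int_0^1 q_t^{-1}\,dt=\pi/(2\sigma)$, the same integral as in Proposition~\ref{prop:generator-velocity-stability}. Combining the pieces gives $\int_0^1 16 b_t^2\pi_t(\mathcal K_t^c)\,dt\le 8\pi D\varepsilon^{2\beta}/\sigma$, which completes the bound. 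The only delicate points are checking the Gaussian tail constant and using the true path law $\pi_t$ (not the surrogate law) in defining $\mathcal K_t$; the latter is consistent with the statement.
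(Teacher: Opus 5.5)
Your proposal is correct and follows the paper's proof essentially step for step. The compactness argument (closedness plus boundedness), the pointwise bound $b_t^2\bigl(16\wedge\|m_t^\circ-f\|^2\bigr)$ from nonexpansiveness of $\operatorname{clip}_2$, and the split over $\mathcal K_t$ and $\mathcal K_t^c$ are all the same as in the paper. The only differences are in the estimates, and both are valid: you bound $\pi_t(\mathcal K_t^c)$ by Lipschitz Gaussian concentration of $\|Z\|$ instead of the chi-squared tail bound the paper imports, which gives $e^{-L_\varepsilon}=D\varepsilon^{2\beta}$ directly (and shows that $s=16\sqrt{DL_\varepsilon}$ alone would already suffice); and you use $b_t^2\le q_t^{-1}$ with $\int_0^1 q_t^{-1}\,dt=\pi/(2\sigma)$ in place of the exact value $\int_0^1 b_t^2\,dt=\pi/(4\sigma)$, which changes only the absolute constant.
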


\begin{proof}
The representation \eqref{eq:surrogate-posterior-explicit} and
\eqref{eq:g-local-poly-error} imply
\begin{equation}
    \|g^\circ\|_{L^\infty([0,1]^d)}\le2.
    \label{eq:g-circ-bound}
\end{equation}
Thus $\|m_t^\circ(x)\|\le2$ and
$m_t^\circ(x)=\operatorname{clip}_2(m_t^\circ(x))$ on
$\mathbb R^D\times[0,1]$.

Let $f_\theta:\mathbb R^D\times[0,1]\to\mathbb R^D$ be measurable and
let $v_\theta$ be its structured velocity.

Since the Euclidean projection onto the ball
$\mathcal B(0,2)$ is nonexpansive and both clipped vectors have norm at
most two,
\begin{equation*}
\begin{aligned}
    \|v^\circ(x,t)-v_\theta(x,t)\|^2
    &=
    b_t^2
    \left\|
        \operatorname{clip}_2(m_t^\circ(x))
        -
        \operatorname{clip}_2(f_\theta(x,t))
    \right\|^2
    \\
    &\le
    b_t^2
    \left(
        16
        \wedge
        \|m_t^\circ(x)-f_\theta(x,t)\|^2
    \right).
\end{aligned}
\end{equation*}

Splitting the integral over the set $\mathcal K_t$ defined in the main
proof and its complement gives
\begin{equation}
\begin{aligned}
    &
    \int_{\mathbb R^D}
    \|v^\circ(x,t)-v_\theta(x,t)\|^2
    p_t^*(x)\,dx
    \\
    &\qquad\le
    b_t^2
    \int_{\mathcal K_t}
    \|m_t^\circ(x)-f_\theta(x,t)\|^2
    p_t^*(x)\,dx
    \\
    &\qquad\quad+
    16b_t^2
    \int_{\mathbb R^D\setminus\mathcal K_t}
    p_t^*(x)\,dx.
\end{aligned}
\label{eq:compact-tail-decomposition}
\end{equation}
The map $\operatorname{clip}_2$ is deterministic post-processing; the
ReLU-network complexity is that of the underlying map $f_\theta$.

The second term is controlled by Gaussian concentration. Indeed, the
true interpolation satisfies
\[
    X_t
    \overset{d}{=}
    tg^*(U)+\sqrt{q_t}\,Z.
\]
Therefore, by the same chi-squared concentration argument as in
\cite[Lemma A.2]{yakovlev2025generalization}, with
$\widetilde\sigma_t^2$ replaced by $q_t$ and $m_t$ replaced by $t$,
one obtains, whenever $R_t^2\ge Dq_t$,
\begin{equation}
\begin{aligned}
    \int_{\mathbb R^D\setminus\mathcal K_t}
    p_t^*(x)\,dx
    \le
    \exp\left\{
        -\frac{1}{16}
        \left(
            \frac{
                R_t^2-Dq_t
            }{
                Dq_t
            }
            \wedge
            \frac{
                \sqrt{R_t^2-Dq_t}
            }{
                \sqrt{q_t}
            }
        \right)
    \right\}.
\end{aligned}
\label{eq:fm-gaussian-tail}
\end{equation}

The choice of $R_t$ in the main proof and the assumption
$\varepsilon^{-2\beta}>D$ imply from \eqref{eq:fm-gaussian-tail} that
\begin{equation*}
    \int_{\mathbb R^D\setminus\mathcal K_t}
    p_t^*(x)\,dx
    \lesssim
    D\varepsilon^{2\beta},
    \qquad
    t\in[0,1].
\end{equation*}

Integrating \eqref{eq:compact-tail-decomposition} over time therefore
yields
\begin{equation*}
\begin{aligned}
    &
    \int_0^1
    \int_{\mathbb R^D}
    \|v^\circ(x,t)-v_\theta(x,t)\|^2
    p_t^*(x)\,dx\,dt
    \\
    &\qquad\lesssim
    D\varepsilon^{2\beta}
    \int_0^1 b_t^2\,dt
    \\
    &\qquad\quad+
    \int_0^1
    \int_{\mathcal K_t}
    b_t^2
    \|m_t^\circ(x)-f_\theta(x,t)\|^2
    p_t^*(x)\,dx\,dt.
\end{aligned}
\end{equation*}
Since
\begin{equation*}
    \int_0^1 b_t^2\,dt
    =
    \int_0^1
    \frac{(1-t)^2}{
        \bigl((1-t)^2+t^2\sigma^2\bigr)^2
    }
    \,dt
    =
    \frac{\pi}{4\sigma},
\end{equation*}
we further obtain
\begin{equation}
\begin{aligned}
    &
    \int_0^1
    \int_{\mathbb R^D}
    \|v^\circ(x,t)-v_\theta(x,t)\|^2
    p_t^*(x)\,dx\,dt
    \\
    &\qquad\lesssim
    \frac{
        D\varepsilon^{2\beta}
    }{
        \sigma
    }
    +
    \int_0^1
    \int_{\mathcal K_t}
    b_t^2
    \|m_t^\circ(x)-f_\theta(x,t)\|^2
    p_t^*(x)\,dx\,dt.
\end{aligned}
\label{eq:fm-compact-reduction-final}
\end{equation}

Since $g^*$ is continuous on the compact set $[0,1]^d$ and
$t\mapsto R_t$ is continuous, the map
\[
    (x,t)
    \longmapsto
    \min_{u\in[0,1]^d}
    \|x-tg^*(u)\|-R_t
\]
is continuous. Hence $\mathcal C_{[0,1]}^*$ is closed.
Moreover, the definition of $\mathcal K_t$ and
\eqref{eq:generator-assumption} imply a uniform bound on $\|x\|$
over $\mathcal C_{[0,1]}^*$.
Therefore $\mathcal C_{[0,1]}^*$ is compact.

Notice that, since $\sigma>0$,
\[
    q_t
    \ge
    \frac{\sigma^2}{1+\sigma^2}
    >0,
    \qquad t\in[0,1],
\]
so the above construction is uniformly nondegenerate on the entire
time interval, including $t=0$. This allows us to carry out the
subsequent neural network approximation directly on
$\mathcal C_{[0,1]}^*$ without introducing a positive stopping time.
This proves the compactness and reduction asserted in
Lemma~\ref{lem:app-compact-localization}.
\end{proof}

\subsection{Neural realization of the surrogate posterior}
\label{app:surrogate-posterior-network}

\begin{lemma}[Neural approximation of the surrogate posterior]
\label{lem:app-surrogate-posterior-network}
Under the assumptions of Theorem~\ref{thm:velocity-approximation}, there is a
$D$-output ReLU network $f_\theta\in\mathrm{NN}(L,W,S,B)$ such that
\begin{equation}
 \sup_{(x,t)\in\mathcal C_{[0,1]}^*}
 \|f_\theta(x,t)-m_t^\circ(x)\|_2
 \lesssim_{d,\beta}\sqrt D\,\varepsilon^\beta.
 \label{eq:app-main-posterior-error}
\end{equation}
For every $t\in[0,1]$, the same network satisfies
\begin{equation}
 \operatorname{Lip}_x(f_\theta(\cdot,t))
 \le C_{d,\beta}(1+\sigma^{-1}+t/q_t).
 \label{eq:app-posterior-spatial-bound}
\end{equation}
Its architecture satisfies
\begin{equation}
 L,\log B\lesssim_{d,\beta}
 \Gamma_\varepsilon^{c_{\rm arch}},
 \qquad W,S\lesssim_{d,\beta}D^2\varepsilon^{-d}
 \Gamma_\varepsilon^{c_{\rm arch}}.
 \label{eq:app-regular-posterior-architecture}
\end{equation}
\end{lemma}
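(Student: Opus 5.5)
The plan is to turn the cell decomposition \eqref{eq:posterior-cell-decomposition} into a network of $N^d$ parallel modules. Each module has only $O_{d,\beta}(1)$ nonlinear scalar inputs. The modules are then combined by a normalized sum and a stable division, and every scalar primitive is realized together with its first derivative.

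\textbf{Step 1: reduce each cell integral to low-dimensional features.} For each cell, substitute $u=u_{\mathbf j}+N^{-1}w$ with $w\in[0,1]^d$. Then
\[
h_{\mathbf j}(u)=\sum_{1\le|\mathbf k|\le r_\beta}c_{\mathbf j,\mathbf k}\,w^{\mathbf k},
\qquad
c_{\mathbf j,\mathbf k}:=\frac{N^{-|\mathbf k|}\,\partial^{\mathbf k}g^*(u_{\mathbf j})}{\mathbf k!}\in\mathbb R^D .
\]
Write $z_{\mathbf j}:=x-tg^*(u_{\mathbf j})$ and $E_{\mathbf j}:=\|z_{\mathbf j}\|^2/(2q_t)$. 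By the expansion displayed in Step~4,
\[
Q^\circ_{\mathbf j}=N^{-d}e^{-E_{\mathbf j}}\,\Psi\bigl(s_{\mathbf j}(x,t),\,t^2/q_t\bigr),
\]
where:
\begin{itemize}
\item $s_{\mathbf j}:=\bigl((t/q_t)\,c_{\mathbf j,\mathbf k}^\top z_{\mathbf j}\bigr)_{\mathbf k}$ has dimension $\binom{d+r_\beta}{d}-1$;
\item $\Psi$ is a fixed smooth function, namely the integral over $[0,1]^d$ of $\exp\{\sum_{\mathbf k}s_{\mathbf k}w^{\mathbf k}-\tfrac{t^2}{2q_t}\|\sum_{\mathbf k}c_{\mathbf j,\mathbf k}w^{\mathbf k}\|^2\}$;
\item the Gram entries $c_{\mathbf j,\mathbf k}^\top c_{\mathbf j,\mathbf k'}$ are constants hard-coded in the weights.
\end{itemize}
Similarly, $P^\circ_{\mathbf j}=g^*(u_{\mathbf j})Q^\circ_{\mathbf j}+\sum_{\mathbf k}c_{\mathbf j,\mathbf k}\,N^{-d}e^{-E_{\mathbf j}}\Psi_{\mathbf k}(\cdot)$, with $\Psi_{\mathbf k}$ the $w^{\mathbf k}$-weighted moment. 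Hence the only $D$-dimensional work is linear: the projections $c_{\mathbf j,\mathbf k}^\top x$, the sums $\sum_i (x_i-tg^*_i(u_{\mathbf j}))^2$, and the final output combination. This produces the $D^2\varepsilon^{-d}$ width/sparsity factor, since there are $D$ outputs, $D$ inputs per projection, and $N^d$ cells.

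On the compact region $\mathcal C^*_{[0,1]}$, Lemma~\ref{lem:app-compact-localization} gives $\|z_{\mathbf j}\|\lesssim R_t+t\,\mathrm{diam}$. Combined with the localization scale condition \eqref{eq:approximation-scale-localization}, this implies $|s_{\mathbf j}|\le c_{d,\beta}$ for all cells within $O(\log(1/\varepsilon))$ of the optimal cell. Therefore $\Psi,\Psi_{\mathbf k}$ are evaluated on a bounded box where they are smooth and $\Psi\ge e^{-1}$.

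\textbf{Step 2: normalization and stable division.} The factors $e^{-E_{\mathbf j}}$ can be astronomically small, but the ratio $P^\circ/Q^\circ$ is invariant under a common rescaling. I would therefore:
\begin{itemize}
\item compute $E_{\min}:=\min_{\mathbf j}E_{\mathbf j}$ exactly, using ReLU min-trees of depth $O(d\log N)$;
\item replace $e^{-E_{\mathbf j}}$ by $\exp\{-(E_{\mathbf j}-E_{\min})\wedge T\}$ with truncation level $T\asymp\log(1/\varepsilon)+\log N^d$;
\item check that discarded cells change numerator and denominator by a relative $O(\varepsilon^{\beta})$ amount.
\end{itemize}
The normalized denominator is then at least $e^{-1}$ from the minimizing cell. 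A reciprocal network on $[e^{-1},N^d]$ and multiplication networks (Yarotsky-type, polylog size in the target accuracy $\varepsilon^{\beta+d}$) then give \eqref{eq:app-main-posterior-error}, with the $\sqrt D$ coming from coordinatewise errors. All depths and $\log B$ are polynomial in $\log(1/\varepsilon)$, $\log N$, $\log(H\vee e)$, $\log(1/\sigma)$ and $D$, which gives \eqref{eq:app-regular-posterior-architecture} with some fixed exponent $c_{\rm arch}$.

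\textbf{Step 3: spatial Lipschitz control.} This is the main obstacle, because a plain ReLU value approximation can have uncontrolled slopes. I would realize each scalar primitive ($\exp$ on a bounded interval, $\Psi$, $\Psi_{\mathbf k}$, reciprocal, product) by piecewise-linear interpolants on fine grids, so that their derivatives approximate the true derivatives uniformly, in the style of Sobolev-norm ReLU approximation as in \citet{guehring2020sobolev}. Then by the chain rule, $\nabla_x f_\theta$ is uniformly close to $\nabla_x m^\circ_t$ on $\mathcal C^*_{[0,1]}$. The latter equals $(t/q_t)$ times a posterior covariance of $g^\circ$, bounded by $4t/q_t$. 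The feature gradients contribute factors $t/q_t$ and $\|c_{\mathbf j,\mathbf k}\|\,t/q_t$, which are controlled by $\sigma^{-1}$ via $q_t\ge\sigma^2/(1+\sigma^2)$ and the scale conditions.

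Outside the compact region I would multiply by a Lipschitz ReLU gate in $\min_{\mathbf j}\|z_{\mathbf j}\|-R_t$, with transition width of order one, and clip the output. This keeps the global spatial Lipschitz constant at $C_{d,\beta}(1+\sigma^{-1}+t/q_t)$, which is \eqref{eq:app-posterior-spatial-bound}. The delicate points are that the gate's gradient multiplies a bounded output, and that the min-selection in Step~2 is only piecewise linear with slopes bounded through $\|z_{\mathbf j}\|/q_t$. I expect the bookkeeping of these Lipschitz constants through the min-normalization and gating to require the most care.
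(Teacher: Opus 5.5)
\textbf{Main gap: Step~3.} The architecture bound \eqref{eq:app-regular-posterior-architecture} is decided in Step~3, and the mechanism you propose there cannot meet it. Up to $N^d$ cells can carry comparable weight (for instance when $g^*$ is nearly constant). So every cell primitive must be realized to accuracy of order $\varepsilon^{\beta}N^{-d}$ in value and $N^{-d}$ in first derivatives; this covers $\Psi$ and $\Psi_{\mathbf k}$ in $O_{d,\beta}(1)$ variables, the exponential, and the reciprocal. You note the value requirement yourself in Step~2.

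Grid-based piecewise-linear interpolation, or the Sobolev-norm constructions of \citet{guehring2020sobolev}, cost an algebraic power of the inverse first-order accuracy $\zeta$. Grid interpolation of a $C^2$ map of $m$ variables needs about $\zeta^{-m}$ pieces, and $W^{n,\infty}$ targets need about $\zeta^{-m/(n-1)}$ weights up to logarithms. Each cell module would then cost $\varepsilon^{-c}$ for some $c>0$, and $W,S$ would exceed $D^2\varepsilon^{-d}\Gamma_\varepsilon^{c_{\rm arch}}$.

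The missing idea is what makes first-order approximation polylogarithmic: analyticity. The cell primitives are entire in their feature arguments, and so is $e^{-s}$. Truncated Chebyshev expansions of degree $O(\log(1/\zeta))$ therefore converge exponentially in $W^{1,\infty}$ on fixed boxes; for the exponential the degree also grows with the interval length. The resulting polynomials are realized with products built from the squaring network. That network is exactly the dyadic interpolant of $z^2$, so it is $W^{1,\infty}$-accurate at depth $O(\log(1/\zeta))$; see Lemmas~\ref{lem:app-zero-preserving-multiplication}, \ref{lem:app-fixed-box-holomorphic} and \ref{lem:app-tapered-exponential}. A reciprocal over a denominator range spanning a factor $N^d$ likewise needs an exact dyadic partition of unity in the denominator. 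This reduces it to the fixed-box holomorphic quotient $(w,z)\mapsto w/z$ on $[-1,1]\times[1/2,2]$, with local accuracies chosen to compensate the hat slopes, as in the $\operatorname{Div}$ construction inside the proof of Lemma~\ref{lem:app-first-order-local-posterior}.

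\textbf{Secondary, fixable omission: the gate.} You establish the feature bounds and the first-order closeness only on $\mathcal C^*_{[0,1]}$, but the gate's transition layer lies outside that set.
\begin{itemize}
\item The bounds on $|s_{\mathbf j}|$, on the denominator, and on the Jacobian error must hold on an enlarged tube containing the whole support of the gate. This is the paper's $\mathcal T_\varepsilon^+$ in Lemma~\ref{lem:app-moving-tube}; your scale conditions do cover an $O(1)$ enlargement.
\item The gate must enter through a multiplier with $\Pi(0,b)=0$ for every $b$, so that the uncontrolled Jacobian of the ungated network is annihilated where the gate vanishes.
\item The quantity $\min_{\mathbf j}\|z_{\mathbf j}\|-R_t$ is not exactly ReLU-computable, because of the Euclidean norms and $\sqrt{q_t}$. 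The paper instead uses $\ell_\infty$ distances to the centers $tg^*(u_{\mathbf j})$ and a radius affine in $t$, which makes the detector exact.
\end{itemize}

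\textbf{What works, and where you differ usefully from the paper.}
\begin{itemize}
\item Shifting all exponents by the exactly computed $\min_{\mathbf j}\widetilde E_{\mathbf j}$ keeps the normalized denominator bounded below by a constant. It also makes the shifted exponents have gradients $t\bigl(g^*(u_{\mathbf j'})-g^*(u_{\mathbf j})\bigr)/q_t=O(t/q_t)$. This replaces the paper's lower bound $Q^\circ\ge2^{-K_\varepsilon}$ and its $2^{-2K_\varepsilon}$ accuracy budget.
\item The moment representation of $P^\circ_{\mathbf j}$ replaces the paper's $D$ separate numerator integrals per cell.
\item One small correction: since $u_{\mathbf j}=\mathbf j/N$ is the upper corner of its cell, the substitution should be $u=u_{\mathbf j}-w/N$.
\end{itemize}
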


The construction proceeds from the latent-cell value approximation to a
first-order realization on an enlarged tube, and then to a global network.

\subsubsection{Value realization of the latent-cell posterior}

The decomposition \eqref{eq:posterior-cell-decomposition} separates the
dependence on $(x,t)$ from the polynomial variable on each latent cell. We
now construct uniform approximations of its numerator and denominator on
$\mathcal C_{[0,1]}^*$.

Fix $\mathbf j\in\{1,\ldots,N\}^d$. The exponent expansion derived in
the main proof will now be encoded by explicit scalar coordinates.

Retain the Taylor degree $r:=r_\beta=\lceil\beta\rceil-1$ from the
main proof and set
\[
 m_\beta^{\rm Tay}:=\binom{d+r_\beta}{d},
 \qquad m_\beta:=\binom{d+\lfloor\beta\rfloor}{d}.
\]
Thus $m_\beta^{\rm Tay}\le m_\beta$. The first quantity is the actual
number of Taylor monomials, whereas the second is a convenient
$(d,\beta)$-dependent upper bound used in the fixed-dimensional cell
construction.
From the definition of the local Taylor polynomial,
\begin{equation}
    g_{\mathbf j}^\circ(u)-g^*(u_{\mathbf j})
    =
    \sum_{\substack{
        \mathbf k\in\mathbb Z_+^d\\
        1\le|\mathbf k|\le r
    }}
    \frac{
        \partial^{\mathbf k}g^*(u_{\mathbf j})
    }{
        \mathbf k!
    }
    (u-u_{\mathbf j})^{\mathbf k},
    \qquad
    u\in\mathcal U_{\mathbf j}.
    \label{eq:local-poly-without-constant}
\end{equation}

Define
\begin{equation}
    V(t)
    :=
    \frac{t^2}{2q_t},
    \qquad
    V_{\mathbf j,0}(x,t)
    :=
    \frac{
        \|x-tg^*(u_{\mathbf j})\|^2
    }{
        2q_t
    },
    \label{eq:fm-V-and-Vj0}
\end{equation}
and, for
$\mathbf k\in\mathbb Z_+^d$ with
$1\le|\mathbf k|\le r$,
\begin{equation}
\begin{aligned}
    V_{\mathbf j,\mathbf k}(x,t)
    &:=
    -
    \frac{t}{q_t}
    \left.
    \partial_u^{\mathbf k}
    \left[
        \bigl(
            x-tg^*(u_{\mathbf j})
        \bigr)^\top
        g^*(u)
    \right]
    \right|_{u=u_{\mathbf j}}
    \\
    &=
    -
    \frac{t}{q_t}
    \bigl(
        x-tg^*(u_{\mathbf j})
    \bigr)^\top
    \partial^{\mathbf k}g^*(u_{\mathbf j}).
\end{aligned}
\label{eq:fm-Vjk}
\end{equation}

With this notation,
\begin{equation}
\begin{aligned}
    \frac{
        \|x-tg_{\mathbf j}^\circ(u)\|^2
    }{
        2q_t
    }
    &=
    V_{\mathbf j,0}(x,t)
    +
    V(t)
    \left\|
        g_{\mathbf j}^\circ(u)-g^*(u_{\mathbf j})
    \right\|^2
    \\
    &\quad+
    \sum_{\substack{
        \mathbf k\in\mathbb Z_+^d\\
        1\le|\mathbf k|\le r
    }}
    V_{\mathbf j,\mathbf k}(x,t)
    \frac{
        (u-u_{\mathbf j})^{\mathbf k}
    }{
        \mathbf k!
    }.
\end{aligned}
\label{eq:fm-exponent-composition}
\end{equation}

For each fixed $\mathbf j$, collect the functions
$V_{\mathbf j,\mathbf k}$ together with $V$ into
\begin{equation*}
    \mathcal V_{\mathbf j}(x,t)
    :=
    \left(
        \left(
            V_{\mathbf j,\mathbf k}(x,t)
        \right)_{
            \substack{
                \mathbf k\in\mathbb Z_+^d\\
                1\le|\mathbf k|\le r
            }
        },
        V(t)
    \right)^\top.
\end{equation*}
Since
\[
    \#\left\{
        \mathbf k\in\mathbb Z_+^d:
        |\mathbf k|\le r
    \right\}
    =
    \binom{d+r}{d}=m_\beta^{\rm Tay},
\]
the vector $\mathcal V_{\mathbf j}$ has dimension
\begin{equation*}
    \dim(\mathcal V_{\mathbf j})
    =
    \binom{d+r}{d}=m_\beta^{\rm Tay}.
\end{equation*}

For each fixed $\mathbf j$, the representation
\eqref{eq:fm-exponent-composition} shows that the dependence of the
$\mathbf j$-th cell integrals on $(x,t)$ is entirely captured by
$V_{\mathbf j,0}(x,t)$ and $\mathcal V_{\mathbf j}(x,t)$.
Indeed, after factoring out
$\exp\{-V_{\mathbf j,0}(x,t)\}$, the remaining integral is a function
of $\mathcal V_{\mathbf j}(x,t)$ only.

Since
\[
    \#\left\{
        \mathbf k\in\mathbb Z_+^d:
        1\le |\mathbf k|\le r
    \right\}
    =
    \binom{d+r}{d}-1=m_\beta^{\rm Tay}-1,
\]
the vector $\mathcal V_{\mathbf j}$ has dimension
\[
    \binom{d+r}{d}=m_\beta^{\rm Tay},
\]
and hence each cell integral is a composition involving only
$m_\beta^{\rm Tay}+1\le m_\beta+1$ scalar intermediate variables,
including $V_{\mathbf j,0}$. This dimension depends only on $(d,\beta)$
and not on the ambient input dimension $D$.

The required neural components are $V_{\mathbf j,0}$,
$V_{\mathbf j,\mathbf k}$, and $V$.

\paragraph{Scalar coordinate networks.}

We now approximate the functions
$V_{\mathbf j,0}(x,t)$,
$V_{\mathbf j,\mathbf k}(x,t)$,
$1\le|\mathbf k|\le r$, and $V(t)$ introduced in
\eqref{eq:fm-V-and-Vj0}--\eqref{eq:fm-Vjk}.
Throughout this step, let
\[
M
:=
1\vee
\sup_{(x,t)\in\mathcal C_{[0,1]}^*}
\|x\|_\infty.
\]
By the construction of $\mathcal C_{[0,1]}^*$ above,
\[
M
\le
1\vee
\sup_{t\in[0,1]}(R_t+1),
\]
and hence $M<\infty$.
We first consider $V_{\mathbf j,0}$. Since
$g_{\mathbf j}^\circ(u_{\mathbf j})=g^*(u_{\mathbf j})$, we have
\begin{equation*}
\begin{aligned}
V_{\mathbf j,0}(x,t)
&=
\frac{\|x-tg^*(u_{\mathbf j})\|^2}{2q_t}\\
&=
\frac{\|x\|^2}{2q_t}
-
\frac{t\,x^\top g^*(u_{\mathbf j})}{q_t}
+
\frac{\|g^*(u_{\mathbf j})\|^2}{2}
\frac{t^2}{q_t}.
\end{aligned}
\end{equation*}
The three terms on the right-hand side are precisely of the forms
treated in Lemmas~\ref{lem:fm-quadratic-term},
\ref{lem:fm-linear-term}, and
\ref{lem:fm-time-coefficients}, respectively.
Since
$\|g^*\|_{L^\infty([0,1]^d)}\le1$, applying these lemmas with
accuracy $\varepsilon'/3$ yields the ReLU network
\begin{equation*}
\widetilde V_{\mathbf j,0}(x,t)
:=
\rho_{\varepsilon'/3}(x,t)
-
\omega_{\varepsilon'/3}(x,t)
+
\frac{\|g^*(u_{\mathbf j})\|^2}{2}
\chi_{2,\varepsilon'/3}(t),
\end{equation*}
where Lemma~\ref{lem:fm-linear-term} is used with
$a=g^*(u_{\mathbf j})$. Consequently,
\begin{equation*}
\left\|
\widetilde V_{\mathbf j,0}
-
V_{\mathbf j,0}
\right\|_{L^\infty(\mathcal C_{[0,1]}^*)}
\le
\varepsilon'.
\end{equation*}

We next consider
$V_{\mathbf j,\mathbf k}$,
$1\le|\mathbf k|\le r$. By \eqref{eq:fm-Vjk},
\begin{equation*}
\begin{aligned}
V_{\mathbf j,\mathbf k}(x,t)
&=
-
\frac{t}{q_t}
\bigl(
x-tg^*(u_{\mathbf j})
\bigr)^\top
\partial^{\mathbf k}g^*(u_{\mathbf j})\\
&=
-
\frac{
t\,x^\top\partial^{\mathbf k}g^*(u_{\mathbf j})
}{
q_t
}
+
g^*(u_{\mathbf j})^\top
\partial^{\mathbf k}g^*(u_{\mathbf j})
\frac{t^2}{q_t}.
\end{aligned}
\end{equation*}
The first term is approximated by
Lemma~\ref{lem:fm-linear-term} with
$a=\partial^{\mathbf k}g^*(u_{\mathbf j})$.
For the second term, set
\[
\delta_{\mathbf j,\mathbf k}
:=
\frac{\varepsilon'}
{2\bigl(
|g^*(u_{\mathbf j})^\top
\partial^{\mathbf k}g^*(u_{\mathbf j})|
\vee1
\bigr)}.
\]
Using Lemma~\ref{lem:fm-linear-term} with accuracy
$\varepsilon'/2$ and
Lemma~\ref{lem:fm-time-coefficients} with
$\gamma=2$ and accuracy
$\delta_{\mathbf j,\mathbf k}$, define
\begin{equation*}
\begin{aligned}
\widetilde V_{\mathbf j,\mathbf k}(x,t)
:={}&
-
\omega_{\varepsilon'/2}(x,t)\\
&+
g^*(u_{\mathbf j})^\top
\partial^{\mathbf k}g^*(u_{\mathbf j})
\,
\chi_{2,\delta_{\mathbf j,\mathbf k}}(t),
\end{aligned}
\end{equation*}
where $\omega_{\varepsilon'/2}$ is constructed with
$a=\partial^{\mathbf k}g^*(u_{\mathbf j})$.
Then
\begin{equation*}
\left\|
\widetilde V_{\mathbf j,\mathbf k}
-
V_{\mathbf j,\mathbf k}
\right\|_{L^\infty(\mathcal C_{[0,1]}^*)}
\le
\varepsilon',
\qquad
1\le|\mathbf k|\le r.
\end{equation*}

Finally, $V(t)=t^2/(2q_t)$ is treated directly by
Lemma~\ref{lem:fm-time-coefficients}. Namely, setting
\begin{equation*}
\widetilde V(t)
:=
\frac12\chi_{2,\varepsilon'}(t),
\end{equation*}
we have
\begin{equation*}
\left\|
\widetilde V-V
\right\|_{L^\infty([0,1])}
\le
\varepsilon'.
\end{equation*}

We now collect the corresponding network configurations.
Since $g^*\in\mathcal H^\beta([0,1]^d,\mathbb R^D,H)$,
\[
\|\partial^{\mathbf k}g^*(u_{\mathbf j})\|_\infty
\le H,
\qquad
1\le|\mathbf k|\le r,
\]
and
\[
\left|
g^*(u_{\mathbf j})^\top
\partial^{\mathbf k}g^*(u_{\mathbf j})
\right|
\lesssim
\sqrt D\,H.
\]
Thus, uniformly over
$\mathbf j\in\{1,\ldots,N\}^d$ and
$1\le|\mathbf k|\le r$, all the networks constructed above may be
embedded into a common class
\[
\mathrm{NN}
\bigl(
\widetilde L,
\widetilde{\mathbf W},
\widetilde S,
\widetilde B
\bigr),
\]
with
\begin{equation}
\begin{aligned}
\widetilde L\vee\log\widetilde B
&\lesssim_{d,\beta}
\log^2(1/\varepsilon')
+\log^2\!\bigl(MD(H\vee1)\bigr)
+\log^2\!\left(\frac{1+\sigma^2}{\sigma^2}\right),\\
\|\widetilde{\mathbf W}\|_\infty
&\lesssim_{d,\beta}
D\left[
\log^3(1/\varepsilon')
+\log^3\!\bigl(MD(H\vee1)\bigr)
+\log^3\!\left(\frac{1+\sigma^2}{\sigma^2}\right)
\right],\\
\widetilde S
&\lesssim_{d,\beta}
D\left[
\log^4(1/\varepsilon')
+\log^4\!\bigl(MD(H\vee1)\bigr)
+\log^4\!\left(\frac{1+\sigma^2}{\sigma^2}\right)
\right].
\end{aligned}
\label{eq:fm-step4-common-configuration}
\end{equation}
The dependence on $H$ remains explicit in this logarithmic factor and
is retained in the final envelope $\Gamma_\varepsilon$.

Then, uniformly over
$\mathbf j\in\{1,\ldots,N\}^d$,
\begin{equation*}
\max\left\{
    \|\widetilde V-V\|_{L^\infty([0,1])},
    \,
    \max_{\substack{
        \mathbf k\in\mathbb Z_+^d\\
        1\le|\mathbf k|\le r
    }}
    \|\widetilde V_{\mathbf j,\mathbf k}
      -V_{\mathbf j,\mathbf k}\|_
      {L^\infty(\mathcal C_{[0,1]}^*)}
\right\}
\le
\varepsilon',
\end{equation*}
and
\begin{equation*}
    \|\widetilde V_{\mathbf j,0}
      -V_{\mathbf j,0}\|_
      {L^\infty(\mathcal C_{[0,1]}^*)}
    \le
    \varepsilon'.
\end{equation*}

\paragraph{Low-dimensional cell integrals.}

We use the scalar coordinate networks above to
approximate the local numerator and denominator integrals in
\eqref{eq:posterior-cell-decomposition}. Fix
$\mathbf j\in\{1,\ldots,N\}^d$ and consider the change of variables
\[
    u
    =
    u_{\mathbf j}-\frac{w}{N},
    \qquad
    w\in[0,1]^d.
\]
By the definition of the partition in
\eqref{eq:latent-partition}, this map sends $[0,1]^d$ onto
$\overline{\mathcal U_{\mathbf j}}$ and satisfies
$du=N^{-d}dw$. Since
$\overline{\mathcal U_{\mathbf j}}\setminus\mathcal U_{\mathbf j}$
has Lebesgue measure zero, for every integrable function $F$,
\begin{equation}
    N^d
    \int_{\mathcal U_{\mathbf j}}
    F(u)\,du
    =
    \int_{[0,1]^d}
    F\left(
        u_{\mathbf j}-\frac{w}{N}
    \right)dw .
    \label{eq:fm-cell-change-of-variables}
\end{equation}

By \eqref{eq:fm-exponent-composition},
\begin{equation*}
\begin{aligned}
&\frac{
    \|x-tg_{\mathbf j}^\circ
    (u_{\mathbf j}-N^{-1}w)\|^2
}{
    2q_t
}
\\
&\qquad=
V_{\mathbf j,0}(x,t)
+
V(t)
\left\|
    g_{\mathbf j}^\circ
    (u_{\mathbf j}-N^{-1}w)
    -g^*(u_{\mathbf j})
\right\|^2
\\
&\qquad\quad+
\sum_{\substack{
    \mathbf k\in\mathbb Z_+^d\\
    1\le|\mathbf k|\le r
}}
V_{\mathbf j,\mathbf k}(x,t)
\frac{
    (-N^{-1}w)^{\mathbf k}
}{
    \mathbf k!
}.
\end{aligned}
\end{equation*}

When $r=0$, all families and sums indexed by
$\mathbf k\in\mathbb Z_+^d$ with $1\le|\mathbf k|\le r$
are understood to be absent. In this case, the constructions below
contain only the coordinate corresponding to $V(t)$.

For every
$\mathbf k\in\mathbb Z_+^d$ with $1\le|\mathbf k|\le r$, set
\[
    C_{\mathbf j,\mathbf k}
    :=
    1\vee
    \|V_{\mathbf j,\mathbf k}\|_
    {L^\infty(\mathcal C_{[0,1]}^*)},
\]
and define
\begin{equation}
R_{\mathbf j}(x,t)
:=
\left(
\left(
    \frac{
        V_{\mathbf j,\mathbf k}(x,t)
    }{
        2C_{\mathbf j,\mathbf k}
    }
    +\frac12
\right)_{
    \substack{
        \mathbf k\in\mathbb Z_+^d\\
        1\le|\mathbf k|\le r
    }
},
\,
\frac{
    V(t)
}{
    \|V\|_{L^\infty([0,1])}
}
\right)^\top .
\label{eq:fm-Rj}
\end{equation}
By construction,
\[
    R_{\mathbf j}(x,t)
    \in
    [0,1]^{\binom{d+r}{d}}.
\]

For $w\in[0,1]^d$, define
\begin{equation}
a_{\mathbf j}(w)
:=
\left(
\left(
    \frac{
        2C_{\mathbf j,\mathbf k}
        (-N^{-1}w)^{\mathbf k}
    }{
        \mathbf k!
    }
\right)_{
    \substack{
        \mathbf k\in\mathbb Z_+^d\\
        1\le|\mathbf k|\le r
    }
},
\,
\|V\|_{L^\infty([0,1])}
\left\|
    g_{\mathbf j}^\circ
    (u_{\mathbf j}-N^{-1}w)
    -g^*(u_{\mathbf j})
\right\|^2
\right)^\top
\label{eq:fm-aj}
\end{equation}
and
\begin{equation}
b_{\mathbf j}(w)
:=
-
\sum_{\substack{
    \mathbf k\in\mathbb Z_+^d\\
    1\le|\mathbf k|\le r
}}
C_{\mathbf j,\mathbf k}
\frac{
    (-N^{-1}w)^{\mathbf k}
}{
    \mathbf k!
}.
\label{eq:fm-bj}
\end{equation}
Then the decomposition is algebraically exact:
\begin{equation*}
\frac{
    \|x-tg_{\mathbf j}^\circ
    (u_{\mathbf j}-N^{-1}w)\|^2
}{
    2q_t
}
=
V_{\mathbf j,0}(x,t)
+
R_{\mathbf j}(x,t)^\top a_{\mathbf j}(w)
+
b_{\mathbf j}(w).
\end{equation*}

\paragraph{Auxiliary estimates for the local-integral construction.}
\label{subsec:fm-step5-auxiliary}

The following results are used in the proof of
Lemma~\ref{lem:fm-local-integral}. The first result controls the
normalized intermediate coordinates, the second is the
low-dimensional integral approximation result of
\cite{yakovlev2025generalization}, and the third provides the
FM-specific bounds required to verify its assumptions.

\paragraph{Normalization bounds.}
Let $F,\widetilde F:\Omega\to\mathbb R$ satisfy
\[
    \|F\|_{L^\infty(\Omega)}\le C,\qquad
    C\ge1,\qquad
    \|\widetilde F-F\|_{L^\infty(\Omega)}\le\eta .
\]
For $z\in\mathbb R$, let
\[
    \operatorname{clip}_{[0,1]}(z)
    :=\min\{1,\max\{0,z\}\}.
\]
Define
\[
    R:=\frac{F}{2C}+\frac12,\qquad
    \widetilde R
    :=
    \operatorname{clip}_{[0,1]}
    \left(
        \frac{\widetilde F}{2C}+\frac12
    \right).
\]
Then $R,\widetilde R\in[0,1]$ and
\[
    \|\widetilde R-R\|_{L^\infty(\Omega)}
    \le
    \frac{\eta}{2C}.
\]
If, in addition, $F\ge0$ and
$\|F\|_{L^\infty(\Omega)}>0$, then
\[
    R_+:=
    \frac{F}{\|F\|_{L^\infty(\Omega)}},
    \qquad
    \widetilde R_+
    :=
    \operatorname{clip}_{[0,1]}
    \left(
        \frac{\widetilde F}
        {\|F\|_{L^\infty(\Omega)}}
    \right)
\]
satisfy
\[
    \|\widetilde R_+-R_+\|_{L^\infty(\Omega)}
    \le
    \frac{
        \|\widetilde F-F\|_{L^\infty(\Omega)}
    }{
        \|F\|_{L^\infty(\Omega)}
    }.
\]

\begin{proof}
The map $\operatorname{clip}_{[0,1]}$ is one-Lipschitz and equals the
identity on $[0,1]$. Since
\[
    \frac{F}{2C}+\frac12\in[0,1],
\]
we obtain
\[
\begin{aligned}
|\widetilde R-R|
&=
\left|
\operatorname{clip}_{[0,1]}
\left(
    \frac{\widetilde F}{2C}+\frac12
\right)
-
\operatorname{clip}_{[0,1]}
\left(
    \frac{F}{2C}+\frac12
\right)
\right|
\\
&\le
\frac{|\widetilde F-F|}{2C}.
\end{aligned}
\]
Taking the supremum over $\Omega$ proves the first claim.

If $F\ge0$, then
\[
    \frac{F}{\|F\|_{L^\infty(\Omega)}}\in[0,1].
\]
The same one-Lipschitz argument gives
\[
\left|
\widetilde R_+-R_+
\right|
\le
\frac{
    |\widetilde F-F|
}{
    \|F\|_{L^\infty(\Omega)}
},
\]
which proves the second claim.
\end{proof}

\begin{lemma}[Low-dimensional exponential-integral approximation;
Lemma A.6 of~\cite{yakovlev2025generalization}]
\label{lem:low-dimensional-integral-map}
Let $m\in\mathbb N$ and let
\[
    \varphi:[0,1]^d\to\mathbb R,\qquad
    a:[0,1]^d\to\mathbb R^m,\qquad
    b:[0,1]^d\to\mathbb R .
\]
Consider
\[
    \Phi(v)
    :=
    \int_{[0,1]^d}
    \varphi(w)
    \exp\{-v^\top a(w)-b(w)\}\,dw,
    \qquad
    v\in[0,1]^m.
\]
Suppose that, for some
$\varphi_{\max}\ge1$, $a_{\max}\ge1$, and $A\ge0$,
\[
    |\varphi(w)|\le\varphi_{\max},\qquad
    \|a(w)\|_\infty\le a_{\max},
\]
and
\[
    -v^\top a(w)-b(w)\le A
\]
for every $v\in[0,1]^m$ and $w\in[0,1]^d$.
Then, for every $\delta\in(0,1)$, there exists a ReLU network
\[
    \widetilde\Phi
    \in
    \mathrm{NN}(L_\Phi,\mathbf W_\Phi,S_\Phi,1)
\]
such that
\[
    \|\widetilde\Phi-\Phi\|_{L^\infty([0,1]^m)}
    \le\delta .
\]
Moreover,
\[
L_\Phi
\lesssim_{\varphi_{\max},a_{\max},A}
\left(
    m+1+\log\frac1\delta
\right)
\log\left(
    e+m+\log\frac1\delta
\right)
\log\left(
    e+\log\frac1\delta
\right),
\]
\[
\|\mathbf W_\Phi\|_\infty
\lesssim_{\varphi_{\max},a_{\max},A}
\left(
    m+1+\log\frac1\delta
\right)^{m+1},
\qquad
S_\Phi
\lesssim_{\varphi_{\max},a_{\max},A}
\left(
    m+1+\log\frac1\delta
\right)^{2m+5}.
\]
The hidden constants depend only on
$\varphi_{\max}$, $a_{\max}$, and $A$.
\end{lemma}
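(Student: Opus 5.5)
The plan is the standard three-stage construction: first discretize the latent variable, then approximate the exponential on a bounded range, and finally approximate the finitely many resulting low-dimensional functions of $v$. Since $v\in[0,1]^m$ and the exponent $-v^\top a(w)-b(w)$ is bounded above by $A$, everything reduces to approximating smooth functions of $m$ variables. The constants may depend on $\varphi_{\max},a_{\max},A$ but must be only polylogarithmic in $1/\delta$.

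\textbf{Step 1: truncate the exponent from below.} Because $\varphi$, $a$, $b$ are only assumed bounded/measurable, I would not discretize in $w$ directly. Instead, write
\[
\exp\{-v^\top a(w)-b(w)\}=e^{A}\exp\{-s(v,w)\},\qquad s:=v^\top a(w)+b(w)+A\ge0 .
\]
Clipping $s$ at level $T:=\log(2e^{A}\varphi_{\max}/\delta)$ changes $\Phi$ by at most $\delta/2$, since $e^{-s}\le e^{-T}$ whenever $s>T$. So it suffices to approximate $z\mapsto e^{-z}$ on $[0,T]$, where $T\lesssim A+\log(\varphi_{\max}/\delta)$.

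\textbf{Step 2: polynomial expansion.} On $[0,T]$, I would use a Chebyshev/Taylor polynomial $p_K$ of degree $K\asymp T+\log(1/\delta)$ approximating $e^{-z}$ to accuracy $\delta e^{-A}/(4\varphi_{\max})$. Writing $z=v^\top a(w)+b(w)+A$ and expanding $p_K(z)$ multinomially in $v$ gives
\[
\widehat\Phi(v)=\sum_{|\boldsymbol\alpha|\le K} c_{\boldsymbol\alpha}\, v^{\boldsymbol\alpha},
\]
where the coefficients are $c_{\boldsymbol\alpha}=\int\varphi(w)\,(\text{polynomial in }a(w),b(w))\,dw$. This is exact integration in $w$, so no regularity in $w$ is needed. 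The number of monomials is $\binom{m+K}{m}\lesssim (m+1+\log\frac1\delta)^{m}$, which matches the advertised width exponent $m+1$.

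The main obstacle is that the coefficients may be huge, of order $(a_{\max}+|b|)^K$, with $b$ uncontrolled from below. The clipping in Step 1 must therefore be applied \emph{inside} the integral, and a polynomial cannot implement clipping. To handle this, I would replace the global polynomial by a piecewise one: cover the range of $s$ by unit intervals, expand $e^{-z}$ around each center, and use partition-of-unity bumps in $v$. Alternatively, and more simply, I would follow Yakovlev et al.\ and approximate $\Phi$ itself as an analytic function of $v$. The function $\Phi$ extends holomorphically in each $v_i$ to a Bernstein ellipse, with sup bounded by $\varphi_{\max}e^{A+c\,m\,a_{\max}}$. Tensor Chebyshev interpolation of degree $K\asymp\log(1/\delta)$ per coordinate then has error $\lesssim \rho^{-K}$, with bounded coefficients, and this sidesteps $b$ entirely. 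The truncation of Step 1 is then only a heuristic, and the analytic-extension bound is the real step.

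\textbf{Step 3: ReLU realization.} I would realize each monomial $v^{\boldsymbol\alpha}$, or each tensor Chebyshev product, using the Yarotsky product network. This has depth $O(\log(1/\delta)\log K)$ via binary multiplication trees, and parameters bounded by $1$ after rescaling the inputs to $[0,1]$. Weights of size up to $\max|c_{\boldsymbol\alpha}|\lesssim e^{O(K)}$ would be implemented by depth-$O(K)$ chains of unit-weight doublings, which gives the depth factor $(m+1+\log\frac1\delta)$. Summing all terms in a final layer gives width $(m+1+\log\frac1\delta)^{m+1}$ and sparsity (number of terms)$^2$ times per-term cost, i.e.\ $(\cdot)^{2m+5}$. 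The individual errors would be balanced so that their sum is at most $\delta$.
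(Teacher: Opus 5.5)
The paper does not prove this lemma; it is imported from \citet{yakovlev2025generalization}. Your sketch therefore has to stand on its own.

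Its core mechanism is right. The one-sided bound on the exponent over the real cube controls $\Phi$ uniformly in $b$, so $\Phi$ is an analytic function of only $m$ variables, and a polynomial in $v$ realized by ReLU product networks is the right architecture. For fixed $m$ your Bernstein-ellipse route can be completed; it is essentially the mechanism the paper itself uses in Lemma~\ref{lem:app-fixed-box-holomorphic}, where constants may depend on $m$.

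The obstacle that pushed you there is not real, however. Taking $v=0$ in the hypothesis gives $b(w)\ge -A$. So $b$ is bounded \emph{below} and only unbounded above, and $e^{-b(w)}\in(0,e^{A}]$ is a harmless weight. Your Step~2 failed only because you kept $b$ inside the argument of the polynomial. Expanding only $e^{-v^\top a(w)}$, whose argument satisfies $|v^\top a(w)|\le m a_{\max}$, gives exactly the Taylor polynomial of $\Phi$ at $0$,
\[
 \Phi(v)\approx\sum_{|\alpha|\le K}c_\alpha v^\alpha,
 \qquad
 c_\alpha=\frac{(-1)^{|\alpha|}}{\alpha!}\int_{[0,1]^d}\varphi(w)\,e^{-b(w)}\,a(w)^\alpha\,dw .
\]
Here $|c_\alpha|\le\varphi_{\max}e^{A}a_{\max}^{|\alpha|}/\alpha!$, hence $\sum_\alpha|c_\alpha|\le\varphi_{\max}e^{A+ma_{\max}}$, and the remainder is at most $\varphi_{\max}e^{A+ma_{\max}}(ma_{\max})^{K+1}/(K+1)!$. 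This needs no complex extension and no Chebyshev-to-monomial conversion. That conversion is a step your Step~3 leaves open: a tensor Chebyshev product is not a plain product of the inputs, and the monomial coefficients of the shifted Chebyshev polynomial $T_k(2x-1)$ grow like $(3+2\sqrt2)^k$.

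The genuine gap is the dependence on $m$. It is part of the claim, because the hidden constants may depend only on $(\varphi_{\max},a_{\max},A)$.

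\emph{The degree.} $K\asymp\log(1/\delta)$ is inconsistent with your own sup bound $\varphi_{\max}e^{A+cma_{\max}}$. In either route one needs $K\asymp m+\log(1/\delta)$; for the Taylor remainder above, for instance, $K+1\ge\max\{e^2ma_{\max},\ ma_{\max}+A+\log(2\varphi_{\max}/\delta)\}$ suffices.

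\emph{The term count.} With such $K$, your per-coordinate truncation produces $(K+1)^m$ terms of total degree up to $mK$. Your error bound forces $K\ge c\,m$ with $c$ growing with $a_{\max}$, so the count carries a factor $c^m$. That factor cannot be absorbed into an $m$-independent constant, so the asserted width $(m+1+\log\frac1\delta)^{m+1}$ does not follow.

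\emph{The repair.} Truncate in \emph{total} degree. Since $\binom{m+K}{m}\le(m+K)^m/m!$ and $(C+1)^m/m!\le e^{C+1}$, there are $\lesssim(m+1+\log\frac1\delta)^m$ monomials. Each is realized by a product network of width $O(K)$, which gives the stated width. Because $\sum_\alpha|c_\alpha|\le\varphi_{\max}e^{A+ma_{\max}}$, each monomial needs accuracy only $\delta e^{-O(m)}$. The product depth is then $O\bigl(\log K\,(m+\log\frac1\delta)\bigr)$, plus $O(m)$ unit-weight layers to realize the coefficients under $B=1$. Sparsity \emph{adds} over parallel blocks (there is no reason to square the number of terms), so it stays well inside the exponent $2m+5$.

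None of this matters for the paper's application, where $m=m_\beta^{\rm Tay}$ is fixed, but it is needed for the lemma as stated.
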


\paragraph{Bounds for the FM intermediate coefficients.}
With the notation introduced in Steps~3--5, uniformly over
$\mathbf j\in\{1,\ldots,N\}^d$,
\[
\|V_{\mathbf j,0}\|_
{L^\infty(\mathcal C_{[0,1]}^*)}
\lesssim
\frac{1+\sigma^2}{\sigma^2}
D(M+1)^2,
\]
\[
\|V\|_{L^\infty([0,1])}
=
\frac1{2\sigma^2},
\]
and, for every
$\mathbf k\in\mathbb Z_+^d$ with $1\le|\mathbf k|\le r$,
\[
\|V_{\mathbf j,\mathbf k}\|_
{L^\infty(\mathcal C_{[0,1]}^*)}
\lesssim
\frac{1+\sigma^2}{\sigma^2}
D H(M+1).
\]
Furthermore,
\begin{equation}
\sup_{w\in[0,1]^d}
\left\|
g_{\mathbf j}^\circ
\left(
    u_{\mathbf j}-\frac{w}{N}
\right)
-g^*(u_{\mathbf j})
\right\|
\lesssim
\sqrt D\,H
\left(
    e^{d/N}-1
\right).
\label{eq:fm-local-poly-increment-bound}
\end{equation}
Consequently, if \eqref{eq:fm-step5-small-cell} holds, then
\begin{equation}
\sup_{w\in[0,1]^d}
\|a_{\mathbf j}(w)\|_\infty
\le1
\label{eq:fm-aj-uniform-bound}
\end{equation}
and
\begin{equation}
\sup_{\substack{
v\in[0,1]^{\binom{d+r}{d}}\\
w\in[0,1]^d
}}
\left\{
    -v^\top a_{\mathbf j}(w)-b_{\mathbf j}(w)
\right\}
\le1,
\label{eq:fm-exponent-uniform-bound}
\end{equation}
provided the constant $c_0$ in
\eqref{eq:fm-step5-small-cell} is chosen sufficiently small.

\begin{proof}
We first bound $V_{\mathbf j,0}$. By
\eqref{eq:fm-V-and-Vj0},
\[
V_{\mathbf j,0}(x,t)
=
\frac{
    \|x-tg^*(u_{\mathbf j})\|^2
}{
    2q_t
}.
\]
On $\mathcal C_{[0,1]}^*$,
$\|x\|_2\le\sqrt D\,M$, while
$\|g^*(u_{\mathbf j})\|_2\le1$.
Since
\[
q_t
\ge
\frac{\sigma^2}{1+\sigma^2},
\]
we have
\[
\begin{aligned}
V_{\mathbf j,0}(x,t)
&\le
\frac{1+\sigma^2}{2\sigma^2}
\left(
    \sqrt D\,M+1
\right)^2
\\
&\lesssim
\frac{1+\sigma^2}{\sigma^2}
D(M+1)^2 .
\end{aligned}
\]

Next,
\[
V(t)=\frac{t^2}{2q_t}.
\]
For $t>0$,
\[
\frac{t^2}{q_t}
=
\frac{
    1
}{
    ((1-t)/t)^2+\sigma^2
}
\le
\frac1{\sigma^2},
\]
and equality is attained at $t=1$. Since $V(0)=0$,
\[
\|V\|_{L^\infty([0,1])}
=
\frac1{2\sigma^2}.
\]

For
$\mathbf k\in\mathbb Z_+^d$ with $1\le|\mathbf k|\le r$,
\eqref{eq:fm-Vjk} gives
\[
\begin{aligned}
|V_{\mathbf j,\mathbf k}(x,t)|
&\le
\frac{t}{q_t}
\|x-tg^*(u_{\mathbf j})\|_2
\|\partial^{\mathbf k}g^*(u_{\mathbf j})\|_2
\\
&\le
\frac{1+\sigma^2}{\sigma^2}
(\sqrt D\,M+1)\sqrt D\,H
\\
&\lesssim
\frac{1+\sigma^2}{\sigma^2}
D H(M+1).
\end{aligned}
\]

We next control the local Taylor increment. By
\eqref{eq:local-poly-without-constant},
\[
g_{\mathbf j}^\circ
\left(
    u_{\mathbf j}-\frac{w}{N}
\right)
-g^*(u_{\mathbf j})
=
\sum_{\substack{
\mathbf k\in\mathbb Z_+^d\\
1\le|\mathbf k|\le r
}}
\frac{
    \partial^{\mathbf k}g^*(u_{\mathbf j})
}{
    \mathbf k!
}
\left(
    -\frac{w}{N}
\right)^{\mathbf k}.
\]
Since
\[
\|\partial^{\mathbf k}g^*(u_{\mathbf j})\|_2
\le
\sqrt D\,H,
\]
the multinomial theorem yields
\[
\begin{aligned}
&
\left\|
g_{\mathbf j}^\circ
\left(
    u_{\mathbf j}-\frac{w}{N}
\right)
-g^*(u_{\mathbf j})
\right\|
\\
&\qquad\le
\sqrt D\,H
\sum_{m=1}^r
\sum_{\substack{
\mathbf k\in\mathbb Z_+^d\\
|\mathbf k|=m
}}
\frac{
    N^{-m}w^{\mathbf k}
}{
    \mathbf k!
}
\\
&\qquad=
\sqrt D\,H
\sum_{m=1}^r
\frac{
    (N^{-1}\|w\|_1)^m
}{
    m!
}
\\
&\qquad\le
\sqrt D\,H
\left(
    e^{d/N}-1
\right),
\end{aligned}
\]
which proves
\eqref{eq:fm-local-poly-increment-bound}.

We now verify the two bounds required by
Lemma~\ref{lem:low-dimensional-integral-map}.
For every Taylor coordinate of $a_{\mathbf j}$,
\[
\left|
\frac{
    2C_{\mathbf j,\mathbf k}
    (-N^{-1}w)^{\mathbf k}
}{
    \mathbf k!
}
\right|
\le
\frac{
    2C_{\mathbf j,\mathbf k}
}{
    N^{|\mathbf k|}
}
\le
\frac{
    2C_{\mathbf j,\mathbf k}
}{
    N
}.
\]
By the bound on $V_{\mathbf j,\mathbf k}$ above,
\[
\frac{
    C_{\mathbf j,\mathbf k}
}{
    N
}
\lesssim
\frac1N
+
\frac1N
\frac{1+\sigma^2}{\sigma^2}
D(H\vee1)(M+1).
\]
Hence \eqref{eq:fm-step5-small-cell} implies, after reducing $c_0$
if necessary,
\[
\sup_{\substack{
\mathbf k\in\mathbb Z_+^d\\
1\le|\mathbf k|\le r
}}
\sup_{w\in[0,1]^d}
\left|
\frac{
    2C_{\mathbf j,\mathbf k}
    (-N^{-1}w)^{\mathbf k}
}{
    \mathbf k!
}
\right|
\le1 .
\]

For the last coordinate of $a_{\mathbf j}$,
\[
\begin{aligned}
&
\|V\|_{L^\infty([0,1])}
\left\|
g_{\mathbf j}^\circ
\left(
    u_{\mathbf j}-\frac{w}{N}
\right)
-g^*(u_{\mathbf j})
\right\|^2
\\
&\qquad\le
\frac{
    D H^2
}{
    2\sigma^2
}
\left(
    e^{d/N}-1
\right)^2
\le1 ,
\end{aligned}
\]
again under \eqref{eq:fm-step5-small-cell}, after decreasing $c_0$
if necessary. This proves
\eqref{eq:fm-aj-uniform-bound}.

Finally, write
\[
v
=
\left(
(v_{\mathbf k})_
{\substack{
\mathbf k\in\mathbb Z_+^d\\
1\le|\mathbf k|\le r
}},
v_V
\right)
\in
[0,1]^{\binom{d+r}{d}}.
\]
Using \eqref{eq:fm-aj} and \eqref{eq:fm-bj},
\[
\begin{aligned}
&
-v^\top a_{\mathbf j}(w)
-b_{\mathbf j}(w)
\\
&=
\sum_{\substack{
\mathbf k\in\mathbb Z_+^d\\
1\le|\mathbf k|\le r
}}
(1-2v_{\mathbf k})
C_{\mathbf j,\mathbf k}
\frac{
    (-N^{-1}w)^{\mathbf k}
}{
    \mathbf k!
}
\\
&\qquad
-
v_V
\|V\|_{L^\infty([0,1])}
\left\|
g_{\mathbf j}^\circ
\left(
    u_{\mathbf j}-\frac{w}{N}
\right)
-g^*(u_{\mathbf j})
\right\|^2 .
\end{aligned}
\]
The last term is non-positive. Therefore
\[
\begin{aligned}
-v^\top a_{\mathbf j}(w)-b_{\mathbf j}(w)
&\le
\max_{\substack{
\mathbf k\in\mathbb Z_+^d\\
1\le|\mathbf k|\le r
}}
C_{\mathbf j,\mathbf k}
\sum_{m=1}^r
\frac{(d/N)^m}{m!}
\\
&\le
\max_{\substack{
\mathbf k\in\mathbb Z_+^d\\
1\le|\mathbf k|\le r
}}
C_{\mathbf j,\mathbf k}
\left(
    e^{d/N}-1
\right).
\end{aligned}
\]
Since
\[
    e^{d/N}-1
    \lesssim_d
    \frac{1}{N},
\]
and
\[
    C_{\mathbf j,\mathbf k}
    =
    1\vee
    \|V_{\mathbf j,\mathbf k}\|_
    {L^\infty(\mathcal C_{[0,1]}^*)}
    \lesssim
    1+
    \frac{1+\sigma^2}{\sigma^2}
    D H(M+1),
\]
the small-cell condition
\eqref{eq:fm-step5-small-cell} implies, after reducing $c_0$ if
necessary, that
\[
    \max_{\substack{
        \mathbf k\in\mathbb Z_+^d\\
        1\le|\mathbf k|\le r
    }}
    C_{\mathbf j,\mathbf k}
    \left(
        e^{d/N}-1
    \right)
    \le 1.
\]
Therefore,
\[
\sup_{\substack{
    v\in[0,1]^{\binom{d+r}{d}}\\
    w\in[0,1]^d
}}
\left\{
    -v^\top a_{\mathbf j}(w)
    -b_{\mathbf j}(w)
\right\}
\le 1,
\]
which proves
\eqref{eq:fm-exponent-uniform-bound}.
\end{proof}
The following lemma combines these estimates into the neural approximation
of a normalized local integral.

\begin{lemma}[Approximation of a normalized local FM integral]
\label{lem:fm-local-integral}
Let $\varepsilon'\in(0,1)$ and fix
$\mathbf j\in\{1,\ldots,N\}^d$.
Let
$\psi_{\mathbf j}:[0,1]^d\to\mathbb R$ satisfy
\[
    \|\psi_{\mathbf j}\|_{L^\infty([0,1]^d)}
    \le2,
\]
and define
\begin{equation}
\Upsilon_{\mathbf j}(x,t)
:=
e^{-V_{\mathbf j,0}(x,t)}
\int_{[0,1]^d}
\psi_{\mathbf j}(w)
\exp\left\{
    -R_{\mathbf j}(x,t)^\top a_{\mathbf j}(w)
    -b_{\mathbf j}(w)
\right\}dw .
\label{eq:fm-Upsilon-j}
\end{equation}
Assume that
\begin{equation}
\frac{1}{N}
\left[
    1+
    \frac{1+\sigma^2}{\sigma^2}
    D(H\vee1)(M+1)
\right]
\le c_0,
\label{eq:fm-step5-small-cell}
\end{equation}
where $c_0>0$ is sufficiently small and depends only on $d$ and
$\beta$. Then there exists a ReLU network
\[
    \widetilde\Upsilon_{\mathbf j}
    \in
    \mathrm{NN}
    (L_\Upsilon,\mathbf W_\Upsilon,S_\Upsilon,B_\Upsilon)
\]
such that
\begin{equation}
\left\|
    \widetilde\Upsilon_{\mathbf j}
    -\Upsilon_{\mathbf j}
\right\|_
{L^\infty(\mathcal C_{[0,1]}^*)}
\lesssim_{d,\beta}
\frac{\varepsilon'}{N}.
\label{eq:fm-Upsilon-error}
\end{equation}
Moreover,
\begin{equation}
\begin{aligned}
L_\Upsilon
&\lesssim_{d,\beta}
\log^2\!\left(\frac{N}{\varepsilon'}\right)
+\log^2\!\bigl(MD(H\vee1)\bigr)
+\log^2\!\left(\frac{1+\sigma^2}{\sigma^2}\right)
\\
&\qquad+
\left(
    \log N+\log\frac1{\varepsilon'}
\right)
\log^2\!\left(
    e+\log N+\log\frac1{\varepsilon'}
\right),
\\[1mm]
\|\mathbf W_\Upsilon\|_\infty
&\lesssim_{d,\beta}
D\left[
\log^3\!\left(\frac{N}{\varepsilon'}\right)
+\log^3\!\bigl(MD(H\vee1)\bigr)
+\log^3\!\left(\frac{1+\sigma^2}{\sigma^2}\right)
\right]
\\
&\qquad\vee
\left(
    1+\log N+\log\frac1{\varepsilon'}
\right)^{
    \binom{d+r}{d}+1
},
\\[1mm]
S_\Upsilon
&\lesssim_{d,\beta}
D\left[
\log^4\!\left(\frac{N}{\varepsilon'}\right)
+\log^4\!\bigl(MD(H\vee1)\bigr)
+\log^4\!\left(\frac{1+\sigma^2}{\sigma^2}\right)
\right]
\\
&\qquad+
\left(
    1+\log N+\log\frac1{\varepsilon'}
\right)^{
    2\binom{d+r}{d}+5
},
\\[1mm]
\log B_\Upsilon
&\lesssim_{d,\beta}
\log^2\!\left(\frac{N}{\varepsilon'}\right)
+\log^2\!\bigl(MD(H\vee1)\bigr)
+\log^2\!\left(\frac{1+\sigma^2}{\sigma^2}\right).
\end{aligned}
\label{eq:fm-Upsilon-configuration}
\end{equation}
The hidden constants depend only on $d$ and $\beta$.
\end{lemma}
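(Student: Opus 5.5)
The plan is to realize $\Upsilon_{\mathbf j}$ as a product of two approximable factors: the Gaussian prefactor $e^{-V_{\mathbf j,0}}$ and the low-dimensional integral
\[
\Phi_{\mathbf j}(v):=\int_{[0,1]^d}\psi_{\mathbf j}(w)\exp\{-v^\top a_{\mathbf j}(w)-b_{\mathbf j}(w)\}\,dw,\qquad v\in[0,1]^{\binom{d+r}{d}},
\]
evaluated at $v=R_{\mathbf j}(x,t)$. Each factor is handled by earlier results, and the pieces are combined with a product network.

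\textbf{Integral factor.} Under \eqref{eq:fm-step5-small-cell}, the coefficient bounds give \eqref{eq:fm-aj-uniform-bound} and \eqref{eq:fm-exponent-uniform-bound}. Hence Lemma~\ref{lem:low-dimensional-integral-map} applies with $m=\binom{d+r}{d}$, $\varphi_{\max}=2$, $a_{\max}=1$ and $A=1$, so all hidden constants depend only on $(d,\beta)$. I will apply it at accuracy $\delta\asymp\varepsilon'/N$. This produces a network $\widetilde\Phi_{\mathbf j}$ whose size matches the $(1+\log N+\log(1/\varepsilon'))^{m+1}$ and $(\cdots)^{2m+5}$ terms in \eqref{eq:fm-Upsilon-configuration}. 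Since \eqref{eq:fm-exponent-uniform-bound} holds on all of $[0,1]^m$, we also have $|\Phi_{\mathbf j}|\le 2e$, and the gradient satisfies $\|\nabla\Phi_{\mathbf j}\|_\infty\le 2e\cdot\sup\|a_{\mathbf j}\|_1\lesssim_{d,\beta}1$.

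\textbf{Input coordinates.} The scalar coordinate networks $\widetilde V_{\mathbf j,\mathbf k}$ and $\widetilde V$ are already constructed at accuracy $\varepsilon''$. Composing them with the clipped normalization from the normalization bounds gives $\widetilde R_{\mathbf j}\in[0,1]^m$ with $\|\widetilde R_{\mathbf j}-R_{\mathbf j}\|_\infty\le\varepsilon''$; this uses $C_{\mathbf j,\mathbf k}\ge1$ and $\|V\|_\infty=1/(2\sigma^2)\ge 1/2$. The clipping is needed so that $\widetilde R_{\mathbf j}$ stays in the domain of $\widetilde\Phi_{\mathbf j}$. By the Lipschitz bound on $\Phi_{\mathbf j}$, the composition error is at most $\delta+C_{d,\beta}\varepsilon''$. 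I will therefore take $\varepsilon''\asymp\varepsilon'/N$, which only changes the logarithms into $\log(N/\varepsilon')$, consistent with \eqref{eq:fm-step4-common-configuration}.

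\textbf{Prefactor.} Since $V_{\mathbf j,0}\ge0$, I will approximate $z\mapsto e^{-z}$ on $[0,Z]$, where $Z\lesssim\frac{1+\sigma^2}{\sigma^2}D(M+1)^2$ by the coefficient bounds. I will first cut the input off at a level $z_0\asymp\log(N/\varepsilon')$, beyond which $e^{-z}\le\varepsilon'/N$. On $[0,z_0]$ a standard ReLU approximation of the exponential uses depth and width polylogarithmic in $N/\varepsilon'$. Because $e^{-z}$ is $1$-Lipschitz on $[0,\infty)$, the input error $\varepsilon''$ from $\widetilde V_{\mathbf j,0}$ transfers directly. Finally, both factors are bounded by $2e\vee1$, so the product network of Yarotsky type on $[-3e,3e]^2$ at accuracy $\varepsilon'/N$ gives \eqref{eq:fm-Upsilon-error}. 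The standard inequality $|ab-\tilde a\tilde b|\le|a||b-\tilde b|+|\tilde b||a-\tilde a|$ adds only constant factors.

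\textbf{Architecture.} The subnetworks are run in parallel and then composed, with identity padding to align depths. Depths add, widths take the maximum, and sparsities add. Magnitudes stay bounded by $\log B\lesssim\log^2(N/\varepsilon')+\log^2(MD(H\vee1))+\log^2((1+\sigma^2)/\sigma^2)$, since the scalar networks carry the $D$-dependent weights and the normalizations divide by $C_{\mathbf j,\mathbf k}$, whose logarithm is of that order. This yields \eqref{eq:fm-Upsilon-configuration}.

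\textbf{Main obstacle.} The hardest part is the error bookkeeping: all errors must come out at order $\varepsilon'/N$ uniformly in $\mathbf j$, which requires the Lipschitz control of $\Phi_{\mathbf j}$ in $v$. This control comes only from \eqref{eq:fm-aj-uniform-bound}, so the small-cell condition is used there in an essential way.
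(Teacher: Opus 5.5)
Your proposal is correct and follows essentially the same route as the paper. It uses clipped normalized coordinates composed with the low-dimensional integral network of Lemma~\ref{lem:low-dimensional-integral-map} at accuracy $\varepsilon'/N$, a separate approximation of $e^{-V_{\mathbf j,0}}$, and a final product network.

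The differences are minor. You bound $\Phi_{\mathbf j}(\widetilde R_{\mathbf j})-\Phi_{\mathbf j}(R_{\mathbf j})$ by a plain $\ell_\infty$-Lipschitz estimate, which forces every coordinate to accuracy $\varepsilon'/N$. Two small corrections there: the relevant constant is $\sup_v\|\nabla\Phi_{\mathbf j}(v)\|_1\le 2e\binom{d+r}{d}$, and you should clip the prefactor input below at $0$, since $\widetilde V_{\mathbf j,0}$ can be slightly negative. The paper instead bounds the weighted error $\sup_w|a_{\mathbf j}(w)^\top(\widetilde R_{\mathbf j}-R_{\mathbf j})|$, so the Taylor coordinates $V_{\mathbf j,\mathbf k}$ need only accuracy $\varepsilon'$. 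It handles the negative prefactor input via Corollary~\ref{cor:relu-exp-shift} rather than clipping. Both versions yield the stated $\log(N/\varepsilon')$ configuration.
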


\begin{proof}
Choose the scalar-coordinate networks with the refined
accuracy allocation
\begin{equation}
\begin{aligned}
    \|\widetilde V_{\mathbf j,0}
      -V_{\mathbf j,0}\|_
      {L^\infty(\mathcal C_{[0,1]}^*)}
    &\le
    \frac{\varepsilon'}{N},
\\
    \|\widetilde V-V\|_{L^\infty([0,1])}
    &\le
    \frac{\varepsilon'}{N},
\end{aligned}
\label{eq:fm-step5-refined-accuracy}
\end{equation}
while, for every
$\mathbf k\in\mathbb Z_+^d$ with $1\le|\mathbf k|\le r$,
\begin{equation}
    \|\widetilde V_{\mathbf j,\mathbf k}
      -V_{\mathbf j,\mathbf k}\|_
      {L^\infty(\mathcal C_{[0,1]}^*)}
    \le
    \varepsilon'.
    \label{eq:fm-step5-Vjk-accuracy}
\end{equation}
Since these constructions admit arbitrary prescribed
accuracies, the common configuration
\eqref{eq:fm-step4-common-configuration} remains valid after replacing
$\log(1/\varepsilon')$ by $\log(N/\varepsilon')$.

\smallskip\noindent\textbf{Normalized-coordinate approximation.}

For
$\mathbf k\in\mathbb Z_+^d$ with $1\le|\mathbf k|\le r$, define
\[
\widetilde R_{\mathbf j,\mathbf k}(x,t)
:=
\operatorname{clip}_{[0,1]}
\left(
    \frac{
        \widetilde V_{\mathbf j,\mathbf k}(x,t)
    }{
        2C_{\mathbf j,\mathbf k}
    }
    +\frac12
\right),
\]
and
\[
\widetilde R_{\mathbf j,V}(t)
:=
\operatorname{clip}_{[0,1]}
\left(
    \frac{
        \widetilde V(t)
    }{
        \|V\|_{L^\infty([0,1])}
    }
\right).
\]
Let $\widetilde R_{\mathbf j}$ denote the vector obtained by
concatenating these coordinates in the same order as in
\eqref{eq:fm-Rj}. By construction,
\[
\widetilde R_{\mathbf j}(x,t)
\in
[0,1]^{\binom{d+r}{d}}.
\]

The normalization bounds above together with
\eqref{eq:fm-step5-refined-accuracy} and
\eqref{eq:fm-step5-Vjk-accuracy} yields
\[
\left|
\widetilde R_{\mathbf j,\mathbf k}
-
R_{\mathbf j,\mathbf k}
\right|
\le
\frac{\varepsilon'}{2C_{\mathbf j,\mathbf k}},
\qquad
1\le|\mathbf k|\le r,
\]
and
\[
\left|
\widetilde R_{\mathbf j,V}
-
R_{\mathbf j,V}
\right|
\le
\frac{
    \varepsilon'/N
}{
    \|V\|_{L^\infty([0,1])}
}.
\]
Hence, by the definition of $a_{\mathbf j}$,
\begin{align*}
&
\left|
a_{\mathbf j}(w)^\top
\left(
    \widetilde R_{\mathbf j}(x,t)
    -
    R_{\mathbf j}(x,t)
\right)
\right|
\nonumber\\
&\qquad\le
\varepsilon'
\sum_{\substack{
\mathbf k\in\mathbb Z_+^d\\
1\le|\mathbf k|\le r
}}
\frac{
    |(-N^{-1}w)^{\mathbf k}|
}{
    \mathbf k!
}
\nonumber\\
&\qquad\quad+
\frac{\varepsilon'}{N}
\left\|
g_{\mathbf j}^\circ
\left(
    u_{\mathbf j}-\frac{w}{N}
\right)
-g^*(u_{\mathbf j})
\right\|^2 .
\end{align*}

By the multinomial theorem,
\begin{align*}
\sum_{\substack{
\mathbf k\in\mathbb Z_+^d\\
1\le|\mathbf k|\le r
}}
\frac{
    |(-N^{-1}w)^{\mathbf k}|
}{
    \mathbf k!
}
&\le
e^{d/N}-1
\nonumber\\
&\lesssim_d
\frac1N .
\end{align*}
Moreover, by \eqref{eq:g-circ-bound} and
\eqref{eq:generator-assumption},
\[
\left\|
g_{\mathbf j}^\circ
\left(
    u_{\mathbf j}-\frac{w}{N}
\right)
-g^*(u_{\mathbf j})
\right\|
\le
\|g^\circ\|_{L^\infty([0,1]^d)}
+
\|g^*\|_{L^\infty([0,1]^d)}
\le3.
\]
Consequently,
\begin{equation}
\sup_{w\in[0,1]^d}
\left|
a_{\mathbf j}(w)^\top
\left(
    \widetilde R_{\mathbf j}
    -
    R_{\mathbf j}
\right)
\right|
\lesssim_{d,\beta}
\frac{\varepsilon'}{N}.
\label{eq:fm-weighted-R-error-final}
\end{equation}

\smallskip\noindent\textbf{Low-dimensional integral map.}

Define
\begin{equation*}
\Phi_{\mathbf j}(v)
:=
\int_{[0,1]^d}
\psi_{\mathbf j}(w)
\exp\left\{
    -v^\top a_{\mathbf j}(w)
    -b_{\mathbf j}(w)
\right\}\,dw,
\qquad
v\in
[0,1]^{\binom{d+r}{d}}.
\end{equation*}
Then
\[
\Psi_{\mathbf j}(x,t)
:=
\int_{[0,1]^d}
\psi_{\mathbf j}(w)
\exp\left\{
    -R_{\mathbf j}(x,t)^\top a_{\mathbf j}(w)
    -b_{\mathbf j}(w)
\right\}\,dw
=
\Phi_{\mathbf j}(R_{\mathbf j}(x,t)).
\]

By the intermediate-coefficient bounds above,
the assumptions of
Lemma~\ref{lem:low-dimensional-integral-map} hold with
$\varphi_{\max}=2$, $a_{\max}=1$, and $A=1$.
Applying that lemma with
\[
    \delta=\frac{\varepsilon'}{N}
\]
gives a ReLU network
$\widetilde\Phi_{\mathbf j}$ satisfying
\begin{equation}
\left\|
\widetilde\Phi_{\mathbf j}
-
\Phi_{\mathbf j}
\right\|_
{L^\infty([0,1]^{\binom{d+r}{d}})}
\le
\frac{\varepsilon'}{N}.
\label{eq:fm-Phi-j-error}
\end{equation}

Since the input dimension of $\Phi_{\mathbf j}$ is
$\binom{d+r}{d}$, its configuration satisfies
\[
L_\Phi
\lesssim_{d,\beta}
\left(
    1+\log N+\log\frac1{\varepsilon'}
\right)
\log^2\left(
    e+\log N+\log\frac1{\varepsilon'}
\right),
\]
\[
\|\mathbf W_\Phi\|_\infty
\lesssim_{d,\beta}
\left(
    1+\log N+\log\frac1{\varepsilon'}
\right)^{
    \binom{d+r}{d}+1
},
\]
and
\[
S_\Phi
\lesssim_{d,\beta}
\left(
    1+\log N+\log\frac1{\varepsilon'}
\right)^{
    2\binom{d+r}{d}+5
},
\]
where the dependence on $d$ and $\beta$ is absorbed into the hidden
constants.

\smallskip\noindent\textbf{Composition with the normalized coordinates.}

Set
\[
\widetilde\Psi_{\mathbf j}(x,t)
:=
\widetilde\Phi_{\mathbf j}
\left(
    \widetilde R_{\mathbf j}(x,t)
\right).
\]
Since both
$R_{\mathbf j}(x,t)$ and
$\widetilde R_{\mathbf j}(x,t)$ belong to
$[0,1]^{\binom{d+r}{d}}$,
\begin{align*}
&
\left|
\widetilde\Psi_{\mathbf j}(x,t)
-
\Psi_{\mathbf j}(x,t)
\right|
\nonumber\\
&\qquad\le
\left|
\widetilde\Phi_{\mathbf j}
(\widetilde R_{\mathbf j})
-
\Phi_{\mathbf j}
(\widetilde R_{\mathbf j})
\right|
+
\left|
\Phi_{\mathbf j}
(\widetilde R_{\mathbf j})
-
\Phi_{\mathbf j}
(R_{\mathbf j})
\right|.
\end{align*}
The first term is bounded by $\varepsilon'/N$ by
\eqref{eq:fm-Phi-j-error}.

For the second term, differentiating under the integral gives
\[
\nabla\Phi_{\mathbf j}(v)
=
-
\int_{[0,1]^d}
\psi_{\mathbf j}(w)
a_{\mathbf j}(w)
\exp\left\{
    -v^\top a_{\mathbf j}(w)
    -b_{\mathbf j}(w)
\right\}\,dw .
\]
By the Newton--Leibniz formula,
\begin{align}
&
\left|
\Phi_{\mathbf j}
(\widetilde R_{\mathbf j})
-
\Phi_{\mathbf j}
(R_{\mathbf j})
\right|
\nonumber\\
&\qquad\le
2
\sup_{\substack{
v\in[0,1]^{\binom{d+r}{d}}\\
w\in[0,1]^d
}}
\exp\left\{
    -v^\top a_{\mathbf j}(w)
    -b_{\mathbf j}(w)
\right\}
\nonumber\\
&\qquad\quad\times
\sup_{w\in[0,1]^d}
\left|
a_{\mathbf j}(w)^\top
\left(
    \widetilde R_{\mathbf j}
    -
    R_{\mathbf j}
\right)
\right|.
\end{align}
Using
\eqref{eq:fm-exponent-uniform-bound} and
\eqref{eq:fm-weighted-R-error-final},
\[
\left|
\Phi_{\mathbf j}
(\widetilde R_{\mathbf j})
-
\Phi_{\mathbf j}
(R_{\mathbf j})
\right|
\lesssim_{d,\beta}
\frac{\varepsilon'}{N}.
\]
Consequently,
\begin{equation}
\left\|
\widetilde\Psi_{\mathbf j}
-
\Psi_{\mathbf j}
\right\|_
{L^\infty(\mathcal C_{[0,1]}^*)}
\lesssim_{d,\beta}
\frac{\varepsilon'}{N}.
\label{eq:fm-Psi-final-error}
\end{equation}

The network
$\widetilde R_{\mathbf j}$ is obtained by parallelizing a fixed
number $\binom{d+r}{d}$ of the coordinate subnetworks and appending
exact ReLU clipping maps. Hence its depth is of the same order as
their depth, while its width and sparsity increase only by a
factor depending on $d$ and $\beta$.
Composing it with $\widetilde\Phi_{\mathbf j}$ therefore gives the
first two complexity blocks in
\eqref{eq:fm-Upsilon-configuration}.

\smallskip\noindent\textbf{Outer exponential and final product.}

By construction,
\[
    V_{\mathbf j,0}(x,t)\ge0 .
\]
For the local-integral construction, the coordinate networks are
instantiated at accuracy
$\varepsilon'/N$, and hence
\eqref{eq:fm-step5-refined-accuracy} gives
\[
\left\|
\widetilde V_{\mathbf j,0}
-
V_{\mathbf j,0}
\right\|_
{L^\infty(\mathcal C_{[0,1]}^*)}
\le
\frac{\varepsilon'}{N}.
\]
Therefore
\[
    \widetilde V_{\mathbf j,0}(x,t)
    \ge
    -\frac{\varepsilon'}{N}.
\]
Applying Corollary~\ref{cor:relu-exp-shift} with
\[
    a=\frac{\varepsilon'}{N}
\]
and exponential approximation accuracy $\varepsilon'/N$, we obtain
a ReLU network $\widetilde E_{\mathbf j,0}$ satisfying
\begin{equation*}
\left\|
\widetilde E_{\mathbf j,0}
-
e^{-V_{\mathbf j,0}}
\right\|_
{L^\infty(\mathcal C_{[0,1]}^*)}
\lesssim
\frac{\varepsilon'}{N}.
\end{equation*}
Moreover,
\[
L_{\exp}
\lesssim
\log^2\left(\frac{N}{\varepsilon'}\right),
\qquad
\|\mathbf W_{\exp}\|_\infty
\lesssim
\log\left(\frac{N}{\varepsilon'}\right),
\]
\[
S_{\exp}
\lesssim
\log^2\left(\frac{N}{\varepsilon'}\right),
\qquad
\log B_{\exp}
\lesssim
\log^2\left(\frac{N}{\varepsilon'}\right).
\]

By \eqref{eq:fm-exponent-uniform-bound} and
$\|\psi_{\mathbf j}\|_\infty\le2$,
\[
    |\Psi_{\mathbf j}(x,t)|
    \le 2e,
\]
and \eqref{eq:fm-Psi-final-error} therefore implies
$|\widetilde\Psi_{\mathbf j}(x,t)|\lesssim1$.
Likewise,
\[
    e^{-V_{\mathbf j,0}(x,t)}\le1,
    \qquad
    |\widetilde E_{\mathbf j,0}(x,t)|\lesssim1.
\]

Apply the multiplication-network lemma from the neural-network
toolkit on a fixed bounded interval with base accuracy
$\varepsilon'/N$. For a sufficiently large universal constant $C$,
its perturbation estimate gives
\begin{align}
&
\left|
\phi_{\mathrm{prod}}
\left(
    \widetilde E_{\mathbf j,0},
    \widetilde\Psi_{\mathbf j}
\right)
-
e^{-V_{\mathbf j,0}}
\Psi_{\mathbf j}
\right|
\nonumber\\
&\qquad\le
\frac{\varepsilon'}{N}
+
2C
\left(
\left|
\widetilde E_{\mathbf j,0}
-
e^{-V_{\mathbf j,0}}
\right|
\vee
\left|
\widetilde\Psi_{\mathbf j}
-
\Psi_{\mathbf j}
\right|
\right)
\nonumber\\
&\qquad\lesssim_{d,\beta}
\frac{\varepsilon'}{N}.
\end{align}
Thus
\[
\widetilde\Upsilon_{\mathbf j}
:=
\phi_{\mathrm{prod}}
\left(
    \widetilde E_{\mathbf j,0},
    \widetilde\Psi_{\mathbf j}
\right)
\]
satisfies
\[
\left\|
\widetilde\Upsilon_{\mathbf j}
-
\Upsilon_{\mathbf j}
\right\|_
{L^\infty(\mathcal C_{[0,1]}^*)}
\lesssim_{d,\beta}
\frac{\varepsilon'}{N}.
\]

The networks used here belong to the common class
\eqref{eq:fm-step4-common-configuration} with
$\log(1/\varepsilon')$ replaced by
$\log(N/\varepsilon')$.
The parallelization producing
$\widetilde R_{\mathbf j}$ changes width and sparsity only by a
factor depending on $d$ and $\beta$.
The low-dimensional network
$\widetilde\Phi_{\mathbf j}$ contributes
\[
\left(
    1+\log N+\log\frac1{\varepsilon'}
\right)^{
    \binom{d+r}{d}+1
}
\]
to the width and
\[
\left(
    1+\log N+\log\frac1{\varepsilon'}
\right)^{
    2\binom{d+r}{d}+5
}
\]
to the sparsity.
The exponential and multiplication networks contribute only lower
polylogarithmic terms. Combining these bounds with the standard
parallelization and composition rules yields precisely
\eqref{eq:fm-Upsilon-configuration}.
\end{proof}

We first apply Lemma~\ref{lem:fm-local-integral} with
$\psi_{\mathbf j}\equiv1$. By
\eqref{eq:fm-cell-change-of-variables}, there exists a scalar ReLU
network $Q_{\mathbf j}$ such that
\begin{equation}
\left|
N^d
\int_{\mathcal U_{\mathbf j}}
\exp\left\{
    -\frac{
        \|x-tg_{\mathbf j}^\circ(u)\|^2
    }{
        2q_t
    }
\right\}du
-
Q_{\mathbf j}(x,t)
\right|
\lesssim_{d,\beta}
\frac{\varepsilon'}{N}.
\label{eq:fm-Qj-local-error}
\end{equation}

Next, for each $1\le l\le D$, take
\[
    \psi_{\mathbf j}(w)
    :=
    g_{\mathbf j,l}^\circ
    \left(
        u_{\mathbf j}-\frac{w}{N}
    \right).
\]
By \eqref{eq:g-local-cell-error},
\eqref{eq:generator-assumption}, and the standing choice of
$\varepsilon$ for which the right-hand side of
\eqref{eq:g-local-poly-error} is at most one,
\[
\begin{aligned}
\sup_{w\in[0,1]^d}
\left\|
g_{\mathbf j}^\circ
\left(
    u_{\mathbf j}-\frac{w}{N}
\right)
\right\|
&\le
\sup_{u\in\overline{\mathcal U_{\mathbf j}}}
\|g^*(u)\|
\\
&\quad+
\sup_{u\in\overline{\mathcal U_{\mathbf j}}}
\|g_{\mathbf j}^\circ(u)-g^*(u)\|
\\
&\le 2.
\end{aligned}
\]
Hence
\[
    \|\psi_{\mathbf j}\|_{L^\infty([0,1]^d)}
    \le2,
\]
and Lemma~\ref{lem:fm-local-integral} yields scalar ReLU networks
$P_{\mathbf j,l}$ satisfying
\begin{equation}
\max_{1\le l\le D}
\left|
N^d
\int_{\mathcal U_{\mathbf j}}
g_{\mathbf j,l}^\circ(u)
\exp\left\{
    -\frac{
        \|x-tg_{\mathbf j}^\circ(u)\|^2
    }{
        2q_t
    }
\right\}du
-
P_{\mathbf j,l}(x,t)
\right|
\lesssim_{d,\beta}
\frac{\varepsilon'}{N}.
\label{eq:fm-Pjl-local-error}
\end{equation}

Recall the exact numerator and denominator
$P^\circ=(P_1^\circ,\ldots,P_D^\circ)^\top$ and $Q^\circ$ from
\eqref{eq:posterior-cell-decomposition}. We restore the cell
volumes and assemble the local integral networks. Define
\begin{equation*}
Q(x,t):=\sum_{\mathbf j\in\{1,\ldots,N\}^d}N^{-d}Q_{\mathbf j}(x,t),\qquad
P_l(x,t):=\sum_{\mathbf j\in\{1,\ldots,N\}^d}N^{-d}P_{\mathbf j,l}(x,t),
\quad 1\le l\le D,
\end{equation*}
and set
\begin{equation*}
P(x,t):=(P_1(x,t),\ldots,P_D(x,t))^\top .
\end{equation*}

Since there are exactly $N^d$ cells, \eqref{eq:fm-Qj-local-error} and
\eqref{eq:fm-Pjl-local-error} imply
\begin{equation}
\|Q-Q^\circ\|_{L^\infty(\mathcal C_{[0,1]}^*)}
\lesssim_{d,\beta}\frac{\varepsilon'}{N},
\qquad
\max_{1\le l\le D}
\|P_l-P_l^\circ\|_{L^\infty(\mathcal C_{[0,1]}^*)}
\lesssim_{d,\beta}\frac{\varepsilon'}{N}.
\label{eq:fm-step5-PQ-handoff}
\end{equation}
We retain the numerator estimate in this coordinatewise form because the subsequent division is performed separately for each scalar ratio $P_l^\circ/Q^\circ$. Thus no Euclidean norm conversion, and hence no factor $\sqrt D$, is needed at this stage.

By
Lemma~\ref{lem:parallelization-network}, the scalar denominator network
$Q$ and each scalar numerator-coordinate network $P_l$ can be realized
with
\begin{equation*}
\begin{aligned}
    L_Q,\;L_{P_l}
    &\le L_\Upsilon+1, \\
    \|\mathbf W_Q\|_\infty,\;\|\mathbf W_{P_l}\|_\infty
    &\lesssim N^d\|\mathbf W_\Upsilon\|_\infty, \\
    S_Q,\;S_{P_l}
    &\lesssim N^d(S_\Upsilon+1), \\
    B_Q,\;B_{P_l}
    &\lesssim B_\Upsilon\vee1 .
\end{aligned}
\end{equation*}
The coefficients $N^{-d}$ appearing in the final summation are
implemented exactly by the output affine layer and have magnitude at
most one.

Parallelizing the $D$ numerator-coordinate networks gives the
$D$-output ReLU network $P$ satisfying
\begin{equation*}
\begin{aligned}
    L_P
    &\le L_\Upsilon+1, \\
    \|\mathbf W_P\|_\infty
    &\lesssim DN^d\|\mathbf W_\Upsilon\|_\infty, \\
    S_P
    &\lesssim DN^d(S_\Upsilon+1), \\
    B_P
    &\lesssim B_\Upsilon\vee1 .
\end{aligned}
\end{equation*}
Thus the ambient dimension enters the network size through the
parallelization of the $D$ output coordinates, whereas the
approximation tolerance for each numerator coordinate remains of order
$\varepsilon'/N$.

Finally, parallelizing $P$ and $Q$ gives a $(D+1)$-output ReLU network
$(P,Q)$ with
\begin{equation*}
\begin{aligned}
    L_{P,Q}
    &\le L_\Upsilon+1, \\
    \|\mathbf W_{P,Q}\|_\infty
    &\lesssim (D+1)N^d\|\mathbf W_\Upsilon\|_\infty, \\
    S_{P,Q}
    &\lesssim (D+1)N^d(S_\Upsilon+1), \\
    B_{P,Q}
    &\lesssim B_\Upsilon\vee1 .
\end{aligned}
\end{equation*}
Hence the local-integral construction reduces the remaining approximation
problem to obtaining
a uniform positive lower bound for $Q^\circ$ and approximating the
scalar ratios $P_l^\circ/Q^\circ$, $1\le l\le D$.

\paragraph{Denominator control and posterior assembly.}

We approximate the coordinatewise ratios $P_l^\circ/Q^\circ$,
$1\le l\le D$, by first establishing that the exact denominator is
uniformly bounded away from zero on $\mathcal C_{[0,1]}^*$ and then
applying the robust division network from
\cite[Lemma A.4]{yakovlev2025generalization}.

Fix $(x,t)\in\mathcal C_{[0,1]}^*$. By the definition of
$\mathcal K_t$, there exists $u_*\in[0,1]^d$ such that
$\|x-tg^*(u_*)\|\le R_t$. Let $\mathcal U_{\mathbf j_*}$ be the unique
cell containing $u_*$. By the H\"older regularity of $g^*$ and the local
Taylor estimate in \eqref{eq:g-local-poly-error}, for every
$u\in\mathcal U_{\mathbf j_*}$,
\[
    \|g_{\mathbf j_*}^\circ(u)-g^*(u_*)\|
    \lesssim_{d,\beta} H\sqrt D\,N^{-(\beta\wedge1)} .
\]
Consequently,
\begin{equation*}
\begin{aligned}
    \frac{\|x-tg_{\mathbf j_*}^\circ(u)\|^2}{2q_t}
    &\le \frac{\|x-tg^*(u_*)\|^2}{q_t}
    +\frac{t^2\|g_{\mathbf j_*}^\circ(u)-g^*(u_*)\|^2}{q_t} \\
    &\le \frac{R_t^2}{q_t}
    +C_{d,\beta}\frac{DH^2}{\sigma^2N^{2(\beta\wedge1)}} .
\end{aligned}
\end{equation*}
Here we used $t^2/q_t\le\sigma^{-2}$ for $t\in[0,1]$. Therefore,
using only the cell $\mathcal U_{\mathbf j_*}$ in
\eqref{eq:posterior-cell-decomposition},
\begin{equation}
    Q^\circ(x,t)
    \ge N^{-d}
    \exp\left\{
        -\frac{R_t^2}{q_t}
        -C_{d,\beta}\frac{DH^2}
        {\sigma^2N^{2(\beta\wedge1)}}
    \right\}.
    \label{eq:fm-step6-Q-lower-bound}
\end{equation}
Since the integrand defining $Q^\circ$ is bounded by one,
$Q^\circ(x,t)\le1$. Moreover, by the choice of $R_t$ above,
$R_t^2/q_t$ is independent of $t$, so the lower bound
\eqref{eq:fm-step6-Q-lower-bound} is uniform over
$\mathcal C_{[0,1]}^*$.

Choose an integer $K\ge4$ such that
\begin{equation}
    2^{-K}
    \le N^{-d}
    \exp\left\{
        -\frac{R_t^2}{q_t}
        -C_{d,\beta}\frac{DH^2}
        {\sigma^2N^{2(\beta\wedge1)}}
    \right\}.
    \label{eq:fm-step6-K-choice}
\end{equation}
Then $2^{-K}\le Q^\circ(x,t)\le1$ uniformly on
$\mathcal C_{[0,1]}^*$. In particular, one may take
\[
    K\lesssim_{d,\beta}
    1+d\log N+\frac{R_t^2}{q_t}
    +\frac{DH^2}{\sigma^2N^{2(\beta\wedge1)}} .
\]

By $\|g^\circ\|_{L^\infty}\le2$, each exact numerator coordinate
satisfies
\[
    |P_l^\circ(x,t)|\le2Q^\circ(x,t).
\]
Hence
\[
    \left|\frac12P_l^\circ(x,t)\right|
    \le Q^\circ(x,t),
\]
which is precisely the numerator range required by
Lemma~\ref{lem:robust-division}. Define its output-accuracy parameter by
\[
 \eta_{\mathrm{div}}
 :=c_1\frac{\varepsilon^\beta}
 {K^2+\log^2(1/\varepsilon)},
\]
where $c_1=c_1(d,\beta)>0$ is sufficiently small, and choose
\begin{equation}
    \varepsilon'
    :=
    c_2N2^{-2K}\eta_{\mathrm{div}},
    \label{eq:fm-step6-epsilon-prime}
\end{equation}
with $c_2=c_2(d,\beta)>0$ small enough to absorb the constants in
\eqref{eq:fm-step5-PQ-handoff}. The approximations
$(P_l/2,Q)$ then differ from $(P_l^\circ/2,Q^\circ)$ by at most
$2^{-2K}\eta_{\mathrm{div}}$. Lemma~\ref{lem:robust-division}
therefore provides a scalar ReLU network $\mathcal R$
such that, for
\[
    \widetilde m_l(x,t)
    :=
    2\,\mathcal R\left(\frac12P_l(x,t),Q(x,t)\right),
    \qquad 1\le l\le D,
\]
we have
\begin{equation*}
    \max_{1\le l\le D}
    \|\widetilde m_l-m_l^\circ\|_
    {L^\infty(\mathcal C_{[0,1]}^*)}
    \lesssim_{d,\beta} \varepsilon^\beta .
\end{equation*}

Setting
$\widetilde m=(\widetilde m_1,\ldots,\widetilde m_D)^\top$ and
performing the scalar-to-vector norm conversion gives
\begin{equation}
    \|\widetilde m-m_t^\circ\|_
    {L^\infty(\mathcal C_{[0,1]}^*)}
    \lesssim_{d,\beta} \sqrt D\,\varepsilon^\beta .
    \label{eq:fm-step6-vector-posterior-error}
\end{equation}

Since $K\ge4$,
$\log(1/\eta_{\mathrm{div}})\lesssim_{d,\beta}
K+\log(1/\varepsilon)$. Hence the complexity statement in
Lemma~\ref{lem:robust-division} gives
\begin{equation*}
\begin{aligned}
    L_R
    &\lesssim_{d,\beta} K^2+\log^2\frac1\varepsilon, \\
    \|\mathbf W_R\|_\infty
    &\lesssim_{d,\beta} K^3+K\log^2\frac1\varepsilon, \\
    S_R
    &\lesssim_{d,\beta} K^4+K\log^3\frac1\varepsilon, \\
    B_R
    &\lesssim_{d,\beta} 16^K
    \left(K^2+\log^2\frac1\varepsilon\right).
\end{aligned}
\end{equation*}
Starting from the $(D+1)$-output network $(P,Q)$ constructed in
the local-integral construction, one exact affine layer forms the $D$ pairs
$(P_l/2,Q)$, $1\le l\le D$. Parallelizing $D$ copies of
$\mathcal R$ and applying the final exact scaling by $2$ yields the
$D$-output network $\widetilde m$ with
\begin{equation}
\begin{aligned}
    L_m
    &\lesssim_{d,\beta} L_{P,Q}+L_R+2, \\
    \|\mathbf W_m\|_\infty
    &\lesssim_{d,\beta} \|\mathbf W_{P,Q}\|_\infty
    +D\|\mathbf W_R\|_\infty+D, \\
    S_m
    &\lesssim_{d,\beta} S_{P,Q}+DS_R+D, \\
    B_m
    &\lesssim_{d,\beta} B_{P,Q}\vee B_R\vee2 .
\end{aligned}
\label{eq:fm-step6-posterior-configuration}
\end{equation}
All hidden constants depend only on $d$ and $\beta$.
The value-level output is $f^{\rm val}:=\widetilde m$. To record the
complexity of this value-only circuit, use
$N^d\lesssim_d\varepsilon^{-d}$,
$\log N\lesssim1+\log(1/\varepsilon)$, and
\[
 \frac{R_t^2}{q_t}\lesssim_\beta\Gamma_\varepsilon^2,
 \qquad K\lesssim_{d,\beta}\Gamma_\varepsilon^2,
 \qquad \log\frac{N}{\varepsilon'}
 \lesssim_{d,\beta}\Gamma_\varepsilon^2.
\]
Substitution into the component configurations and parallelization of the
$D$ outputs give
$L\lesssim_{d,\beta}\Gamma_\varepsilon^4$,
$W\lesssim_{d,\beta}D^2\varepsilon^{-d}
\Gamma_\varepsilon^{2m_\beta+6}$,
$S\lesssim_{d,\beta}D^2\varepsilon^{-d}
\Gamma_\varepsilon^{4m_\beta+10}$, and
$\log B\lesssim_{d,\beta}\Gamma_\varepsilon^4$.
Together with \eqref{eq:fm-step6-vector-posterior-error}, this
establishes the value-level latent-cell construction. These are baseline
bounds for the value modules, rather than the final architecture in
Theorem~\ref{thm:velocity-approximation}. The spatial refinement below
rebuilds the same latent-cell circuit with first-order modules and derives
the final common envelope in
\eqref{eq:app-regular-posterior-architecture}.

\subsubsection{First-order spatial refinement}

We refine the cell factorization from
\eqref{eq:fm-exponent-composition}--\eqref{eq:fm-cell-change-of-variables}
by replacing its scalar modules with first-order stable realizations.
Two distinct variables play different
roles. H\"older regularity of $g^*$ controls approximation in the latent
variable $u$, producing the $\varepsilon^\beta$ error and the
$\varepsilon^{-d}$ cell count. Spatial differentiability in the ambient
state $x$ instead comes from the Gaussian kernel: differentiating the
posterior weights requires no derivative of the bounded generator.
Consequently, Gaussian smoothing supplies the covariance identity below.
The resulting spatial control is the regularity used in deterministic
FM perturbation analyses such as \citet{zhou2025error} and
\citet{kunkel2026distribution}.

The proof has four interfaces, recorded below in the order in which they
are used. The covariance identity controls the exact posterior; the
moving tube supplies both a denominator lower bound and an exact ReLU
gate; fixed-box first-order modules realize the cell integrals and their
quotient; and the final local lemma collects the error and architecture
bookkeeping. Lemma~\ref{lem:app-surrogate-posterior-network} then performs
the global assembly. This separation keeps the main approximation proof
focused on the statistical construction while making every first-order
ingredient explicit here.

For fixed $t$, the Gaussian numerator and denominator are analytic in
$x$; since $Q^\circ(x,t)>0$ on the real domain, their ratio is real
analytic there. After the latent-cell factorization, the nonlinear
dependence is carried by fixed-dimensional analytic maps of coefficient
vectors whose dimensions depend only on $(d,\beta)$. General first-order
ReLU approximation is studied, for example, by
\citet{guehring2020sobolev}. Our fixed-box module follows the constructive
polynomial architecture underlying \citet{opschoor2022holomorphic}; because
the entrywise parameter bound needed here is not quoted as a separate
conclusion, it is derived below from the same polynomial realization.
The citation therefore supplies the proof architecture, while the stated
$W^{1,\infty}$ and parameter-magnitude bounds are verified in the lemma
itself. This preserves the latent cell count.

\paragraph{Exact posterior geometry.}

\begin{lemma}[Posterior covariance identity]
\label{lem:app-surrogate-spatial-covariance}
Let $g:[0,1]^d\to\mathbb R^D$ be measurable with
$\sup_u\|g(u)\|\le R$, and let $m_{g,t}$ be the posterior mean for
$X_t=tg(U)+\sqrt{q_t}Z$. For every $x$ and $t\in[0,1]$,
\begin{equation}
 \nabla_xm_{g,t}(x)=\frac{t}{q_t}
 \operatorname{Cov}(g(U)\mid X_t=x),\qquad
 \|\nabla_xm_{g,t}(x)\|_{\rm op}\le R^2\frac{t}{q_t}.
 \label{eq:app-surrogate-covariance}
\end{equation}
In particular, $\operatorname{Lip}_x(m_t^\circ)\le4t/q_t$.
\end{lemma}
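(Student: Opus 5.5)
The plan is to differentiate the explicit Bayes ratio directly. Write the posterior weights as
\[
w_t(x,u):=\exp\!\bigl(-\|x-tg(u)\|^2/(2q_t)\bigr),
\]
so that $m_{g,t}(x)=\int g\,w_t\,du\big/\int w_t\,du$, exactly as in the representation of Lemma~\ref{lem:posterior-velocity}, which applies to any bounded measurable generator.

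First I will justify differentiation under the integral sign. The $x$-gradient of $w_t$ is
\[
-\frac{x-tg(u)}{q_t}\,w_t(x,u).
\]
For $x$ in a bounded neighbourhood, this is dominated uniformly in $u$, because $g$ is bounded by $R$, $[0,1]^d$ has finite measure, and $q_t>0$ on all of $[0,1]$. The denominator is strictly positive.

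Next I compute the Jacobian via the quotient rule. In the expression for $\nabla_x w_t$, the factor $x/q_t$ appears in both the numerator and the denominator derivatives, so it cancels in the quotient. What remains is
\[
\frac{t}{q_t}\Bigl(\mathbb E[g(U)g(U)^\top\mid X_t=x]-m_{g,t}(x)m_{g,t}(x)^\top\Bigr),
\]
which is $\frac{t}{q_t}\operatorname{Cov}(g(U)\mid X_t=x)$. This proves the identity in \eqref{eq:app-surrogate-covariance}.

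For the operator-norm bound, the conditional covariance is positive semidefinite. Its operator norm is therefore $\sup_{\|e\|=1}\operatorname{Var}(e^\top g(U)\mid X_t=x)$. Each such variance is at most $\mathbb E[(e^\top g(U))^2\mid X_t=x]\le R^2$. The Jacobian is symmetric, so its operator norm equals the spectral radius, and this gives the bound $R^2t/q_t$.

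Finally, I apply the bound with $g=g^\circ$. The surrogate satisfies $R=\|g^\circ\|_{L^\infty}\le2$ by Lemma~\ref{lem:app-generator-surrogate}. Since the Jacobian is continuous in $x$, the mean value inequality along segments converts the Jacobian bound into $\operatorname{Lip}_x(m_t^\circ)\le 4t/q_t$.

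The only mild obstacle is the domination step, and the nondegeneracy $q_t\ge\sigma^2/(1+\sigma^2)$ resolves it, including at $t=0$. At $t=0$ both sides of the identity are zero. One could even sharpen $R^2$ to $R^2$ minus the squared mean, but this is not needed.
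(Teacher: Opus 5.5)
Your proposal is correct and follows the paper's proof: differentiate the explicit Bayes ratio under the integral sign, let the $x/q_t$ terms cancel in the quotient rule to obtain $\frac{t}{q_t}\operatorname{Cov}(g(U)\mid X_t=x)$, bound the quadratic form by $\operatorname{Var}(e^\top g)\le R^2$, and specialize to $g^\circ$ with $R\le 2$. You also spell out two steps that the paper leaves implicit: the domination argument and the mean-value step from the Jacobian bound to the Lipschitz bound.
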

\begin{proof}
Let $w(u)=\exp\{-\|x-tg(u)\|^2/(2q_t)\}$ and normalize $w(u)du$
to a probability measure $\mu_{x,t}$. Dominated differentiation of the
posterior quotient gives
\[
 \partial_{x_k}m_{g,t,l}
 =\frac{t}{q_t}\{\mathbb E_\mu[g_lg_k]
 -\mathbb E_\mu g_l\,\mathbb E_\mu g_k\}.
\]
For any unit vector $z$, the quadratic form of this covariance is
$\operatorname{Var}_\mu(z^\top g)\le\mathbb E_\mu(z^\top g)^2\le R^2$.
The surrogate bound \eqref{eq:g-circ-bound} gives the last assertion.
\end{proof}

Put $\Lambda_\varepsilon:=R_t/\sqrt{q_t}
=\sqrt D+16(\sqrt{DL_\varepsilon}\vee L_\varepsilon)$; the right side
is independent of $t$. The original localization set $\mathcal K_t$ is
unchanged. The following larger tube is only the domain on which the
network ratio and its spatial derivative are controlled.

\begin{lemma}[Moving tube and denominator]
\label{lem:app-moving-tube}
Write $c_{\mathbf j}=g^*(u_{\mathbf j})$,
$s_\sigma(t)=(1-t)+\sigma t$, and
$d_N(x,t)=\min_{\mathbf j}\|x-tc_{\mathbf j}\|_\infty$.
Choose $E_N=C_{d,\beta}H\sqrt D\,N^{-(\beta\wedge1)}$ to dominate
the cellwise Taylor variation, and a fixed $c_0>0$. Set
\[
 r_{\rm in}=\Lambda_\varepsilon s_\sigma+E_N,\quad
 r_{\rm out}=r_{\rm in}+c_0\sigma,\quad
 \mathcal T_\varepsilon^+=\{(x,t):d_N(x,t)\le r_{\rm out}(t)\},
\]
and
$\chi(x,t)=\operatorname{clip}_{[0,1]}
((r_{\rm out}(t)-d_N(x,t))/(c_0\sigma))$.
Then $\chi=1$ on $\mathcal C_{[0,1]}^*$,
$\operatorname{Lip}_x\chi\le(c_0\sigma)^{-1}$, and throughout
$\mathcal T_\varepsilon^+$,
\begin{equation}
 Q^\circ(x,t)\ge N^{-d}\exp\{-CD(\Lambda_\varepsilon+1)^2\}
 \ge 2^{-K_\varepsilon},\qquad
 K_\varepsilon\lesssim_{d,\beta}\Gamma_\varepsilon^3.
 \label{eq:app-enlarged-tube-denominator}
\end{equation}
The detector $\chi$ has an exact ReLU realization of width and
sparsity $O(DN^d)$.
\end{lemma}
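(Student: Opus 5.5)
We prove the four assertions in the order they appear: that the gate equals one on the localization set, the Lipschitz bound for $\chi$, the denominator bound on the enlarged tube, and the exact ReLU realization.

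\emph{The gate equals one on $\mathcal C_{[0,1]}^*$.} Fix $(x,t)\in\mathcal C_{[0,1]}^*$ and pick $u_*\in[0,1]^d$ with $\|x-tg^*(u_*)\|\le R_t$. Let $\mathbf j_*$ index the cell containing $u_*$, so that $\|u_*-u_{\mathbf j_*}\|_\infty\le N^{-1}$. By H\"older (or Lipschitz) regularity, $\|g^*(u_*)-c_{\mathbf j_*}\|\le C_{d,\beta}H\sqrt D\,N^{-(\beta\wedge1)}\le E_N$, after enlarging the constant in $E_N$ if needed. Since $q_t\le s_\sigma(t)^2$, we have $R_t=\Lambda_\varepsilon\sqrt{q_t}\le\Lambda_\varepsilon s_\sigma(t)$. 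Using $\|\cdot\|_\infty\le\|\cdot\|$ and $t\le1$, this gives $d_N(x,t)\le R_t+E_N\le r_{\rm in}(t)=r_{\rm out}(t)-c_0\sigma$. The argument of the clip is therefore at least one, and $\chi=1$.

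\emph{Lipschitz bound.} For each $t$, the map $x\mapsto d_N(x,t)$ is a minimum of $1$-Lipschitz functions in $\|\cdot\|_\infty$, so it is $1$-Lipschitz in the Euclidean norm as well. The clip is $1$-Lipschitz, which yields $\operatorname{Lip}_x\chi\le(c_0\sigma)^{-1}$.

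\emph{Denominator bound.} This is the main computation. Fix $(x,t)\in\mathcal T_\varepsilon^+$ and choose $\mathbf j$ with $\|x-tc_{\mathbf j}\|_\infty\le r_{\rm out}$. For $u\in\mathcal U_{\mathbf j}$, the local increment bound \eqref{eq:fm-local-poly-increment-bound} together with the cell error \eqref{eq:g-local-cell-error} gives $\|g^\circ_{\mathbf j}(u)-c_{\mathbf j}\|\lesssim E_N$. Hence
\[
\|x-tg^\circ_{\mathbf j}(u)\|\le\sqrt D\,r_{\rm out}+E_N.
\]
Dividing by $\sqrt{q_t}\ge s_\sigma(t)/\sqrt2$ and using $s_\sigma(t)\ge\sigma$, we get
\[
\frac{\|x-tg_{\mathbf j}^\circ(u)\|}{\sqrt{q_t}}\lesssim\sqrt D\Bigl(\Lambda_\varepsilon+\frac{E_N}{\sigma}+c_0\Bigr)+\frac{E_N}{\sigma}.
\]
The scale condition \eqref{eq:approximation-scale-basic} ensures $E_N/\sigma\lesssim c_{d,\beta}\le1$, so the exponent is at most $CD(\Lambda_\varepsilon+1)^2$. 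Restricting $Q^\circ$ to the single cell $\mathcal U_{\mathbf j}$, which has volume $N^{-d}$, gives the first inequality in \eqref{eq:app-enlarged-tube-denominator}.

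For the second inequality, take $K_\varepsilon=\lceil d\log_2N+CD(\Lambda_\varepsilon+1)^2/\log2\rceil$. Since $\Lambda_\varepsilon^2\lesssim D+DL_\varepsilon+L_\varepsilon^2$ and $L_\varepsilon\lesssim_\beta\log(1/\varepsilon)\le\Gamma_\varepsilon$, we obtain $D\Lambda_\varepsilon^2\lesssim\Gamma_\varepsilon^3$ (using $D\le\Gamma_\varepsilon$). This step is the bookkeeping-heavy one: the $\ell_\infty$-based tube costs a factor $\sqrt D$ in the radius, which is why the exponent is $\Gamma_\varepsilon^3$ rather than $\Gamma_\varepsilon^2$.

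\emph{Exact ReLU realization.} We build $\chi$ layer by layer as follows.
\begin{itemize}
\item For each $\mathbf j$ and coordinate $l$, form $x_l-tc_{\mathbf j,l}$ with one affine layer, and take the absolute value via $|z|=\rho(z)+\rho(-z)$. This uses $O(DN^d)$ neurons.
\item Compute each coordinatewise maximum over $D$ entries and then the minimum over $N^d$ cells. Each is done with the identities $\max(a,b)=a+\rho(b-a)$ and $\min(a,b)=a-\rho(a-b)$ arranged in binary trees. The trees have depth $O(\log D+d\log N)$, and their total width and sparsity are $O(DN^d)$.
\item Since $r_{\rm out}(t)$ is affine in $t$, compute $(r_{\rm out}-d_N)/(c_0\sigma)$ with one more affine layer. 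Finish with $\operatorname{clip}_{[0,1]}(z)=\rho(z)-\rho(z-1)$.
\end{itemize}
All weights are bounded by $(c_0\sigma)^{-1}$ and by $1+\Lambda_\varepsilon+E_N$, so the realization is exact with width and sparsity $O(DN^d)$.
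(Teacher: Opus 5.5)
Your proposal is correct and follows the paper's proof essentially step by step. A witnessing latent cell with the H\"older/Taylor variation bound $E_N$ and $q_t\le s_\sigma(t)^2$ gives $d_N\le r_{\rm in}$, hence $\chi=1$ on $\mathcal C_{[0,1]}^*$; the minimum of $\ell_\infty$ distances to fixed centers is $1$-Lipschitz; and restricting $Q^\circ$ to a single minimizing cell, using $\|z\|\le\sqrt D\|z\|_\infty$, $s_\sigma\le\sqrt{2q_t}$ and $E_N\lesssim\sigma$, gives the denominator bound, after which dyadic logarithms give $K_\varepsilon\lesssim_{d,\beta}\Gamma_\varepsilon^3$. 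Your exact ReLU formulas for the absolute value, the balanced max/min trees and the final clip likewise match the paper's detector construction, with more of the bookkeeping written out.
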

\begin{proof}
The inequalities $\sqrt{q_t}\le s_\sigma(t)\le\sqrt{2q_t}$
and $E_N\lesssim\sigma$ follow from the scale conditions. If
$x\in\mathcal K_t$, choose a witnessing $u$ and its cell. The Taylor
bound gives $d_N(x,t)\le R_t+E_N\le r_{\rm in}(t)$. The minimum of
the distances to fixed centers is one-Lipschitz in $x$, proving the
gate bound. The absolute value, maximum, minimum, and scalar clip
have exact ReLU formulas; balanced maximum and minimum trees yield the
stated detector size.

For $(x,t)\in\mathcal T_\varepsilon^+$ choose a minimizing cell. On
that entire cell,
$\|x-tg_{\mathbf j}^\circ(u)\|
\le\sqrt D\,r_{\rm out}(t)+E_N$. Moreover,
$r_{\rm out}(t)/\sqrt{q_t}\lesssim\Lambda_\varepsilon+1$.
Integrating the positive Gaussian weight only over this cell, of volume
$N^{-d}$, proves the first lower bound. The factor $D$ comes from
$\|z\|\le\sqrt D\|z\|_\infty$. Taking dyadic logarithms and using
$d\log N+D(\Lambda_\varepsilon+1)^2
\lesssim_{d,\beta}\Gamma_\varepsilon^3$
gives the second.
\end{proof}

\paragraph{Derivative-stable elementary modules.}
For $F=(F_1,\ldots,F_m):E\to\mathbb R^m$ on
$E\subset\mathbb R^D\times[0,1]$, write
\[
 \|F\|_{\mathsf W_x(E)}
 :=\max_{1\le r\le m}
 \mathop{\rm ess\,sup}_{(x,t)\in E}
 \bigl(|F_r(x,t)|+\|\nabla_xF_r(x,t)\|_2\bigr).
\]
For $m=1$ this is the scalar norm used below.  Only derivatives in $x$
occur in $\mathsf W_x$.  The following three modules make the
first-order construction explicit.

\begin{lemma}[Zero-preserving multiplication]
\label{lem:app-zero-preserving-multiplication}
For $M\ge1$ and $0<\delta<1$, there is a two-input ReLU network
$\Pi_{M,\delta}$ such that, on $[-M,M]^2$,
\begin{equation}
 \max\bigl\{|\Pi_{M,\delta}(a,b)-ab|,
 |\partial_a\Pi_{M,\delta}(a,b)-b|,
 |\partial_b\Pi_{M,\delta}(a,b)-a|\bigr\}\le\delta
 \label{eq:app-multiplier-first-order}
\end{equation}
almost everywhere.  Its depth and sparsity are
$O(1+\log(M/\delta))$, its width is bounded by a universal constant,
and $\log B\lesssim1+\log M$.  In addition,
\begin{equation}
 \Pi_{M,\delta}(0,b)=0\qquad\text{for every }b\in\mathbb R,
 \label{eq:app-multiplier-zero}
\end{equation}
and hence $\partial_b\Pi_{M,\delta}(0,b)=0$ almost everywhere.
If $A,B,\widetilde A,\widetilde B$ take values in $[-M,M]$, the ordinary
chain rule and \eqref{eq:app-multiplier-first-order} imply
\begin{align}
 \|\Pi_{M,\delta}(\widetilde A,\widetilde B)-AB\|_{\mathsf W_x(E)}
 &\le C_M\bigl(1+\|A\|_{\mathsf W_x(E)}
 +\|B\|_{\mathsf W_x(E)}\bigr) \\
 &\quad\times
 \bigl(\delta+\|\widetilde A-A\|_{\mathsf W_x(E)}
 +\|\widetilde B-B\|_{\mathsf W_x(E)}\bigr).
 \label{eq:app-multiplier-composition}
\end{align}
\end{lemma}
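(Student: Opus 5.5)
I would build $\Pi_{M,\delta}$ from the polarization identity $ab=\frac14\{(a+b)^2-(a-b)^2\}$, or better, since we need exact zero preservation in $a$, from the identity
\[
ab=\tfrac12\bigl\{(a+b)^2-a^2-b^2\bigr\}
\]
corrected so that the $b^2$ term cancels at $a=0$. Concretely, let $\mathrm{sq}_{\delta}$ be a first-order-accurate ReLU approximation of $s\mapsto s^2$ on $[-2M,2M]$. For this I take the piecewise-linear interpolant of $s^2$ on a uniform grid of mesh $h\asymp\delta/M$, realized via the Yarotsky sawtooth construction after rescaling $s\mapsto s/(2M)$. A piecewise-linear interpolant of a $C^2$ function with second derivative $2$ has value error $\le h^2/4$ and a.e.\ derivative error $\le h$. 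So choosing $h\asymp\delta$ (after rescaling by $M$) gives both errors $\le\delta/C$. The Yarotsky telescoping with $k$ levels gives the interpolant on $2^k$ pieces exactly, so $k\asymp\log(M/\delta)$ levels suffice. Depth and sparsity are then $O(1+\log(M/\delta))$, width is constant, and the weights are $O(1)$ apart from the input scaling $1/(2M)$ and output scaling $4M^2$, which gives $\log B\lesssim 1+\log M$. I make $\mathrm{sq}_\delta$ even, which holds for the interpolant of an even function on a symmetric grid containing $0$ if one first applies $|s|=\rho(s)+\rho(-s)$.

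I then define
\[
\Pi_{M,\delta}(a,b):=\tfrac12\bigl\{\mathrm{sq}_\delta(a+b)-\mathrm{sq}_\delta(a)-\mathrm{sq}_\delta(b)\bigr\}.
\]
With this definition, $\Pi(0,b)=\frac12\{\mathrm{sq}_\delta(b)-\mathrm{sq}_\delta(0)-\mathrm{sq}_\delta(b)\}=-\frac12\mathrm{sq}_\delta(0)=0$, because the interpolant is exact at the grid node $0$. This proves \eqref{eq:app-multiplier-zero} identically in $b$. Since $b\mapsto\Pi(0,b)$ is constant, its a.e.\ derivative vanishes. The value bound follows by summing three errors of size $\le\delta/3$. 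For the partial derivatives, $\partial_a\Pi=\frac12\{\mathrm{sq}_\delta'(a+b)-\mathrm{sq}_\delta'(a)\}$, which is within $\delta$ of $\frac12\{2(a+b)-2a\}=b$ a.e., and symmetrically for $\partial_b$. Here "a.e." must be handled with care: the nondifferentiability set of $\mathrm{sq}_\delta(a+b)$ is a finite union of lines, so it is null in $\mathbb R^2$. One subtlety is that one argument ranges over $[-2M,2M]$, which is why the squaring net is built on that larger interval. Parallelizing the three squaring nets and adding one output layer preserves the stated complexity.

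For the composition bound \eqref{eq:app-multiplier-composition}, I split the error as
\[
\Pi(\widetilde A,\widetilde B)-AB=\{\Pi(\widetilde A,\widetilde B)-\widetilde A\widetilde B\}+\{\widetilde A\widetilde B-AB\}.
\]

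The value part is at most $\delta+M(|\widetilde A-A|+|\widetilde B-B|)$.

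For the gradient, the chain rule for Lipschitz $\Pi$ composed with Lipschitz $\widetilde A,\widetilde B$ gives
\[
\nabla_x\Pi(\widetilde A,\widetilde B)=\partial_a\Pi\,\nabla\widetilde A+\partial_b\Pi\,\nabla\widetilde B
\]
a.e. The first difference then contributes $\delta(\|\nabla\widetilde A\|+\|\nabla\widetilde B\|)$, and I bound $\|\nabla\widetilde A\|\le\|A\|_{\mathsf W_x}+\|\widetilde A-A\|_{\mathsf W_x}$. The second difference contributes $\widetilde B\nabla\widetilde A-B\nabla A$ plus the symmetric term, bounded by $M\|\nabla\widetilde A-\nabla A\|+|\widetilde B-B|\,\|\nabla A\|$ and its symmetric counterpart. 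Collecting the terms gives the product form with constant $C_M$. Any cross term that is quadratic in the errors is absorbed either by using the boundedness of $\widetilde A,\widetilde B$ in $[-M,M]$ or by enlarging $C_M$, taking errors $\le 2M$ without loss.

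The main obstacle I expect is the a.e.\ chain rule for the composition of a piecewise-linear $\Pi$ with $(\widetilde A,\widetilde B)$. The image curve may lie on a kink line of $\Pi$ on a set of positive measure in $x$, where $\partial_a\Pi$ is undefined. I would resolve this by using one-sided derivatives: on such sets both one-sided slopes of $\mathrm{sq}_\delta'$ are within $\delta$ of $2s$. The Clarke-type chain rule for compositions of piecewise-linear maps with Lipschitz maps then gives a gradient that is a convex combination of admissible slopes, so the $\delta$-estimate still holds.
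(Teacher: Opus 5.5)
Your proposal is correct and is essentially the paper's construction. It uses the same polarization $ab=\tfrac12\{(a+b)^2-a^2-b^2\}$, built from three copies of the Yarotsky dyadic interpolant of the square applied to absolute values; the paper writes this as $2M^2[S_m(|a+b|/(2M))-S_m(|a|/(2M))-S_m(|b|/(2M))]$. It gets zero preservation at $a=0$ from the same cancellation of the first and third terms, and the composition bound from the same add-and-subtract of $\widetilde A\widetilde B$. Your treatment of the almost-everywhere chain rule is more careful than the paper's one-line ``differentiating the composition almost everywhere'': you note that the gradient of $\Pi\circ(\widetilde A,\widetilde B)$ is given by an active linear piece whose slopes satisfy the $\delta$-estimate on $[-M,M]^2$.
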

\begin{proof}
Let $S_m$ be the standard deep ReLU realization of the dyadic linear
interpolant of $z^2$ on $[0,1]$, extended by the same ReLU formula to
$\mathbb R$.  It satisfies
$\|S_m-z^2\|_{W^{1,\infty}(0,1)}\le C2^{-m}$ and $S_m(0)=0$.
Set
\[
 \Pi_{M,\delta}(a,b):=2M^2\left[
 S_m\!\left(\frac{|a+b|}{2M}\right)
 -S_m\!\left(\frac{|a|}{2M}\right)
 -S_m\!\left(\frac{|b|}{2M}\right)\right]
\]
and take $m\ge C(1+\log(M/\delta))$.  Polarization gives
\eqref{eq:app-multiplier-first-order}.  When $a=0$, the first and third
terms are identical for every real $b$, proving
\eqref{eq:app-multiplier-zero} without any restriction on the second
input.  The last assertion follows by differentiating the composition
almost everywhere and adding and subtracting the exact product.
\end{proof}

\begin{lemma}[Fixed-box holomorphic maps]
\label{lem:app-fixed-box-holomorphic}
Let $m$ be fixed and let $\mathcal B\subset\mathbb R^m$ be a fixed box.
Suppose that $F$ extends holomorphically to a fixed complex
neighborhood $\mathcal U$ of $\mathcal B$ and
$\sup_{z\in\mathcal U}|F(z)|\le M_0$.  For $0<\zeta<1$, there is a
ReLU network $F_\zeta$ satisfying
\begin{equation}
 \|F_\zeta-F\|_{W^{1,\infty}(\mathcal B)}\le\zeta,\qquad
 L,W,S,\log B\le
 C_{m,\mathcal U,M_0}\bigl(1+\log(1/\zeta)\bigr)^{C_m}.
 \label{eq:app-holomorphic-fixed-box}
\end{equation}
The same conclusion holds uniformly for a family sharing
$(m,\mathcal B,\mathcal U,M_0)$, and every such family has uniformly
bounded second derivatives on any smaller fixed complex neighborhood.
\end{lemma}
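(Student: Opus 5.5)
Because the box $\mathcal B$ and the complex neighborhood $\mathcal U$ are fixed, I will first reduce to analytic polynomial approximation and then realize the polynomial with ReLU networks whose derivatives are controlled. After an affine rescaling, absorbed into the first layer with constants depending only on $\mathcal B$, assume $\mathcal B=[-1,1]^m$. Since $\mathcal U$ is a fixed open neighborhood, it contains a closed polyellipse $\overline{\mathcal E_\rho}^{\,m}$ with Bernstein parameter $\rho=\rho(\mathcal B,\mathcal U)>1$. Expand $F$ in tensor Chebyshev polynomials. Cauchy estimates on $\mathcal E_\rho^m$ give coefficient bounds $|c_\nu|\le M_0\,2^m\rho^{-|\nu|_1}$. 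The Chebyshev bounds $\|T_k\|_\infty\le1$ and $\|T_k'\|_\infty\le k^2$ then show that truncating to $|\nu|_\infty\le n$ leaves an error in $W^{1,\infty}(\mathcal B)$ of order $C_{m}M_0\, n^{m+2}\rho^{-n}$. Choosing $n\asymp_{m,\rho}\log(M_0/\zeta)$ makes this at most $\zeta/2$. The remaining polynomial $P_n$ has $(n+1)^m$ coefficients, each bounded by $2^mM_0$.

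Next I realize $P_n$ in $W^{1,\infty}$. Each univariate $T_k$ on $[-1,1]$ is a polynomial of degree $k\le n$ with bounded coefficients in the Chebyshev basis, so I compute the monomials $z^k$ and then form $T_k$. To do this, I evaluate the three-term recurrence $T_{k+1}=2zT_k-T_{k-1}$ using the zero-preserving multiplier $\Pi_{M,\delta}$ from Lemma~\ref{lem:app-zero-preserving-multiplication} with $M=2$, because $|T_k|\le1$ on the box; a small clip keeps the inputs inside $[-M,M]$. The tensor products $T_{\nu_1}(z_1)\cdots T_{\nu_m}(z_m)$ are then formed by a binary tree of $m-1$ further multiplications. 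The composition estimate \eqref{eq:app-multiplier-composition} controls the $\mathsf W$-type error of each product in terms of the errors of its inputs. The final linear combination with the coefficients $c_\nu$ is an exact affine layer. Its weights are at most $2^mM_0$, or they are split across $O(\log M_0)$ layers if the parameter bound $B$ has to stay small.

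The main obstacle is error propagation through the recurrence. In $W^{1,\infty}$ the errors can grow like $C^k$ over $k\le n\asymp\log(1/\zeta)$ steps. That growth is only polynomial in $1/\zeta$, so setting $\delta=\zeta\, C^{-n}(n+1)^{-m}$ absorbs it, and $\log(1/\delta)\lesssim_{m}\log(1/\zeta)$. Each multiplier then costs $O(\log(1/\zeta))$ in depth and sparsity. The $n$ sequential recurrence steps and the $m$ tree levels give depth $O(\log^2(1/\zeta))$. With $(n+1)^m$ parallel terms, the width and sparsity are $O(\log^{m+2}(1/\zeta))$, and $\log B\lesssim1+\log M_0$ plus the rescaling constants. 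These bounds give \eqref{eq:app-holomorphic-fixed-box} with $C_m=m+2$. If the recurrence estimate proves too lossy, I will instead compute monomials by repeated squaring, which has depth $O(\log n)$ stages, and convert to the Chebyshev basis. The conversion coefficients grow like $3^n$, which is again polynomial in $1/\zeta$.

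Uniformity over a family follows because every constant above depends only on $(m,\mathcal B,\mathcal U,M_0)$. Take a smaller fixed neighborhood $\mathcal U'$ with $\overline{\mathcal U'}\subset\mathcal U$ at distance $r>0$ from $\partial\mathcal U$. Cauchy's estimate on polydiscs of radius $r$ bounds the second derivatives by $2M_0/r^2$ uniformly over the family.
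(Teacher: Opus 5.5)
Your proposal is correct and follows essentially the same route as the paper. Both arguments make an affine rescaling, truncate the tensor Chebyshev expansion at degree $\asymp\log(1/\zeta)$ using the exponential coefficient decay from holomorphy, realize the truncated polynomial with the zero-preserving multipliers of Lemma~\ref{lem:app-zero-preserving-multiplication} at per-product accuracy $\zeta C^{-s}\mathrm{poly}(s)^{-1}$, and invoke Cauchy's estimate for the second-derivative claim. The one difference is that the paper expands in monomials (absolute coefficient sum at most $C^s$, hence its $\log B\lesssim s+\log(1/\zeta)$), whereas your three-term recurrence keeps every weight bounded and even gives $\log B\lesssim 1+\log M_0$; your fallback route is exactly the paper's monomial argument.
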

\begin{proof}
After an affine change of variables, truncate the tensor Chebyshev
expansion at degree $s=C\log(1/\zeta)$.  Holomorphy on $\mathcal U$
gives exponential decay of the coefficients and hence value and
first-derivative truncation error at most $\zeta/2$.  Realize the
resulting polynomial with Lemma~\ref{lem:app-zero-preserving-multiplication},
allocating accuracy
$c\zeta C^{-s}(s+1)^{-m-2}$ to every product.  The number of monomials
is polynomial in $s$, and the sum of the absolute monomial coefficients
is at most $C^s$.  Consequently all depth, width, and sparsity costs are
polynomial in $s$ and
$\log B\le C(s+\log(1/\zeta))$.  This proves
\eqref{eq:app-holomorphic-fixed-box}.  The derivative statement is
Cauchy's estimate on a smaller neighborhood.  This is the
fixed-dimensional constructive mechanism behind
\citet[Theorem~3.6]{opschoor2022holomorphic}; the displayed parameter
bound is extracted here from the polynomial construction rather than
quoted from that theorem.
\end{proof}

In particular, if $\psi$ is bounded and
\begin{equation}
 \Phi(v)=\int_{[0,1]^d}\psi(w)
 \exp\{-v^\top a(w)-b(w)\}\,dw,\qquad
 \|a\|_\infty\le A_0,\quad\|b\|_\infty\le B_0,
 \label{eq:app-integral-primitive}
\end{equation}
where $A_0,B_0$ are fixed, then $\Phi$ is entire and the preceding
lemma applies on every fixed real box.  Indeed, on a fixed complex
neighborhood the integrand is dominated by a constant depending only on
that neighborhood, $A_0,B_0$, and $\|\psi\|_\infty$.  This observation
also supplies the common complex supremum and the second-derivative
bound needed below.  Unlike the integral-specific corollary, the full
Lemma~\ref{lem:app-fixed-box-holomorphic} also applies to the fixed-box
quotient map used in the division module.

\begin{lemma}[Tapered exponential under a spatial gradient envelope]
\label{lem:app-tapered-exponential}
Let $V\ge0$ and $\widetilde V\ge-1$ on $E$, with
$\|\nabla_xV\|_2+\|\nabla_x\widetilde V\|_2\le A$, $A\ge1$, and
$\|\widetilde V-V\|_{\mathsf W_x(E)}\le\rho$.  For
$0<\kappa<1$ there is a one-input ReLU network
$\mathcal E_{A,\kappa}$ such that
\begin{equation}
 \|\mathcal E_{A,\kappa}(\widetilde V)-e^{-V}\|_{\mathsf W_x(E)}
 \le C(1+A)(\rho+\kappa).
 \label{eq:app-tapered-exponential-error}
\end{equation}
It is identically zero when $\widetilde V\ge T+1$, where
$T=C[1+\log(A/\kappa)]$, and its complexity is polynomial in
$T+\log(1/\kappa)$, with $\log B$ bounded by the same quantity.
\end{lemma}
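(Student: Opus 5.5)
We build $\mathcal E_{A,\kappa}$ as a one-dimensional ReLU network approximating a \emph{tapered} exponential, and then transfer its error through the chain rule. Set $T=C[1+\log(A/\kappa)]$. We choose a fixed smooth profile $\theta:\mathbb R\to[0,1]$ with $\theta=1$ on $(-\infty,T]$, $\theta=0$ on $[T+1,\infty)$ and $|\theta'|\le 2$. The target one-dimensional function is
\[
 \phi(s):=e^{-s}\theta(s)\quad\text{on }[-1,T+1],
\]
extended by $0$ for $s\ge T+1$ and by a constant-slope continuation for $s<-1$. It is irrelevant there since $\widetilde V\ge-1$.

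\textbf{Step 1: first-order approximation of $\phi$.} We want a ReLU network $\mathcal E$ with
\[
 \|\mathcal E-\phi\|_{W^{1,\infty}(-1,T+1)}\le\kappa
\]
and $\mathcal E\equiv0$ on $[T+1,\infty)$. We construct it piecewise on unit intervals $[k,k+1]$, $k=-1,\dots,\lceil T\rceil$. On each interval, $\phi(s)=e^{-k}\,[e^{-(s-k)}\theta(s)]$ is holomorphic-type on a fixed box only on the pieces where $\theta$ is analytic. For this reason we take $\theta$ to be piecewise: $\theta\equiv1$ up to $T$, and on $[T,T+1]$ we set $\theta$ to the polynomial smoothstep $1-3y^2+2y^3$ with $y=s-T$, which is $C^1$ and polynomial. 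Each piece is then an entire function on a fixed box. Lemma~\ref{lem:app-fixed-box-holomorphic} (with $m=1$), applied with accuracy $\kappa$, realizes each piece.

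The pieces are glued by a partition of unity made of hat functions $h_k$, which are exact ReLU functions with $|h_k'|\le1$. Products are formed with the zero-preserving multiplier of Lemma~\ref{lem:app-zero-preserving-multiplication}. Its property $\Pi(0,b)=0$ guarantees exact vanishing wherever the hats vanish, and in particular for $s\ge T+1$, since the last hat is supported in $[T,T+1]$. The number of pieces is $O(T)$, and each has complexity polylogarithmic in $1/\kappa$. This gives a total complexity polynomial in $T+\log(1/\kappa)$ and $\log B$ of the same order. Because the magnitudes $e^{-k}\le e$ are bounded, no parameter blow-up arises from the exponential scale.

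\textbf{Step 2: value error.} We write
\[
 \mathcal E(\widetilde V)-e^{-V}=[\mathcal E(\widetilde V)-\phi(\widetilde V)]+[\phi(\widetilde V)-\phi(V)]+[\phi(V)-e^{-V}].
\]
The three terms are bounded as follows.
\begin{itemize}
\item The first is at most $\kappa$.
\item The second is at most $\|\phi'\|_\infty\rho\le C\rho$.
\item The third vanishes unless $V>T$, in which case it is at most $e^{-T}\le\kappa/A$ by the choice of $T$.
\end{itemize}

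\textbf{Step 3: gradient error.} Almost everywhere,
\[
 \nabla_x[\mathcal E(\widetilde V)]-\nabla_x e^{-V}=\mathcal E'(\widetilde V)\nabla\widetilde V+e^{-V}\nabla V.
\]
We decompose this as
\[
 [\mathcal E'(\widetilde V)-\phi'(\widetilde V)]\nabla\widetilde V
 +[\phi'(\widetilde V)-\phi'(V)]\nabla\widetilde V
 +\phi'(V)(\nabla\widetilde V-\nabla V)
 +[\phi'(V)+e^{-V}]\nabla V .
\]
The four terms are bounded in turn.
\begin{itemize}
\item The first is at most $\kappa A$.
\item The second is at most $\|\phi''\|_\infty\rho A$. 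Here $\phi''$ is bounded only piecewise, since $\phi$ is $C^{1,1}$ with bounded second derivative a.e.; the mean value bound for Lipschitz $\phi'$ still applies.
\item The third is at most $C\rho$.
\item The fourth vanishes for $V\le T$, and otherwise is at most $Ce^{-T}A\le C\kappa$.
\end{itemize}
Summing the Step 2 and Step 3 bounds gives \eqref{eq:app-tapered-exponential-error}.

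\textbf{Main obstacle.} The delicate step is Step 1. We must obtain a genuine $W^{1,\infty}$ approximation with exact vanishing beyond $T+1$, while keeping the complexity polynomial in $T$ rather than in $e^T$. The unit-interval localization with zero-preserving products is what achieves this, and the multiplier's first-order control (\eqref{eq:app-multiplier-composition}) ensures that derivative errors do not accumulate across the $O(T)$ overlapping pieces, since at most two hats overlap at any point.
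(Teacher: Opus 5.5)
Your argument is correct, and your Steps~2--3 are the chain-rule estimate that the paper compresses into one sentence: split at $\{V\ge T\}$, use Lipschitz bounds for the tapered profile and its derivative, and absorb $Ae^{-T}\le C\kappa$. Where you differ is the construction in Step~1.

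The paper does not localize. It approximates $e^{-s}$ once on the whole interval $[-1,T+1]$ by a one-dimensional Chebyshev polynomial after affine rescaling. It then multiplies this by a network copy $\widetilde\vartheta_T$ of the smoothstep $\vartheta_T=g_T^2(3-2g_T)$ of the exact ReLU clip $g_T(s)=\operatorname{clip}_{[0,1]}(T+1-s)$, built with two zero-preserving multipliers. A final zero-preserving product with $\widetilde\vartheta_T$ as first input gives exact vanishing for $s\ge T+1$, because $g_T=0$ there.

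That route needs no gluing. However, the rescaled exponential is not a fixed-box function with a fixed complex bound, so Lemma~\ref{lem:app-fixed-box-holomorphic} is not used verbatim. The paper re-invokes the Chebyshev argument on a box of length $\asymp T$ and records its cost as polynomial in $T+\log(1/\kappa)$. Your unit-interval localization instead applies that lemma exactly as stated, to fixed entire functions on a fixed box rescaled by the exact output weight $e^{-k}\le e$. The price is $O(T)$ parallel blocks, which is still polynomial in $T+\log(1/\kappa)$.

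One detail should be tidied. A hat partition of unity equal to $1$ on all of $[-1,T+1]$ cannot also vanish identically on $[T+1,\infty)$, by continuity at $T+1$. Also, a hat whose support straddles $T$ or $T+1$ sees two different analytic pieces of $\theta$, so its local approximant is not an approximation of $\phi$ on its whole support.

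Both defects are harmless. They are confined to $\{s\ge T-1\}$, where $\phi$, the entire extensions of its pieces, and their derivatives are all $O(e^{-T})$, and the hats satisfy $|h_k'|\le1$. The resulting extra $W^{1,\infty}$ error of order $e^{-T}$ is absorbed by $e^{-T}\le\kappa/A$, by the same mechanism as your fourth gradient term. You could even drop the smoothstep and let the last hat itself serve as the taper, at the cost of an $O(e^{-T})$ jump in $\phi'$ that is absorbed the same way.
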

\begin{proof}
Let $g_T(s)=\operatorname{clip}_{[0,1]}(T+1-s)$ and
$\vartheta_T(s)=g_T(s)^2(3-2g_T(s))$.  Then
$F_T(s)=e^{-s}\vartheta_T(s)$ equals $e^{-s}$ on $[0,T]$, vanishes on
$[T+1,\infty)$, and satisfies
\[
 \sup_{s\ge T}\bigl(|e^{-s}-F_T(s)|
 +|{-}e^{-s}-F_T'(s)|\bigr)\le Ce^{-T}.
\]
Approximate $e^{-s}$ in $W^{1,\infty}([-1,T+1])$ after affine
rescaling.  With a fixed input bound $M\ge4$, realize the cutoff in the
order
\[
 u_T=\Pi_{M,\kappa}(g_T,g_T),\qquad
 \widetilde\vartheta_T
 =\Pi_{M,\kappa}(u_T,3-2g_T),
\]
and use one final zero-preserving multiplier with
$\widetilde\vartheta_T$ as its first input and the exponential
approximant as its second input.
The output is therefore exactly zero for every second input whenever
$s\ge T+1$.  The one-dimensional Chebyshev argument used in
Lemma~\ref{lem:app-fixed-box-holomorphic} has cost polynomial in
$T+\log(1/\kappa)$.  Since $Ae^{-T}\le C\kappa$, the chain rule on
$\{V\ge T\}$ and the Lipschitz bounds of $F_T,F_T'$ on
$[-1,T+1]$ yield \eqref{eq:app-tapered-exponential-error}.
\end{proof}

\paragraph{Principal local realization.}
The preceding ingredients are now assembled cell by cell. The proof is
expanded into normalization, backward error allocation, stable division,
and vector/architecture bookkeeping so that the origin of every dimension
and accuracy factor remains visible.

\begin{lemma}[First-order cell and ratio realization]
\label{lem:app-first-order-local-posterior}
Let $\eta_*=c\min\{\varepsilon^\beta,D^{-1}\}$ with a sufficiently
small $c=c(d,\beta)$.  There is a $D$-output ReLU network
$f_\theta^{\rm loc}$ such that on $\mathcal T_\varepsilon^+$,
almost everywhere in $x$,
\begin{equation}
 \|f_\theta^{\rm loc}-m_t^\circ\|_2\le\sqrt D\,\eta_*,
 \qquad
 \|J_xf_\theta^{\rm loc}-J_xm_t^\circ\|_{\rm op}
 \le D\eta_*\le1.
 \label{eq:app-local-first-order-bounds}
\end{equation}
The network has the architecture envelope
\eqref{eq:app-regular-posterior-architecture}.
\end{lemma}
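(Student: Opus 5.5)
Rebuild the value-level latent-cell circuit of the preceding subsection, with every scalar module replaced by its first-order counterpart, and control all errors in the norm $\|\cdot\|_{\mathsf W_x(\mathcal T_\varepsilon^+)}$ rather than in sup norm. On the enlarged tube $\mathcal T_\varepsilon^+$ the coordinates $V_{\mathbf j,0}$, $V_{\mathbf j,\mathbf k}$ and $V$ are polynomial in $x$ with time coefficients $1/q_t$, $t/q_t$, $t^2/q_t$. Their spatial gradients are $(x-tc_{\mathbf j})/q_t$, $-(t/q_t)\partial^{\mathbf k}g^*(u_{\mathbf j})$ and $0$. These are bounded on the tube by $A:=C D(\Lambda_\varepsilon+1)/\sigma^{2}$ times $(H\vee1)$, which is polynomial in $\Gamma_\varepsilon$.

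\textbf{Coordinate modules.} The linear-in-$x$ parts are realized as fixed linear forms multiplied by time-coefficient networks; the time coefficients only enter as multipliers, so no $x$-derivative error arises from them beyond the product rule. The quadratic term $\|x\|^2/(2q_t)$ uses Lemma~\ref{lem:app-zero-preserving-multiplication} coordinatewise, with accuracy $\rho$ polynomially small. Normalization to $R_{\mathbf j}\in[0,1]^{m}$ by affine maps and clipping preserves $\mathsf W_x$ bounds, since clipping is inactive on the exact range up to a measure-zero set. The cell primitive $\Phi_{\mathbf j}$ of \eqref{eq:app-integral-primitive} is entire with $\|a_{\mathbf j}\|_\infty\le1$ and $|b_{\mathbf j}|\le1$ by \eqref{eq:fm-aj-uniform-bound}--\eqref{eq:fm-exponent-uniform-bound}. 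Lemma~\ref{lem:app-fixed-box-holomorphic} therefore gives $W^{1,\infty}$ approximants with polylog cost, uniformly in $\mathbf j$. Composition uses the chain rule together with the uniform second-derivative bound, so that $\|\widetilde\Phi(\widetilde R)-\Phi(R)\|_{\mathsf W_x}\lesssim\zeta+(1+\|R\|_{\mathsf W_x})\|\widetilde R-R\|_{\mathsf W_x}$. The outer factor $e^{-V_{\mathbf j,0}}$ comes from Lemma~\ref{lem:app-tapered-exponential} with gradient envelope $A$. The product is formed with the zero-preserving multiplier, using \eqref{eq:app-multiplier-composition}. Summing the $N^d$ cells with weights $N^{-d}$ gives $P_l,Q$ with $\mathsf W_x$ error $\lesssim \mathrm{poly}(A)\,(\rho+\kappa+\zeta)$.

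\textbf{Division.} I expect this to be the main obstacle. By \eqref{eq:app-enlarged-tube-denominator}, $Q^\circ\ge2^{-K_\varepsilon}$ on the tube, while $|P_l^\circ|\le2Q^\circ$. I will rescale dyadically: write $Q=2^{-k}\hat Q$ with $\hat Q\in[1/2,1]$, which requires a partition of unity in $\log Q$. The simpler alternative is to use Lemma~\ref{lem:app-fixed-box-holomorphic} on $(p,q)\mapsto p/q$ over the box $[-2,2]\times[2^{-K_\varepsilon},1]$. In that case the complex neighborhood shrinks with $K_\varepsilon$, and the cost becomes $(1+\log(1/\zeta))^{C}2^{CK_\varepsilon}$ in $\log B$ and in the depth only through $K_\varepsilon\lesssim\Gamma_\varepsilon^3$. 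This remains polynomial in $\Gamma_\varepsilon$ after taking logarithms, provided the constant in the holomorphic lemma is tracked. The derivative error of the quotient is then at most $2^{2K_\varepsilon}$ times the $\mathsf W_x$ errors of $(P_l,Q)$, plus $\zeta$. I will therefore allocate the upstream accuracies backward as $\rho,\kappa,\zeta\le c\,\eta_*\,2^{-3K_\varepsilon}A^{-C}$. Since the $P_l$ are computed with accuracy independent of $D$ and the Jacobian is formed from $D$ rows, each with error $\le\eta_*$, this gives entrywise error $\eta_*$. The Frobenius bound then yields operator error $\le D\eta_*\le1$ and value error $\le\sqrt D\eta_*$, as in \eqref{eq:app-local-first-order-bounds}.

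\textbf{Bookkeeping.} Every $\log(1/\text{accuracy})$ is $\lesssim K_\varepsilon+\log(1/\varepsilon)+\log D+\log A\lesssim\Gamma_\varepsilon^{3}$. The fixed-dimension modules therefore cost $\Gamma_\varepsilon^{C(d,\beta)}$. The construction consists of $N^d$ cells in parallel, $D$ numerator coordinates, and coordinate networks of width $O(D)$, which gives width and sparsity $D^2\varepsilon^{-d}\Gamma_\varepsilon^{c_{\rm arch}}$. Depth and $\log B$ are $\Gamma_\varepsilon^{c_{\rm arch}}$ after identity-padding alignment, and this matches \eqref{eq:app-regular-posterior-architecture}.
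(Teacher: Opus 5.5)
\textbf{Verdict: genuine gap.} Your plan matches the paper's except at the division step, which you rightly flag as the main obstacle but then resolve by the ``simpler alternative'', and that route does not work.

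\textbf{The division step.} Lemma~\ref{lem:app-fixed-box-holomorphic} concerns a \emph{fixed} box with a fixed complex neighbourhood. On $[-2,2]\times[2^{-K_\varepsilon},1]$ the quotient $p/q$ is holomorphic only in a neighbourhood of width about $2^{-K_\varepsilon}$ near the lower edge. Chebyshev's classical rate for $1/q$ then shows that any polynomial approximating $1/q$ on $[2^{-K_\varepsilon},1]$ to accuracy $\zeta$ has degree $s\gtrsim 2^{K_\varepsilon/2}\log(2^{K_\varepsilon}/\zeta)$. The Chebyshev-truncation construction behind that lemma has depth, width, sparsity and $\log B$ polynomial in $s$, so all four bounds grow like a power of $2^{K_\varepsilon}$.

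Your remark that this ``remains polynomial in $\Gamma_\varepsilon$ after taking logarithms'' does not rescue it. The envelope \eqref{eq:app-regular-posterior-architecture} bounds $L$ and $\log B$ themselves by $\Gamma_\varepsilon^{c_{\rm arch}}$, and $W,S$ by $D^2\varepsilon^{-d}\Gamma_\varepsilon^{c_{\rm arch}}$. By \eqref{eq:app-enlarged-tube-denominator}, however, $2^{K_\varepsilon}\ge N^d\exp\{CD(\Lambda_\varepsilon+1)^2\}\gtrsim\varepsilon^{-d}$. This route therefore injects exactly the algebraic powers of $1/\varepsilon$ the lemma must exclude.

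The dyadic option you mention in passing is the right one, but its first-order content is the whole difficulty, and none of it appears in your proposal:
\begin{itemize}
\item The hats $h_j$ have $|h_j'|\asymp 2^j$. The term $h_j'(q)\nabla_x q\,(F_j-F)$ therefore forces local accuracies $\delta_j\asymp\eta_*\sigma 2^{-j}$.
\item Blocks whose hat vanishes receive inputs outside their valid box. They must output \emph{exactly} zero in both value and spatial derivative, which is what the zero-preserving multiplier \eqref{eq:app-multiplier-zero} provides.
\item At most two hats may be active at once, so that no factor $K_\varepsilon$ accumulates.
\end{itemize}
With $z=2^jq\in[1/2,2]$ and $w=2^jp/8\in[-1,1]$, one applies Lemma~\ref{lem:app-fixed-box-holomorphic} only to the fixed-box map $8w/z$. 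Every cost then stays polynomial in $K_\varepsilon+\log(1/\eta_*)\le\Gamma_\varepsilon^{C_{d,\beta}}$.

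\textbf{A smaller flaw in the normalization step.} Clipping acts on the approximant $\widetilde R_{\mathbf j}$, not on the exact coordinate, so ``inactive on the exact range up to a null set'' is not the relevant condition. When $r_\beta\ge1$, each $R_{\mathbf j,\mathbf k}$ is affine in $x$ with generally nonzero gradient $-(t/q_t)\partial^{\mathbf k}g^*(u_{\mathbf j})/(2C^+_{\mathbf j,\mathbf k})$. It reaches $0$ or $1$ where $|V_{\mathbf j,\mathbf k}|=C^+_{\mathbf j,\mathbf k}$, so on a set of positive measure it lies within $\rho$ of the boundary. There $\operatorname{clip}_{[0,1]}(\widetilde R_{\mathbf j,\mathbf k})$ may have zero $x$-gradient while $\nabla_xR_{\mathbf j,\mathbf k}\neq0$. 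The resulting derivative error has the size of $\nabla_xR_{\mathbf j,\mathbf k}$ itself, independent of $\rho$, so your first-order budget does not control it.

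The fix is to omit the clip and approximate the entire primitive $\Phi_{\mathbf j}$ on the enlarged fixed box $[-\rho_0,1+\rho_0]^{m_\beta^{\rm Tay}}$. Taking $\rho\le\rho_0/4$ keeps the unclipped approximant inside that box.
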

\begin{proof}
\smallskip\noindent\textbf{Step 1: normalization on the enlarged tube.}
Retain the cells, Taylor polynomials, scalar coordinates, and the exact
$N^{-d}$ factor in
\eqref{eq:fm-exponent-composition}--\eqref{eq:fm-cell-change-of-variables}.
For the present first-order realization only, replace every old
normalizing constant by
\[
 C_{\mathbf j,\mathbf k}^+
 :=1\vee\|V_{\mathbf j,\mathbf k}\|_{L^\infty(\mathcal T_\varepsilon^+)}.
\]
Define
\[
 R_{\mathbf j}^+
 :=\left(
 \left(\frac{V_{\mathbf j,\mathbf k}}{2C_{\mathbf j,\mathbf k}^+}
 +\frac12\right)_{1\le|\mathbf k|\le r},
 \frac{V}{\|V\|_{L^\infty([0,1])}}
 \right)^\top.
\]
Define $a_{\mathbf j}^+,b_{\mathbf j}^+$ by
\eqref{eq:fm-aj}--\eqref{eq:fm-bj}, with
$C_{\mathbf j,\mathbf k}$ replaced by
$C_{\mathbf j,\mathbf k}^+$.  Thus the cell identity itself is unchanged
and $R_{\mathbf j}^+\in[0,1]^{m_\beta^{\rm Tay}}$ on the enlarged tube.
The Taylor estimate, the scale condition, and
$r_{\rm out}/\sqrt{q_t}\lesssim\Lambda_\varepsilon+1$ give
\begin{equation}
 \sup_{w\in[0,1]^d}\bigl(
 \|a_{\mathbf j}^+(w)\|_\infty+|b_{\mathbf j}^+(w)|\bigr)
 \le C_{d,\beta},
 \qquad
 C_{\mathbf j,\mathbf k}^+
 \le[1+D+H+\sigma^{-1}+\Lambda_\varepsilon]^{C_{d,\beta}}.
 \label{eq:app-enlarged-normalization}
\end{equation}
The nonlinear input dimension remains at most
$m_\beta^{\rm Tay}+1\le m_\beta+1$, independently of $D$.  Fix
$\rho_0:=1/4$.  We approximate $R_{\mathbf j}^+$ on the fixed box
\begin{equation}
 \mathcal B_+:=[-\rho_0,1+\rho_0]^{m_\beta^{\rm Tay}}
 \label{eq:app-expanded-coordinate-box}
\end{equation}
and do not clip its approximation; this avoids the derivative
instability of hard clipping at an interior point of the exact range.

\smallskip\noindent\textbf{Step 2: backward first-order budget.}
Let $\lambda=2^{-K_\varepsilon}$ and put
\[
 M_1:=C(1+\sigma^{-1}),\qquad
 \delta_{PQ}:=c\eta_*M_1^{-1}\lambda^2.
\]
All exact numerator and denominator cell sums and their spatial
gradients are bounded by $M_1$.  Indeed, $0<Q^\circ\le1$ and
$|P_l^\circ|\le2Q^\circ$, while
\[
 \|\nabla_xQ^\circ\|_2
 \le\frac1{q_t}\int
 \|x-tg^\circ(u)\|e^{-\|x-tg^\circ(u)\|^2/(2q_t)}du
 \le Cq_t^{-1/2}\le C\sigma^{-1},
\]
and the same calculation with $|g_l^\circ|\le2$ controls
$\|\nabla_xP_l^\circ\|_2$.  Put
\[
 M_\varepsilon:=1+
 \sup_{(x,t)\in\mathcal T_\varepsilon^+}\|x\|_\infty.
\]
The moving-tube definition gives
$M_\varepsilon\le C(1+\Lambda_\varepsilon+H+\sigma^{-1})$.
Choose a deterministic envelope
\begin{equation}
 \mathfrak A_\varepsilon
 :=[1+D+H+\sigma^{-1}+\Lambda_\varepsilon+K_\varepsilon
 +M_\varepsilon+\log(1/\eta_*)]^{C_{d,\beta}}
 \label{eq:app-chain-envelope}
\end{equation}
large enough to dominate the value ranges, spatial gradients, and
second-derivative constants of all exact cell coordinates and all
intermediate approximations below.  Set
\begin{equation}
 \rho:=\min\{\rho_0/4,
 c\delta_{PQ}\mathfrak A_\varepsilon^{-4}\},\qquad
 \kappa:=c\delta_{PQ}\mathfrak A_\varepsilon^{-4}.
 \label{eq:app-backward-budget}
\end{equation}
Then
\begin{equation}
 K_\varepsilon+\log(1/\eta_*)+\log(1/\rho)+\log(1/\kappa)
 \le\Gamma_\varepsilon^{C_{d,\beta}}.
 \label{eq:app-log-budget}
\end{equation}
Indeed, \eqref{eq:app-enlarged-tube-denominator} bounds
$K_\varepsilon$, while $\log\mathfrak A_\varepsilon$ and
$\log M_1$ are logarithmic in the displayed polynomial parameters.

Affine maps in $x$ are exact.  Each remaining coordinate is a finite
composition of such affine maps, the scalar coefficients
$q_t^{-1},tq_t^{-1},t^2q_t^{-1}$, squares, and products.  Because $t$
is frozen in $\mathsf W_x$, the coefficient networks require value
accuracy only.  The quadratic coordinate and the inner products in
$V_{\mathbf j,0}$ and $V_{\mathbf j,\mathbf k}$ contain at most
$C_{d,\beta}D$ arithmetic nodes.  Allocate to every coefficient,
square, and product module the accuracy
\begin{equation}
 \rho_{\rm ar}:=
 \frac{c_{d,\beta}\rho}
 {D(1+H+\sigma^{-2})(1+M_\varepsilon)^2
 \mathfrak A_\varepsilon^2}.
 \label{eq:app-coordinate-internal-accuracy}
\end{equation}
The reciprocal construction in
Lemma~\ref{lem:fm-time-coefficients}, followed by
Lemma~\ref{lem:app-zero-preserving-multiplication} and the product rule,
then gives $\widetilde V_{\mathbf j,0}$ and
$\widetilde R_{\mathbf j}^+$ such that
\begin{equation}
 \|\widetilde V_{\mathbf j,0}-V_{\mathbf j,0}\|_{\mathsf W_x}
 +
 \|\widetilde R_{\mathbf j}^+-R_{\mathbf j}^+\|_{\mathsf W_x}
 \le\rho,\qquad
 \|\nabla_x\widetilde V_{\mathbf j,0}\|_2
 +\max_r\|\nabla_x\widetilde R_{\mathbf j,r}^+\|_2
 \le\mathfrak A_\varepsilon.
 \label{eq:app-coordinate-error}
\end{equation}
Because $R_{\mathbf j}^+\in[0,1]^{m_\beta^{\rm Tay}}$ and
$\rho\le\rho_0/4$, the approximate coordinate vector remains in
$\mathcal B_+$, so the holomorphic network below is evaluated only on
the box on which its first-order estimate is valid.
Indeed, each value error is amplified by at most
$C(1+H+\sigma^{-2})(1+M_\varepsilon)$, and each spatial derivative
error by at most the same quantity times
$\mathfrak A_\varepsilon^2$.  Summing over at most
$C_{d,\beta}D$ nodes and using
\eqref{eq:app-coordinate-internal-accuracy} leaves the total below
$\rho$.  Moreover,
$\log(1/\rho_{\rm ar})\le\Gamma_\varepsilon^{C_{d,\beta}}$, so every
cost here remains polynomial in the logarithmic budget
\eqref{eq:app-log-budget}.

For a cell primitive $\Phi_{\mathbf j,\psi}$, the bounds
\eqref{eq:app-enlarged-normalization} put it in the family
\eqref{eq:app-integral-primitive} with fixed $A_0,B_0$.
Lemma~\ref{lem:app-fixed-box-holomorphic} produces
$\widetilde\Phi_{\mathbf j,\psi}$ with first-order error $\rho$ on the
expanded box.  Its uniform second-derivative bound and
\eqref{eq:app-coordinate-error} give
\begin{equation}
 \|\widetilde\Phi_{\mathbf j,\psi}
   (\widetilde R_{\mathbf j}^+)
  -\Phi_{\mathbf j,\psi}(R_{\mathbf j}^+)\|_{\mathsf W_x}
 \le C\mathfrak A_\varepsilon^2\rho.
 \label{eq:app-primitive-composition-error}
\end{equation}
For the outer Gaussian exponent, apply
Lemma~\ref{lem:app-tapered-exponential} to
$(V_{\mathbf j,0},\widetilde V_{\mathbf j,0})$ with gradient envelope
$\mathfrak A_\varepsilon$.  Its truncation level is
\[
 T=C[1+\log(\mathfrak A_\varepsilon/\kappa)],
\]
and both its value and spatial derivative contribute at most
$C\mathfrak A_\varepsilon(\rho+\kappa)$.  Notice that this estimate
uses the general factor
$\mathfrak A_\varepsilon e^{-T}$; it does not rely on the special
identity for an exact quadratic exponent after the exponent has been
approximated.

Finally combine the tapered exponential, the composed primitive, and
the bounded polynomial prefactors by repeated use of
Lemma~\ref{lem:app-zero-preserving-multiplication}.  Allocating error
$\kappa$ to each of the finitely many products yields networks for the
unit-cube-normalized exact cells
\[
 \bar Q_{\mathbf j}^\circ:=N^dQ_{\mathbf j}^\circ,
 \qquad
 \bar P_{\mathbf j,l}^\circ:=N^dP_{\mathbf j,l}^\circ.
\]
Denote these networks by
$\widetilde{\bar Q}_{\mathbf j}$ and
$\widetilde{\bar P}_{\mathbf j,l}$.  For every cell and every numerator
coordinate,
\begin{equation}
 \|\widetilde{\bar Q}_{\mathbf j}
 -\bar Q_{\mathbf j}^\circ\|_{\mathsf W_x}
 +\max_{1\le l\le D}
 \|\widetilde{\bar P}_{\mathbf j,l}
 -\bar P_{\mathbf j,l}^\circ\|_{\mathsf W_x}
 \le C_{d,\beta}\mathfrak A_\varepsilon^3(\rho+\kappa)
 \le\delta_{PQ}.
 \label{eq:app-cell-first-order-error}
\end{equation}
Restoring the exact cell volume gives
\[
 \widetilde Q=N^{-d}\sum_{\mathbf j}
 \widetilde{\bar Q}_{\mathbf j},
 \qquad
 \widetilde P_l=N^{-d}\sum_{\mathbf j}
 \widetilde{\bar P}_{\mathbf j,l}.
\]
Since
$Q^\circ=N^{-d}\sum_{\mathbf j}\bar Q_{\mathbf j}^\circ$ and
$P_l^\circ=N^{-d}\sum_{\mathbf j}\bar P_{\mathbf j,l}^\circ$,
the triangle inequality and \eqref{eq:app-cell-first-order-error} imply
\begin{equation}
 \|\widetilde Q-Q^\circ\|_{\mathsf W_x}
 +\max_l\|\widetilde P_l-P_l^\circ\|_{\mathsf W_x}
 \le C_{d,\beta}\delta_{PQ};
 \label{eq:app-summed-first-order-error}
\end{equation}
there is no factor $N^d$ in the error.

\smallskip\noindent\textbf{Step 3: robust dyadic division.}
The exact inequalities $Q^\circ\in[\lambda,1]$ and
$|P_l^\circ|\le2Q^\circ$, together with
\eqref{eq:app-summed-first-order-error} and
$\delta_{PQ}\le c\lambda$, place $(\widetilde P_l,\widetilde Q)$ in
the robust band $\widetilde Q\in[\lambda/2,3/2]$ and
$|\widetilde P_l|\le3\widetilde Q$.  In the assembled division network
below, $p=\widetilde P_l$ and $q=\widetilde Q$; the symbols
$P_l^\circ,Q^\circ$ are reserved for the exact perturbation comparison.
There are exact piecewise-linear
hats $h_j$, $0\le j\le K_\varepsilon$, forming a partition of unity on
this band, with at most two active hats,
$\operatorname{Lip}(h_j)\le C2^j$, and, on the support of $h_j$,
$z=2^jq\in[1/2,2]$.  Put $w=2^jp/8$; then $w\in[-1,1]$ on a fixed
slightly enlarged shoulder.  Apply
Lemma~\ref{lem:app-fixed-box-holomorphic} to
\[
 F(w,z)=8w/z,\qquad (w,z)\in[-1,1]\times[1/2,2],
\]
and choose local first-order accuracy
\begin{equation}
 \delta_j=c\eta_*M_1^{-1}2^{-j}.
 \label{eq:app-local-division-budget}
\end{equation}
Allocate the same $\delta_j$ accuracy to the final multiplication by the
hat.  After composition with $p(x),q(x)$, the normalization error is
bounded by
$M_1 2^j\delta_j\le c\eta_*$, and the dangerous hat term is
\[
 M_1|h_j'(q)|\,|F_j-F|
 \le CM_1 2^j\delta_j\le C\eta_*.
\]
Inactive blocks have exactly zero value and zero derivative in their
uncontrolled second input by
\eqref{eq:app-multiplier-zero}; since at most two hats are active, the
assembled division error contains no factor $K_\varepsilon$.  This
constructs a genuine ReLU network $\operatorname{Div}_{K_\varepsilon,
\eta_*}$ with
\begin{equation}
 \left\|\operatorname{Div}_{K_\varepsilon,\eta_*}(p,q)
 -\frac pq\right\|_{W^{1,\infty}}
 \le C\eta_*,\qquad
 \log B_{\rm div}\lesssim
 K_\varepsilon+\log(1/\eta_*)+\log M_1.
 \label{eq:app-dyadic-division}
\end{equation}

Independently, direct differentiation of the exact quotient gives
\begin{equation}
 \left\|\frac{\widetilde P_l}{\widetilde Q}
 -\frac{P_l^\circ}{Q^\circ}\right\|_{\mathsf W_x}
 \le C\left(\frac{\delta_{PQ}}\lambda
 +\frac{M_1\delta_{PQ}}{\lambda^2}\right)
 \le C\eta_*,
 \label{eq:app-quotient-perturbation}
\end{equation}
which explains the necessary backward choice
$\delta_{PQ}\asymp\eta_*M_1^{-1}2^{-2K_\varepsilon}$, equivalently
$\delta_{PQ}\lesssim\eta_*\sigma2^{-2K_\varepsilon}$.

\smallskip\noindent\textbf{Step 4: vector and architecture bookkeeping.}
Applying the division network to every coordinate gives scalar value
and spatial-gradient error at most $C\eta_*$.  After decreasing the
constant in the definition of $\eta_*$,
\[
 \|f_\theta^{\rm loc}-m_t^\circ\|_2\le\sqrt D\,\eta_*,
 \qquad
 \|J_xf_\theta^{\rm loc}-J_xm_t^\circ\|_{\rm op}
 \le\|J_xf_\theta^{\rm loc}-J_xm_t^\circ\|_{F}
 \le D\eta_*\le1.
\]
All serial modules have depth, sparsity, and parameter logarithm bounded
by a fixed power of the budget in \eqref{eq:app-log-budget}.  The only
algebraic resolution cost is the $N^d$ parallel cell count.  Parallelize
the denominator and the $D$ numerator coordinates and add the
$O(DN^d)$ detector.  Since $N^d\lesssim_d\varepsilon^{-d}$, enlarge
once, if necessary, the fixed exponent
$c_{\rm arch}=c_{\rm arch}(d,\beta)$ from
Theorem~\ref{thm:velocity-approximation}.  Then
\begin{equation}
 L,\log B\lesssim_{d,\beta}
 \Gamma_\varepsilon^{c_{\rm arch}},\qquad
 W,S\lesssim_{d,\beta}D^2\varepsilon^{-d}
 \Gamma_\varepsilon^{c_{\rm arch}}.
 \label{eq:app-first-order-architecture-conclusion}
\end{equation}
Thus the refinement changes only powers of
$\Gamma_\varepsilon$---in particular, only polylogarithmic factors in
$1/\varepsilon$ when $(D,H,\sigma)$ are fixed---and introduces no new
algebraic factor $\varepsilon^{-c}$.
\end{proof}

\begin{proof}[Proof of Lemma~\ref{lem:app-surrogate-posterior-network}]
\medskip\noindent\textbf{Step 1: Value realization.}
The latent-cell construction in
\eqref{eq:fm-exponent-composition}--\eqref{eq:fm-step6-posterior-configuration}
identifies a network circuit approximating $m_t^\circ$ on
$\mathcal C_{[0,1]}^*$ with coordinatewise error of order
$\varepsilon^\beta$. The exact $N^{-d}$ cell weights prevent the sum over
the $N^d$ cells from increasing this error. The $N^d$ cells determine the
algebraic dependence on the approximation scale.
The next step realizes this circuit with first-order scalar modules.

\medskip\noindent\textbf{Step 2: Spatial refinement.}
Lemma~\ref{lem:app-first-order-local-posterior} realizes the same posterior
ratio on $\mathcal T_\varepsilon^+$ with Euclidean value error
$\sqrt D\eta_*$ and Jacobian error at most one. The covariance identity in
Lemma~\ref{lem:app-surrogate-spatial-covariance} then gives the absolute
Jacobian profile $4t/q_t+1$ on this tube. The architecture calculation
gives, for the finite exponent fixed in
Theorem~\ref{thm:velocity-approximation},
$L,\log B\lesssim_{d,\beta}\Gamma_\varepsilon^{c_{\rm arch}}$ and
$W,S\lesssim_{d,\beta}D^2\varepsilon^{-d}
\Gamma_\varepsilon^{c_{\rm arch}}$.

\medskip\noindent\textbf{Step 3: Global extension and assembly.}
Let $f_\theta^{\rm loc}$ be the network of
Lemma~\ref{lem:app-first-order-local-posterior} and clip its coordinates
to $[-2,2]$, obtaining $y$. Since $\|m_t^\circ\|\le2$, coordinatewise
clipping cannot increase the Euclidean value error. On the enlarged
tube, $\|y\|\le2+\sqrt D\eta_*\le3$ and
$\|J_xy\|_{\rm op}\le4t/q_t+1$. Fix $M_{\rm gate}:=4$ and define
\[
 f_{\theta,l}(x,t)
 :=\Pi_{M_{\rm gate},\delta}(\chi(x,t),y_l(x,t)),
 \qquad
 \delta=c\min\{\varepsilon^\beta,D^{-1/2}\}.
\]

On $\mathcal K_t$, $\chi=1$, so the local error and multiplier error
give \eqref{eq:app-main-posterior-error}. Outside
$\mathcal T_\varepsilon^+$, the output and its weak spatial derivative
vanish away from the boundary. On the transition region, the chain rule
gives
\[
 J_xf_\theta=A\nabla_x\chi^\top
 +\operatorname{diag}(B)J_xy,
\]
where
$A_l=\partial_1\Pi_{M_{\rm gate},\delta}(\chi,y_l)$ and
$B_l=\partial_2\Pi_{M_{\rm gate},\delta}(\chi,y_l)$. Hence
\[
 \|A\|_2\le\|y\|_2+\sqrt D\delta\le4,
 \quad\|\operatorname{diag}(B)\|_{\rm op}\le1+\delta.
\]
Lemma~\ref{lem:app-moving-tube} therefore yields
$\|J_xf_\theta\|_{\rm op}
\lesssim_{d,\beta}1+\sigma^{-1}+t/q_t$.
For each $t$, the ReLU realization is continuous and piecewise affine in
$x$, hence absolutely continuous on line segments. Integrating the
almost-everywhere Jacobian bound along such segments gives the global
Lipschitz estimate. The detector costs $O(DN^d)$ and the
zero-preserving multiplier contributes only logarithmic complexity, so
the architecture envelope \eqref{eq:app-regular-posterior-architecture}
is preserved. The $\mathsf W_x$ approximation to $m_t^\circ$ is used
only on $\mathcal T_\varepsilon^+$. Outside this tube, exact zero
preservation gives a global absolute Lipschitz bound; no global
approximation of $J_xm_t^\circ$ on $\mathbb R^D$ is claimed. This proves
all three conclusions.
\end{proof}

\subsection{Auxiliary neural-network lemmas}
\label{app:nn-approximation-tools}

\begin{lemma}[Composition of ReLU networks]
    \label{lem:nn-composition}
    Let
    \[
    \phi_1\in\mathrm{NN}
    (L_1,\mathbf W^1,S_1,B_1),
    \qquad
    \phi_2\in\mathrm{NN}
    (L_2,\mathbf W^2,S_2,B_2),
    \]
    with compatible input and output dimensions. By
    \citet[Lemma F.1]{oko2023diffusion}, $\phi_2\circ\phi_1$ can be
    realized by a ReLU network satisfying
    \[
    L_{\mathrm{comp}}
    \lesssim
    L_1+L_2,
    \]
    \[
    \|\mathbf W_{\mathrm{comp}}\|_\infty
    \lesssim
    \|\mathbf W^1\|_\infty
    +
    \|\mathbf W^2\|_\infty,
    \]
    \[
    S_{\mathrm{comp}}
    \lesssim
    S_1+S_2,
    \qquad
    B_{\mathrm{comp}}
    \lesssim
    B_1\vee B_2\vee1.
    \]
    \end{lemma}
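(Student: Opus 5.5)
The plan is the standard serial-composition construction: feed the output of $\phi_1$ directly into the first affine layer of $\phi_2$, which produces a single network of depth $L_1+L_2$ after merging the two adjacent affine maps. Directly merging $A^2_1(A^1_{L_1}z+b^1_{L_1})+b^2_1$ would multiply weight matrices, though, and that could blow up both the sparsity (a product of sparse matrices need not be sparse) and the entry bound (entries could reach order $\|\mathbf W\|_\infty B_1B_2$). So instead of merging, I will insert a ReLU identity gadget between the two networks, as in \citet[Lemma F.1]{oko2023diffusion}.

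\textbf{Construction.} Write $y=A^1_{L_1}z+b^1_{L_1}$ for the output of $\phi_1$. Using the exact identity $y=\rho(y)-\rho(-y)$, I extend the last layer of $\phi_1$ to output the stacked vector $(y,-y)$ of dimension $2W^1_{L_1}$. Applying $\rho$ to this vector is then legitimate, because the hidden layers of the composed network are followed by activations. The first layer of $\phi_2$ is replaced by $[A^2_1,\,-A^2_1]$ acting on $(\rho(y),\rho(-y))$, with bias $b^2_1$. This reproduces $A^2_1y+b^2_1$ exactly.

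\textbf{Bookkeeping.} I will check each complexity measure in turn.
\begin{itemize}
\item Depth. The composed network has exactly $L_1+L_2$ affine layers, so $L_{\mathrm{comp}}=L_1+L_2$.
\item Width. The widths are those of $\phi_1$, except that the last width is doubled, followed by those of $\phi_2$. Hence $\|\mathbf W_{\mathrm{comp}}\|_\infty\le2(\|\mathbf W^1\|_\infty+\|\mathbf W^2\|_\infty)$.
\item Sparsity. Duplicating and negating the last layer of $\phi_1$ at most doubles its nonzeros, and likewise for the first layer of $\phi_2$. Hence $S_{\mathrm{comp}}\le2(S_1+S_2)$.
\item Entry bound. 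Entries are only copied or sign-flipped, never multiplied, so $B_{\mathrm{comp}}\le B_1\vee B_2\le B_1\vee B_2\vee1$.
\end{itemize}

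\textbf{Main obstacle.} The key step is the one just described: avoiding the merged product layer so that the sparsity and magnitude bounds stay additive and maximal rather than multiplicative. The $\rho(y)-\rho(-y)$ identity is the device that achieves this. Beyond that, I only need to confirm that the convention "no activation after the final layer" holds for the composed network, which it does because the final layer of $\phi_2$ is unchanged. Finally, I need to confirm that the input and output dimensions match those of $\phi_1$ and $\phi_2$ respectively; this is immediate from the construction.
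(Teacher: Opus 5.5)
Your proposal is correct and is essentially the construction behind \citet[Lemma F.1]{oko2023diffusion}, which the paper only cites. That construction doubles the last layer of $\phi_1$ to output $(y,-y)$ and replaces the first layer of $\phi_2$ by $[A^2_1,\,-A^2_1]$, giving depth exactly $L_1+L_2$, width and sparsity at most doubled, and no new parameter magnitudes, which is precisely your bookkeeping.
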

    
    \begin{lemma}[Identity padding]
    \label{lem:nn-identity-padding}
    For every $m,L\in\mathbb N$, \citet[Lemma F.2]{oko2023diffusion}
    gives an exact ReLU realization of
    $\operatorname{id}_{\mathbb R^m}$ of depth $L$ with
    \[
    \|\mathbf W\|_\infty
    \lesssim m,
    \qquad
    S\lesssim mL,
    \qquad
    B=1.
    \]
    Consequently, networks of unequal depths may be padded to a common
    depth before parallelization, with only the above additional
    sparsity cost.
    \end{lemma}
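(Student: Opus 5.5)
The statement is the identity-padding lemma: $\operatorname{id}_{\mathbb R^m}$ has an exact ReLU realization of depth $L$, and networks of unequal depths can be padded to a common depth. I will give a direct construction based on the identity $z=\rho(z)-\rho(-z)$.

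\textbf{Case $L=1$.} Take the single affine layer $A_1=I_m$, $b_1=0$. This has width $m$, sparsity $m$ and $B=1$.

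\textbf{Case $L\ge2$.}
\begin{itemize}
\item The first affine layer is $A_1=\begin{pmatrix}I_m\\-I_m\end{pmatrix}$, which maps $z$ to $(z,-z)$.
\item Each of the $L-2$ intermediate layers is $A_\ell=I_{2m}$ with $b_\ell=0$. Since $\rho$ is idempotent on nonnegative vectors, these layers simply carry $(\rho(z),\rho(-z))$ forward unchanged.
\item The last layer is $A_L=(I_m,\,-I_m)$, which returns $\rho(z)-\rho(-z)=z$.
\end{itemize}
Every layer has width at most $2m$ and all entries lie in $\{-1,0,1\}$, so $B=1$. The first and last layers have $2m$ nonzeros each, and every intermediate layer has $2m$. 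The total sparsity is therefore $2mL\lesssim mL$, and the width is $\lesssim m$. This gives the stated bounds.

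\textbf{Consequence for padding.} Let a network $\phi$ of depth $L_1$ with output dimension $m$ be given, and suppose $L_1<L$. Compose $\phi$ with the identity network of depth $L-L_1+1$. Merge $\phi$'s final affine layer with the first affine layer of the identity network; this is possible because a product of two affine maps is affine, and no activation sits between them. For the construction above, the merged weights are $\pm$ copies of $\phi$'s last-layer weights. Consequently:
\begin{itemize}
\item the bound $B$ is unchanged, since the entries are $\pm$ those of $\phi$;
\item the sparsity at most doubles for that layer;
\item the total depth becomes exactly $L$;
\item the additional sparsity from the padding layers is $O(m(L-L_1))$.
\end{itemize}

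\textbf{Parallelization.} Padded networks of equal depth can then be stacked in block-diagonal form. This adds widths and sparsities and takes the maximum of the $B$'s. The only extra cost is therefore the padding sparsity stated above.

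The only point requiring any care is the layer merge, so that the depth count is exact rather than off by one. Keeping each weight entry bounded by $B$ in that merge is exactly why the identity network uses weights in $\{0,\pm1\}$ only.
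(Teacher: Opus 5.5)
Your construction is correct and is the standard $\rho(z)-\rho(-z)$ realization that underlies \citet[Lemma F.2]{oko2023diffusion}, which the paper cites without reproducing. It has widths $(m,2m,\ldots,2m,m)$, sparsity $2mL$ and weights in $\{0,\pm1\}$, and both your separate $L=1$ case and your merged-layer depth count are fine. One small nuance: merging duplicates $\phi$'s final layer, so the padded sparsity is $S_\phi+s_{\rm last}+2m(L-L_1)\le 2S_\phi+2m(L-L_1)$ (with $s_{\rm last}$ the sparsity of that layer) rather than strictly $S_\phi+O(mL)$, which is harmless at the $\lesssim$ level the paper uses.
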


This section collects the neural-network tools used in the posterior
construction. In particular, we require ReLU approximations of
\[
    q_t^{-1},\qquad tq_t^{-1},\qquad t^2q_t^{-1},
    \qquad t\in[0,1],
\]
Under the standing assumption $0<\sigma<1$, $q_t$ is uniformly bounded away from zero. We therefore combine standard multiplication and reciprocal approximation networks to construct the required time-dependent coefficients.

\begin{lemma}[Multiplication network]
    \label{lem:multiplication-network}
    The multiplication primitive of
    \citet[Lemma F.6]{oko2023diffusion} states that, for
    $p \ge 2$, $C \ge 1$, $\varepsilon' \in (0, 1]$, and
    $\varepsilon > 0$, there exists a ReLU network
    $\varphi(x_1,\ldots,x_p)$ with
    \[
        L \lesssim \log p(\log \varepsilon^{-1} + p\log C),\quad
        \|\mathbf W\|_\infty = 48p,\quad
        S \lesssim p\log \varepsilon^{-1} + p\log C,\quad
        B = C^p
    \]
    such that
    \[
        \left|\varphi(x_1',\ldots,x_p')-\prod_{k=1}^p x_k\right|
        \le \varepsilon + pC^{p-1}\varepsilon',
    \]
    for all $x \in [-C, C]^p$ and $x' \in \mathbb R^p$ with $\|x-x'\|_\infty \le \varepsilon'$. Moreover, $\varphi(x_1',\ldots,x_p') = 0$ if at least one $x_i' = 0$ and $|\varphi(x_1',\ldots,x_p')| \le C^p$. The same construction approximates $\prod_{i=1}^I x_i^{\alpha_i}$ when $\alpha_i\in\mathbb Z_+$ and $\sum_{i=1}^I\alpha_i=p$.
    \end{lemma}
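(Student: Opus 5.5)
The plan is to follow the standard Yarotsky-type construction underlying \citet[Lemma~F.6]{oko2023diffusion}: build a zero-preserving two-input multiplier, arrange such multipliers in a binary tree, and clip at the input and at every node.

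\textbf{Step 1: the two-input multiplier.} Let $S_m$ be the ReLU realization of the dyadic piecewise-linear interpolant of $z\mapsto z^2$ on $[0,1]$. It is obtained as $z-\sum_{s\le m}4^{-s}g^{\circ s}(z)$ with the hat map $g$, so $S_m(0)=0$ exactly and $\|S_m-z^2\|_{L^\infty(0,1)}\le 4^{-m-1}$. For inputs in $[-M,M]$, define
\[
\times_M(a,b):=2M^2\Bigl[S_m\bigl(\tfrac{|a+b|}{2M}\bigr)-S_m\bigl(\tfrac{|a|}{2M}\bigr)-S_m\bigl(\tfrac{|b|}{2M}\bigr)\Bigr].
\]
By polarization, $|\times_M(a,b)-ab|\lesssim M^2 4^{-m}$. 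If $a=0$, the first and third terms coincide, so $\times_M(0,b)=0$ for every $b$; the same holds symmetrically for $b=0$. This is exactly the zero-preserving device of Lemma~\ref{lem:app-zero-preserving-multiplication}. Choosing $m\asymp\log(M^2/\eta)$ gives accuracy $\eta$ with depth and sparsity $O(\log(M/\eta))$, constant width, and weights at most $M^2$.

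\textbf{Step 2: inputs and the tree.} First apply the exact coordinatewise clip $x_k'\mapsto\operatorname{clip}_{[-C,C]}(x_k')=\rho(x_k'+C)-\rho(x_k'-C)-C$. It is $1$-Lipschitz, fixes $[-C,C]$, and maps $0$ to $0$. Hence for $x\in[-C,C]^p$ and $\|x-x'\|_\infty\le\varepsilon'$, the clipped inputs $\bar x'$ still satisfy $\|x-\bar x'\|_\infty\le\varepsilon'$. Pad $p$ to the next power of two with exact constant inputs $1$, and multiply in a balanced binary tree of depth $\lceil\log_2p\rceil$. At a node combining partial products of at most $k_1,k_2$ factors, use $\times_{M}$ with $M=C^{k_1\vee k_2}\le C^p$. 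Follow each node by an exact clip to $[-C^{k_1+k_2},C^{k_1+k_2}]$. This clip preserves zero and does not increase the error against the true partial product, which lies in that range. It also yields the bound $|\varphi|\le C^p$. If some $x_i'=0$, then its clipped value is $0$, and by zero preservation every ancestor node outputs $0$, so $\varphi=0$.

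\textbf{Step 3: error propagation.} Split the error into the exact-arithmetic perturbation and the accumulated node errors. For the first, telescoping gives $|\prod_k\bar x_k'-\prod_kx_k|\le\sum_k C^{p-1}|\bar x_k'-x_k|\le pC^{p-1}\varepsilon'$. For the second, run an induction over the tree. If each node has accuracy $\eta$, a node error is multiplied by at most $C^{p}$ when it passes to the parent, so the total node error is at most about $p\,C^{p}\eta$ summed over the $O(p)$ nodes. I expect this bookkeeping, making sure amplification factors are bounded by $C^p$ rather than compounding per level, to be the main point requiring care. Choosing $\eta=\varepsilon C^{-2p}/(2p)$ therefore gives total error $\le\varepsilon+pC^{p-1}\varepsilon'$.

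\textbf{Complexity and monomials.} With this choice, each level has depth $O(\log\varepsilon^{-1}+p\log C)$ and there are $\log p$ levels, giving the stated $L$. The width is bounded by a constant times $p$, which gives $48p$ after fixing constants. Sparsity sums over $O(p)$ nodes, but the per-node cost can be taken proportional to $\log\varepsilon^{-1}+\log C$ at the bottom levels. If that refinement fails, one accepts an extra constant-factor inflation in the stated bounds. Weights are bounded by $C^p$. For $\prod_i x_i^{\alpha_i}$ with $\sum_i\alpha_i=p$, first duplicate inputs with an exact linear layer ($x_i$ repeated $\alpha_i$ times), then apply the $p$-input network; all bounds carry over unchanged.
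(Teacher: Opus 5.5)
The paper gives no proof of this lemma: it is imported from \citet[Lemma~F.6]{oko2023diffusion} and is only ever invoked with $p=2$ (every use in the appendix is a two-input product). Your reconstruction is the standard mechanism behind that citation: a zero-preserving polarization multiplier, exact input clipping, and a balanced tree with a clip after every node. Your value bound $\varepsilon+pC^{p-1}\varepsilon'$, the zero-preservation argument, and $|\varphi|\le C^p$ are all correct.

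The gap is in the complexity bookkeeping for general $p$. Your own recursion $e\le\eta+C^{k_2}e_A+C^{k_1}e_B$ shows that an error made at a node carrying $k$ factors reaches the root multiplied by up to $C^{p-k}$. So the amplification is largest at the bottom level, where roughly $p/2$ nodes sit. In the worst case $|x_3|=\cdots=|x_p|=C$, each bottom multiplier therefore needs accuracy of order $\varepsilon C^{-(p-2)}/p$. The squaring construction on inputs of size $C$ then costs depth and sparsity $\asymp\log\varepsilon^{-1}+p\log C+\log p$ per node. Summing over the $p-1$ nodes gives $S\lesssim p\log\varepsilon^{-1}+p^2\log C+p\log p$, not the stated $p\log\varepsilon^{-1}+p\log C$.

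The refinement you suggest, with bottom-level cost proportional to $\log\varepsilon^{-1}+\log C$, is exactly what this amplification rules out. Your fallback is also not a constant-factor inflation: it is an extra factor $p$ on the $\log C$ term. Rescaling the inputs to $[-1,1]$ does not help either, because the amplification disappears but every node then needs accuracy $\varepsilon C^{-p}/p$.

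The depth has a similar slip. With $\eta=\varepsilon C^{-2p}/(2p)$ the per-level depth is $O(\log\varepsilon^{-1}+p\log C+\log p)$. The $\log p$ is not absorbed when $C$ is close to $1$, so you actually obtain $L\lesssim\log p\,(\log\varepsilon^{-1}+p\log C+\log p)$.

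A smaller point concerns the weights. At an unbalanced root (e.g.\ $p=3$, where $M=C^{2}$) the output weight $2M^2$ exceeds $C^p$. Scaling the two inputs separately by $C^{k_1}$ and $C^{k_2}$ reduces it to $O(C^{k_1+k_2})\le O(C^p)$.

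For fixed $p$, and in particular for $p=2$ (the only case the paper needs), all of these discrepancies are constants and your argument suffices. For general $p$ you would need either a genuinely different construction or a weaker sparsity bound than the one quoted.
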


\begin{lemma}[Reciprocal network]
    \label{lem:reciprocal-network}
    The reciprocal primitive of \citet[Lemma F.7]{oko2023diffusion}
    gives, for any $0<\varepsilon<1$, a ReLU network
    $\phi_{\mathrm{rec}}
    \in
    \mathrm{NN}(L,\mathbf W,S,B)$ with
    \[
    L\lesssim \log^2(\varepsilon^{-1}),\qquad
    \|\mathbf W\|_\infty\lesssim \log^3(\varepsilon^{-1}),\qquad
    S\lesssim \log^4(\varepsilon^{-1}),\qquad
    B\lesssim \varepsilon^{-2},
    \]
    such that, for all $x\in[\varepsilon,\varepsilon^{-1}]$ and $x'\in\mathbb R$,
    \[
    \left|\phi_{\mathrm{rec}}(x')-\frac{1}{x}\right|
    \le
    \varepsilon+\frac{|x'-x|}{\varepsilon^2}.
    \]
    \end{lemma}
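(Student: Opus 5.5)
We build the network in two stages. First an exact ReLU clipping layer maps the input into $[\varepsilon,\varepsilon^{-1}]$. Then a network $\psi$ approximates $1/x$ uniformly on that interval. The perturbation statement follows from a triangle inequality, so we never need a Lipschitz bound for the network itself. Set $\bar x:=\operatorname{clip}_{[\varepsilon,\varepsilon^{-1}]}(x')=\min\{\varepsilon^{-1},\max\{\varepsilon,x'\}\}$, which is realized by two ReLU layers with weights of size at most $\varepsilon^{-1}$. For $x\in[\varepsilon,\varepsilon^{-1}]$ the clip is nonexpansive and fixes $x$, so $|\bar x-x|\le|x'-x|$. Since $z\mapsto 1/z$ is $\varepsilon^{-2}$-Lipschitz on $[\varepsilon,\infty)$,
\[
\Bigl|\psi(\bar x)-\frac1x\Bigr|\le\Bigl|\psi(\bar x)-\frac1{\bar x}\Bigr|+\frac{|x'-x|}{\varepsilon^2}.
\]
It therefore suffices to build $\psi$ with $\sup_{z\in[\varepsilon,\varepsilon^{-1}]}|\psi(z)-1/z|\le\varepsilon$ and the stated complexity.

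For the uniform approximation, cover $[\varepsilon,\varepsilon^{-1}]$ by the dyadic intervals $I_j=[2^{j},2^{j+1}]$ with $|j|\le J:=\lceil\log_2(1/\varepsilon)\rceil$, so there are $O(\log(1/\varepsilon))$ of them.

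\textbf{Local approximation.} On a slightly enlarged copy of $I_j$, write $z=2^{j+1}y$ with $y\in[1/3,1]$. Then
\[
\frac1z=2^{-(j+1)}\sum_{k\ge0}(1-y)^k,\qquad |1-y|\le 2/3.
\]
Truncating at $K\asymp\log(1/\varepsilon)$ gives error at most $2^{-(j+1)}(2/3)^{K}\le\varepsilon/4$; the prefactor is at most $\varepsilon^{-1}$ and the geometric decay absorbs it. The powers $(1-y)^k$ are built with the multiplication network of Lemma~\ref{lem:multiplication-network}, using inputs bounded by $C=1$ and accuracy $\varepsilon^{2}/K$. This costs depth $O(\log K\cdot\log(1/\varepsilon))$ and sparsity $O(K\log(1/\varepsilon))$ for all powers at once. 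Because the outer factor $2^{-(j+1)}$ is at most $\varepsilon^{-1}$, the multiplication errors contribute at most $\varepsilon/4$.

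\textbf{Gluing.} Take an exact piecewise-linear partition of unity $\{h_j\}$ subordinate to the enlarged intervals, with at most two active hats at each point and slopes $O(2^{-j})$. Set $\psi:=\sum_j\Pi(h_j,\psi_j)$, using a zero-preserving multiplication (the zero property in Lemma~\ref{lem:multiplication-network}) so that inactive blocks contribute exactly zero. The product factor ranges must be rescaled by $\varepsilon^{-1}$. This is where the parameter bound $B\lesssim\varepsilon^{-2}$ enters: the scaling $2^{\pm(j+1)}$ contributes one factor $\varepsilon^{-1}$, and the hat slopes and clip thresholds contribute another.

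The resulting complexity is as follows. Depth is $\log^2(1/\varepsilon)$, coming from the sequential power and multiplication chain of length $O(\log(1/\varepsilon))$ built from modules of depth $O(\log(1/\varepsilon))$. Width is $O(J\cdot K\cdot\log(1/\varepsilon))=O(\log^3(1/\varepsilon))$. Sparsity is $O(\log^4(1/\varepsilon))$, with identity padding (Lemma~\ref{lem:nn-identity-padding}) used to align depths across blocks.

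The main obstacle is this bookkeeping: propagating the multiplication errors through the factor $2^{-(j+1)}$, which can be as large as $\varepsilon^{-1}$, without inflating either the depth beyond $\log^2$ or the weights beyond $\varepsilon^{-2}$. I would handle it first by allocating per-product accuracy $\varepsilon^2/K$ and keeping all internal ranges inside $[-1,1]$, applying the large scalings only at the final affine layer.
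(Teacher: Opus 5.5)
Your construction is correct, but it is not what the paper does. The paper gives no argument for this lemma: it imports the result, constants included, from \citet[Lemma F.7]{oko2023diffusion} and uses it as a black box in Lemma~\ref{lem:fm-time-coefficients}. You instead give a self-contained proof via exact clipping, dyadic rescaling with a truncated Neumann series, and a zero-preserving partition of unity. The citation buys brevity and exact agreement with the stated envelope. Your route buys transparency, most visibly in the first step: nonexpansiveness of $\operatorname{clip}_{[\varepsilon,\varepsilon^{-1}]}$ together with the $\varepsilon^{-2}$-Lipschitz bound for $z\mapsto1/z$ on $[\varepsilon,\infty)$ produces the perturbation term $|x'-x|/\varepsilon^2$ without any Lipschitz control of the network. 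It also yields a better weight bound.

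Two bookkeeping points should be fixed, though neither affects validity.

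First, Lemma~\ref{lem:multiplication-network} does not give depth $O(\log K\log(1/\varepsilon))$ and sparsity $O(K\log(1/\varepsilon))$ for all powers at the same time. A separate $k$-input product for each $(1-y)^k$ gives that depth, but sparsity $O(K^2\log(1/\varepsilon))$ per block. A sequential chain gives that sparsity, but depth $O(K\log(1/\varepsilon))=O(\log^2(1/\varepsilon))$. With $J=O(\log(1/\varepsilon))$ blocks, either option fits the $\log^2/\log^3/\log^4$ envelope.

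Second, $B$ is a maximal entry, not a product, so your two sources of $\varepsilon^{-1}$ do not multiply. Every entry you use is $O(\varepsilon^{-1})$: the scalings $2^{-(j+1)}$, the hat slopes, the clip thresholds, and the product-lemma bound $C^p$ (which is $1$ for the powers with $C=1$, and $O(1)$ for the gate). Hence in fact $B\lesssim\varepsilon^{-1}\le\varepsilon^{-2}$.

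Some minor constants also need adjusting:
\begin{itemize}
\item the Neumann tail is $3\cdot2^{-(j+1)}(2/3)^K$, not $2^{-(j+1)}(2/3)^K$;
\item the truncated sum lies in $[1,3]$ rather than $[-1,1]$, so rescale by $1/3$ before the gating product;
\item per-power accuracy $\varepsilon^2/(4K)$, not $\varepsilon^2/K$, is what yields your claimed $\varepsilon/4$ after multiplication by $2^{-(j+1)}\le\varepsilon^{-1}$.
\end{itemize}
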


\begin{lemma}[Robust division]
    \label{lem:robust-division}
    The following is \citet[Lemma A.4]{yakovlev2025generalization}
    in the notation used below. Let $K\ge4$ be an integer. For every
    $\eta\in(0,1]$, there is a
scalar-output ReLU network $\mathcal R_\eta$ such that
\begin{equation}
 \left|\mathcal R_\eta(x',y')-\frac{x}{y}\right|
 \le 2049\bigl(4K^2\log^2 2+\log^2(1/\eta)\bigr)\eta
 \label{eq:robust-division-error}
\end{equation}
whenever $y\in[2^{-K},1]$, $|x|\le y$, and
$|x-x'|\vee|y-y'|\le2^{-2K}\eta$. Moreover,
\begin{equation}
 \begin{aligned}
 L_R&\lesssim K^2+\log^2(1/\eta),\\
 \|\mathbf W_R\|_\infty&\lesssim K^3+K\log^2(1/\eta),\\
 S_R&\lesssim K^4+K\log^3(1/\eta),\\
 B_R&\lesssim16^K\bigl(K^2+\log^2(1/\eta)\bigr).
 \end{aligned}
 \label{eq:robust-division-complexity}
\end{equation}
The hidden constants are universal.
\end{lemma}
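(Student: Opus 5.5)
Since this lemma is quoted from \citet[Lemma A.4]{yakovlev2025generalization}, the plan is to reconstruct that proof. The idea is a dyadic localization of the denominator combined with fixed-range reciprocal and multiplication networks. One naive option is to apply Lemma~\ref{lem:reciprocal-network} once on $[2^{-K},2^K]$. The perturbation term $|y'-y|/\varepsilon^2$ is then at most $2^{2K}\cdot2^{-2K}\eta=\eta$, so that part is acceptable. However, the intrinsic accuracy of the reciprocal cannot be driven below $2^{-K}$ without blowing up this perturbation term. I will therefore rescale the denominator to a fixed range at each of $K+1$ dyadic scales.

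\textbf{Construction.} First, build exact piecewise-linear hats $h_0,\dots,h_K$ in the variable $y'$. They form a partition of unity on $[2^{-K-1},2]$, at most two are nonzero at any point, $\operatorname{Lip}(h_j)\le C2^j$, and on $\operatorname{supp}h_j$ we have $2^jy'\in[1/4,4]$. Second, on block $j$, compute $z_j=2^jy'$ and $w_j=2^jx'$; since $|x|\le y$, both lie in a fixed bounded range up to an error $2^j2^{-2K}\eta\le\eta$. Third, apply Lemma~\ref{lem:reciprocal-network} with accuracy $\eta$ to $z_j$, which lives in the fixed interval $[1/4,4]$. By that lemma's perturbation bound, its error relative to $1/(2^jy)$ is $O(\eta)$. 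Fourth, form $w_j\cdot\phi_{\rm rec}(z_j)$ with Lemma~\ref{lem:multiplication-network} at bounded range $C=O(1)$, and then multiply by $h_j(y')$ using the same lemma, whose zero-preservation makes inactive blocks output exactly $0$. Finally, sum the $K+1$ blocks with one affine layer.

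\textbf{Error analysis.} On an active block, the product $2^jx\cdot(2^jy)^{-1}=x/y$ is reproduced up to $O(\eta)$ plus multiplication error. The hat evaluated at $y'$ instead of $y$ errs by at most $C2^j\cdot2^{-2K}\eta\le C\eta$, multiplied by $|x/y|\le1$. Since the exact hats sum to one and at most two are active, the total error is a constant times the per-block error. The stated factor $4K^2\log^22+\log^2(1/\eta)$ is then a harmless over-count. It arises from carrying the reciprocal and multiplication accuracies through depth $\log^2$ of the effective precision $2^{-2K}\eta$, where $\log(2^{2K}/\eta)^2\lesssim K^2+\log^2(1/\eta)$. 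I will simply track the constants from Lemmas~\ref{lem:reciprocal-network} and \ref{lem:multiplication-network} to obtain the explicit $2049$.

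\textbf{Complexity and main obstacle.} Each block has depth $\lesssim K^2+\log^2(1/\eta)$ and width $\lesssim\log^3$ of the precision. Parallelizing $K+1$ blocks (with identity padding from Lemma~\ref{lem:nn-identity-padding}) gives width $K^3+K\log^2(1/\eta)$ and sparsity $K^4+K\log^3(1/\eta)$. The weight bound $16^K(\cdots)$ comes from the scaling factors $2^j$ and the hat slopes $2^j$, which compound through two affine stages ($\le 4^K$), together with the reciprocal weight bound. The main obstacle is the bookkeeping at the boundary between adjacent dyadic blocks. There, the perturbed $y'$ may activate a hat that $y$ does not, so I must verify that such a block still evaluates the reciprocal inside its valid range. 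For this I will use the shoulder $[1/4,4]$ rather than $[1/2,2]$ and the margin $2^{-2K}\eta\ll2^{-K}$.
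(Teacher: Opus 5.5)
The paper gives no proof of this lemma; it imports it from \citet[Lemma A.4]{yakovlev2025generalization}, so your reconstruction has to stand on its own. Your architecture (exact dyadic hats in $y'$, rescaling to a fixed box, a fixed-range quotient, zero-preserving gating) is the same mechanism the paper uses for $\operatorname{Div}_{K_\varepsilon,\eta_*}$ in Step~3 of Lemma~\ref{lem:app-first-order-local-posterior}. The value bound can be made to work, but your third step is justified incorrectly. Lemma~\ref{lem:reciprocal-network} at accuracy $\varepsilon\le\eta$ gives only $\varepsilon+|z_j-2^jy|/\varepsilon^2\le\eta+2^{j-2K}\eta/\eta^2$. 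For $j\approx K$ this is about $2^{-K}/\eta$, which is not $O(\eta)$ once $\eta<2^{-K/2}$. The fix is to apply that lemma with zero perturbation at $z_j=2^jy'$ itself, which lies in $[1/4,4]\subset[\varepsilon,\varepsilon^{-1}]$ whenever $h_j(y')\ne0$, and then use only the Lipschitz constant of the exact map $z\mapsto1/z$ there. It is cleaner still to run the whole analysis at $(x',y')$. Then $\sum_jh_j(y')=1$ exactly because $y'\in[2^{-K-1},2]$, and each active block returns $h_j(y')x'/y'$ up to $O(\eta)$. Finally $|x'/y'-x/y|\le(|x-x'|+|y-y'|)/y'\le2^{2-K}\eta$, so no hat-boundary bookkeeping is needed.

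The real gap is the complexity claim \eqref{eq:robust-division-complexity}, which does not follow from your construction. Lemma~\ref{lem:reciprocal-network} at accuracy $\eta$ (let alone $2^{-2K}\eta$) comes with $B\lesssim\eta^{-2}$, width $\lesssim\log^3(1/\eta)$ and sparsity $\lesssim\log^4(1/\eta)$ per block. So $K+1$ parallel blocks only give $B\lesssim\eta^{-2}$, width $\lesssim K\log^3(1/\eta)$ and sparsity $\lesssim K\log^4(1/\eta)$. When $\log(1/\eta)\gg K$, none of these fits under $16^K(K^2+\log^2(1/\eta))$, $K^3+K\log^2(1/\eta)$ or $K^4+K\log^3(1/\eta)$. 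Your parallelization arithmetic silently drops a power of the logarithm, and the factor $\eta^{-2}$ in the weights is never accounted for. What you actually obtain is $\log B\lesssim K+\log(1/\eta)$. That is the form the paper asserts for its own $\operatorname{Div}$ module and all that the covering bound uses, but it is weaker than the lemma as stated.

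To reach the stated envelope, replace the reciprocal by a bounded-weight module on $[1/4,4]$. With $u=1-z/4\in[0,15/16]$ you have $1/z=\frac14\sum_{k<m}u^k+O((15/16)^m)$, and $m\asymp\log(e/\eta)$ terms suffice. Evaluate the sum by Horner steps $s\mapsto1+us$ through Lemma~\ref{lem:multiplication-network} with $p=2$ and $C=17$ (so $B=289$), each at accuracy $c\eta$. Apply that lemma at the current computed input rather than through its perturbation term, whose factor $2C$ would compound over $m$ steps. The error then satisfies $e_{k+1}\le\frac{15}{16}e_k+c\eta$. Each block has depth $\lesssim\log^2(e/\eta)$, constant width, sparsity $\lesssim\log^2(e/\eta)$ and $O(1)$ weights. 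The only large weights are the scalings $2^j$ and the hat slopes, both of order $2^K$. The assembled network then sits inside \eqref{eq:robust-division-complexity} with error $O(\eta)$, well below the stated $2049(\cdots)\eta$.
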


    \begin{lemma}[Parallelization and summation of neural networks]
        \label{lem:parallelization-network}
        Let $\phi^1,\ldots,\phi^K$ be ReLU networks of the same depth $L$
        with
        $\phi^i\in\mathrm{NN}(L,\mathbf W^i,S^i,B^i)$ for $i=1,\ldots,K$.
        By \citet[Lemma F.3]{oko2023diffusion}, their parallelization
        \[
        \phi_{\mathrm{par}}
        :=
        \bigl((\phi^1)^\top,\ldots,(\phi^K)^\top\bigr)^\top
        \]
        can be realized by a ReLU network satisfying
        \[
        L_{\mathrm{par}}=L,\qquad
        \|\mathbf W_{\mathrm{par}}\|_\infty
        \lesssim
        \sum_{i=1}^K\|\mathbf W^i\|_\infty,
        \]
        \[
        S_{\mathrm{par}}
        \lesssim
        \sum_{i=1}^K S^i,\qquad
        B_{\mathrm{par}}
        \lesssim
        \max_{1\le i\le K}(B^i\vee1).
        \]
        If all outputs are scalar, the sum $\sum_{i=1}^K\phi^i$ can be realized
        by appending one affine layer. In particular,
        \[
        L_{\mathrm{sum}}\le L+1,\qquad
        \|\mathbf W_{\mathrm{sum}}\|_\infty
        \lesssim
        \sum_{i=1}^K\|\mathbf W^i\|_\infty,
        \]
        \[
        S_{\mathrm{sum}}
        \lesssim
        \sum_{i=1}^K S^i+K,\qquad
        B_{\mathrm{sum}}
        \lesssim
        \max_{1\le i\le K}(B^i\vee1).
        \]
        \end{lemma}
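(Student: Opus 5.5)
We obtain parallelization and summation by direct block construction, as in \citet[Lemma F.3]{oko2023diffusion}. For the parallel network we keep the common depth $L$. At each layer $\ell\ge2$ we form the block-diagonal matrix $A_\ell^{\rm par}:=\operatorname{diag}(A_\ell^1,\ldots,A_\ell^K)$ and stack the biases, $b_\ell^{\rm par}:=(b_\ell^{1\top},\ldots,b_\ell^{K\top})^\top$. The first layer is different, because all subnetworks read the same input $x$. There we stack the first-layer matrices vertically, $A_1^{\rm par}:=(A_1^{1\top},\ldots,A_1^{K\top})^\top$, together with the stacked biases. Since $\rho$ acts componentwise, the block structure is preserved through every layer. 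An induction over $\ell$ then shows that the $\ell$-th hidden state of the parallel network is the concatenation of the hidden states of the $\phi^i$, so the output is exactly $\phi_{\rm par}$.

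The complexity bounds can be read off this construction. The width of layer $\ell$ is $\sum_iW_\ell^i\le\sum_i\|\mathbf W^i\|_\infty$, except at the input layer, where it equals the shared input dimension, which is at most $\max_i\|\mathbf W^i\|_\infty$. Block-diagonal and stacked arrangements add no nonzero entries, so $S_{\rm par}=\sum_iS^i$. The largest entry is $\max_iB^i$. This gives the first set of bounds; the constants are absolute, so the $\lesssim$ statements hold.

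For the summation with scalar outputs, we append the affine map $z\mapsto\1^\top z$ to the parallel network. This needs care, because the definition puts no activation after the final affine layer. A literal extra layer would therefore insert a $\rho$ between the last parallel layer and the sum, and $\rho$ kills negative outputs. There are two clean ways to handle this. The first is to merge the summation into the last affine layer: we replace $(A_L^{\rm par},b_L^{\rm par})$ by $(\1^\top A_L^{\rm par},\1^\top b_L^{\rm par})$. This keeps the depth at $L\le L+1$, but it may enlarge the entries to at most $K\max_iB^i$ and can collapse sparsity only favorably. The second is to realize the identity through the ReLU by writing $z=\rho(z)-\rho(-z)$. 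We output $(z,-z)$ from layer $L$, which doubles the last layer's sparsity, then apply $\rho$, and finish with the affine map with weights $(\1^\top,-\1^\top)$. This uses depth $L+1$, adds $2K$ nonzero weights of magnitude $1$, and keeps width at most $2\sum_i\|\mathbf W^i\|_\infty$ and $B_{\rm sum}\le\max_i(B^i\vee1)$. We adopt the second route, since it preserves the stated entry bound. It yields $L_{\rm sum}\le L+1$, $S_{\rm sum}\lesssim\sum_iS^i+K$, and the stated width and magnitude bounds.

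The only genuinely delicate point is this final-activation convention: it determines whether the summation layer costs depth or entry magnitude. The remaining verifications are mechanical index bookkeeping.
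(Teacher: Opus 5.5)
Your proposal is correct, and it does more than the paper does. The paper gives no argument of its own: it invokes \citet[Lemma F.3]{oko2023diffusion} for the parallelization and asserts that the sum is obtained ``by appending one affine layer.'' Your construction for $\phi_{\mathrm{par}}$ (a vertically stacked first layer for the shared input, then block-diagonal matrices with stacked biases) is the standard argument behind that citation, and it gives the width, sparsity, and magnitude bounds with constant one.

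For the summation you take a more careful route than the paper's wording. Under the paper's own convention that no activation follows the final affine layer, a literally appended affine layer would be preceded by a ReLU and would compute $\sum_i\rho(\phi^i)$ instead of $\sum_i\phi^i$. Your use of $z=\rho(z)-\rho(-z)$ fixes this: you duplicate the last layer with opposite signs and then apply the weights $\pm1$. This delivers exactly the stated bounds: $L_{\mathrm{sum}}=L+1$, $S_{\mathrm{sum}}\le2\sum_iS^i+2K$, width at most $2\sum_i\|\mathbf W^i\|_\infty$, and $B_{\mathrm{sum}}\le\max_i(B^i\vee1)$.

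One small precision on the merge route you discarded. For $L\ge2$ the merged row $\mathbf 1^\top A_L^{\mathrm{par}}$ is just the concatenation of the rows $A_L^i$, so its weight entries stay below $\max_iB^i$. Only the single bias $\sum_ib_L^i$ can reach $K\max_iB^i$. In the paper's application the summation weights are $N^{-d}$ over $K=N^d$ cells, so that merged bias is an average and stays below $\max_iB^i$. There the depth-$L$ merge would also suffice.
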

        
\begin{lemma}[ReLU approximation of the exponential function]
        \label{lem:relu-exp}
        By \citet[Lemma E.2]{yakovlev2025generalization}, for any
        $\varepsilon_0\in(0,1)$, there exists a ReLU network
        $\phi_{\exp}\in\mathrm{NN}(L,\mathbf W,S,B)$ such that
        \[
        \sup_{x,x'\ge0}
        \left|e^{-x'}-\phi_{\exp}(x)\right|
        \le
        \varepsilon_0+|x-x'|.
        \]
        Moreover,
        \[
        L\lesssim\log^2\!\left(\frac1{\varepsilon_0}\right),\qquad
        \|\mathbf W\|_\infty
        \lesssim
        \log\!\left(\frac1{\varepsilon_0}\right),
        \]
        \[
        S\lesssim\log^2\!\left(\frac1{\varepsilon_0}\right),\qquad
        \log B
        \lesssim
        \log^2\!\left(\frac1{\varepsilon_0}\right).
        \]
        In addition, for every
        $x\ge\log(3/\varepsilon_0)$, it holds that
        \[
        |\phi_{\exp}(x)|\le\varepsilon_0.
        \]
        \end{lemma}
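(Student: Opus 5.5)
The plan is to build $\phi_{\exp}$ as an exact ReLU clamp of the input, followed by a ReLU approximation of $e^{-y}$ on a bounded interval. The two-argument inequality then follows from a uniform one-argument bound plus the Lipschitz property of $e^{-x}$. The first step is this reduction. Since $s\mapsto e^{-s}$ is $1$-Lipschitz on $[0,\infty)$, for $x,x'\ge0$ we have
\[
|e^{-x'}-\phi_{\exp}(x)|\le|e^{-x'}-e^{-x}|+|e^{-x}-\phi_{\exp}(x)|\le|x-x'|+\sup_{s\ge0}|e^{-s}-\phi_{\exp}(s)|.
\]
So it suffices to construct $\phi_{\exp}$ with $\sup_{s\ge0}|e^{-s}-\phi_{\exp}(s)|\le\varepsilon_0$ and $|\phi_{\exp}(s)|\le\varepsilon_0$ for $s\ge T:=\log(3/\varepsilon_0)$.

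Next comes the clamp and the tail. Set $y:=\min\{\max\{s,0\},T\}=\rho(s)-\rho(s-T)$, which is exact with two ReLUs and weights of size $T\lesssim\log(1/\varepsilon_0)$. I will build a network $\psi$ with $\psi\ge0$ and $\sup_{y\in[0,T]}|\psi(y)-e^{-y}|\le\varepsilon_0/3$, and put $\phi_{\exp}(s)=\psi(y)$. Two cases arise.
\begin{itemize}
\item For $s\in[0,T]$ the error is at most $\varepsilon_0/3$.
\item For $s\ge T$ we have $\phi_{\exp}(s)=\psi(T)\in[0,2\varepsilon_0/3]$, because $e^{-T}=\varepsilon_0/3$. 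Since also $0\le e^{-s}\le\varepsilon_0/3$, both the uniform bound and the tail bound $|\phi_{\exp}(s)|\le\varepsilon_0$ follow.
\end{itemize}
Nonnegativity of $\psi$ can be enforced by a final ReLU, which is $1$-Lipschitz and can only reduce the error against the positive target $e^{-y}$.

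The core step is approximating $e^{-y}$ on $[0,T]$ without coefficient blow-up. A direct Taylor expansion of $e^{-y}$ on $[0,T]$ has alternating terms as large as $e^{T}$, which would ruin the multiplication-network error control. I therefore use repeated squaring.
\begin{itemize}
\item Choose $m=\lceil\log_2(T\vee1)\rceil$, so that $z:=y/2^m\in[0,1]$; this is one exact affine map.
\item Approximate $e^{-z}$ on $[0,1]$ by its Taylor polynomial of degree $k\asymp\log(1/\varepsilon_0)$. Evaluate it by Horner's scheme using the multiplication network of Lemma~\ref{lem:multiplication-network} with $p=2$ and $C=2$, so that all intermediate values stay in $[-2,2]$. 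This gives $u_0\approx e^{-z}$ with error $\eta$.
\item Apply $m$ approximate squarings $u_{i+1}=\varphi(u_i,u_i)$, each followed by a clip to $[0,1]$ so the inputs stay in range. Since the exact values lie in $[0,1]$, each squaring at most doubles the incoming error and adds $\eta$. The final error is therefore at most $2^{m+1}\eta\lesssim T\eta$.
\item Taking $\eta=c\,\varepsilon_0/T$ gives accuracy $\varepsilon_0/3$.
\end{itemize}

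For the complexity, Horner needs $k$ products and the squaring stage needs $m$ products, each of depth $O(\log(1/\eta))=O(\log(1/\varepsilon_0))$, executed sequentially. This gives $L\lesssim\log^2(1/\varepsilon_0)$. Only a constant number of channels is active at each layer, plus identity-carrying of $z$ by Lemma~\ref{lem:nn-identity-padding}, so the width is $O(\log(1/\varepsilon_0))$ at worst. The sparsity is $\lesssim\log^2(1/\varepsilon_0)$, since there are $O(\log(1/\varepsilon_0))$ products, each of sparsity $O(\log(1/\varepsilon_0))$. The Taylor coefficients $1/j!$, the clamp weight $T$, and the scaling $2^{-m}$ give $\log B\lesssim\log(1/\varepsilon_0)\le\log^2(1/\varepsilon_0)$.

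The step I expect to need the most care is the error propagation through the squaring chain. One must keep every multiplication input inside $[-C,C]$, which is why the clips are inserted, and verify that the accumulated error stays linear in $2^m\asymp T$ rather than exponential in $T$. This linear growth is exactly what keeps all budgets polylogarithmic.
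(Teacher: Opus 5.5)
Your construction is correct. It is not the paper's route, because the paper gives no proof of this lemma: it imports the statement verbatim from \citet[Lemma E.2]{yakovlev2025generalization}.

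You instead derive the lemma from tools the paper already states:
\begin{itemize}
\item the exact clamp $y=\rho(s)-\rho(s-T)$ with $T=\log(3/\varepsilon_0)$, which handles the tail bound;
\item the $1$-Lipschitz reduction from $(x,x')$ to the diagonal;
\item a Taylor--Horner approximation of $e^{-z}$ on $[0,1]$;
\item $m=\lceil\log_2 T\rceil$ clipped squarings, whose error grows only like $2^{m}\asymp T$;
\item the multiplication network of Lemma~\ref{lem:multiplication-network} together with identity padding.
\end{itemize}
The error recursion $e_{i+1}\le\eta+2e_i$ holds because the clip keeps the actual inputs in $[0,1]$, so the network's uniform accuracy applies at the true input. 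With $\eta=c\,\varepsilon_0/T$ the budgets then close as you claim.

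What your route buys is a self-contained argument in which every multiplication input stays in $[-2,2]$. The weights are therefore bounded by $\max\{T,4\}$, so $\log B$ is far below the $\log^2(1/\varepsilon_0)$ the statement allows. The citation buys only brevity.

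Your remark that a direct Taylor expansion on $[0,T]$ would ``ruin'' the error control is overstated. Its coefficients, of size up to $e^{T}=3/\varepsilon_0$, only cost extra logarithmic factors in the per-product accuracy, plus large weights, and the generous $\log B$ allowance in the statement is consistent with exactly such a construction. This is harmless, since the remark is only motivation.

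One bookkeeping point in the Horner stage. The perturbation clause of Lemma~\ref{lem:multiplication-network} with $C=2$, $p=2$ amplifies incoming error by $pC^{p-1}=4$ per step. You should therefore do one of the following:
\begin{itemize}
\item clip the Horner intermediates to $[-2,2]$ and use $|z|\le1$, exactly as you do for the squarings; or
\item take per-product accuracy $\eta\,4^{-k}$.
\end{itemize}
Either way the logarithm of the per-product accuracy stays $O(\log(1/\varepsilon_0))$, so your depth, width and sparsity counts are unchanged.
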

        
        \begin{corollary}[Exponential approximation under perturbed inputs]
        \label{cor:relu-exp-shift}
        By \citet[Corollary E.3]{yakovlev2025generalization}, for any
        $\varepsilon_0\in(0,1)$ and $a\ge0$, there exists a ReLU network
        $\phi\in\mathrm{NN}(L,\mathbf W,S,B)$ satisfying
        \[
        L\lesssim\log^2\!\left(\frac1{\varepsilon_0}\right),\qquad
        \|\mathbf W\|_\infty
        \lesssim
        \log\!\left(\frac1{\varepsilon_0}\right),
        \]
        \[
        S\lesssim\log^2\!\left(\frac1{\varepsilon_0}\right),\qquad
        \log B
        \lesssim
        \log^2\!\left(\frac1{\varepsilon_0}\right)+(a\vee1),
        \]
        such that
        \[
        \sup_{\substack{x\ge0\\x'\ge-a}}
        \left|\phi(x')-e^{-x}\right|
        \le
        e^a\bigl(\varepsilon_0+|x-x'|\bigr).
        \]
        \end{corollary}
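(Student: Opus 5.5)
The plan is to derive Corollary~\ref{cor:relu-exp-shift} from Lemma~\ref{lem:relu-exp} by a shift of the input, rather than by building a new network. Take $\phi_{\exp}$ from Lemma~\ref{lem:relu-exp} at accuracy $\varepsilon_0$, and set
\[
\phi(x'):=e^{-a}\cdot\phi_{\exp}\bigl(x'+a\bigr),
\]
or, to keep the input admissible, $\phi(x'):=e^{-a}\phi_{\exp}(\rho(x'+a))$. If $x'\ge-a$, then $x'+a\ge0$, so $\rho(x'+a)=x'+a$. The shift $x'\mapsto x'+a$ is absorbed into the first affine layer, and the factor $e^{-a}\le1$ rescales the last affine layer. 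The extra ReLU is an exact identity on the relevant range; alternatively it can be dropped, since the first layer already receives $x'+a\ge0$.

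For the error, fix $x\ge0$ and $x'\ge-a$. We have $e^{-x}=e^{a}\cdot e^{-a}e^{-x}$, and the target is $e^{-x}$ itself. Writing $y=x+a\ge0$ and $y'=x'+a\ge0$, Lemma~\ref{lem:relu-exp} gives
\[
|e^{-y}-\phi_{\exp}(y')|\le\varepsilon_0+|y-y'|=\varepsilon_0+|x-x'|.
\]
Since $e^{-y}=e^{-a}e^{-x}$, it is cleaner to define instead $\phi(x'):=e^{a}\phi_{\exp}(x'+a)$. Then
\[
|\phi(x')-e^{-x}|=e^{a}\,|\phi_{\exp}(y')-e^{-y}|\le e^{a}(\varepsilon_0+|x-x'|),
\]
which is exactly the claimed bound. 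Note that the lemma is stated with the roles $|e^{-x'}-\phi_{\exp}(x)|$; it is symmetric in use, since both arguments are nonnegative, so it applies with the pair $(y,y')$ in either order.

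The complexity bookkeeping is the only place that needs care, and I expect it to be the main (mild) obstacle. The constant $e^{a}$ enters the output layer and the shift $a$ enters the first-layer bias, and both may exceed the entrywise bound $B$.
\begin{itemize}
\item To handle the bias $a$, split it across $\lceil a\rceil$ unit-size bias contributions, using a few extra identity-padded neurons (Lemma~\ref{lem:nn-identity-padding}), or simply note that $\log B$ increases by at most $\log(a\vee1)$.
\item To handle $e^{a}$, factor it as a product of $\lceil a\rceil$ scalings by $e$ across depth, or put it as a single weight. This gives $\log B\lesssim\log^2(1/\varepsilon_0)+(a\vee1)$, which matches the stated envelope.
\end{itemize}
Depth, width, and sparsity change only by additive constants (or $O(a\vee1)$ if we factor across layers, which is harmless when $a\le1$, as in all uses here). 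This yields the stated $L,\|\mathbf W\|_\infty,S$ bounds.
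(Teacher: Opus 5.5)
Your derivation is correct. It differs from the paper only in that the paper gives no argument at all: it imports the statement from \citet[Corollary E.3]{yakovlev2025generalization}.

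You instead reduce the corollary to Lemma~\ref{lem:relu-exp}. Set $\phi(x'):=e^{a}\phi_{\exp}(x'+a)$, and write $y=x+a\ge0$, $y'=x'+a\ge0$. The lemma then gives $|\phi(x')-e^{-x}|=e^{a}|\phi_{\exp}(y')-e^{-y}|\le e^{a}(\varepsilon_0+|x-x'|)$.

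The bookkeeping reproduces the stated envelope:
\begin{itemize}
\item The shift changes only the first-layer bias, to $b_1+aA_1$. Its entries are at most $(1+a)B$. It has at most $\|A_1\|_0$ new nonzeros, so $S$ at most doubles rather than changing by an additive constant.
\item Rescaling the last affine layer by $e^{a}$ adds $a$ to $\log B$. This is exactly the $(a\vee1)$ term in the statement.
\end{itemize}
What your route buys is a self-contained explanation of where the factor $e^{a}$ and the $a$-dependence of $\log B$ come from. The cost is that it rests on Lemma~\ref{lem:relu-exp}, which the paper also only cites. It is very likely the same mechanism behind the cited corollary.

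A few clean-ups:
\begin{itemize}
\item Delete the false start $e^{-a}\phi_{\exp}(\cdot)$ and the superfluous $\rho(x'+a)$.
\item Splitting the bias across extra neurons is unnecessary; the $\log(1+a)$ increase is already within budget.
\item Do not use the option of spreading $e^{a}$ over $\lceil a\rceil$ layers. That makes the depth grow with $a$, which contradicts the $a$-independent bound $L\lesssim\log^2(1/\varepsilon_0)$ claimed for every $a\ge0$. Your remark that $a\le1$ in the paper's use does not cover the statement as written. Use the single rescaling of the output layer instead.
\end{itemize}
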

        


The following three lemmas provide the ReLU approximations of the
time-dependent and space-time terms required in the posterior construction.
The first
approximates the coefficients $t^\gamma/q_t$, while the next two
combine these approximations with multiplication networks to treat
the quadratic and linear interaction terms.

\begin{lemma}[Approximation of the Flow Matching time coefficients]
\label{lem:fm-time-coefficients}
Under the standing assumption $0<\sigma<1$, for any
$\gamma\in\{0,1,2\}$ and $\delta\in(0,1]$, there exists a ReLU network
$\chi_{\gamma,\delta}\in\mathrm{NN}(L,\mathbf W,S,B)$ such that
\[
\sup_{t\in[0,1]}
\left|
\chi_{\gamma,\delta}(t)-\frac{t^\gamma}{q_t}
\right|
\le\delta.
\]
Moreover,
\[
\begin{aligned}
L
&\lesssim
\log^2(1/\delta)
+\log^2\!\left(\frac{1+\sigma^2}{\sigma^2}\right),\\
\|\mathbf W\|_\infty
&\lesssim
\log^3(1/\delta)
+\log^3\!\left(\frac{1+\sigma^2}{\sigma^2}\right),\\
S
&\lesssim
\log^4(1/\delta)
+\log^4\!\left(\frac{1+\sigma^2}{\sigma^2}\right),\\
\log B
&\lesssim
\log(1/\delta)
+\log\!\left(\frac{1+\sigma^2}{\sigma^2}\right).
\end{aligned}
\]
\end{lemma}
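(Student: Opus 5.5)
The approach is to build $\chi_{\gamma,\delta}$ as a composition of a polynomial-in-$t$ network, the reciprocal network of Lemma~\ref{lem:reciprocal-network}, and the multiplication network of Lemma~\ref{lem:multiplication-network}. The elementary input is the range of $q_t$ on $[0,1]$. Since $q_t=(1-t)^2+\sigma^2t^2$ with $0<\sigma<1$,
\[
\frac{\sigma^2}{1+\sigma^2}\le q_t\le 1 .
\]
Hence $q_t\in[\varepsilon_q,\varepsilon_q^{-1}]$ with $\varepsilon_q:=\sigma^2/(1+\sigma^2)$.

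\textbf{Step 1: realize $q_t$.} The map $t\mapsto q_t$ is the quadratic $(1+\sigma^2)t^2-2t+1$. I will write $t^2$ through the multiplication network with $p=2$ and $C=1$, at accuracy $\delta_1$. This gives a network $\widetilde q$ with $|\widetilde q(t)-q_t|\le(1+\sigma^2)\delta_1\le2\delta_1$, and its depth and sparsity are $O(\log(1/\delta_1))$. The affine part is exact and has coefficients bounded by $2$.

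\textbf{Step 2: reciprocal.} Next I apply Lemma~\ref{lem:reciprocal-network} with parameter $\varepsilon_r:=\min\{\varepsilon_q,\delta/4\}$. This choice makes the range condition $q_t\in[\varepsilon_r,\varepsilon_r^{-1}]$ hold, and the perturbation bound gives
\[
|\phi_{\rm rec}(\widetilde q(t))-q_t^{-1}|\le\varepsilon_r+2\delta_1\varepsilon_r^{-2}.
\]
Taking $\delta_1:=\delta\varepsilon_r^2/8$ makes this at most $\delta/2$. The resulting complexity is
\[
\log(1/\varepsilon_r)\lesssim\log(1/\delta)+\log\bigl((1+\sigma^2)/\sigma^2\bigr),
\]
which produces the stated $\log^2,\log^3,\log^4$ bounds for depth, width and sparsity. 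Note that $B\lesssim\varepsilon_r^{-2}$, which yields the stated bound on $\log B$. The case $\gamma=0$ is now finished.

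\textbf{Step 3: multiply by $t^\gamma$.} For $\gamma\in\{1,2\}$, I multiply the reciprocal output by $t$ (or by the product network approximating $t^2$) using Lemma~\ref{lem:multiplication-network}. The factors lie in $[-C,C]$ with $C:=\varepsilon_q^{-1}+1$, which bounds both $q_t^{-1}$ and its approximation. With input perturbation $\varepsilon'\le\delta/2$ and base accuracy $\delta/(4\gamma C^{\gamma})$, the total error is at most $\delta$ after rescaling the constants. The network parameter is $B=C^{p}$ with $p\le3$, so the $\log B$ bound is preserved. The multiplication costs are logarithmic in $1/\delta$ and $C$, and depth alignment uses Lemma~\ref{lem:nn-identity-padding}. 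Composing and parallelizing via Lemmas~\ref{lem:nn-composition} and \ref{lem:parallelization-network} keeps the claimed envelope.

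\textbf{Main obstacle.} The only delicate point is the error amplification $\varepsilon_r^{-2}$ in the reciprocal step. It forces the inner accuracy $\delta_1$ to scale like $\delta\sigma^4$. I expect this is harmless because the dependence enters only through $\log(1/\delta_1)\lesssim\log(1/\delta)+\log((1+\sigma^2)/\sigma^2)$, which is absorbed by the stated bounds.
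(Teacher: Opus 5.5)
Your construction is the same as the paper's. You realize $t^2$ with the multiplication network, form $\widetilde q_t=1-2t+(1+\sigma^2)\vartheta(t)$ exactly, feed it to the reciprocal network at a level $\asymp\min\{\sigma^2/(1+\sigma^2),\delta\}$, and then multiply by $t$ or $t^2$. Steps 1 and 2 are correct, including the $\varepsilon_r^{-2}$ bookkeeping.

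The error count in Step 3 does not close as written. In Lemma~\ref{lem:multiplication-network} the perturbation term is $pC^{p-1}\varepsilon'$, and your $C=\varepsilon_q^{-1}+1$ is of order $\sigma^{-2}$. With $\varepsilon'=\delta/2$ this term is of order $C^{p-1}\delta\asymp\sigma^{-2(p-1)}\delta$. That is a $\sigma$-dependent factor, and no rescaling of universal constants removes it. Shrinking the base accuracy to $\delta/(4\gamma C^{\gamma})$ targets the wrong term. The base accuracy enters additively and needs no $C$-dependence; it is the input perturbation that must be divided by $C^{p-1}$.

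There are two easy repairs, and both cost only logarithms.

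\begin{itemize}
\item The paper's repair is to run the reciprocal step at accuracy of order $\delta/C$. It takes $\eta=\delta\sigma^2/(32(1+\sigma^2))$, so the reciprocal error is $2\eta$. With $C=2(1+\sigma^2)/\sigma^2$, the perturbation term is then $2C\cdot 2\eta=\delta/4$. The extra $\log((1+\sigma^2)/\sigma^2)$ is already in the stated envelope.
\item Alternatively, you already noted that $C$ also bounds the reciprocal output $\chi_{0,\delta}(t)$. Apply the lemma at the exact input $(t,\chi_{0,\delta}(t))\in[-C,C]^2$, which gives $|\varphi(t,\chi_{0,\delta}(t))-t\chi_{0,\delta}(t)|\le\varepsilon$. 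Then use $|t|\le1$ to get $|t\chi_{0,\delta}(t)-t/q_t|\le\delta/2$ with no factor $C$. For $\gamma=2$, bound $|\vartheta\chi_{0,\delta}-t^2/q_t|$ by $|\vartheta|\,|\chi_{0,\delta}-q_t^{-1}|+q_t^{-1}|\vartheta-t^2|$. The second term is at most $\varepsilon_q^{-1}\delta_1\le\delta\varepsilon_r/8$ under your choice $\delta_1=\delta\varepsilon_r^2/8$.
\end{itemize}

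With either fix the proof is complete.
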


\begin{lemma}[Approximation of the quadratic term]
\label{lem:fm-quadratic-term}
Let $x\in\mathbb R^D$ and $M\ge1$. For any
$\varepsilon\in(0,1]$, there exists a ReLU network
$\rho_\varepsilon(x,t)\in\mathrm{NN}(L,\mathbf W,S,B)$ such that
\[
\sup_{\substack{\|x\|_\infty\le M\\ t\in[0,1]}}
\left|
\rho_\varepsilon(x,t)-\frac{\|x\|^2}{2q_t}
\right|
\le\varepsilon.
\]
Moreover,
\[
\begin{aligned}
L
&\lesssim
\log^2(1/\varepsilon)
+\log^2(MD)
+\log^2\!\left(\frac{1+\sigma^2}{\sigma^2}\right),\\
\|\mathbf W\|_\infty
&\lesssim
D\left[
\log^3(1/\varepsilon)
+\log^3(MD)
+\log^3\!\left(\frac{1+\sigma^2}{\sigma^2}\right)
\right],\\
S
&\lesssim
D\left[
\log^4(1/\varepsilon)
+\log^4(MD)
+\log^4\!\left(\frac{1+\sigma^2}{\sigma^2}\right)
\right],\\
\log B
&\lesssim
\log(1/\varepsilon)
+\log(MD)
+\log\!\left(\frac{1+\sigma^2}{\sigma^2}\right).
\end{aligned}
\]
\end{lemma}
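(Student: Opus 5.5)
The plan is to factor the target as $\|x\|^2/(2q_t)=\tfrac12\,q_t^{-1}\sum_{i=1}^D x_i^2$ and to build it from three blocks: the time coefficient, the squares of the coordinates, and one final product. For the time coefficient, apply Lemma~\ref{lem:fm-time-coefficients} with $\gamma=0$ to obtain $\chi_{0,\delta_1}(t)$. It approximates $q_t^{-1}\in[1,(1+\sigma^2)/\sigma^2]$ uniformly on $[0,1]$ within accuracy $\delta_1$. The coordinate squares come next: for each $i$, use the multiplication network of Lemma~\ref{lem:multiplication-network} with $p=2$ and $C=M$, both inputs equal to $x_i$ (or equivalently the monomial version with $\alpha=2$). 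This gives $x_i^2$ on $[-M,M]$ within $\delta_2$. The $D$ copies are run in parallel via Lemma~\ref{lem:parallelization-network}, and one affine layer forms $s(x):=\tfrac12\sum_i\widetilde{x_i^2}$. The result satisfies $|s-\|x\|^2/2|\le D\delta_2/2$ and $|s|\le DM^2/2+1$. This summation layer is where the width and sparsity pick up their factor $D$.

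The final product uses a second two-input multiplication network with range constant $C_\star:=\max\{DM^2,(1+\sigma^2)/\sigma^2\}+1$, applied to $(s(x),\chi_{0,\delta_1}(t))$. By the perturbed-input estimate in Lemma~\ref{lem:multiplication-network}, the total error is at most $\delta_3+2C_\star(\delta_1\vee D\delta_2)$. I will choose
\[
\delta_3=\varepsilon/2,\qquad \delta_1=\delta_2=\frac{\varepsilon}{8C_\star D},
\]
which gives overall accuracy $\varepsilon$. Each logarithm of the form $\log(1/\delta_i)$ is then $\lesssim\log(1/\varepsilon)+\log(MD)+\log\frac{1+\sigma^2}{\sigma^2}$.

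Before composing, the time network and the square networks have to act on the common input $(x,t)$ and run to a common depth. I will pass $t$, respectively $x$, alongside each branch using exact identity padding from Lemma~\ref{lem:nn-identity-padding}. This adds width $O(D)$ and sparsity $O(D\cdot\text{depth})$. The two branches are then composed with the final multiplier via Lemma~\ref{lem:nn-composition}.

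Depth is controlled by the reciprocal-based time block, which has depth of order $\log^2(1/\delta_1)$. Width and sparsity are dominated by $\log^3$ and $\log^4$ of the same quantity from that block, together with $D$ times the $O(\log(1/\delta_2)+\log M)$ cost of the square networks. The padding contributes $D$ times the depth, which is bounded by $D\log^2$. Together these yield the displayed envelopes: $D[\log^3(1/\varepsilon)+\log^3(MD)+\log^3\frac{1+\sigma^2}{\sigma^2}]$ for the width and the corresponding $\log^4$ expression for the sparsity.

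The main obstacle is the magnitude bound $\log B$. The multiplication network of Lemma~\ref{lem:multiplication-network} has $B=C_\star^2$, so $\log B\lesssim\log(MD)+\log\frac{1+\sigma^2}{\sigma^2}$, which is within the claimed bound. The time block, however, has only $\log B\lesssim\log(1/\delta_1)+\log\frac{1+\sigma^2}{\sigma^2}$, linear rather than squared in the logarithm. I must check that the composition rule (maximum of the $B$'s) keeps the total linear in the logarithms. One point to watch is that padding the $x$-branch must not introduce large weights: identity padding uses $B=1$, so it does not. Similarly, the input range $M$ must enter only through the constant $C$, which makes it logarithmic.
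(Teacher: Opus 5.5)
Your proposal is correct and follows essentially the same route as the paper. Both arguments use the $\gamma=0$ time coefficient, coordinatewise squares via the multiplication primitive, parallel stacking plus an affine sum, and a final perturbed-input product with a range constant of order $DM^2\vee(1+\sigma^2)/\sigma^2$. The differences are cosmetic: the paper uses one common constant $C$ for all multipliers, splits the accuracy as $\varepsilon/(4C)$, $\varepsilon/(4C)$, $\varepsilon/2$, and pads only the shallower branch's scalar output. Your $\log B$ concern resolves at once, because the composition, parallelization and identity-padding lemmas all take the maximum of the weight bounds, so the result stays linear in the logarithms.
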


\begin{lemma}[Approximation of the linear interaction term]
\label{lem:fm-linear-term}
Let $x,a\in\mathbb R^D$ and $M\ge1$. For any
$\varepsilon\in(0,1]$, there exists a ReLU network
$\omega_\varepsilon(x,t)\in\mathrm{NN}(L,\mathbf W,S,B)$ such that
\[
\sup_{\substack{\|x\|_\infty\le M\\ t\in[0,1]}}
\left|
\omega_\varepsilon(x,t)-\frac{t\,x^\top a}{q_t}
\right|
\le\varepsilon.
\]
Moreover,
\[
\begin{aligned}
L
&\lesssim
\log^2(1/\varepsilon)
+\log^2(DM\|a\|_\infty\vee1)
+\log^2\!\left(\frac{1+\sigma^2}{\sigma^2}\right),\\
\|\mathbf W\|_\infty
&\lesssim
D+\log^3(1/\varepsilon)
+\log^3(DM\|a\|_\infty\vee1)
+\log^3\!\left(\frac{1+\sigma^2}{\sigma^2}\right),\\
S
&\lesssim
D+\log^4(1/\varepsilon)
+\log^4(DM\|a\|_\infty\vee1)
+\log^4\!\left(\frac{1+\sigma^2}{\sigma^2}\right),\\
\log B
&\lesssim
\log(1/\varepsilon)
+\log(DM\|a\|_\infty\vee1)
+\log\!\left(\frac{1+\sigma^2}{\sigma^2}\right).
\end{aligned}
\]
\end{lemma}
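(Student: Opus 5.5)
The plan is to construct $\omega_\varepsilon$ as a product of an exact linear form and a time-coefficient network. The linear form is $s(x):=x^\top a$, which is a single affine layer with weight vector $a$. This layer accounts for the additive $D$ in the width and sparsity bounds. Its parameter magnitude is $\|a\|_\infty$, which is absorbed into $\log B$. On $\{\|x\|_\infty\le M\}$ we have $|s(x)|\le DM\|a\|_\infty=:C_s$. For the time factor I would invoke Lemma~\ref{lem:fm-time-coefficients} with $\gamma=1$ to get $\chi_{1,\delta}(t)$ with $\sup_t|\chi_{1,\delta}(t)-t/q_t|\le\delta$. The exact coefficient satisfies $0\le t/q_t\le t^{-1}\cdot t^2/q_t\vee 1/q_t\le (1+\sigma^2)/\sigma^2=:C_q$, using $q_t\ge\sigma^2/(1+\sigma^2)$.

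Next, I would run the two branches $s(x)$ and $\chi_{1,\delta}(t)$ in parallel. To equalize depths, I would pad the scalar linear branch with the exact identity of Lemma~\ref{lem:nn-identity-padding}, which costs only width $O(1)$ and sparsity $O(L)$. Then I would feed both outputs into the multiplication network of Lemma~\ref{lem:multiplication-network} with $p=2$, box size $C:=C_s\vee C_q\vee 1$, and target accuracy $\varepsilon/2$. To keep the whole thing in the stated form $\log(DM\|a\|_\infty\vee 1)$, I would normalize first: multiply by $s/C_s$ and $\chi/C_q$ in $[-1,1]$ (up to perturbation), then rescale the output by $C_sC_q$ in the final affine layer.

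For the error, the inputs to the multiplier are $s$, which is exact, and $\chi_{1,\delta}$, which is $\delta$-close to $t/q_t$. The perturbation clause of Lemma~\ref{lem:multiplication-network} then gives the bound
\[
\bigl|\omega_\varepsilon-\tfrac{t x^\top a}{q_t}\bigr|\le \tfrac{\varepsilon}{2}+2C\delta .
\]
Choosing $\delta:=\varepsilon/(4C)$ makes this at most $\varepsilon$. The price is $\log(1/\delta)\lesssim\log(1/\varepsilon)+\log(DM\|a\|_\infty\vee1)+\log((1+\sigma^2)/\sigma^2)$.

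The complexity bounds then follow from Lemma~\ref{lem:nn-composition} and Lemma~\ref{lem:parallelization-network}. The depth is dominated by $\chi_{1,\delta}$, giving $\log^2(1/\delta)$, plus the multiplier depth $\log(1/\varepsilon)+\log C$. The width is $D$ from the input layer plus $\log^3(1/\delta)$ plus $O(1)$. The sparsity is $D$ plus $\log^4(1/\delta)$ plus the logarithmic terms. Expanding $\log^k(1/\delta)$ by $(x+y+z)^k\lesssim x^k+y^k+z^k$ gives exactly the displayed bounds.

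The main obstacle is the parameter bound $\log B$. The multiplier of Lemma~\ref{lem:multiplication-network} has $B=C^p$, and the output rescaling by $C_sC_q$ carries a factor of order $\log C$, both of which are fine. However, the composition lemma only bounds $B$ by a maximum, so this rescaling should be split across a few layers, with each weight $\le\sqrt{C_sC_q}$ or a similar share. Doing so keeps $\log B\lesssim\log(1/\delta)+\log C$, which is of the stated linear-in-logs form, rather than letting the product constants multiply inside a single weight.
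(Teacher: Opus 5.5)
Your construction matches the paper's proof. It uses the exact affine form $x^\top a$ and the $\gamma=1$ time-coefficient network at accuracy $\varepsilon/(4C)$ with $C=DM\|a\|_\infty\vee(1+\sigma^2)/\sigma^2$. The shallower branch is identity-padded, and a $p=2$ multiplier on $[-C,C]^2$ at accuracy $\varepsilon/2$ gives error $\varepsilon/2+2C\cdot\varepsilon/(4C)=\varepsilon$.

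The normalization step and the splitting of the rescaling across layers are unnecessary. The multiplier already has $B=C^2$, so $\log B\lesssim\log C$, and even a single output weight $C_sC_q$ would fit the stated bound. If you do normalize, you must shrink the multiplier accuracy to $\varepsilon/(2C_sC_q)$ rather than keep $\varepsilon/2$.
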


\subsubsection{Constructions for the Flow Matching modules}
\label{app:proofs-fm-nn-lemmas}

\begin{proof}[Proof of Lemma~\ref{lem:fm-time-coefficients}]
We first approximate $q_t^{-1}$. The lower bound is stated in
Section~\ref{sec:preliminaries}; convexity of $q_t$ and
$q_0=1$, $q_1=\sigma^2<1$ give the upper bound. Thus
\begin{equation}
\label{eq:q-uniform-bound}
\frac{\sigma^2}{1+\sigma^2}
\le q_t\le1,
\qquad t\in[0,1].
\end{equation}

Fix
\[
\eta:=\frac{\delta\sigma^2}{32(1+\sigma^2)}.
\]
Since $\delta\le1$, we have
$0<\eta\le\sigma^2/(1+\sigma^2)$, and therefore
$q_t\in[\eta,\eta^{-1}]$ for all $t\in[0,1]$.

We first construct a neural approximation of $q_t$. Apply
Lemma~\ref{lem:multiplication-network} with
$d=2$, $C=1$, $\varepsilon=\eta^3/4$, and
$\varepsilon'=\eta^3/8$. Taking $x=x'=(t,t)$ gives a
multiplication network $\varphi_{\mathrm{sq}}$ satisfying
\[
\left|
\varphi_{\mathrm{sq}}(t,t)-t^2
\right|
\le
\frac{\eta^3}{4}
+2\frac{\eta^3}{8}
=
\frac{\eta^3}{2}.
\]
Let $\vartheta_\eta(t):=\varphi_{\mathrm{sq}}(t,t)$ and define
\[
\widetilde q_t
:=
1-2t+(1+\sigma^2)\vartheta_\eta(t).
\]
Since $1+\sigma^2\le2$,
\begin{equation}
\label{eq:q-approximation-error}
|\widetilde q_t-q_t|
\le
(1+\sigma^2)|\vartheta_\eta(t)-t^2|
\le\eta^3.
\end{equation}

Let $\phi_{\mathrm{rec},\eta}$ be the reciprocal network from
Lemma~\ref{lem:reciprocal-network}. Since
$q_t\in[\eta,\eta^{-1}]$, its sensitivity estimate together with
\eqref{eq:q-approximation-error} gives
\[
\left|
\phi_{\mathrm{rec},\eta}(\widetilde q_t)-\frac1{q_t}
\right|
\le
\eta+\frac{|\widetilde q_t-q_t|}{\eta^2}
\le2\eta
=
\frac{\delta\sigma^2}{16(1+\sigma^2)}.
\]
Hence, setting
\[
\chi_{0,\delta}(t)
:=
\phi_{\mathrm{rec},\eta}(\widetilde q_t),
\]
we have
\begin{equation}
\label{eq:fm-reciprocal-error}
\sup_{t\in[0,1]}
\left|
\chi_{0,\delta}(t)-\frac1{q_t}
\right|
\le2\eta\le\delta.
\end{equation}
Moreover,
\[
|\chi_{0,\delta}(t)|
\le
\frac1{q_t}+2\eta
\le
\frac{1+\sigma^2}{\sigma^2}+1
\le
\frac{2(1+\sigma^2)}{\sigma^2}.
\]

For $\gamma=1$, apply
Lemma~\ref{lem:multiplication-network} with
\[
d=2,\qquad
C=\frac{2(1+\sigma^2)}{\sigma^2},
\qquad
\varepsilon=\frac{\delta}{4},
\qquad
\varepsilon'=2\eta,
\]
and take
\[
x=\left(t,\frac1{q_t}\right),
\qquad
x'=\left(t,\chi_{0,\delta}(t)\right).
\]
By \eqref{eq:fm-reciprocal-error},
$\|x-x'\|_\infty\le2\eta$. Therefore there exists a multiplication
network $\varphi_{1,\delta}$ satisfying
\[
\left|
\varphi_{1,\delta}
\bigl(t,\chi_{0,\delta}(t)\bigr)
-\frac{t}{q_t}
\right|
\le
\frac{\delta}{4}
+
2\left(\frac{2(1+\sigma^2)}{\sigma^2}\right)(2\eta)
=
\frac{\delta}{2}.
\]
Thus
\[
\chi_{1,\delta}(t)
:=
\varphi_{1,\delta}
\bigl(t,\chi_{0,\delta}(t)\bigr)
\]
satisfies the required $\delta$-accuracy.

For $\gamma=2$, we directly approximate $t^2q_t^{-1}$. Using the
same multiplication-network parameters, take
\[
x=\left(t^2,\frac1{q_t}\right),
\qquad
x'=\left(\vartheta_\eta(t),\chi_{0,\delta}(t)\right).
\]
Since $\eta<1$,
\[
\|x-x'\|_\infty
\le
\max\left\{\frac{\eta^3}{2},2\eta\right\}
\le2\eta.
\]
Hence there exists a multiplication network $\varphi_{2,\delta}$
such that
\[
\left|
\varphi_{2,\delta}
\bigl(\vartheta_\eta(t),\chi_{0,\delta}(t)\bigr)
-\frac{t^2}{q_t}
\right|
\le
\frac{\delta}{4}
+
2\left(\frac{2(1+\sigma^2)}{\sigma^2}\right)(2\eta)
=
\frac{\delta}{2}.
\]
Defining
\[
\chi_{2,\delta}(t)
:=
\varphi_{2,\delta}
\bigl(\vartheta_\eta(t),\chi_{0,\delta}(t)\bigr)
\]
proves the approximation claim for all $\gamma\in\{0,1,2\}$.

Since
\[
\eta^{-1}
=
\frac{32(1+\sigma^2)}{\delta\sigma^2},
\]
we have
\[
\log(\eta^{-1})
\lesssim
\log(1/\delta)
+
\log\!\left(\frac{1+\sigma^2}{\sigma^2}\right).
\]
Hence Lemma~\ref{lem:reciprocal-network} yields
\[
L_{\mathrm{rec}}
\lesssim
\log^2(1/\delta)
+
\log^2\!\left(\frac{1+\sigma^2}{\sigma^2}\right),
\]
\[
\|\mathbf W_{\mathrm{rec}}\|_\infty
\lesssim
\log^3(1/\delta)
+
\log^3\!\left(\frac{1+\sigma^2}{\sigma^2}\right),
\]
\[
S_{\mathrm{rec}}
\lesssim
\log^4(1/\delta)
+
\log^4\!\left(\frac{1+\sigma^2}{\sigma^2}\right),
\qquad
\log B_{\mathrm{rec}}
\lesssim
\log(1/\delta)
+
\log\!\left(\frac{1+\sigma^2}{\sigma^2}\right).
\]
The square network and the additional multiplication network used for $\gamma=1$ or $\gamma=2$ have only first-order logarithmic depth and sparsity, constant hidden width, and logarithmic weight complexity. Hence, under composition, their contributions are absorbed by the reciprocal-network bounds.
\end{proof}

\begin{proof}[Proof of Lemma~\ref{lem:fm-quadratic-term}]
Set
\[
C
:=
M\vee\frac{DM^2}{2}
\vee\frac{1+\sigma^2}{\sigma^2}.
\]
We first approximate $\|x\|^2/2$. Apply
Lemma~\ref{lem:multiplication-network} with $p=2$, domain
$[-C,C]^2$, approximation accuracy $\varepsilon/(4DC)$, and
perturbation level $\varepsilon/(8DC^2)$. For every
$j\in\{1,\ldots,D\}$, using the exact inputs $(x_j,x_j)$ gives
\[
\left|
\phi_{\mathrm{sq}}(x_j,x_j)-x_j^2
\right|
\le
\frac{\varepsilon}{4DC}
+
2C\frac{\varepsilon}{8DC^2}
=
\frac{\varepsilon}{2DC}.
\]
Parallel stacking of the $D$ square networks, followed by the affine
map $(z_1,\ldots,z_D)\mapsto \frac12\sum_{j=1}^D z_j$, yields
\begin{equation}
\label{eq:norm-square-approximation}
\left|
\frac12\sum_{j=1}^D\phi_{\mathrm{sq}}(x_j,x_j)
-\frac{\|x\|^2}{2}
\right|
\le
\frac{\varepsilon}{4C}.
\end{equation}

Lemma~\ref{lem:fm-time-coefficients} with $\gamma=0$ and accuracy
$\varepsilon/(4C)$ gives
\begin{equation}
\label{eq:reciprocal-input-e7}
\sup_{t\in[0,1]}
\left|
\chi_{0,\varepsilon/(4C)}(t)-\frac1{q_t}
\right|
\le
\frac{\varepsilon}{4C}.
\end{equation}
By the definition of $C$ and \eqref{eq:q-uniform-bound},
\[
\frac{\|x\|^2}{2}
\le\frac{DM^2}{2}\le C,
\qquad
\frac1{q_t}
\le
\frac{1+\sigma^2}{\sigma^2}
\le C.
\]
Thus both exact factors lie in $[-C,C]$.

Applying Lemma~\ref{lem:multiplication-network} again with
$d=2$, domain $[-C,C]^2$, approximation accuracy
$\varepsilon/2$, and perturbation level $\varepsilon/(4C)$, define
\[
\rho_\varepsilon(x,t)
:=
\phi_{\mathrm{mult}}
\left(
\chi_{0,\varepsilon/(4C)}(t),
\frac12\sum_{j=1}^D
\phi_{\mathrm{sq}}(x_j,x_j)
\right).
\]
By \eqref{eq:norm-square-approximation} and
\eqref{eq:reciprocal-input-e7}, the perturbation of both inputs is at
most $\varepsilon/(4C)$. Hence
\[
\left|
\rho_\varepsilon(x,t)
-\frac{\|x\|^2}{2q_t}
\right|
\le
\frac{\varepsilon}{2}
+
2C\frac{\varepsilon}{4C}
=
\varepsilon.
\]

We now determine the network configuration. Each square network has
\[
L_{\mathrm{sq}}\vee S_{\mathrm{sq}}
\lesssim
\log(1/\varepsilon)+\log D+\log C,
\qquad
\|\mathbf W_{\mathrm{sq}}\|_\infty\lesssim1,
\qquad
\log B_{\mathrm{sq}}\lesssim\log C.
\]
By Lemma~\ref{lem:parallelization-network}, the $D$ square networks
can be evaluated in parallel with
\[
L_{\mathrm{par}}=L_{\mathrm{sq}},
\qquad
\|\mathbf W_{\mathrm{par}}\|_\infty
\lesssim
D\|\mathbf W_{\mathrm{sq}}\|_\infty
\lesssim D,
\]
and
\[
S_{\mathrm{par}}
\lesssim
D S_{\mathrm{sq}}
\lesssim
D\bigl[
\log(1/\varepsilon)+\log D+\log C
\bigr].
\]
The subsequent affine summation layer adds one layer and only
$O(D)$ non-zero weights, and therefore does not change the above
width and sparsity orders.

Lemma~\ref{lem:fm-time-coefficients}, used with accuracy
$\varepsilon/(4C)$, gives a time-coefficient network satisfying
\[
L_{\mathrm{time}}
\lesssim
\log^2(1/\varepsilon)
+\log^2 C
+\log^2\!\left(\frac{1+\sigma^2}{\sigma^2}\right),
\]
\[
\|\mathbf W_{\mathrm{time}}\|_\infty
\lesssim
\log^3(1/\varepsilon)
+\log^3 C
+\log^3\!\left(\frac{1+\sigma^2}{\sigma^2}\right),
\]
\[
S_{\mathrm{time}}
\lesssim
\log^4(1/\varepsilon)
+\log^4 C
+\log^4\!\left(\frac{1+\sigma^2}{\sigma^2}\right),
\]
and
\[
\log B_{\mathrm{time}}
\lesssim
\log(1/\varepsilon)+\log C
+\log\!\left(\frac{1+\sigma^2}{\sigma^2}\right).
\]
The outer multiplication network has only first-order logarithmic
depth and sparsity and constant hidden width.

If the spatial and time branches have different depths, the scalar
output of the shallower branch is propagated through identity layers
until both branches have the same depth. Since only a scalar output is
padded, this adds at most $O(L_{\mathrm{time}})$ non-zero parameters,
which is absorbed by $S_{\mathrm{time}}$.

Consequently, before simplification,
\[
L
\lesssim
\log^2(1/\varepsilon)
+\log^2 C
+\log^2\!\left(\frac{1+\sigma^2}{\sigma^2}\right),
\]
\[
\|\mathbf W\|_\infty
\lesssim
D
+\log^3(1/\varepsilon)
+\log^3 C
+\log^3\!\left(\frac{1+\sigma^2}{\sigma^2}\right),
\]
\[
\begin{aligned}
S
\lesssim{}&
D\bigl[
\log(1/\varepsilon)+\log D+\log C
\bigr]\\
&+
\log^4(1/\varepsilon)
+\log^4 C
+\log^4\!\left(\frac{1+\sigma^2}{\sigma^2}\right),
\end{aligned}
\]
and
\[
\log B
\lesssim
\log(1/\varepsilon)+\log C
+\log\!\left(\frac{1+\sigma^2}{\sigma^2}\right).
\]

Since $M\ge1$,
\[
\log C
\lesssim
\log(MD)
+
\log\!\left(\frac{1+\sigma^2}{\sigma^2}\right).
\]
Moreover, $D\ge1$ and $(1+\sigma^2)/\sigma^2>2$, so the lower-order
additive terms can be absorbed into the corresponding higher-order
ones. Hence
\[
L
\lesssim
\log^2(1/\varepsilon)
+\log^2(MD)
+\log^2\!\left(\frac{1+\sigma^2}{\sigma^2}\right),
\]
\[
\|\mathbf W\|_\infty
\lesssim
D\left[
\log^3(1/\varepsilon)
+\log^3(MD)
+\log^3\!\left(\frac{1+\sigma^2}{\sigma^2}\right)
\right],
\]
\[
S
\lesssim
D\left[
\log^4(1/\varepsilon)
+\log^4(MD)
+\log^4\!\left(\frac{1+\sigma^2}{\sigma^2}\right)
\right],
\]
and
\[
\log B
\lesssim
\log(1/\varepsilon)
+\log(MD)
+\log\!\left(\frac{1+\sigma^2}{\sigma^2}\right).
\]
This proves the stated configuration.
\end{proof}

\begin{proof}[Proof of Lemma~\ref{lem:fm-linear-term}]
Set
\[
C
:=
DM\|a\|_\infty
\vee
\frac{1+\sigma^2}{\sigma^2}.
\]
Since $M\ge1$ and $0<\sigma<1$, we have $C\ge1$. For
$\|x\|_\infty\le M$,
\[
|x^\top a|
\le
D\|x\|_\infty\|a\|_\infty
\le
DM\|a\|_\infty
\le C.
\]
Moreover, by \eqref{eq:q-uniform-bound} and $t\in[0,1]$,
\[
0
\le
\frac{t}{q_t}
\le
\frac1{q_t}
\le
\frac{1+\sigma^2}{\sigma^2}
\le C.
\]
Hence the two exact factors
\[
x^\top a,
\qquad
\frac{t}{q_t}
\]
belong to $[-C,C]$.

Lemma~\ref{lem:fm-time-coefficients}, with $\gamma=1$ and accuracy
$\varepsilon/(4C)$, provides a network
$\chi_{1,\varepsilon/(4C)}$ satisfying
\[
\sup_{t\in[0,1]}
\left|
\chi_{1,\varepsilon/(4C)}(t)
-
\frac{t}{q_t}
\right|
\le
\frac{\varepsilon}{4C}.
\]
The map
\[
\ell_a(x):=x^\top a
\]
is affine and is therefore represented exactly. Applying
Lemma~\ref{lem:multiplication-network} with $p=2$, domain
$[-C,C]^2$, approximation accuracy $\varepsilon/2$, and perturbation
level $\varepsilon/(4C)$, define
\[
\omega_\varepsilon(x,t)
:=
\psi_{\mathrm{mult}}
\left(
\ell_a(x),
\chi_{1,\varepsilon/(4C)}(t)
\right).
\]
The first input is exact, while the second satisfies
\[
\left|
\chi_{1,\varepsilon/(4C)}(t)
-\frac{t}{q_t}
\right|
\le
\frac{\varepsilon}{4C}.
\]
Therefore the perturbation of the two-dimensional input is bounded by
$\varepsilon/(4C)$ in the $\ell_\infty$ norm, and
Lemma~\ref{lem:multiplication-network} yields
\[
\begin{aligned}
\left|
\omega_\varepsilon(x,t)
-\frac{t\,x^\top a}{q_t}
\right|
&\le
\frac{\varepsilon}{2}
+
2C\frac{\varepsilon}{4C}
&=
\varepsilon.
\end{aligned}
\]
This proves the approximation claim.

The resulting network configuration is obtained as follows.
The exact affine map $\ell_a(x)=x^\top a$ uses $D$ coefficients and
hence contributes
\[
S_{\mathrm{lin}}\lesssim D,
\qquad
B_{\mathrm{lin}}
\lesssim
\|a\|_\infty\vee1.
\]
Moreover, since the full network takes $(x,t)\in\mathbb R^{D+1}$ as
input, its width parameter satisfies a contribution of order $D+1$.

For the time branch, Lemma~\ref{lem:fm-time-coefficients} with
accuracy $\varepsilon/(4C)$ gives
\[
L_{\mathrm{time}}
\lesssim
\log^2(1/\varepsilon)
+\log^2 C
+\log^2\!\left(\frac{1+\sigma^2}{\sigma^2}\right),
\]
\[
\|\mathbf W_{\mathrm{time}}\|_\infty
\lesssim
\log^3(1/\varepsilon)
+\log^3 C
+\log^3\!\left(\frac{1+\sigma^2}{\sigma^2}\right),
\]
\[
S_{\mathrm{time}}
\lesssim
\log^4(1/\varepsilon)
+\log^4 C
+\log^4\!\left(\frac{1+\sigma^2}{\sigma^2}\right),
\]
and
\[
\log B_{\mathrm{time}}
\lesssim
\log(1/\varepsilon)
+\log C
+\log\!\left(\frac{1+\sigma^2}{\sigma^2}\right).
\]

The linear branch has scalar output. If its depth is smaller than that
of the time branch, we propagate this scalar through identity layers
until the two branches have the same depth. This requires only
$O(L_{\mathrm{time}})$ additional non-zero parameters and constant
additional width, and is therefore absorbed by
$S_{\mathrm{time}}$ and $\|\mathbf W_{\mathrm{time}}\|_\infty$.

The outer multiplication network has constant hidden width and only
first-order logarithmic depth and sparsity. Hence its contribution is
also absorbed by the time-coefficient network. Combining all
components first gives
\[
L
\lesssim
\log^2(1/\varepsilon)
+\log^2 C
+\log^2\!\left(\frac{1+\sigma^2}{\sigma^2}\right),
\]
\[
\|\mathbf W\|_\infty
\lesssim
D
+\log^3(1/\varepsilon)
+\log^3 C
+\log^3\!\left(\frac{1+\sigma^2}{\sigma^2}\right),
\]
\[
S
\lesssim
D
+\log^4(1/\varepsilon)
+\log^4 C
+\log^4\!\left(\frac{1+\sigma^2}{\sigma^2}\right),
\]
and
\[
\log B
\lesssim
\log(1/\varepsilon)
+\log C
+\log\!\left(\frac{1+\sigma^2}{\sigma^2}\right).
\]

Finally, since
\[
C
=
DM\|a\|_\infty
\vee
\frac{1+\sigma^2}{\sigma^2},
\]
we have
\[
\log C
\lesssim
\log(DM\|a\|_\infty\vee1)
+
\log\!\left(\frac{1+\sigma^2}{\sigma^2}\right).
\]
Using the standing fixed-power logarithmic simplification therefore
gives
\[
L
\lesssim
\log^2(1/\varepsilon)
+\log^2(DM\|a\|_\infty\vee1)
+\log^2\!\left(\frac{1+\sigma^2}{\sigma^2}\right),
\]
\[
\|\mathbf W\|_\infty
\lesssim
D
+\log^3(1/\varepsilon)
+\log^3(DM\|a\|_\infty\vee1)
+\log^3\!\left(\frac{1+\sigma^2}{\sigma^2}\right),
\]
\[
S
\lesssim
D
+\log^4(1/\varepsilon)
+\log^4(DM\|a\|_\infty\vee1)
+\log^4\!\left(\frac{1+\sigma^2}{\sigma^2}\right),
\]
and
\[
\log B
\lesssim
\log(1/\varepsilon)
+\log(DM\|a\|_\infty\vee1)
+\log\!\left(\frac{1+\sigma^2}{\sigma^2}\right).
\]
This proves the stated configuration.
\end{proof}

\section{Proofs for Generalization}
\label{app:generalization}

We use the notation $P$, $P_n$, and $\Delta_v$ from
Section~\ref{sec:generalization}. The main theorem proof is given first; the
risk, truncation, entropy, and concentration lemmas used by that proof are
proved in the following subsections.

\subsection{Proof of the finite-sample theorem}
\label{app:generalization-main}

\noindent\textit{Proof of Theorem~\ref{thm:main-generalization}.}

\paragraph{Step 1: Risk representation and fixed-field localization.}
Lemma~\ref{lem:fm-risk-identity} identifies $P\Delta_v$ with the integrated
$L^2(\pi_t)$ velocity error. Lemma~\ref{lem:fm-excess-localization} supplies
both the localized second moment and the centered sub-exponential envelope,
and hence \eqref{eq:main-fm-fixed-bernstein} for every fixed deterministic
field. This does not yet control the data-dependent ERM.

\paragraph{Step 2: Finite reduction on the unbounded domain.}
Apply Lemma~\ref{lem:main-fm-finite-reduction} with confidence $\delta/2$.
The bounded-domain ReLU entropy and Gaussian truncation give a deterministic
family $v_1,\ldots,v_M$ inside
$\mathcal V(L,W,S,B,\gamma_x)$. On an event of probability at least
$1-\delta/2$, every field has one representative satisfying
\eqref{eq:main-fm-finite-reduction} simultaneously for $P$ and $P_n$.

\paragraph{Step 3: Uniform oracle inequality.}
Lemma~\ref{lem:fm-uniform-concentration} applies the fixed-field inequality to
the deterministic representatives, takes a union bound, and transfers the
result back to the full class. The ERM comparison then gives
\eqref{eq:main-fm-oracle}.

\paragraph{Step 4: Approximation--estimation balance.}
Set $\varepsilon:=n^{-1/(2\beta+d)}$ and suppose that this choice satisfies
the conditions of Theorem~\ref{thm:velocity-approximation}. Choose
$L,W,S,B$ and $\gamma_x$ as in that theorem and set
$\tau:=\varepsilon^{2\beta}$. The comparator supplied there satisfies
\begin{equation}
 \inf_{v\in\mathcal V(L,W,S,B,\gamma_x)}P\Delta_v
 \lesssim_{d,\beta}
 \frac{D(1+H^2)}{\sigma}\,\varepsilon^{2\beta}.
 \label{eq:fm-approximation-before-balance}
\end{equation}
Its architecture obeys
\[
 L,\log B\lesssim_{d,\beta}
 \Gamma_\varepsilon^{c_{\rm arch}},\qquad
 S\lesssim_{d,\beta}D^2\varepsilon^{-d}
 \Gamma_\varepsilon^{c_{\rm arch}}.
\]
Substituting these bounds into
\eqref{eq:fm-finite-reduction-cardinality} gives
\begin{equation}
 \log\frac{8M}{\delta}
 \lesssim_{d,\beta}D^2\varepsilon^{-d}
 \bigl\{\Gamma_\varepsilon+\log(en)+\log(e/\delta)\bigr\}^{c_{\rm arch}}.
 \label{eq:fm-entropy-before-balance}
\end{equation}
Indeed, $SL\lesssim_{d,\beta}D^2\varepsilon^{-d}
\Gamma_\varepsilon^{c_{\rm arch}}$: here the fixed exponent is chosen
after adding the individual, uncoarsened exponents from $S$, $L$, and
the remaining logarithm in the ReLU covering bound. Combining
\eqref{eq:fm-approximation-before-balance},
\eqref{eq:fm-entropy-before-balance}, and
\eqref{eq:main-fm-oracle} yields
\[
\begin{aligned}
 P\Delta_{\widehat v}
 &\lesssim_{d,\beta}
 \frac{D(1+H^2)}{\sigma}\,\varepsilon^{2\beta}\\
 &\quad+\frac{D^2\varepsilon^{-d}}n
 \left[\frac1{\sigma^2}
 +\left(\frac1\sigma+\sqrt D\right)\log(en)\right]
 \bigl\{\Gamma_\varepsilon+\log(en)+\log(e/\delta)\bigr\}^{c_{\rm arch}}.
\end{aligned}
\]
Since $\varepsilon^{2\beta}=\varepsilon^{-d}/n
=n^{-2\beta/(2\beta+d)}$ and
$\log(en)\lesssim_{d,\beta}\Gamma_\varepsilon$, this is
\eqref{eq:fm-main-generalization-rate}. Lemma~\ref{lem:fm-risk-identity}
converts $P\Delta_{\widehat v}$ to the asserted integrated velocity error.
\hfill\qedsymbol

\subsection{Risk identity and localized excess loss}
\label{app:generalization-localization}

\begin{proof}[Proof of Lemma~\ref{lem:fm-risk-identity}]
Fix $t\in[0,1]$ and write $Y=X_1-X_0$. Since
$v^*(X_t,t)=\E[Y\mid X_t]$,
\[
\begin{aligned}
 \|Y-v(X_t,t)\|^2
 &=\|Y-v^*(X_t,t)\|^2
 +\|v(X_t,t)-v^*(X_t,t)\|^2\\
 &\quad-2\langle Y-v^*(X_t,t),
 v(X_t,t)-v^*(X_t,t)\rangle.
\end{aligned}
\]
The last factor in the cross term is measurable with respect to $X_t$, and
\[
 \E[Y-v^*(X_t,t)\mid X_t]=0.
\]
Hence the cross term has expectation zero. Taking expectations and
integrating over $t\in[0,1]$ gives
\[
 \mathcal L(v)-\mathcal L(v^*)
 =\int_0^1\E\|v(X_t,t)-v^*(X_t,t)\|^2dt,
\]
which is \eqref{eq:fm-risk-identity}.
\end{proof}

For a fixed deterministic field, conditional label covariance and the
bounded structured discrepancy provide the localization needed below.

\begin{lemma}[Conditional covariance of the Flow Matching label]
    \label{lem:fm-conditional-covariance}
    For every fixed $t\in[0,1]$,
    \begin{equation}
        \operatorname{Cov}(X_1-X_0\mid X_t)
        =
        \frac{\sigma^2}{q_t}I_D
        +
        b_t^2\operatorname{Cov}(g^*(U)\mid X_t).
        \label{eq:fm-conditional-covariance}
    \end{equation}
    Consequently,
    \begin{equation}
        \lambda_{\max}
        \bigl(
            \operatorname{Cov}(X_1-X_0\mid X_t)
        \bigr)
        \lesssim
        \sigma^{-2},
        \qquad t\in[0,1].
        \label{eq:fm-conditional-covariance-bound}
    \end{equation}
\end{lemma}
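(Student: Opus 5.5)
The plan is to reuse the Gaussian decomposition from the proof of Lemma~\ref{lem:posterior-velocity} and then apply the law of total covariance, conditioning additionally on the latent variable $U$. Write $G=g^*(U)$, $W=\sigma\xi-X_0$ and $\eta_t=(1-t)X_0+t\sigma\xi$. Then $Y:=X_1-X_0=G+W$ and $X_t=tG+\eta_t$, where the pair $(W,\eta_t)$ is centered Gaussian and independent of $U$, with
\[
\operatorname{Var}(W)=(1+\sigma^2)I_D,\qquad \operatorname{Var}(\eta_t)=q_tI_D,\qquad \operatorname{Cov}(W,\eta_t)=c_tI_D,\qquad c_t=t\sigma^2-(1-t).
\]

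\emph{Conditional law given $(X_t,U)$.} Conditional on $U$, the vector $(Y,X_t)$ is jointly Gaussian, since it is an affine shift of $(W,\eta_t)$ by functions of $U$. Gaussian conditioning, valid because $q_t>0$, gives
\[
\mathbb E[Y\mid X_t,U]=a_tX_t+b_tG,
\]
as already shown in the proof of Lemma~\ref{lem:posterior-velocity}. It also gives the deterministic conditional covariance
\[
\operatorname{Cov}(Y\mid X_t,U)=\Bigl(1+\sigma^2-\frac{c_t^2}{q_t}\Bigr)I_D.
\]
The one computation to carry out is the identity $(1+\sigma^2)q_t-c_t^2=\sigma^2$. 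Expanding, $(1+\sigma^2)q_t=(1-t)^2+\sigma^2t^2+\sigma^2(1-t)^2+\sigma^4t^2$ and $c_t^2=\sigma^4t^2-2\sigma^2t(1-t)+(1-t)^2$. Their difference is $\sigma^2\{t^2+2t(1-t)+(1-t)^2\}=\sigma^2$. Hence $\operatorname{Cov}(Y\mid X_t,U)=(\sigma^2/q_t)I_D$.

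\emph{Total covariance.} By the law of total covariance with the nested conditioning $\sigma(X_t)\subset\sigma(X_t,U)$,
\[
\operatorname{Cov}(Y\mid X_t)=\mathbb E\bigl[\operatorname{Cov}(Y\mid X_t,U)\mid X_t\bigr]+\operatorname{Cov}\bigl(a_tX_t+b_tG\mid X_t\bigr).
\]
The first term equals $(\sigma^2/q_t)I_D$. In the second term $a_tX_t$ is $X_t$-measurable and drops out, leaving $b_t^2\operatorname{Cov}(G\mid X_t)$. This proves \eqref{eq:fm-conditional-covariance}. Integrability is automatic, since $Y$ has Gaussian tails and $\|G\|\le1$.

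\emph{Eigenvalue bound.} Since $q_t\ge\sigma^2/(1+\sigma^2)$, the first term satisfies $\sigma^2/q_t\le1+\sigma^2\le2$. For the second, $\|G\|\le1$ gives $\lambda_{\max}(\operatorname{Cov}(G\mid X_t))\le1$, as in Lemma~\ref{lem:app-surrogate-spatial-covariance}. Moreover $|b_t|\le2/((1-t)+\sigma t)\le2/\sigma$, using the elementary bound recorded in the proof of Theorem~\ref{thm:velocity-approximation} together with $(1-t)+\sigma t\ge\sigma$ for $\sigma<1$. Hence $\lambda_{\max}\le 2+4\sigma^{-2}\lesssim\sigma^{-2}$, which is \eqref{eq:fm-conditional-covariance-bound}.

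No step is a real obstacle. The only point needing care is the algebraic identity for the residual variance; the measure-theoretic justification of conditioning on $U$ is immediate from the independence of $(W,\eta_t)$ and $U$.
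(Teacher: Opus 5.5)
Your proposal is correct and follows the paper's proof essentially step for step. Both use Gaussian conditioning given $(X_t,U)$ with the identity $(1+\sigma^2)q_t-c_t^2=\sigma^2$, then the law of total covariance, and then the eigenvalue bound; the only cosmetic difference is that the paper bounds $b_t^2=(1-t)^2/q_t^2\le 1/q_t\le 2/\sigma^2$ directly, whereas you use $|b_t|\le 2/((1-t)+\sigma t)\le 2/\sigma$, which gives the same order.
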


\begin{proof}
    Fix $t\in[0,1]$. Conditional on $U$,
    \[
        X_1-X_0
        =
        g^*(U)+\sigma\xi-X_0,
        \qquad
        X_t
        =
        tg^*(U)+(1-t)X_0+t\sigma\xi.
    \]
    Since $X_0$ and $\xi$ are independent standard Gaussian vectors,
    \[
        \operatorname{Var}(X_1-X_0\mid U)
        =
        (1+\sigma^2)I_D,
        \qquad
        \operatorname{Var}(X_t\mid U)
        =
        q_tI_D,
    \]
    and
    \[
        \operatorname{Cov}(X_1-X_0,X_t\mid U)
        =
        \bigl(t\sigma^2-(1-t)\bigr)I_D.
    \]
    Therefore, Gaussian conditioning gives
    \begin{align*}
        \operatorname{Cov}(X_1-X_0\mid X_t,U)
        &=
        \left[
            1+\sigma^2
            -
            \frac{(t\sigma^2-(1-t))^2}{q_t}
        \right]I_D \nonumber\\
        &=
        \frac{\sigma^2}{q_t}I_D,
    \end{align*}
    where we used
    \[
        (1+\sigma^2)q_t
        -
        \bigl(t\sigma^2-(1-t)\bigr)^2
        =
        \sigma^2.
    \]

    The corresponding conditional mean is
    \begin{align*}
        \E[X_1-X_0\mid X_t,U]
        &=
        g^*(U)
        +
        \frac{t\sigma^2-(1-t)}{q_t}
        \bigl(X_t-tg^*(U)\bigr)\\
        &=
        a_tX_t
        +
        \left(
            1-ta_t
        \right)g^*(U)\\
        &=
        a_tX_t+b_tg^*(U),
    \end{align*}
    where $1-ta_t=b_t$.

Applying the conditional law of total covariance from $(X_t,U)$ to $X_t$ yields
    \[
    \begin{aligned}
        \operatorname{Cov}(X_1-X_0\mid X_t)
        &=
        \E\left[
            \operatorname{Cov}(X_1-X_0\mid X_t,U)
            \,\middle|\,
            X_t
        \right]\\
        &\quad+
        \operatorname{Cov}
        \left(
            \E[X_1-X_0\mid X_t,U]
            \,\middle|\,
            X_t
        \right)\\
        &=
        \frac{\sigma^2}{q_t}I_D
        +
        b_t^2\operatorname{Cov}(g^*(U)\mid X_t),
    \end{aligned}
    \]
    which proves \eqref{eq:fm-conditional-covariance}.

For the operator-norm bound, $\|g^*(U)\|\le1$ gives, for every unit vector
$u\in\mathbb R^D$,
    \[
    \begin{aligned}
        u^\top
        \operatorname{Cov}(g^*(U)\mid X_t)u
        &=
        \operatorname{Var}
        \bigl(
            u^\top g^*(U)\mid X_t
        \bigr)\\
        &\le
        \E\left[
            (u^\top g^*(U))^2
            \,\middle|\,
            X_t
        \right]
        \le1.
    \end{aligned}
    \]
    Hence
    \[
        \lambda_{\max}
        \bigl(
            \operatorname{Cov}(g^*(U)\mid X_t)
        \bigr)
        \le1.
    \]

    By the lower bound on $q_t$ stated in
    Section~\ref{sec:preliminaries}, and since $0<\sigma<1$,
    \[
        \frac{\sigma^2}{q_t}
        \le
        1+\sigma^2
        \le2.
    \]
    Also,
    \[
        b_t^2
        =
        \frac{(1-t)^2}{q_t^2}
        \le
        \frac1{q_t}
        \le
        \frac{1+\sigma^2}{\sigma^2}
        \le
        \frac2{\sigma^2}.
    \]
    Combining these estimates with
    \eqref{eq:fm-conditional-covariance} proves
    \eqref{eq:fm-conditional-covariance-bound}.
\end{proof}

The covariance bound controls the stochastic label. To convert it into a
variance-to-risk relation, we also need a uniform bound on the discrepancy
between a structured field and the population velocity.

\begin{lemma}[Uniform discrepancy of structured velocity fields]
        \label{lem:fm-velocity-discrepancy}
        For every $v\in\mathcal V(L,W,S,B,\gamma_x)$,
        \begin{equation}
            \|v(x,t)-v^*(x,t)\|^2
            \lesssim
            \sigma^{-2},
            \qquad
            (x,t)\in\mathbb R^D\times[0,1].
            \label{eq:fm-velocity-discrepancy}
        \end{equation}
        \end{lemma}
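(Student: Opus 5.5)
The bound follows from the affine cancellation built into the structured class. Every $v\in\mathcal V(L,W,S,B,\gamma_x)$ has the form $v(x,t)=a_tx+b_t\operatorname{clip}_2(f_\theta(x,t))$. By Lemma~\ref{lem:posterior-velocity}, $v^*(x,t)=a_tx+b_tm_t(x)$ with the same exact coefficient $a_t$. Subtracting, the unbounded linear parts cancel identically in $x$, and we obtain
\[
 v(x,t)-v^*(x,t)=b_t\bigl(\operatorname{clip}_2(f_\theta(x,t))-m_t(x)\bigr).
\]
The spatial Lipschitz constraint in the class is not needed here.

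For the bracket, Euclidean projection onto $\mathcal B(0,2)$ gives $\|\operatorname{clip}_2(f_\theta(x,t))\|\le2$. Lemma~\ref{lem:posterior-velocity} gives $\|m_t(x)\|\le1$ for every $(x,t)$, since the version of $m_t$ fixed there is defined everywhere. The triangle inequality then yields
\[
 \|v(x,t)-v^*(x,t)\|\le3|b_t|
\]
for all $(x,t)\in\mathbb R^D\times[0,1]$.

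It remains to bound $b_t^2$ uniformly in $t$, and this is the same computation already carried out in the proof of Lemma~\ref{lem:fm-conditional-covariance}. Since $(1-t)^2\le q_t$, we get $b_t^2=(1-t)^2/q_t^2\le1/q_t$. The lower bound $q_t\ge\sigma^2/(1+\sigma^2)$ from Section~\ref{sec:preliminaries}, together with $\sigma<1$, then gives $b_t^2\le(1+\sigma^2)/\sigma^2\le2\sigma^{-2}$. Hence $\|v-v^*\|^2\le9b_t^2\le18\sigma^{-2}$, which is \eqref{eq:fm-velocity-discrepancy} with a universal constant.

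There is no real obstacle. The only point needing care is that the bound holds pointwise everywhere rather than almost everywhere, and this is ensured by using the everywhere-defined version of $m_t$ from Lemma~\ref{lem:posterior-velocity}.
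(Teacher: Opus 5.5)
Your proposal is correct and follows essentially the same route as the paper's proof: the affine term $a_tx$ cancels, the clipped output and the posterior mean give the factor $3|b_t|$, and $b_t^2\le 1/q_t\le 2\sigma^{-2}$ yields the constant $18\sigma^{-2}$. Your remark that the everywhere-defined version of $m_t$ from Lemma~\ref{lem:posterior-velocity} gives a genuinely pointwise bound is a correct detail that the paper uses implicitly.
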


        \begin{proof}
        For $v=v_\theta\in\mathcal V(L,W,S,B,\gamma_x)$,
        \[
            v(x,t)
            =
            a_tx
            +
            b_t\operatorname{clip}_2(f_\theta(x,t)),
        \]
        whereas
        \[
            v^*(x,t)
            =
            a_tx+b_tm_t(x).
        \]
        Hence the common affine term cancels:
        \[
            v(x,t)-v^*(x,t)
            =
            b_t
            \left[
                \operatorname{clip}_2(f_\theta(x,t))-m_t(x)
            \right].
        \]
        Since
        \[
            \|\operatorname{clip}_2(f_\theta(x,t))\|\le2
        \]
        and
        \[
            \|m_t(x)\|
            =
            \|\E[g^*(U)\mid X_t=x]\|
            \le
            \E[\|g^*(U)\|\mid X_t=x]
            \le1,
        \]
        we obtain
        \[
            \|v(x,t)-v^*(x,t)\|
            \le
            3|b_t|.
        \]
        Using
        \[
            b_t^2
            \le
            \frac2{\sigma^2}
        \]
        from the proof of Lemma~\ref{lem:fm-conditional-covariance} gives
        \[
            \|v(x,t)-v^*(x,t)\|^2
            \le
            \frac{18}{\sigma^2},
        \]
        which proves \eqref{eq:fm-velocity-discrepancy}.
\end{proof}

The preceding two bounds yield the localized second-moment estimate for
the target-indexed excess loss.

\begin{lemma}[Bernstein condition for the Flow Matching excess loss]
\label{lem:fm-bernstein}
Under the standing assumptions, for every
$v\in\mathcal V(L,W,S,B,\gamma_x)$,
\begin{equation}
    \E_{X_1}\left[\Delta_v(X_1)^2\right]
    \lesssim
    \frac1{\sigma^2}
    \E_{X_1}\Delta_v(X_1).
    \label{eq:fm-bernstein}
\end{equation}
The hidden constant is absolute and, in particular, independent of
$D,L,W,S,B$.
\end{lemma}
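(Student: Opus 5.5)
We write the excess loss pointwise in $x_1$ and apply Cauchy--Schwarz/Jensen, using the two preceding lemmas. Fix $x_1$ and $t$, and put $Y=x_1-X_0$, $X_t=(1-t)X_0+tx_1$, $\delta_v:=v-v^*$. Expanding the square gives
\[
\Delta_v(x_1)=\int_0^1\E_{X_0}\Bigl[\|\delta_v(X_t,t)\|^2-2\langle Y-v^*(X_t,t),\delta_v(X_t,t)\rangle\Bigr]dt=:A(x_1)-2B(x_1).
\]
We then use $(A-2B)^2\le 2A^2+8B^2$ and bound the two contributions separately after taking $\E_{X_1}$. Throughout, the joint law of $(X_0,X_1)$ with $t$ uniform reproduces the population interpolation, so that $\E_{X_1}A(X_1)=P\Delta_v$ by Lemma~\ref{lem:fm-risk-identity}.

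\emph{Quadratic part.} By Jensen over $(t,X_0)$, $A(x_1)^2\le\int_0^1\E_{X_0}\|\delta_v\|^4\,dt$. Lemma~\ref{lem:fm-velocity-discrepancy} gives $\|\delta_v\|^2\le 18/\sigma^2$ pointwise, hence $A^2\le (18/\sigma^2)A$ and $\E A^2\lesssim\sigma^{-2}P\Delta_v$.

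\emph{Cross part.} Jensen gives $B(x_1)^2\le\int_0^1\E_{X_0}\langle Y-v^*(X_t,t),\delta_v(X_t,t)\rangle^2dt$. Taking expectation over $X_1$ as well, we condition on $X_t$. Since $\delta_v(X_t,t)$ is $X_t$-measurable,
\[
\E\bigl[\langle Y-v^*,\delta_v\rangle^2\mid X_t\bigr]=\delta_v^\top\operatorname{Cov}(X_1-X_0\mid X_t)\,\delta_v\le\lambda_{\max}\|\delta_v\|^2\lesssim\sigma^{-2}\|\delta_v\|^2
\]
by Lemma~\ref{lem:fm-conditional-covariance}; here $v^*(X_t,t)=\E[Y\mid X_t]$ is exactly the conditional mean. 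Integrating over $t$ and $X_t\sim\pi_t$ yields $\E B^2\lesssim\sigma^{-2}\int_0^1\|\delta_v\|_{L^2(\pi_t)}^2dt=\sigma^{-2}P\Delta_v$.

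Combining the two parts gives $P\Delta_v^2\lesssim\sigma^{-2}P\Delta_v$ with absolute constants, since only $18$ and the constant in \eqref{eq:fm-conditional-covariance-bound} appear.

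The main point of care is the cross term. We must not bound $\langle Y-v^*,\delta_v\rangle$ pointwise, because $Y$ is unbounded and this would bring in $D$. Instead, the square is taken inside the $(t,X_0)$ integral by Jensen before averaging over $X_1$, so that the conditional covariance, which has an operator-norm bound free of $D$, does the work. We also need integrability to justify Fubini and the conditioning. This follows from the Gaussian moments of $Y$ together with the bounded discrepancy, so no issue arises with the unbounded affine part, which cancels in $\delta_v$.
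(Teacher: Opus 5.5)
Your proof is correct and follows essentially the same route as the paper: Jensen over $(t,X_0)$, then the uniform discrepancy bound for the quartic term and the conditional covariance bound for the cross term, closed by the risk identity. The only difference is cosmetic. The paper expands the conditional second moment exactly, using $\E[Y-v^*(X_t,t)\mid X_t]=0$ to kill the mixed term and obtain $\E\|\delta_v\|^4+4\E[\delta_v^\top\operatorname{Cov}(X_1-X_0\mid X_t)\,\delta_v]$. You instead split first via $(A-2B)^2\le 2A^2+8B^2$, which costs only an absolute constant.
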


\begin{proof}[Proof of Lemma~\ref{lem:fm-bernstein}]
    By the definition of $\Delta_v$ and Jensen's inequality with respect to
    the probability measure $dt\otimes P_{X_0}$,
    \begin{equation}
    \begin{aligned}
        \E_{X_1}\Delta_v(X_1)^2
        \le
        \int_0^1
        \E
        \Big[
            &\|X_1-X_0-v(X_t,t)\|^2 \\
            &-
            \|X_1-X_0-v^*(X_t,t)\|^2
        \Big]^2dt.
    \end{aligned}
    \label{eq:fm-bernstein-jensen}
    \end{equation}

    Fix $t\in[0,1]$. Expanding the squared-loss difference gives
    \[
    \begin{aligned}
    &
        \|X_1-X_0-v(X_t,t)\|^2
        -
        \|X_1-X_0-v^*(X_t,t)\|^2 \\
    &\qquad=
        \|v(X_t,t)-v^*(X_t,t)\|^2
        -
        2\left\langle
            X_1-X_0-v^*(X_t,t),
            v(X_t,t)-v^*(X_t,t)
        \right\rangle .
    \end{aligned}
    \]
    Since
    \[
        v^*(X_t,t)=\E[X_1-X_0\mid X_t],
    \]
    the tower property yields
    \begin{equation*}
    \begin{aligned}
    &
    \E
    \Big[
        \|X_1-X_0-v(X_t,t)\|^2
        -
        \|X_1-X_0-v^*(X_t,t)\|^2
    \Big]^2 \\
    &=
    \E
    \Bigg[
        \E
        \left[
        \left.
        \Big(
            \|X_1-X_0-v(X_t,t)\|^2
            -
            \|X_1-X_0-v^*(X_t,t)\|^2
        \Big)^2
        \right|X_t
        \right]
    \Bigg] \\
    &=
    \E\|v(X_t,t)-v^*(X_t,t)\|^4 \\
    &\quad+
    4\E
    \Big[
        \bigl(v(X_t,t)-v^*(X_t,t)\bigr)^\top
        \operatorname{Cov}(X_1-X_0\mid X_t)
        \bigl(v(X_t,t)-v^*(X_t,t)\bigr)
    \Big].
    \end{aligned}
    \end{equation*}

    Lemma~\ref{lem:fm-velocity-discrepancy} gives
    \[
        \|v(X_t,t)-v^*(X_t,t)\|^4
        \lesssim
        \frac1{\sigma^2}
        \|v(X_t,t)-v^*(X_t,t)\|^2,
    \]
    while Lemma~\ref{lem:fm-conditional-covariance} gives
    \[
    \begin{aligned}
    &
    \bigl(v(X_t,t)-v^*(X_t,t)\bigr)^\top
    \operatorname{Cov}(X_1-X_0\mid X_t)
    \bigl(v(X_t,t)-v^*(X_t,t)\bigr) \\
    &\qquad\lesssim
    \frac1{\sigma^2}
    \|v(X_t,t)-v^*(X_t,t)\|^2.
    \end{aligned}
    \]
    Hence, for every fixed $t\in[0,1]$,
    \begin{equation}
    \begin{aligned}
    &
    \E
    \Big[
        \|X_1-X_0-v(X_t,t)\|^2
        -
        \|X_1-X_0-v^*(X_t,t)\|^2
    \Big]^2 \\
    &\qquad\lesssim
    \frac1{\sigma^2}
    \E\|v(X_t,t)-v^*(X_t,t)\|^2.
    \end{aligned}
    \label{eq:fm-bernstein-fixed-time}
    \end{equation}
    Combining \eqref{eq:fm-bernstein-jensen} and
    \eqref{eq:fm-bernstein-fixed-time}, and then using the population-risk
    identity, gives
    \[
    \begin{aligned}
        \E_{X_1}\Delta_v(X_1)^2
        &\lesssim
        \frac1{\sigma^2}
        \int_0^1
        \E\|v(X_t,t)-v^*(X_t,t)\|^2dt \\
        &=
        \frac1{\sigma^2}
        \bigl[\mathcal L(v)-\mathcal L(v^*)\bigr] \\
        &=
        \frac1{\sigma^2}\E_{X_1}\Delta_v(X_1).
    \end{aligned}
    \]
    This proves \eqref{eq:fm-bernstein}.
    \end{proof}

The second moment controls local variance, while fixed-field concentration
also requires an envelope for the unbounded target observation.

\begin{lemma}[Uniform tail bound for the Flow Matching excess loss]
\label{lem:fm-excess-tail}
For every $v=v_\theta\in\mathcal V(L,W,S,B,\gamma_x)$ and every
$x_1\in\mathbb R^D$,
\begin{equation}
    |\Delta_v(x_1)|
    \le
    \frac{3\pi}{2\sigma}\|x_1\|
    +
    \frac{9\pi}{4\sigma}
    +
    3\sqrt D .
    \label{eq:fm-excess-pointwise-envelope}
\end{equation}
Consequently,
\begin{equation}
    \sup_{v\in\mathcal V(L,W,S,B,\gamma_x)}
    \|\Delta_v(X_1)\|_{\psi_1}
    \lesssim
    \frac1{\sigma}+\sqrt D,
    \label{eq:fm-excess-psi1}
\end{equation}
and hence
\begin{equation}
    \sup_{v\in\mathcal V(L,W,S,B,\gamma_x)}
    \left\|
        \Delta_v(X_1)-\E_{X_1}\Delta_v(X_1)
    \right\|_{\psi_1}
    \lesssim
    \frac1{\sigma}+\sqrt D.
    \label{eq:fm-centered-excess-psi1}
\end{equation}
The hidden constants are absolute.
\end{lemma}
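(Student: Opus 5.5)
Since $\ell(x_1,v)$ is an average over $t$ and $X_0$, the plan is to bound the pointwise difference of the two squared losses inside the integral and then integrate. For fixed $(x_1,t,X_0)$, put $y:=x_1-X_0$, $z:=(1-t)X_0+tx_1$ and $e:=v(z,t)-v^*(z,t)$. Then
\[
 \|y-v(z,t)\|^2-\|y-v^*(z,t)\|^2=\|e\|^2-2\langle y-v^*(z,t),e\rangle .
\]
As in Lemma~\ref{lem:fm-velocity-discrepancy}, the affine parts cancel, so $e=b_t[\operatorname{clip}_2(f_\theta)-m_t]$ and $\|e\|\le3|b_t|$. Next expand $y-v^*(z,t)=x_1-X_0-a_tz-b_tm_t(z)$. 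Its coefficient on $x_1$ is $1-ta_t=b_t$ and its coefficient on $X_0$ is $-(1+(1-t)a_t)$. Hence
\[
 \|y-v^*(z,t)\|\le |b_t|\|x_1\|+|1+(1-t)a_t|\,\|X_0\|+|b_t| .
\]
Taking $\mathbb E_{X_0}$ and using $\mathbb E\|X_0\|\le\sqrt D$ yields
\[
 |\text{integrand}|\le 9b_t^2+6|b_t|\bigl(|b_t|\|x_1\|+|b_t|+|1+(1-t)a_t|\sqrt D\bigr).
\]

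Integrating in $t$ requires three time integrals. First, $\int_0^1b_t^2dt=\pi/(4\sigma)$, which was already computed in the proof of Lemma~\ref{lem:app-compact-localization}. Second, I need a bound on $\int_0^1|b_t|\,|1+(1-t)a_t|dt$. A direct computation gives $1+(1-t)a_t=(q_t+(1-t)(t\sigma^2-(1-t)))/q_t=t\sigma^2/q_t$. The product is then $t(1-t)\sigma^2/q_t^2$. Substituting $s=t/(1-t)$ turns this into $\int_0^\infty \sigma^2 s\,ds/(1+\sigma^2s^2)^2=1/2$, so this term is bounded by a constant. The constants can then be matched to \eqref{eq:fm-excess-pointwise-envelope}:
\[
 \Bigl(9+6\Bigr)\tfrac{\pi}{4\sigma}\ \text{and}\ 6\tfrac{\pi}{4\sigma}\|x_1\|=\tfrac{3\pi}{2\sigma}\|x_1\|,\qquad 6\cdot\tfrac12\sqrt D=3\sqrt D.
\]
If my crude constants give $15\pi/(4\sigma)$ rather than $9\pi/(4\sigma)$, I will sharpen the argument. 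One way is to write $\|e\|^2-2\langle\cdot,e\rangle$ as $-\|e\|^2+2\langle v(z)-y,e\rangle$-type combinations. Another is to bound $\|\operatorname{clip}_2(f)-m\|$ together with the $m_t$ term jointly, using $\|2\,\cdot\|\le2$ instead of triangle inequalities. The precise universal constants are not essential for the $\psi_1$ consequences. I expect this constant bookkeeping to be the only delicate part.

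For \eqref{eq:fm-excess-psi1}, write $X_1=g^*(U)+\sigma\xi$, so that $\|X_1\|\le1+\sigma\|\xi\|$. The envelope then gives $|\Delta_v(X_1)|\le C(\sigma^{-1}+\sqrt D)+\tfrac{3\pi}{2}\|\xi\|$, uniformly in $v$. The constant part has $\psi_1$ norm $\lesssim\sigma^{-1}+\sqrt D$. Also $\|\xi\|\le\sqrt D+(\|\xi\|-\mathbb E\|\xi\|)^+$, and the second piece is $1$-Lipschitz Gaussian, hence sub-Gaussian with $O(1)$ norm. Its $\psi_1$ norm is therefore $O(1)$, which gives $\|\,\|\xi\|\,\|_{\psi_1}\lesssim\sqrt D$. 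Finally, \eqref{eq:fm-centered-excess-psi1} follows from the standard fact $\|Z-\mathbb EZ\|_{\psi_1}\lesssim\|Z\|_{\psi_1}$, since $|\mathbb EZ|\lesssim\|Z\|_{\psi_1}$.
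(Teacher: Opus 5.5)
Your route is the paper's. It uses the same affine cancellation via $1-ta_t=b_t$ and $1+(1-t)a_t=\sigma^2t/q_t$, the same time integrals $\int_0^1 b_t^2\,dt=\pi/(4\sigma)$ and $\int_0^1 b_t\,\sigma^2t/q_t\,dt=1/2$, the bound $\mathbb E\|X_0\|\le\sqrt D$, and the same $\psi_1$ and centering argument. Your Lipschitz-concentration justification of $\|\,\|\xi\|\,\|_{\psi_1}\lesssim\sqrt D$ actually supplies a step the paper only asserts.

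The one point you leave open is the constant, and as written your proof does not give the stated envelope. Expanding $\|e\|^2-2\langle y-v^*,e\rangle$ and applying the triangle inequality gives $6\|x_1\|+15$ on the $b_t^2$ block, hence $15\pi/(4\sigma)$. The paper instead applies $|\|a\|^2-\|b\|^2|\le\|a-b\|(\|a\|+\|b\|)$ with $a=x_1-\operatorname{clip}_2(f_\theta)$ and $b=x_1-m_t$. This gives $3(2\|x_1\|+2+1)=6\|x_1\|+9$, which is exactly the $9\pi/(4\sigma)$ term. Equivalently, use the polarization identity $\|y-v\|^2-\|y-v^*\|^2=\langle v^*-v,\,2y-v-v^*\rangle$.

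Your first suggested rewrite, $-\|e\|^2+2\langle v-y,e\rangle$, makes things worse on its own (constant $21$). Only the average of your two expansions, i.e.\ the polarization identity, produces the stated constant.
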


The raw FM square loss is unbounded, but the common affine velocity component cancels in the excess loss and clipping controls the nonlinear discrepancy.

\begin{proof}[Proof of Lemma~\ref{lem:fm-excess-tail}]
Fix $v=v_\theta\in\mathcal V(L,W,S,B,\gamma_x)$ and $x_1\in\mathbb R^D$. Within
this proof, write
\[
    \bar f_\theta(x,t):=\operatorname{clip}_2(f_\theta(x,t)).
\]
Then $\|\bar f_\theta(x,t)\|\le2$, while the representation of the population
velocity implies $\|m_t(x)\|\le1$. Hence
\begin{equation}
    \|\bar f_\theta(x,t)-m_t(x)\|\le3.
    \label{eq:fm-tail-discrepancy}
\end{equation}

With $X_1$ fixed at $x_1$, write
$X_t=(1-t)X_0+t x_1$. Since
\[
    1-ta_t=b_t,
    \qquad
    1+(1-t)a_t=\frac{\sigma^2t}{q_t},
\]
the residuals corresponding to $v_\theta$ and $v^*$ satisfy
\[
\begin{aligned}
    x_1-X_0-v_\theta(X_t,t)
    &=
    b_t\bigl(x_1-\bar f_\theta(X_t,t)\bigr)
    -
    \frac{\sigma^2t}{q_t}X_0,\\
    x_1-X_0-v^*(X_t,t)
    &=
    b_t\bigl(x_1-m_t(X_t)\bigr)
    -
    \frac{\sigma^2t}{q_t}X_0.
\end{aligned}
\]
Subtracting the corresponding squared norms gives
\begin{align}
&
    \|x_1-X_0-v_\theta(X_t,t)\|^2
    -
    \|x_1-X_0-v^*(X_t,t)\|^2
    \nonumber\\
&=
    b_t^2
    \left(
        \|x_1-\bar f_\theta(X_t,t)\|^2
        -
        \|x_1-m_t(X_t)\|^2
    \right)
    \nonumber\\
&\quad+
    2b_t\frac{\sigma^2t}{q_t}
    \left\langle
        X_0,
        \bar f_\theta(X_t,t)-m_t(X_t)
    \right\rangle .
\label{eq:fm-tail-loss-difference}
\end{align}

Using
\[
    |\|a\|^2-\|b\|^2|
    \le
    \|a-b\|(\|a\|+\|b\|)
\]
together with \eqref{eq:fm-tail-discrepancy}, we obtain
\[
\begin{aligned}
&
\left|
    \|x_1-\bar f_\theta(X_t,t)\|^2
    -
    \|x_1-m_t(X_t)\|^2
\right|
\\
&\qquad\le
    \|\bar f_\theta(X_t,t)-m_t(X_t)\|
    \left(
        2\|x_1\|
        +
        \|\bar f_\theta(X_t,t)\|
        +
        \|m_t(X_t)\|
    \right)
\\
&\qquad\le
    6\|x_1\|+9.
\end{aligned}
\]
The second term in \eqref{eq:fm-tail-loss-difference} is bounded by
\[
    2b_t\frac{\sigma^2t}{q_t}
    \|X_0\|
    \|\bar f_\theta(X_t,t)-m_t(X_t)\|
    \le
    6b_t\frac{\sigma^2t}{q_t}\|X_0\|.
\]
Therefore, by the definition of $\Delta_v$,
\begin{align}
    |\Delta_v(x_1)|
    &\le
    (6\|x_1\|+9)\int_0^1 b_t^2dt
    +
    6\E\|X_0\|
    \int_0^1
        b_t\frac{\sigma^2t}{q_t}dt.
    \label{eq:fm-tail-before-time-integration}
\end{align}

Using $q_t=(1-t)^2+\sigma^2t^2$ and the change of variables
$r=t/(1-t)$,
\begin{equation*}
\begin{aligned}
    \int_0^1 b_t^2dt
    &=
    \int_0^\infty
    \frac{dr}{(1+\sigma^2r^2)^2}
    =
    \frac{\pi}{4\sigma},\\
    \int_0^1
        b_t\frac{\sigma^2t}{q_t}dt
    &=
    \int_0^\infty
    \frac{\sigma^2r}{(1+\sigma^2r^2)^2}dr
    =
    \frac12.
\end{aligned}
\end{equation*}
Since $X_0\sim N(0,I_D)$,
\[
    \E\|X_0\|
    \le
    \sqrt{\E\|X_0\|^2}
    =
    \sqrt D.
\]
Substituting these estimates into
\eqref{eq:fm-tail-before-time-integration} yields
\[
    |\Delta_v(x_1)|
    \le
    \frac{3\pi}{2\sigma}\|x_1\|
    +
    \frac{9\pi}{4\sigma}
    +
    3\sqrt D,
\]
which proves \eqref{eq:fm-excess-pointwise-envelope}.

For the Orlicz norm under the target distribution, recall that
\[
    X_1=g^*(U)+\sigma\xi,
    \qquad
    \xi\sim N(0,I_D),
\]
with $\|g^*(U)\|\le1$. Hence
\[
    \|X_1\|
    \le
    1+\sigma\|\xi\|.
\]
Since the Euclidean norm of a standard Gaussian vector satisfies
\[
    \bigl\|\|\xi\|\bigr\|_{\psi_1}
    \lesssim
    \sqrt D,
\]
\eqref{eq:fm-excess-pointwise-envelope} and the triangle inequality for
the $\psi_1$ norm imply
\[
\begin{aligned}
    \|\Delta_v(X_1)\|_{\psi_1}
    &\lesssim
    \frac1{\sigma}
    \left(
        1+\bigl\|\|X_1\|\bigr\|_{\psi_1}
    \right)
    +
    \sqrt D\\
    &\lesssim
    \frac1{\sigma}
    \left(
        1+\sigma\sqrt D
    \right)
    +
    \frac1{\sigma}
    +
    \sqrt D\\
    &\lesssim
    \frac1{\sigma}+\sqrt D.
\end{aligned}
\]
The bound is independent of $v$, which proves
\eqref{eq:fm-excess-psi1}. Finally, the standard centering inequality
for the sub-exponential Orlicz norm gives
\[
    \|\Delta_v(X_1)-\E_{X_1}\Delta_v(X_1)\|_{\psi_1}
    \lesssim
    \|\Delta_v(X_1)\|_{\psi_1},
\]
and \eqref{eq:fm-centered-excess-psi1} follows.
\end{proof}

Combining the localized second moment with this tail envelope gives the
fixed-field Bernstein inequality stated in the main text.

\begin{proof}[Proof of Lemma~\ref{lem:fm-excess-localization}]
Lemma~\ref{lem:fm-bernstein} and Lemma~\ref{lem:fm-excess-tail} give the two
bounds in \eqref{eq:fm-excess-localization}. For a fixed deterministic $v$,
Lemma~\ref{lem:fm-excess-tail} and the maximal inequality for
sub-exponential variables give
\[
 \left\|\max_{1\le i\le n}
 \left|\Delta_v(X_{1,i})-P\Delta_v\right|\right\|_{\psi_1}
 \lesssim(\sigma^{-1}+\sqrt D)\log(en).
\]
Applying the Bernstein inequality for unbounded variables
\cite[Proposition~5.2]{lecue2012crossvalidation} to both signs, and using
\eqref{eq:fm-bernstein}, proves \eqref{eq:main-fm-fixed-bernstein}.
\end{proof}

\subsection{Truncation, local loss stability, and finite reduction}
\label{app:generalization-finite-reduction}

\begin{lemma}[Local stability of the Flow Matching loss]
    \label{lem:fm-local-loss-stability}
    Let $v_1,v_2\in\mathcal V(L,W,S,B,\gamma_x)$ be induced by
    $f_{\theta_1},f_{\theta_2}\in\mathrm{NN}(L,W,S,B)$ according to
    the definition in \eqref{eq:structured-velocity-class}. Let $R_0,R_1\ge1$ and
    set $R:=R_0\vee R_1$. Suppose that
    \[
        \|f_{\theta_1}-f_{\theta_2}\|_
        {L^\infty(
            \mathcal B(0,R)\times[0,1];\ell_2)}
        \le\eta.
    \]
    Then, for every $x_1\in\mathcal B(0,R_1)$,
    \begin{equation}
    \begin{aligned}
    |\ell(x_1,v_1)-\ell(x_1,v_2)|
    &\le
    \eta\left[
        \frac{\pi}{2\sigma}(R_1+2)+\sqrt D
    \right]
    \\
    &\quad+
    \frac{2\pi}{\sigma}(R_1+2)
    \Pr(\|X_0\|>R_0)
    +
    4\sqrt{
        D\Pr(\|X_0\|>R_0)
    }.
    \end{aligned}
    \label{eq:fm-local-loss-stability}
    \end{equation}
    Moreover,
    \begin{equation}
    \begin{aligned}
    \left|
        P(\Delta_{v_1}-\Delta_{v_2})
    \right|
    \lesssim
    \left(
        \frac1{\sigma}+\sqrt D
    \right)
    \Big[
    &\eta
    +\sqrt{\Pr(\|X_0\|>R_0)}
    \\
    &+\sqrt{\Pr(\|X_1\|>R_1)}
    \Big].
    \end{aligned}
    \label{eq:fm-population-loss-stability}
    \end{equation}
    The hidden constant is absolute.
\end{lemma}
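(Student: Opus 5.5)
The plan is to work directly with the residual representation from the proof of Lemma~\ref{lem:fm-excess-tail}. There the affine parts cancel, so that for fixed $x_1$ and $X_t=(1-t)X_0+tx_1$,
\[
 r_i:=x_1-X_0-v_i(X_t,t)=b_t\bigl(x_1-\bar f_i(X_t,t)\bigr)-\frac{\sigma^2t}{q_t}X_0,
 \qquad \bar f_i:=\operatorname{clip}_2(f_{\theta_i}).
\]
Two consequences follow.
\begin{itemize}
\item \emph{Difference of losses.} Since $r_1-r_2=-b_t(\bar f_1-\bar f_2)$, we get $\|r_1\|^2-\|r_2\|^2=\langle r_1-r_2,r_1+r_2\rangle$, and hence
\[
 \bigl|\|r_1\|^2-\|r_2\|^2\bigr|\le |b_t|\,\|\bar f_1-\bar f_2\|\,\|r_1+r_2\|.
\]
\item \emph{Size of the sum.} Using $\|\bar f_i\|\le2$,
\[
 \|r_1+r_2\|\le 2|b_t|(\|x_1\|+2)+\frac{2\sigma^2t}{q_t}\|X_0\|.
\]
\end{itemize}
The discrepancy $\|\bar f_1-\bar f_2\|$ is controlled in two regimes. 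Because $\operatorname{clip}_2$ is nonexpansive, it is at most $\eta$ whenever $\|X_t\|\le R$. It is always at most $4$.

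For the pointwise bound \eqref{eq:fm-local-loss-stability}, fix $x_1\in\mathcal B(0,R_1)$ and split on the event $\{\|X_0\|\le R_0\}$.
\begin{itemize}
\item \emph{On the good event}, $\|X_t\|\le (1-t)R_0+tR_1\le R$, so the discrepancy is at most $\eta$. Integrating over $t$ and $X_0$ with the exact time integrals $\int_0^1b_t^2\,dt=\pi/(4\sigma)$ and $\int_0^1 b_t\sigma^2t/q_t\,dt=1/2$ (both from Lemma~\ref{lem:fm-excess-tail}), together with $\E\|X_0\|\le\sqrt D$, gives exactly
\[
 \eta\Bigl[\tfrac{\pi}{2\sigma}(R_1+2)+\sqrt D\Bigr].
\]
\item \emph{On the complement}, replace $\eta$ by $4$. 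The $b_t^2$ part produces $\frac{2\pi}{\sigma}(R_1+2)\Pr(\|X_0\|>R_0)$. For the $X_0$ part, Cauchy--Schwarz gives $\E[\|X_0\|\mathbf 1\{\|X_0\|>R_0\}]\le\sqrt{D\Pr(\|X_0\|>R_0)}$, which produces the last term $4\sqrt{D\Pr(\|X_0\|>R_0)}$.
\end{itemize}

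For the population bound \eqref{eq:fm-population-loss-stability}, I would not integrate the pointwise bound against $X_1$, since its factor $R_1$ would survive. Instead I will redo the computation with $X_1$ random. Write $P(\Delta_{v_1}-\Delta_{v_2})=\int_0^1\E[\|r_1\|^2-\|r_2\|^2]\,dt$ with the joint law of $(X_0,X_1)$. Let $G=\{\|X_0\|\le R_0,\ \|X_1\|\le R_1\}$; on $G$ we have $\|X_t\|\le R$.
\begin{itemize}
\item \emph{On $G$}, the discrepancy is at most $\eta$, and I bound $\E\|r_1+r_2\|$ without truncation. Using $\E\|X_1\|\le1+\sigma\sqrt D$, this gives
\[
 \eta\Bigl[\tfrac{\pi}{4\sigma}\cdot 2(3+\sigma\sqrt D)+\sqrt D\Bigr]\lesssim \eta\bigl(\sigma^{-1}+\sqrt D\bigr).
\]
\item \emph{On $G^c$}, use the bound $4$ and Cauchy--Schwarz in $(X_0,X_1)$ for each fixed $t$. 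Note that $\sqrt{\E\|r_1+r_2\|^2}\lesssim |b_t|(3+\sigma\sqrt D)+\frac{\sigma^2t}{q_t}\sqrt D$, and $\Pr(G^c)\le\Pr(\|X_0\|>R_0)+\Pr(\|X_1\|>R_1)$. The $t$-integrals are again $\int_0^1b_t^2\,dt$ and $\int_0^1b_t\sigma^2t/q_t\,dt$. This yields
\[
 \bigl(\sigma^{-1}+\sqrt D\bigr)\Bigl[\sqrt{\Pr(\|X_0\|>R_0)}+\sqrt{\Pr(\|X_1\|>R_1)}\Bigr].
\]
\end{itemize}
Adding the two contributions gives \eqref{eq:fm-population-loss-stability} with an absolute constant.

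The only delicate point is this last one: choosing not to reuse the pointwise estimate, so that the $X_1$-moment enters only through $\E\|X_1\|\lesssim 1+\sigma\sqrt D$. This removes any dependence on $R_1$ in the population bound, while the truncation costs only square-root tail probabilities.
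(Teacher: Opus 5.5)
Your proposal is correct and follows essentially the same route as the paper: the same residual representation with the affine part cancelled, the same $\eta$-versus-$4$ split of the clipped discrepancy on the truncation event, and the same exact time integrals $\int_0^1 b_t^2\,dt=\pi/(4\sigma)$ and $\int_0^1 b_t\sigma^2t/q_t\,dt=1/2$. The only difference is cosmetic and lies in the population bound. You apply a single Cauchy--Schwarz on $G^c$ under the joint law of $(X_0,X_1)$. The paper instead splits $G^c$ into $\{\|X_0\|>R_0\}$ and $\{\|X_1\|>R_1\}$ and uses independence of $X_0$ and $X_1$ for the cross terms. Both yield the stated estimate with an absolute constant.
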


\begin{proof}
    By \eqref{eq:structured-velocity-class},
    \[
        v_j(x,t)
        =
        a_t x+
        b_t\operatorname{clip}_2(f_{\theta_j}(x,t)),
        \qquad j\in\{1,2\}.
    \]
    Fix $x_1\in\mathbb R^D$ and write
    \[
        X_t=(1-t)X_0+tx_1.
    \]
    Using
    \[
        1-ta_t=b_t,
        \qquad
        1+(1-t)a_t=\frac{\sigma^2t}{q_t},
    \]
    we obtain
    \[
        x_1-X_0-v_j(X_t,t)
        =
        b_t
        \left[
            x_1-\operatorname{clip}_2(f_{\theta_j}(X_t,t))
        \right]
        -
        \frac{\sigma^2t}{q_t}X_0.
    \]
    Therefore,
    \begin{equation}
    \begin{aligned}
    &
    \left|
    \|x_1-X_0-v_1(X_t,t)\|^2
    -
    \|x_1-X_0-v_2(X_t,t)\|^2
    \right|
    \\
    &\le
    \left[
        2b_t^2(\|x_1\|+2)
        +
        2b_t\frac{\sigma^2t}{q_t}\|X_0\|
    \right]
    \\
    &\qquad\qquad\times
    \left\|
        \operatorname{clip}_2(f_{\theta_1}(X_t,t))
        -
        \operatorname{clip}_2(f_{\theta_2}(X_t,t))
    \right\|.
    \end{aligned}
    \label{eq:fm-pairwise-loss-integrand}
    \end{equation}

    The clipping map is nonexpansive and takes values in
    $\mathcal B(0,2)$. If $\|x_1\|\le R_1$ and $\|X_0\|\le R_0$, then
    $\|X_t\|\le R$, so the last factor in
    \eqref{eq:fm-pairwise-loss-integrand} is at most $\eta$; globally it is
    bounded by $4$. Using
    \[
        \int_0^1 b_t^2\,dt=\frac{\pi}{4\sigma},
        \qquad
        \int_0^1
        b_t\frac{\sigma^2t}{q_t}\,dt=\frac12,
    \]
    together with
    \[
        \E\|X_0\|\le\sqrt D,
        \qquad
        \E\!\left[
            \|X_0\|
            \mathbf 1_{\{\|X_0\|>R_0\}}
        \right]
        \le
        \sqrt{
            D\Pr(\|X_0\|>R_0)
        },
    \]
    gives \eqref{eq:fm-local-loss-stability}.

    For the population bound, let
    \[
        A_0:=\{\|X_0\|>R_0\},
        \qquad
        A_1:=\{\|X_1\|>R_1\}.
    \]
    On $A_0^c\cap A_1^c$, $X_t\in\mathcal B(0,R)$ for every
    $t\in[0,1]$, while globally the clipped difference is at most $4$.
    Hence
    \[
    \begin{aligned}
    &
    \left\|
        \operatorname{clip}_2(f_{\theta_1}(X_t,t))
        -
        \operatorname{clip}_2(f_{\theta_2}(X_t,t))
    \right\|
    \\
    &\qquad\le
    \eta+4\mathbf 1_{A_0}+4\mathbf 1_{A_1}.
    \end{aligned}
    \]
    Substituting this bound into
    \eqref{eq:fm-pairwise-loss-integrand}, integrating over $t$, and taking
    expectation gives
    \begin{align*}
    \left|
        P(\Delta_{v_1}-\Delta_{v_2})
    \right|
    &\le
    \eta
    \left[
        \frac{\pi}{2\sigma}
        \bigl(\E\|X_1\|+2\bigr)
        +
        \E\|X_0\|
    \right]
    \\
    &\quad+
    \frac{2\pi}{\sigma}
    \bigl(\E\|X_1\|+2\bigr)\Pr(A_0)
    +
    4\E\!\left[
        \|X_0\|\mathbf 1_{A_0}
    \right]
    \\
    &\quad+
    \frac{2\pi}{\sigma}
    \E\!\left[
        (\|X_1\|+2)\mathbf 1_{A_1}
    \right]
    +
    4\E\|X_0\|\Pr(A_1),
    \end{align*}
    where independence of $X_0$ and $X_1$ is used in the cross terms.
    Since
    \[
        \E\|X_0\|\le\sqrt D,
        \qquad
        \E\|X_1\|^2\le1+\sigma^2D,
    \]
    Cauchy--Schwarz yields
    \[
        \E\!\left[
            (\|X_1\|+2)\mathbf 1_{A_1}
        \right]
        \lesssim
        (1+\sigma\sqrt D)\sqrt{\Pr(A_1)}.
    \]
    Together with
    \[
        \E\!\left[
            \|X_0\|\mathbf 1_{A_0}
        \right]
        \le
        \sqrt{D\Pr(A_0)},
    \]
    this proves \eqref{eq:fm-population-loss-stability}.
\end{proof}

The local stability estimate transfers posterior-network approximation to
conditional-loss approximation. We next quantify the bounded-domain network
cover used in that transfer.

\begin{lemma}[Metric entropy on a bounded state-time domain]
    \label{lem:fm-relu-covering}
    Let $R\ge1$. For every $\eta\in(0,1)$, there
    exists a finite internal net
    \[
        \mathcal F_\eta
        \subset
        \mathrm{NN}(L,W,S,B)
    \]
    such that
    \[
        \sup_{f_\theta\in\mathrm{NN}(L,W,S,B)}
        \inf_{f\in\mathcal F_\eta}
        \|f_\theta-f\|_
        {L^\infty(
            \mathcal B(0,R)\times[0,1];\ell_2)}
        \le\eta
    \]
    and
    \begin{equation}
        \log|\mathcal F_\eta|
        \lesssim
        SL
        \log\left(
            \frac{
                \sqrt D\,L(W+1)(B\vee1)(R+1)
            }{
                \eta
            }
        \right).
        \label{eq:fm-relu-internal-covering}
    \end{equation}
    Consequently,
    \begin{equation}
    \begin{aligned}
    &
    \log
    \mathcal N\left(
        \eta,
        \mathrm{NN}(L,W,S,B),
        \|\cdot\|_
        {L^\infty(
            \mathcal B(0,R)\times[0,1];\ell_2)}
    \right)
    \\
    &\qquad\lesssim
    SL
    \log\left(
        \frac{
            \sqrt D\,L(W+1)(B\vee1)(R+1)
        }{
            \eta
        }
    \right).
    \end{aligned}
    \label{eq:fm-relu-covering}
    \end{equation}
\end{lemma}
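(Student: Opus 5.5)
The plan is to reduce the claim to the standard sparse-ReLU covering estimate of \citet[Lemma~3]{suzuki2019adaptivity}, recorded in \eqref{eq:main-relu-covering}, and then convert an external cover into an internal one. First I will show that two networks in $\mathrm{NN}(L,W,S,B)$ with the same sparsity pattern are close uniformly on $\mathcal B(0,R)\times[0,1]$ whenever their parameters are close. Concretely, if $\max_\ell(\|A_\ell-A'_\ell\|_\infty\vee\|b_\ell-b'_\ell\|_\infty)\le\kappa$, then by telescoping over layers, using that $\rho$ is $1$-Lipschitz, the input bound $\|(x,t)\|_\infty\le R+1$, and that each layer's $\ell_\infty$ operator norm is at most $W(B\vee1)$, every output coordinate differs by at most
\[
\kappa\,L\,(R+1)\,\bigl(W+1\bigr)^{L}(B\vee1)^{L-1}.
\]
Taking the Euclidean norm of the $D$ output coordinates adds only a factor $\sqrt D$.

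Next I fix the discretization. I choose $\kappa$ so that this bound is at most $\eta$, which gives
\[
\log(1/\kappa)\lesssim L\log\bigl(\sqrt D\,L(W+1)(B\vee1)(R+1)/\eta\bigr).
\]
I then count the choices: sparsity patterns number at most $\binom{(W+1)^2L}{S}\le((W+1)^2L)^S$, and for each pattern the $S$ nonzero entries are rounded to the grid $\kappa\mathbb Z\cap[-B,B]$, which gives $(2B/\kappa+1)^S$ choices. The logarithm of the product is $\lesssim S\log((W+1)L)+S\log(B/\kappa)\lesssim SL\log(\sqrt D L(W+1)(B\vee1)(R+1)/\eta)$, which is \eqref{eq:fm-relu-internal-covering}.

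The net is internal because the rounded parameters (rounding toward zero) keep each magnitude at most $B$ and do not create new nonzeros. The rounded networks therefore lie in $\mathrm{NN}(L,W,S,B)$ with unchanged widths, so $\mathcal F_\eta\subset\mathrm{NN}(L,W,S,B)$. The covering bound \eqref{eq:fm-relu-covering} then follows immediately from the definition of $\mathcal N$.

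The only delicate step is the layerwise perturbation bound with unbounded intermediate activations. I expect to handle it by bounding $\|h_\ell\|_\infty\le (W(B\vee1)+1)^\ell(R+1)$ inductively, which gives exactly the exponent $L$ absorbed into the $SL$ prefactor.
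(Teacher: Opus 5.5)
Your proposal is correct and follows essentially the same route as the paper. You zero-pad to the width-$W$ architecture, bound the activations by $(R+1)\bigl((W+1)(B\vee1)\bigr)^{\ell}$ and the propagated parameter error layer by layer, take a mesh of order $\eta/[\sqrt D\,L(R+1)((W+1)(B\vee1))^L]$, and count at most $[CL(W+1)^2(1+B/\kappa)]^S$ quantized networks; these lie in $\mathrm{NN}(L,W,S,B)$ and so form an internal net.

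Your opening remark about reducing to \eqref{eq:main-relu-covering} and converting an external cover is unnecessary. It would also be circular, since that display is exactly what this lemma establishes. Your actual argument never uses it, however, and builds the internal net directly by rounding toward zero, as the paper does.
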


\begin{proof}[Proof of Lemma~\ref{lem:fm-relu-covering}]
We adapt the parameter-discretization argument of
\cite[Lemma~3]{suzuki2019adaptivity} to $D$-dimensional outputs equipped
with the Euclidean norm. Zero-pad
to the maximal width-$W$ architecture, fix a support $J$ with $|J|\le S$,
and let $\theta,\theta'$ be supported on $J$ with
$\|\theta-\theta'\|_\infty\le h$. For
$M:=(W+1)(B\vee1)$, the affine recursion and the $1$-Lipschitz property
of ReLU give
\[
 H_\ell\le(R+1)M^\ell,\qquad
 E_\ell\le ME_{\ell-1}+h(R+1)M^\ell
 \le\ell(R+1)M^\ell h.
\]
Consequently,
\[
 \|f_\theta-f_{\theta'}\|_
 {L^\infty(\mathcal B(0,R)\times[0,1];\ell_2)}
 \lesssim\sqrt D\,L(R+1)M^Lh.
\]
Choose $h\asymp\eta/[\sqrt D\,L(R+1)M^L]$. Quantizing each active
coordinate in $[-B,B]$ at mesh $h$, while leaving inactive coordinates
zero, produces an internal net. The maximal architecture has at most
$CL(W+1)^2$ parameter locations, so
\[
 |\mathcal F_\eta|
 \le\left[CL(W+1)^2\left(1+\frac Bh\right)\right]^S.
\]
Taking logarithms and substituting $h$ yields
\[
 \log|\mathcal F_\eta|
 \lesssim SL\log\!\left(
 \frac{\sqrt D\,L(W+1)(B\vee1)(R+1)}{\eta}\right),
\]
which proves \eqref{eq:fm-relu-internal-covering} and
\eqref{eq:fm-relu-covering}.
\end{proof}

The ambient cover is converted into constrained representatives only after the
truncation scales have been fixed.

\begin{proof}[Proof of Lemma~\ref{lem:main-fm-finite-reduction}]
Fix $\tau,\delta\in(0,1)$ and introduce the proof-local scales
\[
 \Lambda_{n,\delta,\tau}:=\frac1\sigma+\sqrt D
 +\sqrt{\log\frac{en}{\delta}}+\sqrt{\log\frac e\tau},
 \qquad
 \eta:=\frac{c_0\tau}{\Lambda_{n,\delta,\tau}},
\]
\[
 R_0:=\sqrt D+\sqrt{2\log(1/\eta^2)},\qquad
 R_1:=1+\sigma\sqrt D+
 \sigma\sqrt{2\log\frac{n}{\delta\eta^2}},\qquad
 R:=R_0\vee R_1,
\]
where $c_0>0$ is a sufficiently small absolute constant. Gaussian
concentration and $\|g^*\|_{L^\infty}\le1$ imply
\[
 \Pr(\|X_0\|>R_0)\le\eta^2,\qquad
 \Pr(\|X_1\|>R_1)\le\frac{\delta\eta^2}{n},
\]
and therefore
\begin{equation}
 \Pr\!\left(\max_{i\le n}\|X_{1,i}\|\le R_1\right)\ge1-\delta.
 \label{eq:fm-target-truncation-event}
\end{equation}

Let $\mathcal F_{\mathcal V}$ be the posterior-network realizations whose
induced velocities belong to $\mathcal V(L,W,S,B,\gamma_x)$.
Lemma~\ref{lem:fm-relu-covering} supplies an ambient $\eta$-net on
$\mathcal B(0,R)\times[0,1]$. Retain the balls that intersect
$\mathcal F_{\mathcal V}$ and choose an actual constrained realization
$\widetilde f_j$ from each retained ball. The corresponding fields
\[
 v_j(x,t):=a_tx+b_t\operatorname{clip}_2(\widetilde f_j(x,t))
\]
are deterministic members of $\mathcal V(L,W,S,B,\gamma_x)$, and their
number $M_{\tau,\delta}$ is no larger than the ambient covering number.
Every constrained realization has a selected representative with internal
posterior-network distance at most $2\eta$ on the truncated state--time
domain.

Since $(R_1+1)/\sigma+\sqrt D\lesssim\Lambda_{n,\delta,\tau}$,
Lemma~\ref{lem:fm-local-loss-stability}, the displayed tail bounds, and a
sufficiently small $c_0$ give
\[
 \|\Delta_v-\Delta_{v_j}\|_{L^\infty(\mathcal B(0,R_1))}
 \le\frac\tau4,
 \qquad
 |P(\Delta_v-\Delta_{v_j})|\le\frac{3\tau}4.
\]
On the event in \eqref{eq:fm-target-truncation-event}, the same selected
field $v_j$ therefore satisfies
\[
 |P(\Delta_v-\Delta_{v_j})|
 +|P_n(\Delta_v-\Delta_{v_j})|\le\tau,
\]
which proves \eqref{eq:main-fm-finite-reduction}.

Finally, set
\[
 \mathfrak R_{n,\delta,\tau}:=2+\sqrt D+
 \sqrt{\log\frac{en\Lambda_{n,\delta,\tau}^2}
 {\delta\tau^2}}.
\]
Then $R+1\lesssim\mathfrak R_{n,\delta,\tau}$, and
\eqref{eq:fm-relu-internal-covering} gives
\begin{equation}
 \log M_{\tau,\delta}\lesssim SL\log\!\left(
 \frac{\sqrt D\,L(W+1)(B\vee1)\Lambda_{n,\delta,\tau}
 \mathfrak R_{n,\delta,\tau}}{\tau}\right).
 \label{eq:fm-finite-reduction-cardinality}
\end{equation}
Moreover,
\[
 \Lambda_{n,\delta,\tau}\lesssim
 1+\frac1\sigma+\sqrt D
 +\sqrt{\log\frac{en}{\delta}}
 +\sqrt{\log\frac e\tau},
 \qquad
 \mathfrak R_{n,\delta,\tau}\lesssim
 1+\frac1\sigma+\sqrt D
 +\sqrt{\log\frac{en}{\delta}}
 +\sqrt{\log\frac e\tau}.
\]
Substitution into \eqref{eq:fm-finite-reduction-cardinality} proves the explicit bound \eqref{eq:main-fm-finite-reduction-cardinality} stated in the main text.
\end{proof}

\subsection{Uniform concentration}
\label{app:generalization-uniform}

\begin{lemma}[Uniform localized concentration of the excess loss]
\label{lem:fm-uniform-concentration}
For every $\tau,\delta\in(0,1)$, let $v_1,\ldots,v_M$ be the
deterministic family supplied by
Lemma~\ref{lem:main-fm-finite-reduction} with confidence parameter
$\delta/2$. With probability at least $1-\delta$,
\begin{equation}
 |(P-P_n)\Delta_v|
 \le\frac14P\Delta_v+C\left\{\left[\frac1{\sigma^2}
 +\left(\frac1\sigma+\sqrt D\right)\log(en)\right]
 \frac{\log(8M/\delta)}n+\tau\right\}
 \label{eq:fm-uniform-localized-deviation}
\end{equation}
simultaneously for all $v\in\mathcal V(L,W,S,B,\gamma_x)$, where
$C>0$ is an absolute constant.
\end{lemma}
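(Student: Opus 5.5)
The argument has three stages: a two-sided fixed-field Bernstein bound on the finite family, a union bound over that family, and a transfer back to the whole class using the finite reduction. First, apply Lemma~\ref{lem:main-fm-finite-reduction} with confidence $\delta/2$. This produces deterministic representatives $v_1,\ldots,v_M\in\mathcal V(L,W,S,B,\gamma_x)$ and an event $E_1$ with $\Pr(E_1)\ge1-\delta/2$. On $E_1$, every $v$ admits some $v_j$ with $|P(\Delta_v-\Delta_{v_j})|+|P_n(\Delta_v-\Delta_{v_j})|\le\tau$. Because each $v_j$ is deterministic, we can apply the fixed-field inequality \eqref{eq:main-fm-fixed-bernstein}, which Lemma~\ref{lem:fm-excess-localization} supplies, at confidence level $\delta/(2M)$. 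A union bound then gives an event $E_2$ with $\Pr(E_2)\ge1-\delta/2$ on which, for every $j$ simultaneously,
\[
 |(P-P_n)\Delta_{v_j}|\le C_0\sqrt{\frac{P\Delta_{v_j}\log(8M/\delta)}{\sigma^2 n}}+C_0\frac{(\sigma^{-1}+\sqrt D)\log(en)\log(8M/\delta)}{n}.
\]
Here we use $\log(4\cdot 2M/\delta)=\log(8M/\delta)$. The intersection $E_1\cap E_2$ has probability at least $1-\delta$.

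Next, on this event, absorb the square-root term by the AM--GM inequality $\sqrt{ab}\le\frac{a}{8}+2b$, applied with $a=P\Delta_{v_j}$ (which is nonnegative by Lemma~\ref{lem:fm-risk-identity}) and $b\asymp\log(8M/\delta)/(\sigma^2 n)$. This gives
\[
 |(P-P_n)\Delta_{v_j}|\le\tfrac18P\Delta_{v_j}+C\Bigl[\sigma^{-2}+(\sigma^{-1}+\sqrt D)\log(en)\Bigr]\frac{\log(8M/\delta)}{n}.
\]

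Then transfer the bound to an arbitrary $v$. Let $v_j$ be its representative. The triangle inequality gives
\[
 |(P-P_n)\Delta_v|\le|(P-P_n)\Delta_{v_j}|+|P(\Delta_v-\Delta_{v_j})|+|P_n(\Delta_v-\Delta_{v_j})|\le|(P-P_n)\Delta_{v_j}|+\tau,
\]
and similarly $P\Delta_{v_j}\le P\Delta_v+\tau$. Substituting the second inequality into the first and using $\tfrac18\le\tfrac14$ yields \eqref{eq:fm-uniform-localized-deviation}; the extra $\tau/8$ is absorbed into $C\tau$. The estimate holds simultaneously for all $v$ because the representatives are fixed before the data are drawn, and only the selection map $v\mapsto j$ depends on the sample.

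I expect the only delicate point to be in this transfer step. The localization term $P\Delta_{v_j}$ must be re-expressed through $P\Delta_v$ rather than through a bound that is only one-sided. This is handled because the finite reduction controls $|P(\Delta_v-\Delta_{v_j})|$ in absolute value, and because $P\Delta_{v_j}\ge0$ for every member of the class. The latter is what makes the AM--GM absorption legitimate when applied to a term whose sign is otherwise uncontrolled. Everything else is bookkeeping of absolute constants.
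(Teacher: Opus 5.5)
Your proposal is correct and matches the paper's proof: finite reduction at confidence $\delta/2$, the fixed-field Bernstein bound with a union bound over the $M$ representatives (so that $\log(4\cdot 2M/\delta)=\log(8M/\delta)$), and transfer to an arbitrary $v$ through its representative. The only difference is ordering. You absorb the square-root term by AM--GM before substituting $P\Delta_{v_j}\le P\Delta_v+\tau$, whereas the paper substitutes first and then splits $\sqrt{P\Delta_v+\tau}$ with $\sqrt{a+b}\le\sqrt a+\sqrt b$ and Young's inequality; this makes no difference to the result.
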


\begin{proof}
Use the deterministic family supplied by
Lemma~\ref{lem:main-fm-finite-reduction} with confidence parameter
$\delta/2$. On an event of probability at least $1-\delta/2$,
every $v$ has a representative $v_k$ satisfying
\eqref{eq:main-fm-finite-reduction}. Put $h:=\log(8M/\delta)$.
The fixed-field estimate \eqref{eq:main-fm-fixed-bernstein} and a union bound
give, on another event of probability at least $1-\delta/2$,
\[
 |(P-P_n)\Delta_{v_k}|
 \le C_{\mathrm B}\sqrt{\frac{P\Delta_{v_k}h}{\sigma^2n}}
 +C_{\mathrm B}\frac{(\sigma^{-1}+\sqrt D)\log(en)h}{n}
\]
for every representative. On the intersection of these events,
$P\Delta_{v_k}\le P\Delta_v+\tau$ and transfer through the selected constrained representative gives
\[
 |(P-P_n)\Delta_v|
 \le C_{\mathrm B}\sqrt{\frac{(P\Delta_v+\tau)h}{\sigma^2n}}
 +C_{\mathrm B}\frac{(\sigma^{-1}+\sqrt D)\log(en)h}{n}
 +\tau.
\]
The inequalities $\sqrt{a+b}\le\sqrt a+\sqrt b$ and Young's inequality
prove \eqref{eq:fm-uniform-localized-deviation}.
\end{proof}

\begin{proof}[Proof of the oracle inequality \eqref{eq:main-fm-oracle}]
Use Lemma~\ref{lem:main-fm-finite-reduction} with confidence $\delta/2$
and write $M=M_{\tau,\delta/2}$. For every deterministic comparator
$\bar v\in\mathcal V(L,W,S,B,\gamma_x)$, the ERM property gives
\[
 P\Delta_{\widehat v}
 \le P\Delta_{\bar v}
 +(P-P_n)\Delta_{\widehat v}+(P_n-P)\Delta_{\bar v}.
\]
On the simultaneous event from
Lemma~\ref{lem:fm-uniform-concentration}, apply
\eqref{eq:fm-uniform-localized-deviation} to both fields and absorb the
$\frac14P\Delta_{\widehat v}$ term. The remaining
$\frac14P\Delta_{\bar v}$ term is absorbed into the universal multiplicative
constant. Taking the infimum over $\bar v$ proves
\eqref{eq:main-fm-oracle}.
\end{proof}

\section{Auxiliary Results for Distributional Convergence}
\label{app:sampling}

This appendix collects the auxiliary growth estimate used in the sampling
analysis and proves the Euler discretization bound in
Proposition~\ref{prop:euler-sampling-error}.

\subsection{Linear-growth bound}
\label{app:sampling-growth}

\begin{lemma}[Linear growth of the velocity fields]
\label{lem:sampling-linear-growth}
Let
\[
    K_\sigma^{\mathrm{sam}}
    :=
    \max\left\{
        \sup_{t\in[0,1]}|a_t|,
        2\sup_{t\in[0,1]}|b_t|
    \right\}.
\]
Then, for every $(x,t)\in\mathbb R^D\times[0,1]$,
\[
    \|v^*(x,t)\|
    \le
    K_\sigma^{\mathrm{sam}}(1+\|x\|),
\]
and, for every
$v\in\mathcal V(L,W,S,B,\gamma_x)$,
\[
    \|v(x,t)\|
    \le
    K_\sigma^{\mathrm{sam}}(1+\|x\|).
\]
\end{lemma}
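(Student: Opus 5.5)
The plan is a direct pointwise bound that uses only the structured form of the fields and the boundedness of the nonlinear components. No regularity is needed. For the population velocity I will invoke Lemma~\ref{lem:posterior-velocity}, which gives $v^*(x,t)=a_tx+b_tm_t(x)$ with $\|m_t(x)\|\le1$ for every $(x,t)$. The triangle inequality then yields
\[
\|v^*(x,t)\|\le|a_t|\,\|x\|+|b_t|.
\]
Since $|a_t|\le\sup_s|a_s|\le K_\sigma^{\mathrm{sam}}$ and $|b_t|\le2\sup_s|b_s|\le K_\sigma^{\mathrm{sam}}$, this is at most $K_\sigma^{\mathrm{sam}}(1+\|x\|)$.

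For a structured field $v=v_\theta\in\mathcal V(L,W,S,B,\gamma_x)$, definition \eqref{eq:structured-velocity-class} gives $v(x,t)=a_tx+b_t\operatorname{clip}_2(f_\theta(x,t))$. The Euclidean projection onto $\mathcal B(0,2)$ has norm at most $2$, so
\[
\|v(x,t)\|\le|a_t|\,\|x\|+2|b_t|\le K_\sigma^{\mathrm{sam}}(1+\|x\|).
\]
The factor $2$ in the definition of $K_\sigma^{\mathrm{sam}}$ is there exactly to absorb this clipping radius. Neither the spatial Lipschitz constraint nor any network parameter enters.

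The only point needing care is that $K_\sigma^{\mathrm{sam}}$ is finite, i.e.\ that $a_t$ and $b_t$ are bounded on $[0,1]$. This follows from $q_t\ge\sigma^2/(1+\sigma^2)>0$, stated in Section~\ref{sec:preliminaries}, together with bounded numerators: $|t\sigma^2-(1-t)|\le1$ and $|1-t|\le1$. Hence
\[
\sup_t|a_t|,\ \sup_t|b_t|\le\frac{1+\sigma^2}{\sigma^2}.
\]
This step is routine and I expect no real obstacle. I will also note that the bound holds for the version of $v^*$ fixed in Lemma~\ref{lem:posterior-velocity}, everywhere and not just almost everywhere. That is what the global well-posedness arguments in Lemma~\ref{lem:population-flow-regularity} and Theorem~\ref{thm:continuous-sampling} require.
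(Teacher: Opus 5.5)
Your proposal is correct and follows the paper's proof essentially verbatim: the posterior representation with $\|m_t(x)\|\le1$ for $v^*$, the radius-two clipping for $v\in\mathcal V(L,W,S,B,\gamma_x)$, and domination by $K_\sigma^{\mathrm{sam}}(1+\|x\|)$. Your extra check that $\sup_t|a_t|,\sup_t|b_t|\le(1+\sigma^2)/\sigma^2$, which makes $K_\sigma^{\mathrm{sam}}$ finite, is a small addition that the paper leaves implicit.
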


\begin{proof}
By Lemma~\ref{lem:posterior-velocity},
\[
    v^*(x,t)=a_tx+b_tm_t(x),
    \qquad
    \|m_t(x)\|\le1,
\]
and therefore
\[
    \|v^*(x,t)\|
    \le
    |a_t|\|x\|+|b_t|.
\]
For
$v\in\mathcal V(L,W,S,B,\gamma_x)$,
the radius-two clipping in the definition of the velocity class gives
\[
    \|v(x,t)\|
    \le
    |a_t|\|x\|+2|b_t|.
\]
Both bounds are dominated by
$K_\sigma^{\mathrm{sam}}(1+\|x\|)$.
\end{proof}

\subsection{Euler discretization}
\label{app:sampling-euler}

\begin{proof}[Proof of Proposition~\ref{prop:euler-sampling-error}]
Condition on the target observations. Couple the continuous learned
trajectory and the Euler scheme through the same initial condition,
\[
    \widehat Z_0
    =
    \widetilde Z_{t_0}
    \sim\pi_0.
\]
By Lemma~\ref{lem:sampling-linear-growth} and Gr\"onwall's inequality,
\[
    1+\|\widehat Z_t\|
    \le
    e^{K_\sigma^{\mathrm{sam}}t}
    (1+\|\widehat Z_0\|)
    \le
    e^{K_\sigma^{\mathrm{sam}}}
    (1+\|\widehat Z_0\|),
    \qquad
    t\in[0,1].
\]
Consequently, for $t\in[t_k,t_{k+1}]$,
\[
\begin{aligned}
    \|\widehat Z_t-\widehat Z_{t_k}\|
    &\le
    \int_{t_k}^t
    \|\widehat v(\widehat Z_s,s)\|\,ds
    \\
    &\le
    K_\sigma^{\mathrm{sam}}
    e^{K_\sigma^{\mathrm{sam}}}
    (1+\|\widehat Z_0\|)
    (t-t_k).
\end{aligned}
\]

Write
$h_k:=t_{k+1}-t_k$.
The continuous learned trajectory satisfies
\[
    \widehat Z_{t_{k+1}}
    =
    \widehat Z_{t_k}
    +
    h_k\widehat v(\widehat Z_{t_k},t_k)
    +
    r_k,
\]
where
\[
    r_k
    :=
    \int_{t_k}^{t_{k+1}}
    \Bigl[
        \widehat v(\widehat Z_t,t)
        -
        \widehat v(\widehat Z_{t_k},t_k)
    \Bigr]dt .
\]
Using the spatial class constraint and
Assumption~\ref{ass:learned-temporal-regularity},
\[
\begin{aligned}
    \|
        \widehat v(\widehat Z_t,t)
        -
        \widehat v(\widehat Z_{t_k},t_k)
    \|
    \le{}&
    \bar\gamma_x
    \|\widehat Z_t-\widehat Z_{t_k}\|
    \\
    &+
    \gamma_t
    (1+\|\widehat Z_{t_k}\|)
    (t-t_k).
\end{aligned}
\]
The preceding growth estimates therefore yield
\[
    \|r_k\|
    \le
    \frac12
    e^{K_\sigma^{\mathrm{sam}}}
    \bigl(
        \bar\gamma_xK_\sigma^{\mathrm{sam}}+\gamma_t
    \bigr)
    (1+\|\widehat Z_0\|)
    h_k^2.
\]

Subtracting the Euler recursion
\eqref{eq:euler-sampling-scheme} gives
\[
\begin{aligned}
    \|
        \widehat Z_{t_{k+1}}
        -
        \widetilde Z_{t_{k+1}}
    \|
    \le{}&
    (1+\bar\gamma_xh_k)
    \|
        \widehat Z_{t_k}
        -
        \widetilde Z_{t_k}
    \|
    +
    \|r_k\|.
\end{aligned}
\]
Iterating this inequality and using
$1+u\le e^u$ gives
\[
\begin{aligned}
    \|
        \widehat Z_1-\widetilde Z_{t_M}
    \|
    \le{}&
    \frac12
    e^{K_\sigma^{\mathrm{sam}}+\bar\gamma_x}
    \bigl(
        \bar\gamma_xK_\sigma^{\mathrm{sam}}+\gamma_t
    \bigr)
    (1+\|\widehat Z_0\|)
    \sum_{k=0}^{M-1}h_k^2.
\end{aligned}
\]
Since
$\sum_{k=0}^{M-1}h_k=1$ and $h_k\le h$,
\[
    \sum_{k=0}^{M-1}h_k^2\le h.
\]
Moreover,
$\widehat Z_0\sim\mathcal N(0,I_D)$, so
\[
    \|1+\|\widehat Z_0\|\|_{L^2}
    \le
    1+\sqrt{\mathbb E\|\widehat Z_0\|^2}
    =
    1+\sqrt D.
\]
The coupling of the continuous learned flow and the Euler scheme
therefore gives
\[
\begin{aligned}
    W_2(\widetilde\pi_1,\widehat\pi_1)
    \le{}&
    \frac12(1+\sqrt D)
    e^{K_\sigma^{\mathrm{sam}}+\bar\gamma_x}
    \bigl(
        \bar\gamma_xK_\sigma^{\mathrm{sam}}+\gamma_t
    \bigr)h.
\end{aligned}
\]

Finally, the triangle inequality gives
\[
    W_2(\widetilde\pi_1,\pi_1)
    \le
    W_2(\widetilde\pi_1,\widehat\pi_1)
    +
    W_2(\widehat\pi_1,\pi_1).
\]
Applying the continuous-flow bound
\eqref{eq:continuous-w2-stability} to the second term yields
\[
\begin{aligned}
    W_2(\widehat\pi_1,\pi_1)
    \le
    \exp\!\left\{
        \frac12+\|\gamma_x\|_{L^1(0,1)}
    \right\}
    \left(
        \int_0^1
        \|\widehat v(\cdot,t)-v^*(\cdot,t)\|_{L^2(\pi_t)}^2
        \,dt
    \right)^{1/2}.
\end{aligned}
\]
Combining the last two displays proves
\eqref{eq:euler-sampling-bound}.
\end{proof}


\end{document}